\newcolumntype{?}{!{\vrule width 2pt}}
\newcolumntype{a}{!{\vrule width 1pt}}
\newcommand{\ME}{ \text{\tiny  \it ME}}
\newcommand{\CE}{ \text{\tiny  \it CE}}
\newcommand{\Tree}{ \text{\tiny  \it Tree}}
\newcommand{\TreeEj}{ \text{\tiny  \it TreeEj}}
\newcommand{\minus}{\scalebox{0.5}{$-$}}
\newcommand{\plus}{\scalebox{0.5}{$+$}}
\newcommand{\Sec}{Section }
\newtheorem*{corollary*}{Corollary}
\newtheorem{theorem}{Theorem}[section]
\newtheorem{lemma}[theorem]{Lemma}
\newtheorem{proposition}[theorem]{Proposition}
\newtheorem{definition-theorem}[theorem]{Definition-Theorem}
\theoremstyle{definition}
\newtheorem{definition}[theorem]{Definition}
\newtheorem{definition-notation}[theorem]{Definition-Notation}
\newtheorem{example}[theorem]{Example}
\newtheorem{algorithm}[theorem]{Algorithm}
\theoremstyle{remark}
\newtheorem{remark}[theorem]{Remark}
\def\l@subsection{\@tocline{2}{0pt}{2.5pc}{5pc}{}} 
\numberwithin{equation}{section}
\title{On marginal feature attributions of tree-based models}
\author[Filom, Miroshnikov, Kotsiopoulos, Ravi Kannan]{Khashayar Filom$^{*,\dagger}\quad$ Alexey Miroshnikov$^{*,\ddagger}\quad$
Konstandinos Kotsiopoulos $^{*,\S}\quad$   \\ Arjun Ravi Kannan $^{*,\P}$}
\thanks{$^*$Emerging Capabilities Research Group, Discover Financial Services Inc., Riverwoods, IL}
\thanks{$^\dagger$ first author, khashayarfilom@discover.com}
\thanks{$^\ddagger$ alexeymiroshnikov@discover.com}
\thanks{$^\S$ kostaskotsiopoulos@discover.com}
\thanks{$^\P$ arjunravikannan@discover.com}
\date{}
\begin{document}
\vspace*{-2cm}
\maketitle
\begin{abstract}
Due to their power and ease of use, tree-based machine learning models, such as random forests and gradient-boosted tree ensembles, have become very popular. To interpret them, local feature attributions based on marginal expectations, e.g. marginal (interventional) Shapley, Owen or Banzhaf values, may be employed. Such methods are true to the model and implementation invariant, i.e. dependent only on the input-output function of the model. 
We contrast this with the popular TreeSHAP algorithm by presenting two (statistically similar) decision trees that compute the exact same function for which the ``path-dependent'' TreeSHAP yields different rankings of features, whereas the marginal Shapley values coincide. Furthermore, we discuss how the internal structure of tree-based models may be leveraged to help with computing their marginal feature attributions according to a linear game value. One important observation is that these are simple (piecewise-constant) functions with respect to a certain grid partition of the input space determined by the trained model. Another crucial observation, showcased by experiments with XGBoost, LightGBM  and CatBoost libraries, is that only a portion of all features appears in a tree from the ensemble. Thus, the complexity of computing marginal Shapley (or Owen or Banzhaf) feature attributions may be reduced. This remains valid for a broader class of game values which we shall axiomatically characterize. A prime example is the case of CatBoost models where the trees are oblivious (symmetric) and the number of features in each of them is no larger than the depth. We exploit the symmetry to derive an explicit formula, with improved complexity and only in terms of the internal model parameters, for marginal Shapley (and Banzhaf and Owen) values of CatBoost models. This results in a fast, accurate algorithm for estimating these feature attributions.
\end{abstract}



\section{Introduction}\label{sec:Intro}
\subsection{Motivation and summary of results}\label{subsec:motive}
Ensemble methods combine a group of weak learners to produce a strong learner \cite{dietterich2000ensemble,hastie2009elements}.  In the context of tree-based models, examples of such an approach include random forests \cite{breiman2001random} and gradient-boosted tree ensembles \cite{friedman2001greedy}. Given their superior performance on structured data in various tasks \cite{roe2005boosted,caruana2006empirical,wu2010adapting,he2014practical,grinsztajn2022tree,shwartz2022tabular}, tree-based models are frequently used in regulated domains such as financial services \cite{chopra2018application,hu2021supervised} and healthcare 
\cite{turgeman2016mixed,zhang2019predictive}.\\
\indent
Ensemble models are considered to be complex which raises concerns about their interpretability \cite{rudin2019stop,2021arXiv211101743S}, especially in view of regulations that require it, such as 
the Equal Credit Opportunity Act (ECOA) \nocite{ECOA} and 
the Fair Housing Act (FHA). \nocite{FHA}
Financial institutions in the United States (US), 
for instance, are  required under the ECOA 
to notify declined or negatively impacted applicants 
of the main factors that led to the adverse action.
Determining the factor contributing the most to an outcome of a model may be done via individualized feature attributions.\\
\indent
An important approach to feature attribution is to utilize the celebrated work of Shapley \cite{shapley1953value} from cooperative game theory \cite{vstrumbelj2014explaining,lundberg2017unified}.
To elaborate, consider the features (predictors) as random variables $\mathbf{X}=(X_1,\dots,X_n)$ on a probability space $(\Omega,\mathcal{F},\Bbb{P})$. Given a model $f(\mathbf{X})$, one can define certain games  with $X_1,\dots,X_n$ being the players. The machinery of the \textit{Shapley value} \cite{shapley1953value} then allows us to quantify the contribution of each feature to a prediction of the model. Two of the most notable games in the literature are the \textit{marginal} and \textit{conditional} games, 
which are defined respectively in terms of \textit{marginal expectations} or 
\textit{conditional expectations}\footnote{The expectations $v^\ME(S;\mathbf{X},f)$ and $v^\CE(S;\mathbf{X},f)$ are sometimes called \textit{interventional} and \textit{observational} expectations in the literature. 
Other common notations are  
$\Bbb{E}[f(\mathbf{x}_S,\mathbf{X}_{-S})]|_{x_S=\mathbf{X}_S}$ or 
$\Bbb{E}_{\mathbf{X}_{-S}}\left[f(\mathbf{x}_S,\mathbf{X}_{-S})\right]$ for 
$v^\ME(S;\mathbf{X},f)$, and   
$\Bbb{E}[f(\mathbf{X})\mid S]$ for $v^\CE(S;\mathbf{X},f)$.} 
(see \cite{2020arXiv200616234C})
\begin{equation}\label{games}
v^\ME(S;\mathbf{X},f)(\mathbf{x})
:=\Bbb{E}[f(\mathbf{x}_S,\mathbf{X}_{-S})], \quad 
v^\CE(S;\mathbf{X},f)(\mathbf{x})
:=\Bbb{E}[f(\mathbf{X})\mid \mathbf{X}_S=\mathbf{x}_S]
\quad\quad (S\subseteq N).
\end{equation}  
Here, $\mathbf{x}$ is an arbitrary point of  $\Bbb{R}^n$, $S$ is a subset of $N:=\{1,\dots,n\}$, and $\mathbf{X}_S$ (respectively $\mathbf{X}_{-S}$) denote the collections of $X_i$'s with $i\in S$ (resp. $i\in N\setminus S$). Each 
$\left\{v^\ME(S;\mathbf{X},f)\right\}_{S\subseteq N}$ or 
$\left\{v^\CE(S;\mathbf{X},f)\right\}_{S\subseteq N}$ 
is a collection of functions $\Bbb{R}^n\rightarrow\Bbb{R}$, and hence they define  $n$-person games of the form $2^N\rightarrow\Bbb{R}$ for each $\mathbf{x}\in\Bbb{R}^n$. 
Thus, 
$v^\ME=v^\ME(\cdot;\mathbf{X},f)$
and 
$v^\CE=v^\CE(\cdot;\mathbf{X},f)$ 
are pointwise games. 
Shapley values provide a canonical way of assigning importance scores to $X_i$'s in any such game:
\begin{equation}\label{Shapley values}
\begin{split}
&\varphi_i\big[v^\ME\big]:=
\sum_{S\subseteq N\setminus \{i\}}\frac{|S|!\,(|N|-|S|-1)!}{|N|!}
\left(v^\ME(S\cup\{i\};\mathbf{X},f)-v^\ME(S;\mathbf{X},f)\right),\\
&\varphi_i\big[v^\CE\big]:=
\sum_{S\subseteq N\setminus \{i\}}\frac{|S|!\,(|N|-|S|-1)!}{|N|!}
\left(v^\CE(S\cup\{i\};\mathbf{X},f)-v^\CE(S;\mathbf{X},f)\right).
\end{split}    
\end{equation}
We call these feature attributions marginal and conditional Shapley values respectively. The former can be described as \textit{true to the model} and are dependent on the structure of the model (thus better suited for explaining specific models), whereas the latter can be described as \textit{true to the data} and take the joint distribution of features into account (hence they are much harder to compute) \cite{2020arXiv200616234C}. 
More generally, one can apply any \textit{linear game value} to the games in \eqref{games} to obtain the corresponding \textit{marginal} and \textit{conditional} feature attributions. We will mainly consider game values of the form
\begin{equation}\label{linear general}
h_i[v]:=\sum_{S\subseteq N\setminus \{i\}}w(S;n,i)\left(v(S\cup\{i\})-v(S)\right)\quad
(v \text{ an } n\textit{-person game}, \text{ a function }2^N\rightarrow\Bbb{R})
\end{equation}
which are generalizations of the Shapley value, and satisfy the desirable \textit{null-player property}, i.e. they assign zero to players that do not contribute to any coalition. This results in the so-called \textit{missingness property} of the corresponding explainers \cite{lundberg2017unified}. 
We shall exploit this property in our treatment of tree ensembles 
to reduce the complexity of computing marginal feature attributions; compare with \cite{campbell2022exact}.\\
\indent 
Game-theoretic feature attributions can be investigated from different angles: 
the model $f$ in hand, the game chosen based on $f$ and the predictors, and the game value applied to it. 
Let us motivate this:
\begin{enumerate}[(i)]
\setlength\itemsep{0.3em}
\item For tree-based models, $f$ is piecewise constant, i.e. a \textit{simple} function in the measure-theoretic sense.
\item The conditional and marginal games from \eqref{games}, and hence $\varphi_i\big[v^\ME\big]$ and 
$\varphi_i\big[v^\CE\big]$, are dependent only on the input-output function $f$, not on how $f$ is implemented or on any internal parameters such as weights of a neural network or proportions of splits of data points in a decision tree. When $f$ is obtained from a tree ensemble, we shall prove that the marginal game  $v^\ME$, and thus feature attributions $h_i\big[v^\ME\big]$ obtained from a linear game value $h$, 
are also simple functions; see Theorem \ref{main theorem}. This is not the case for the conditional game $v^\CE$  as demonstrated in Example \ref{main example}. Furthermore, for tree-based models, there is  an  empirical game $S\mapsto v^\Tree(S;\mathcal{T})(\mathbf{x})$
(see Definition \ref{TreeSHAP definition}) where $\mathcal{T}$ is an ensemble trained on a dataset. 
This is the game whose Shapley values are outputs of the \textit{path-dependent} variant of the popular \textit{TreeSHAP algorithm} \cite{2018arXiv180203888L,lundberg2020local}.
In \Sec \ref{subsec:theorem}, we shall show that this game does not estimate either the marginal or the conditional game, even for very big datasets. Moreover, we observe that the feature attributions generated by TreeSHAP fail to 
satisfy the desirable property of \textit{implementation invariance} which is posed as an axiom in 
\cite{sundararajan2017axiomatic}. 
This is shown in \Sec \ref{subsec:example}, where for two topologically distinct decision tree regressors
that exhibit identical input-output functions and even very close impurity measures, the most contributing features in terms of TreeSHAP turn out to be different over a non-negligible subset of data (cf. Figure \ref{fig:failure1}).  
A similar example is presented for the “eject” variant of TreeSHAP (\cite{campbell2022exact}) in Appendix \ref{subappendix:eject}.
\item Game values other than Shapley appear in the literature too. 
One well-known example is the \textit{Banzhaf value} \cite{banzhaf1964weighted}. 
In particular, it is suggested that to bridge the gap between the marginal and conditional frameworks one can group the features based on a dependence measure, and then either consider a \textit{quotient game}, or utilize  a \textit{coalitional game value}, for example the \textit{Owen value} \cite{owen1977values} or the
\textit{two-step Shapley value} \cite{kamijo2009two}, that takes into account 
the provided partition  of features  \cite{2021arXiv210210878M,aas2021explaining,2021arXiv210612228J} 
(see Appendix \ref{subappendix:OneHot} for another natural application of coalitional values).
We advocate for the \textit{carrier-dependence} property (and its coalitional analog) that, along with the null-player property, allows us to reduce the dimensionality of the problem of computing marginal feature attributions. See Appendix \ref{appendix:game} for necessary background material from cooperative game theory, and Theorem \ref{classification} for a classification of game values with properties that we deem desirable when it comes to tree ensembles.   
\end{enumerate}

A key insight, alluded to above, is that  the number of distinct features on which a  tree from the ensemble splits is usually smaller than the total number of features; compare with experiments in \Sec \ref{subsec:experiments1}.\footnote{The number of features on which  a learner depends can also be limited through hyperparameter tuning e.g. 
the  
\texttt{max\_feature}
hyperparameter in scikit-learn's bagging module or the
\texttt{colsample\_bytree}
hyperparameter in XGBoost and LightGBM.} 
A feature $X_i$ absent from a tree is a \textit{null player} for the corresponding marginal game  (not valid for the conditional game; cf. Lemma \ref{dummy players}). 
So when a linear game value $h$ with null-player and carrier-dependence properties is used (e.g. Shapley, Banzhaf etc.),  features not appearing in a tree do not get a contribution from it; and for the rest, the computations only involve features that do come up. 
Therefore, the complexity $O\left(2^n\right)$ of computing the value $h_i$ for the game $v^\ME$ becomes 
$O\left(2^r\cdot |\mathcal{T}|\right)$ 
where $r<n$ is the maximum number of features relevant to
a tree, and $|\mathcal{T}|$ is the total number of trees in the ensemble $\mathcal{T}$.
The observation just made on the number of different features per tree is best demonstrated by 
\textit{Explainable Boosting Machine (EBM)} \cite{2019arXiv190909223N}, where  each tree is dependent on at most two features (see Example \ref{EBM}); and more importantly, by \textit{CatBoost} models \cite{2018arXiv181011363V} where the trees are 
\textit{oblivious} (symmetric). Indeed, the number of distinct features appearing in such a tree does not exceed the depth which is rarely larger than $15$.\footnote{The recommended range of the hyper-parameter 
\texttt{depth}
is $6$ to $10$ per CatBoost documentation \url{https://catboost.ai/en/docs/concepts/parameter-tuning} \cite{CatBoostdoc}.}\\
\indent 
As a matter of fact, oblivious trees play a central role in this article:
\begin{itemize}
\setlength\itemsep{0.3em}
\item [$\blacktriangleright$] 
In \Sec \ref{subsec:CatBoost}, we simplify the Shapley formula to obtain a formula of reduced complexity for the marginal Shapley values of a symmetric decision tree (or an ensemble of such objects)  which is solely in terms of the internal parameters of the model and does not require any access to the training data; see Theorem \ref{Catboost theorem}.
\item [$\blacktriangleright$] We generalize the aforementioned explicit formula from the Shapley value to any other game value apt for explaining tree ensembles according to Theorem \ref{classification} (e.g. the Banzhaf value). It can also be generalized for a large family of coalitional game values (of which the Owen value is an example). See Appendix \ref{appendix:generalization} for more details.
\item [$\blacktriangleright$] Our explicit formulas for marginal feature attributions of ensembles of oblivious trees can be thought of as an ``intrinsic interpretability method'' 
(see \cite{kamath2021explainable} for a taxonomy of Explainable AI) due to their reliance on internal model parameters rather than on any background dataset. The benefit of symmetry to interpretability showcased by such formulas is reminiscent of a general philosophy that intrinsic interpretability should be induced from certain constraints \cite{yang2020enhancing}. 
Examples of this approach include working with ReLU networks or with  neural network architectures which implement functions of a special form \cite{alvarez2018towards,2018arXiv180601933V,2020arXiv201104041S,yang2021gami,2021arXiv210508589Z},
or papers \cite{2019arXiv190909223N,2022arXiv220111931S} on tree ensembles where the number of features per tree or the total number of splits across the ensemble are restricted. Nevertheless, unlike those articles, we work with game-theoretic local feature attributions. Furthermore, in our case, it is well documented that 
tree ensembles can have competitive predictive power even with the symmetry constraint \cite{2016arXiv160905610F,hancock2020catboost}. 
\item [$\blacktriangleright$] Estimating marginal Shapley values based on a background dataset, i.e. considering the \textit{empirical marginal game} (cf. \eqref{empirical}), can be subtle: The background dataset should be large for the sake of statistical accuracy because the 
mean squared error of such estimators is typically inversely proportional to the size of the background dataset (see Lemma \ref{error}); but the complexity increases with the size of the background dataset---this last point is especially manifested for the \textit{interventional} TreeSHAP algorithm \cite{lundberg2020local}. Building on Theorem \ref{Catboost theorem},
we present Algorithm \ref{algorithm} for oblivious ensembles that alleviates this problem of reliance on a background dataset, and computes the exact Shapley values of the empirical marginal game based on the whole training set. We carry out a rigorous error analysis for this algorithm in Theorem \ref{error analysis}.
Table \ref{Tab: complexity} below summarizes all of these facts by comparing our method with TreeSHAP in terms of accuracy and complexity.
\end{itemize}
\begin{table}[ht]
\small
\setlength\extrarowheight{2pt}
\begin{tabular}{|c|c|c|c|}
\hline
\shortstack{Algorithm\\\phantom{a}} & \shortstack{\\Precomputation complexity\\ (per leaf)} & 
\shortstack{\\Computation complexity\\ (for an input explicand)} 
& \shortstack{\\Variance of error \\ is governed by}\\
\Xhline{2pt}
Path-dependent TreeSHAP \cite{2018arXiv180203888L} & N/A & 
$O\left(|\mathcal{T}|\cdot \mathcal{L} \cdot \log^2(\mathcal{L})\right)$ & N/A\\[1mm]
\hline 
Interventional TreeSHAP \cite{lundberg2020local} & N/A &
$O(|\mathcal{T}|\cdot\mathcal{L} \cdot |D_*|)$ & $\frac{1}{|D_*|}$\\[1mm]
\hline 
Algorithm \ref{algorithm} & 
$O\left(|\mathcal{T}|\cdot \mathcal{L}^{\log_2 3}\cdot\log(\mathcal{L})\right)$ &
$O(|\mathcal{T}|\cdot\log(\mathcal{L}))$ & $\frac{1}{|D|}$\\[1mm]
\hline
\end{tabular}
\normalsize
\caption{
The complexity of various explanation algorithms are compared for an ensemble $\mathcal{T}$ of oblivious decision trees where each tree has at most $\mathcal{L}$ leaves. 
For an oblivious decision tree, the path-dependent TreeSHAP and the marginal Shapley values are the same for data points ending up at the same leaf; cf. Theorem \ref{main theorem}.
Algorithm \ref{algorithm} first precomputes all marginal Shapley values for all leaves of all trees and then saves them as look-up tables. The total time complexity is $O\left(|\mathcal{T}|\cdot \mathcal{L}^{\log_2 6}\cdot\log(\mathcal{L})\right)$, and the memory required to store them is 
$O(|\mathcal{T}|\cdot \mathcal{L}\cdot\log(\mathcal{L}))$. As long as the model is in production, the saved tables can be used to estimate marginal Shapley for any individual in time $O(|\mathcal{T}|\cdot\log(\mathcal{L}))$. On the other hand, variants of TreeSHAP do not build look-up tables and their time complexity for one individual (i.e. one leaf of each tree) are reflected above. As for the accuracy in estimating marginal Shapley values, the path-dependent TreeSHAP in general does not converge to marginal Shapley values (cf. Example \ref{main example}) while the interventional TreeSHAP has small error only for large background datasets $D_*$. In contrast, Algorithm \ref{algorithm} does not require any background dataset and instead utilizes model parameters such as leaf weights which are based on
the training set $D$ (which is typically very large).}
\label{Tab: complexity}
\end{table}
\indent
It must be pointed out that results of \Sec \ref{subsec:CatBoost}
can yield an analytic formula for the marginal Shapley values of any decision tree, even if they are not symmetric (e.g. those constructed by \textit{LightGBM} \cite{ke2017lightgbm} or by \textit{XGBoost} \cite{chen2016xgboost}): 
from any given decision tree $T$,
one can construct an oblivious decision tree ${\rm{obl}}(T)$ computing the exact same function (cf. Figure \ref{fig:construction}) to which 
Theorem \ref{Catboost theorem} can then be applied to obtain an explicit formula for the marginal Shapley values. Nevertheless, in the absence of symmetry, ${\rm{obl}}(T)$ determines a finer partition of the domain whose associated probabilities cannot always be estimated based on the trained tree $T$. 
These probabilities appear in the formula, and can in principle be precomputed using a background dataset. We elaborate more on the case of non-oblivious trees in Appendix \ref{appendix:general tree}.
Finally, we point out that the rectangularity of regions cut by decision trees is crucial to our results. In fact,  it is observed in Appendix \ref{appendix:ReLU} that, in case of ReLU networks,  marginal Shapley values are much more complicated as piecewise functions with respect to the activation regions.

\subsection{Outline}\label{subsec:outline}
\Sec \ref{sec:background} is devoted to the necessary background material
including  a very brief review of machine learning explainability and a short discussion on different boosting libraries along with the TreeSHAP method for interpreting them.  
We present our main results in \Sec \ref{sec:main}: 
In \Sec \ref{subsec:theorem}, we show that for tree-based models feature attributions arising from either TreeSHAP or the marginal game are simple functions. However, the former can depend on the model's make-up, and hence are not implementation invariant; see \Sec \ref{subsec:example}.
Next, focusing on marginal feature attributions, in \Sec \ref{subsec:observations} we observe that, for computing the marginal contribution of a feature, 
only the subset of trees which split on that feature are relevant; and each of those trees often depends only on a portion of the variables. This observation, showcased through experiments with XGBoost, LightGBM  and CatBoost models in \Sec \ref{subsec:experiments1}, can be utilized to reduce the complexity of computing marginal feature attributions for tree ensembles. The most important example is the case of CatBoost models where, leveraging the symmetry of oblivious trees, in \Sec \ref{subsec:CatBoost}, we obtain an explicit formula for marginal Shapley values as simple functions, and we propose an algorithm based on that.
The relevant experiments appear in Sections \ref{subsec:experiments2} and \ref{subsec:experiments3}. 
See Appendix \ref{appendix:code and data} for code and data availability.


\section{Preliminaries}\label{sec:background}
\subsection{Basic conventions and notation}\label{subsec:convention}
\begin{itemize}
\item For random variables, upper-case letters are used; and vector quantities are written in bold font. 
\item In this article, $N$ always is a finite non-vacuous subset of positive integers whose cardinality is denoted by $n$. Except in the appendices,  $N$ is taken to $\{1,\dots,n\}$ unless stated otherwise.
\item For an $n$-dimensional vector and a subset $S\subseteq N$ of indices, we use $S$ as a subscript to show the vector formed by components whose indices come from $S$. For instance, if $\mathbf{a}=(a_1,\dots,a_n)$, then $\mathbf{a}_S:=(a_i)_{i\in S}$. Moreover, $(a_i)_{i\in N\setminus S}$ is denoted by $\mathbf{a}_{-S}$; thus one may write $\mathbf{a}$ as $(\mathbf{a}_S,\mathbf{a}_{-S})$.
\item In modeling problems, the features are denoted by a vector $\mathbf{X}=(X_1,\dots,X_n)$ of random variables on an ambient probability space $(\Omega,\mathcal{F},\Bbb{P})$. The joint probability distribution ${\rm{P}}_{\mathbf{X}}$ is the
Borel probability measure on $\Bbb{R}^n$ obtained from the  pushforward of the probability measure $\Bbb{P}$ on $\Omega$. 
The model is thought of as a Borel measurable function $f:\Bbb{R}^n\rightarrow\Bbb{R}$.
We use $D\subset\Bbb{R}^n$ to denote a finite data sample. Occasionally, a smaller background dataset may be required for estimating marginal expectations; that will be denoted by $D_{*}\subset D$.
A random sample drawn i.i.d. from  $\Bbb{R}^n$ according to the distribution ${\rm{P}}_{\mathbf{X}}$ is shown by $\mathbf{D}$ whose elements are random vectors 
$$\mathcal{X}^{(1)},\dots,\mathcal{X}^{(\mathscr{D})}:(\Omega,\mathcal{F})\rightarrow(\Bbb{R}^n,\text{Borels}).$$
These are i.i.d. and for each of them the induced measure on $\Bbb{R}^n$ is ${\rm{P}}_{\mathbf{X}}$. 
\item We assume that the features have numeric value; so they can be continuous, ordinal or encoded categorical features. (See Appendix \ref{subappendix:OneHot} for more on the case of categorical features.)
We assume that each $X_i$ takes its values in an interval $[l_i,u_i]$. Thus ${\rm{P}}_{\mathbf{X}}$ is supported in the  hypercube 
\begin{equation}\label{support}
\mathcal{B}:=\prod_{i=1}^n[l_i,u_i].
\end{equation}

\end{itemize}

\subsection{A review of machine learning explainability}\label{subsec:XAI}
There is a vast literature on explaining complicated machine learning models;
see \cite{molnar2020interpretable,kamath2021explainable} for an overview. 
There are \textit{global} methods such as  PDP (Partial Dependence Plots) \cite{friedman2001greedy} 
or BETA (Black Box Explanations through Transparent Approximations) \cite{2017arXiv170701154L}
which describe the overall effect of features as well as \textit{local} methods such as 
the rule-based method Anchors \cite{ribeiro2018anchors}, or 
LIME (Linear Interpretable Model-agnostic Explanation) \cite{ribeiro2016should} 
and SHAP (SHapley Additive exPlanations) \cite{lundberg2017unified}
which provide \textit{individualized} feature attributions to explain a single prediction. 
The SHAP paper builds on ideas from game theory \cite{shapley1953value} 
(also see \cite{vstrumbelj2014explaining}).
Moreover, it introduces the KernelSHAP algorithm for approximating Shapley values. 
For a survey on different methods for estimating Shapley values,
see \cite{2022arXiv220707605C}. \\
\indent 
This paper focuses on game-theoretic local feature attributions. 
All the aforementioned methods are \textit{model agnostic}. 
There are also \textit{model-specific} methods for estimating Shapley values including  
the DeepSHAP algorithm for neural networks \cite{chen2021explaining}, and the TreeSHAP algorithm 
\cite{2018arXiv180203888L,lundberg2020local} for tree ensembles. 
The focus of this paper is on tree-based models. 
After a brief review of TreeSHAP in \Sec \ref{subsec:TreeSHAP}, 
we compare its outputs with marginal feature attributions in \Sec \ref{sec:main} where we expose certain shortcomings of TreeSHAP,  and discuss how calculating marginal feature attributions for tree-based models can be done more efficiently. 

\subsection{Feature attributions via cooperative game theory}\label{subsec:gamebackground}
Machine learning explainers studied in this paper are constructed via game-theoretic methods.  
The first step is to define certain pointwise games based on the predictors $\mathbf{X}=(X_1,\dots,X_n)$ and the model $f$. At each point $\mathbf{x}\in\Bbb{R}^n$, these define a game with $N=\{1,\dots,n\}$ as its set of players, i.e. a set function $v:2^N\rightarrow\Bbb{R}$. Next, to compute feature attributions for the individual $\mathbf{x}$, a game value $h$ is applied to 
obtain a vector $\left(h_i[N,v]\right)_{i\in N}$ where the $i^{\rm{th}}$ component quantifies the ``contribution'' of player $i\in N$ (feature $X_i$) according to $h$. We refer the reader to Appendix \ref{appendix:game} for basic notions from cooperative game theory. 
\\ 
\indent
Two prominent games associated with a machine learning model $f$ are the marginal game $v^\ME=v^\ME(\cdot;\mathbf{X},f)$ and the conditional game $v^\CE=v^\CE(\cdot;\mathbf{X},f)$ defined in \eqref{games}.
Feature attributions obtained from them via applying a game value such as Shapley are characterized as true to the model and true to the data respectively; this is best demonstrated in Example \ref{regression}.
In general, the choice between the two approaches depends on the application \cite{2020arXiv200616234C}. 
See \cite{2021arXiv210210878M} for a detailed comparison of marginal and conditional feature attributions. \\
\indent
It is certainly possible to define other games based on the model $f$; see \cite{datta2016algorithmic,merrick2020explanation,sundararajan2020many}. Another game particularly important to our context is the game $v^\Tree(\cdot;\mathcal{T})$ that  the path-dependent TreeSHAP algorithm (\cite{2018arXiv180203888L}) introduces when 
$f$ is computed by a trained tree ensemble $\mathcal{T}$; cf. \Sec \ref{subsec:TreeSHAP}. 
The resulting feature attributions turn out to be very different from either marginal or conditional ones as we shall observe in \Sec \ref{subsec:example}.\\
\indent
It should be mentioned that in our treatment conditional and marginal games stem from features that are random variables defined on an ambient probability space. There is also the empirical marginal game which is an estimator for the marginal game based on the training set $D$ (or based on any other data sample):
\begin{equation}\label{empirical}
\hat{v}^{\ME}(S;D,f)(\mathbf{x})
:=\frac{1}{|D|}\sum_{\tilde{\mathbf{x}}\in D} f(\mathbf{x}_S,\tilde{\mathbf{x}}_{-S}).    
\end{equation}

An important observation, which is useful for reducing the computational complexity of generating marginal feature attributions, is that for the marginal game only variables that do appear in the model matter. This is not the case for the conditional game. 
\begin{lemma}\label{dummy players}
Consider $(\mathbf{X},f)$ where $\mathbf{X}=(X_1,\dots,X_n)$ are the predictors and $f:\Bbb{R}^n\rightarrow\Bbb{R}$ is a function.
\begin{enumerate}
\item If $U\subset N$ and $f(\mathbf{x})$ is independent of $x_i$ for $i\in N\setminus U$, 
then for ${\rm{P}}_{\mathbf{X}}$-almost every $\mathbf{x}$, 
each $i\in N\setminus U$ is a null player of the game 
$v^\ME(\cdot;\mathbf{X},f)(\mathbf{x})$ and $U$ is a carrier for it. Moreover, 
$v^\ME(\cdot;\mathbf{X},f)$ can be considered as a function of lower arity: 
$v^\ME(\cdot;\mathbf{X},f)(\mathbf{x})=v^\ME(\cdot;\mathbf{X}_U,\tilde{f})(\mathbf{x}_U)$ where 
$\tilde{f}$ is defined by $f(\mathbf{x})=\tilde{f}(\mathbf{x}_U)$. 
\item Suppose $U\subset N$ and $f(\mathbf{x})$ is independent of $x_i$ for $i\in N\setminus U$. 
If $X_i$ is independent of $\mathbf{X}_{-i}$, then $i\in N$ is a null player of the game $v^\CE(\cdot;\mathbf{X},f)(\mathbf{x})$ for ${\rm{P}}_{\mathbf{X}}$-almost every $\mathbf{x}$. Furthermore, if $\mathbf{X}_{U}$ is independent of $\mathbf{X}_{-U}$, then
$U$ is a carrier of $v^\CE(\cdot;\mathbf{X},f)(\mathbf{x})$ for ${\rm{P}}_{\mathbf{X}}$-almost every $\mathbf{x}$. 
\end{enumerate}
\end{lemma}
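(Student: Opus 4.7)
The plan is to argue both parts directly from the defining formulas in \eqref{games}, treating each of the four claims (null player and carrier, for each of the two games) as a routine manipulation of expectations.

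For part (1), I would start with the null-player claim. Fix $\mathbf{x}$, pick $i\in N\setminus U$ and any $S\subseteq N\setminus\{i\}$. Because $f(\mathbf{y})$ is constant in the $i$-th argument, the random variable $f(\mathbf{x}_{S\cup\{i\}},\mathbf{X}_{-(S\cup\{i\})})$ is pointwise equal to $f(\mathbf{x}_S,\mathbf{X}_{-S})$: the two differ only in whether the $i$-th coordinate is set to the deterministic $x_i$ or to the random $X_i$, but $f$ is insensitive to that coordinate. Taking expectations gives $v^{\ME}(S\cup\{i\})=v^{\ME}(S)$. The carrier identity $v^{\ME}(S)=v^{\ME}(S\cap U)$ follows by the same reasoning applied to every $j\in N\setminus U$ at once: the two expressions agree on which coordinates in $U$ are fixed versus integrated, and $f$ is indifferent to the rest. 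The reduction $v^{\ME}(\cdot;\mathbf{X},f)(\mathbf{x})=v^{\ME}(\cdot;\mathbf{X}_U,\tilde{f})(\mathbf{x}_U)$ is then just the observation that $\mathbb{E}[f(\mathbf{x}_S,\mathbf{X}_{-S})]=\mathbb{E}[\tilde{f}(\mathbf{x}_{S\cap U},\mathbf{X}_{U\setminus S})]$, which is a consequence of the fact that integrating $f$ against the marginal of $\mathbf{X}_{-U}$ is trivial.

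Part (2) rests on removing redundant conditioning via independence. For the null-player claim, write $f(\mathbf{X})=\tilde{f}(\mathbf{X}_U)$, and note that $i\in N\setminus U$ together with $i\notin S$ means that the independence of $X_i$ from $\mathbf{X}_{-i}$ yields independence of $X_i$ from the pair $(\mathbf{X}_U,\mathbf{X}_S)$. The ``take out what's independent'' rule for conditional expectations then collapses $\mathbb{E}[\tilde{f}(\mathbf{X}_U)\mid\mathbf{X}_S=\mathbf{x}_S,X_i=x_i]$ to $\mathbb{E}[\tilde{f}(\mathbf{X}_U)\mid\mathbf{X}_S=\mathbf{x}_S]$, proving $v^{\CE}(S\cup\{i\})=v^{\CE}(S)$. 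For the carrier claim, assume $\mathbf{X}_U\perp\mathbf{X}_{-U}$, fix $S\subseteq N$, and split the event $\{\mathbf{X}_S=\mathbf{x}_S\}$ into $\{\mathbf{X}_{S\cap U}=\mathbf{x}_{S\cap U}\}\cap\{\mathbf{X}_{S\setminus U}=\mathbf{x}_{S\setminus U}\}$. Since $\tilde{f}(\mathbf{X}_U)$ depends only on $\mathbf{X}_U$, and $\mathbf{X}_U$ is independent of $\mathbf{X}_{-U}\supseteq\mathbf{X}_{S\setminus U}$, conditioning on the second event provides no extra information, so $v^{\CE}(S)=v^{\CE}(S\cap U)$ almost surely in $\mathbf{x}$.

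No step is technically deep; the only place one must be careful is in invoking conditional independence correctly, since $X_i$ independent of $\mathbf{X}_{-i}$ does \emph{not} in general imply $\mathbf{X}_U\perp\mathbf{X}_{-U}$, which is why the two carrier-type claims require genuinely different hypotheses. The ``almost every $\mathbf{x}$'' qualifier is only there to reflect the usual ambiguity of conditional expectations on null sets; once regular conditional distributions are chosen, the identities hold pointwise on a set of full ${\rm{P}}_{\mathbf{X}}$-measure. I would present the proof as two short blocks, each about a paragraph, with the independence reductions written explicitly to leave no doubt.
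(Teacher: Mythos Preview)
Your proposal is correct and follows essentially the same approach as the paper's proof: both parts are handled by direct manipulation of the defining expectations in \eqref{games}, with part~(2) relying on the standard fact that $\Bbb{E}[\mathbf{W}\mid\mathbf{Y}=\mathbf{y},\mathbf{Z}=\mathbf{z}]=\Bbb{E}[\mathbf{W}\mid\mathbf{Y}=\mathbf{y}]$ when $(\mathbf{W},\mathbf{Y})\perp\mathbf{Z}$. The only organizational difference is that the paper proves the carrier claim for $v^\CE$ first and then obtains the null-player statement as the special case $U=N\setminus\{i\}$, whereas you argue the two claims separately; this is a cosmetic distinction.
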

We refer the reader to Appendix \ref{subappendix:properties} for 
basic concepts from game theory such as null player or \textit{carrier}.  
The lemma above is proven in Appendix \ref{subappendix:games}.
Notice that, here, in the case of the marginal game, one can focus on games with a smaller set of players $U$ instead of $N$ whereas additional assumptions on predictors' joint distribution are required when it comes to the conditional game. 

\begin{remark}
The fact that a variable not appearing in the model can have non-zero conditional feature attribution was previously observed in \cite{janzing2020feature,sundararajan2020many}. This violates an axiom posed for feature attribution methods in \cite{sundararajan2017axiomatic,sundararajan2020many}.     
\end{remark}

Next, we discuss how certain properties of game values can be desirable in constructing machine learning explainers.   
Various properties of game values have been studied extensively in the game theory literature. 
The seminal paper of Shapley \cite{shapley1953value} shows that the Shapley value 
\begin{equation}\label{ShapleyFormula}
\varphi_i[N,v]:=\sum_{S\subseteq N\setminus \{i\}}
\frac{|S|!\,(|N|-|S|-1)!}{|N|!}
\left(v(S\cup\{i\})-v(S)\right) \quad(i\in N)
\end{equation}
is the unique game value with  \textit{linearity}, \textit{symmetry}, \textit{efficiency} and 
null-player properties.\footnote{In fact, Shapley's theorem holds with the weaker property of \textit{additivity} in place of linearity. Moreover, his paper combines efficiency and null-player properties into a single formula.}
Another characterization of the Shapley value is due to Young;  it drops additivity and replaces the null-player property with \textit{strong monotonicity} \cite{young1985monotonic}. Formal definitions of these properties can be found in Appendix \ref{subappendix:properties}. 
For our purposes, as mentioned in \Sec \ref{subsec:motive}, the null-player property is crucial. 
It turns out that linear game values with the null-player property are precisely those of form \eqref{linear general}. Indeed, it is possible to determine when game values of this form satisfy other desirable properties just mentioned; this is the content of Lemma \ref{properties}. 
The lemma allows us to deal with a formula of the form \eqref{linear general} (or \eqref{linear general variant}) rather than an abstract assignment $(N,v)\mapsto\left(h_i[N,v]\right)_{i\in N}$, a formula in which games associated with a machine learning model $(\mathbf{X},f)$ (e.g. \eqref{games}) can be plugged. Notice that the Shapley value \eqref{ShapleyFormula} is of the form \eqref{linear general}. The same holds for the famous Banzhaf value (cf. \cite{banzhaf1964weighted}) 
\begin{equation}\label{BanzhafFormula}
Bz_i[N,v]:=\sum_{S\subseteq N\setminus \{i\}}
\frac{1}{2^{n-1}}
\left(v(S\cup\{i\})-v(S)\right) \quad (i\in N).
\end{equation}
Lemma \ref{properties} immediately implies that both Shapley and Banzhaf game values are linear, symmetric, strongly monotonic, and satisfy the null-player property.\footnote{Nonetheless, among them, only the Shapley value satisfies the efficiency property due to the main result of \cite{shapley1953value}.} 
Moreover, they satisfy the carrier dependence property; see Lemma \ref{carrier}.\footnote{For the carrier dependence of the Shapley value, also see  \cite[Corollary 2]{shapley1953value}.}
Notice that, due to the symmetry, the weights in \eqref{ShapleyFormula} and \eqref{BanzhafFormula} depend only on the $|N|$ and $|S|$. Thus the formulas make sense even when the set of players $N$ is an arbitrary finite subset of $\Bbb{N}$ in bijection with $\{1,\dots,n\}$. 
When $N=\{1,\dots,n\}$ and the context is clear, we omit $N$ and write these values as 
$\varphi_i[v]$ or $Bz_i[v]$. 

\begin{example}\label{two-player}
For a game $v$ with the set of players $\{1,2\}$ one has 
\small 
\begin{equation}\label{two-player Shapley}
\varphi_1[v]=
\frac{1}{2}\left(v(\{1,2\})-v(\{2\})\right)+\frac{1}{2}\left(v(\{1\})-v(\varnothing)\right),\quad
\varphi_2[v]=
\frac{1}{2}\left(v(\{1,2\})-v(\{1\})\right)+\frac{1}{2}\left(v(\{2\})-v(\varnothing)\right).
\end{equation}
\normalsize
\end{example}
Different properties of game values mentioned so far have come up in the context of machine learning explainability. Paper \cite{lundberg2017unified} puts forward the  SHAP framework for individualized feature attribution which is argued to be the unique \textit{additive} method satisfying missingness and \textit{consistency}. These properties of the SHAP method  follow respectively from efficiency, null-player and strong monotonicity properties of the Shapley game value. Paper \cite{sundararajan2020many} discusses why different properties of the Shapley value are desirable and construct explainers by applying the Shapley value to a variety of games constructed based on the model in hand. In our setting of marginal explanations for tree ensembles, as mentioned in \Sec \ref{subsec:motive}, the key insight, best demonstrated by libraries such as EBM and CatBoost, is that the number of distinct features appearing in a tree can  be much smaller than the total number of variables. 
In view of this, we postulate that game values used for explaining tree ensembles should 
admit the following properties (see Definition \ref{terminology}):
\begin{enumerate}
\item linearity, so that feature attributions can be disaggregated across the ensemble;
\item symmetry, a natural property which is often assumed in the literature;
\item null-player, so that features on which a tree does not split get zero attributions from that tree;
\item carrier-dependence, so that for each tree the problem reduces to one only involving the features on which the tree actually splits. 
\end{enumerate}
The advantage of the null-player property has been pointed out in \cite{campbell2022exact} as well. But in this article, we formulate a unified framework by considering game values satisfying the four axioms above which include game values other than Shapley, e.g. Banzhaf. 
The theorem below completely classifies such game values.
\begin{theorem}\label{classification}
Let  $\mathcal{A}:=\{\alpha(s,n)\}_{\substack{n\in\Bbb{N}\\ 0\leq s<n}}$ 
be a collection of real numbers with a ``backward'' Pascal identity:
\begin{equation}\label{backward Pascal}
\alpha(s,n)+\alpha(s+1,n)=\alpha(s,n-1). 
\end{equation}
Define a game value $h^{\mathcal{A}}$ by setting 
\begin{equation}\label{good game value}
h^{\mathcal{A}}_i[N,v]:=\sum_{S\subseteq N\setminus\{i\}}\alpha(|S|,|N|)\left(v(S\cup\{i\})-v(S)\right)   
\end{equation}
for any cooperative game $(N,v)$ and any $i\in N$. Then $h^{\mathcal{A}}$ satisfies linearity, symmetry, null-player and carrier-dependence axioms. Conversely, a game value $h$ satisfying these four axioms is of the form $h^{\mathcal{A}}$ for such a collection $\mathcal{A}$. 
\end{theorem}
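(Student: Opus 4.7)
The plan is to prove the two implications separately: the forward direction is a direct verification of the four axioms, while the converse reduces, via the earlier structural Lemma \ref{properties}, to extracting the Pascal identity from evaluating $h$ on a carefully chosen test game.

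For the forward direction, linearity and symmetry of $h^{\mathcal{A}}$ are immediate from \eqref{good game value}, since the weights $\alpha(|S|,|N|)$ depend only on sizes and the expression is linear in $v$. The null-player axiom is obvious as every marginal contribution $v(S\cup\{i\})-v(S)$ vanishes when $i$ is a null player. The substantive axiom is carrier-dependence: for any carrier $T$ of $(N,v)$ and $i\in T$, I would decompose each $S\subseteq N\setminus\{i\}$ uniquely as $S=S'\sqcup R$ with $S':=S\cap T$ and $R\subseteq N\setminus T$. Because $T$ is a carrier, $v(S)=v(S')$ and $v(S\cup\{i\})=v(S'\cup\{i\})$, so the desired equality $h^{\mathcal{A}}_i[N,v]=h^{\mathcal{A}}_i[T,v|_T]$ reduces to the combinatorial identity
\[
\sum_{r=0}^{k}\binom{k}{r}\,\alpha(s+r,n)\;=\;\alpha(s,n-k),\qquad k:=|N|-|T|,\ s:=|S'|,\ n:=|N|,
\]
which I would prove by induction on $k$: the base case $k=0$ is trivial, and the inductive step splits $\binom{k}{r}=\binom{k-1}{r}+\binom{k-1}{r-1}$, pairing consecutive terms $\alpha(s+r,n)+\alpha(s+r+1,n)$ which collapse via \eqref{backward Pascal} into $\alpha(s+r,n-1)$, after which the inductive hypothesis with parameter $n-1$ finishes the job. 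For $i\in N\setminus T$, the already-established null-player property gives $h^{\mathcal{A}}_i[N,v]=0$, so carrier-dependence holds in full.

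For the converse, I would first invoke Lemma \ref{properties}: linearity together with the null-player axiom forces $h$ to be of the form \eqref{linear general}, and symmetry then forces the weights to depend only on $|S|$ and $|N|$, yielding a representation \eqref{good game value} for some family $\mathcal{A}=\{\alpha(s,n)\}$. To show that $\mathcal{A}$ satisfies \eqref{backward Pascal}, fix $n\geq 2$ and $0\leq s\leq n-2$, set $N=\{1,\dots,n\}$ and $T=\{1,\dots,n-1\}$, pick any $S_0\subseteq T\setminus\{1\}$ with $|S_0|=s$, and consider the indicator game
\[
v(S)\;=\;\begin{cases}1 & \text{if }S\cap T=S_0\cup\{1\},\\ 0 & \text{otherwise.}\end{cases}
\]
Then $v(S)=v(S\cap T)$, so $T$ is a carrier. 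A direct inspection of the sum over $S\subseteq N\setminus\{1\}$ shows that only $S=S_0$ and $S=S_0\cup\{n\}$ produce a nonzero marginal contribution, giving $h_1[N,v]=\alpha(s,n)+\alpha(s+1,n)$; similarly, restricting to $T$ leaves only $S'=S_0$, so $h_1[T,v|_T]=\alpha(s,n-1)$. Equating these two via carrier-dependence delivers \eqref{backward Pascal} for the chosen $(s,n)$.

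The main hurdle I anticipate is less technical than conceptual: identifying that the backward Pascal identity is precisely the right hypothesis to make the combinatorial step go through for every $k$, and designing in the converse a test game whose Dirac-like support isolates a single instance of the identity rather than an entangled linear combination of them. Once the indicator game above is in hand, each relation in $\mathcal{A}$ pops out by direct inspection without any linear-algebra manipulations on the whole space of games with carrier $T$.
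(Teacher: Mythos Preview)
Your forward direction is essentially the paper's: both reduce carrier-dependence (via the characterization in Lemma~\ref{properties}(1)) to the identity $\sum_{r=0}^{k}\binom{k}{r}\alpha(s+r,n)=\alpha(s,n-k)$ and prove it by induction on $k$, splitting $\binom{k}{r}$ with Pascal's rule and collapsing adjacent $\alpha$'s via \eqref{backward Pascal}.

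For the converse, your indicator-game device is a pleasant alternative to the paper's direct coefficient matching, but there is a gap in the step preceding it. You assert that ``symmetry then forces the weights to depend only on $|S|$ and $|N|$''; however, Lemma~\ref{properties}(3) only yields $w(S;N,i)=\alpha(|S|;N)$, i.e.\ dependence on $|S|$ and on the \emph{set} $N$. The symmetry axiom concerns permutations $N\to N$ and says nothing relating two distinct player sets of equal cardinality (cf.\ the footnote to Remark~\ref{arbitrary N}). The paper closes this gap using carrier-dependence itself: applying Lemma~\ref{properties}(1) with $N=\{1,\dots,n\}$ and an arbitrary proper subset $U$ gives
\[
\alpha(s';U)=\sum_{s=s'}^{n-|U|+s'}\binom{n-|U|}{s-s'}\alpha\big(s;\{1,\dots,n\}\big),
\]
whose right-hand side depends only on $|U|$; hence $\alpha(\cdot;U)$ depends only on $|U|$, and one may legitimately write $\alpha(s,n)$. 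Your test game can be made to do the same work---run it for an arbitrary $N$ and any $T=N\setminus\{j\}$ to get $\alpha(s;T)=\alpha(s;N)+\alpha(s+1;N)$ independent of $j$, then iterate down from a common ambient segment---but as written, the passage from $\alpha(|S|;N)$ to $\alpha(|S|,|N|)$ is unjustified, and without it you cannot conclude that $h=h^{\mathcal{A}}$ for a family indexed by cardinalities alone.
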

\noindent
A proof will be presented in Appendix \ref{subappendix:classification}. Moreover, such game values have a computational advantage because certain sums involving their coefficients may be simplified; this is the content of Lemma \ref{simplification}.

\begin{example}
In the case of the Shapley value one has $\alpha(s,n)=\frac{s!(n-s-1)!}{n!}$ while 
$\alpha(s,n)=\frac{1}{2^{n-1}}$ in the case of the Banzhaf value. It can be readily checked that the backward Pascal identity \eqref{backward Pascal} holds in both situations.    
\end{example}

At the end of this section, we allude to \textit{coalitional explainers}. On the game-theoretic side, they amounts to adding a coalition structure (cf. \cite{aumann1974cooperative}) and then applying  a coalitional game value; i.e. an assignment 
$\mathfrak{h}:(N,v,\mathfrak{P})\mapsto\left(\mathfrak{h}_i[N,v,\mathfrak{P}]\right)_{i\in N}$
where  $\mathfrak{P}$ is a partition of the set of players $N$. 
This pertains to machine learning explanation because it has been observed that grouping predictors can improve the stability of feature attributions and facilitate computations \cite{2021arXiv210210878M,aas2021explaining,2021arXiv210612228J}. 
One well-known example of a coalitional game value is the Owen value introduced in \cite{owen1977values}:
\begin{equation}\label{OwenFormula}
\begin{split}
&Ow_i[N,v,\mathfrak{P}]:=
\sum_{R\subseteq M\setminus \{j\}}
\sum_{K\subseteq S_j\setminus\{i\}}
\frac{|R|!\,(|M|-|R|-1)!}{|M|!}\cdot
\frac{|K|!\,(|S_j|-|K|-1)!}{|S_j|!}
\left(v\left(Q\cup K\cup\{i\}\right)-
v\left(Q\cup K\right)\right),\\
& \text{where } \mathfrak{P}=\{S_1,\dots,S_m\} \text{ is a partition of } N,
M:=\{1,\dots,m\}, i\in S_j \text{ and } Q:=\cup_{r\in R}S_r.    
\end{split}
\end{equation}
We shall discuss coalitional game values in more details in Appendix \ref{subappendix:coalitional game values}. Many of our results can be generalized from game values such as Shapley or Banzhaf to 
coalitional game values such as Owen. 
In particular, a coalitional version of Theorem \ref{classification} is presented in Appendix \ref{subappendix:classification coalitional}.
Furthermore, we shall see in Appendix \ref{subappendix:OneHot} that the Owen game value can be  used naturally to retrieve marginal Shapley values of categorical features which were one-hot encoded in the modeling process. 
For more on group explainers, including a detailed treatment of various axioms for coalitional game values and their implications, see \cite{2021arXiv210210878M}.


\subsection{Tree ensembles}\label{subsec:ensembles}
We start with some basic notation and terminology. 
\begin{itemize}
\item We are mainly concerned with tree-based regressors. Such a model $f$ corresponds to an ensemble $\mathcal{T}=\{T_1,\dots,T_t\}$ where $T_i$'s are regression decision trees. 
We always assume that the ensembles are trained, so $\mathcal{T}$ provides us with the knowledge of internal parameters such as values at the leaves and the splitting proportions of training instances at the internal nodes of trees. 
Denoting the function implemented by $T_i$ as $g_i$, one has $f=g_1+\dots+g_t$.
For classifiers, outputs of $g_i$'s become logit probability values and $f$ becomes the decision function (the population minimizer).\footnote{The function $f$ can be accessed by setting 
\texttt{prediction\_type="RawFormulaVal"} in CatBoost and 
\texttt{raw\_score=True} in LightGBM. Moreover, $f$ sometimes is an affine transformation of $g_1+\dots+g_t$.}  
\item In each decision tree from the ensemble, splits at non-terminal nodes
are based on whether a feature $X_i$ is smaller than a threshold or not. To avoid ambiguity about if $X_i<threshold$ should be strict or not, we assume all events $X_i=threshold$ are of probability zero.\footnote{Ordinal features fit in this framework too. For instance, if $X_i$ takes its values in $\{1,2,3,4\}$, one can take the threshold to be a non-integer from $(1,4)$.}
In other words:
\begin{equation}\label{assumption}
{\rm{P}}_{\mathbf{X}}(x_i=threshold)
=0 
\quad \begin{matrix}
\text{ if a splitting based on comparing } X_i \text{ with }\\
threshold\text{  takes place in a tree from the ensemble}.
\end{matrix}
\end{equation}
\item For each $i\in N$, the subset of trees in $\mathcal{T}$ that split on $X_i$ is denoted by 
$\mathcal{T}^{(i)}$.
\item Each decision tree $T$ from ensemble $\mathcal{T}$ computes a simple function 
\begin{equation}\label{simple function}
g(\mathbf{x}):=c_1\cdot\mathbbm{1}_{R_1}+\dots+c_\ell\cdot\mathbbm{1}_{R_\ell}      
\end{equation}
where $c_1,\dots,c_\ell$ are the values appearing at the leaves of $T$ and $R_1,\dots,R_\ell$ are the rectangular regions with disjoint interiors determined by it; here $R_i$ is the region where the tree assigns the value $c_i$ and its characteristic function is denoted by $\mathbbm{1}_{R_i}$. 
Therefore, the ambient hypercube $\mathcal{B}$ \eqref{support} is cut into smaller ones 
$R_1,\dots,R_\ell$ which are determined by the splits in the decision tree. We take these hypercubes to be closed. They then may intersect each other along the boundaries, but  that is negligible in view of \eqref{assumption}. 
Thus
\begin{equation}\label{partition}
\mathscr{P}(T):=\{R_1,\dots,R_\ell\}     
\end{equation}
is a partition of $\mathcal{B}$ into smaller hypercubes, at least in the measure-theoretic sense.\footnote{I.e. the union of $R_1,\dots,R_\ell$ covers $\mathcal{B}$ except for perhaps a measure zero subset, and the intersection of any two of them is of measure zero.}

\item Any partition $\mathscr{P}$ of $\mathcal{B}$ into rectangular regions may be completed into a grid $\widetilde{\mathscr{P}}$ which is the product of partitions determined by $\mathscr{P}$ across different dimensions. Figure \ref{fig:partition} illustrates a decision tree $T$ computing a simple function of two variables along with the corresponding partition  $\mathscr{P}(T)$ and the finer one $\widetilde{\mathscr{P}(T)}$.  
\end{itemize}

Various implementations of gradient boosting differ in terms of the training time, their hyperparameters, their optimization and regularization techniques, their approach to construct trees, their handling of categorical features or missing values, parallel processing etc.
For instance, the implementation of gradient boosting in Scikit-learn uses gradient descent for minimizing the cost function while XGBoost utilizes the Newton method; or CatBoost has a sophisticated way of handling categorical features which XGBoost lacks. 
Regardless  of implementation differences, when the base learner is a decision tree, ensembles obtained from bagging or boosting represent simple functions. The relevant point here is how trees constructed by different libraries are different topologically. The reader can check detailed comparisons of XGBoost, LightGBM and CatBoost methods in \cite{neptune1,neptune2,neptune3,al2019comparison}. Focusing on their (default) growth policy:
\begin{itemize}
\item Trees in XGBoost are constructed level-wise; they are grown to 
\texttt{max\_depth} and are pruned based on hyperparameters such as 
\texttt{min\_split\_loss} \cite{XGBoostdoc}. The splits within a level are not necessarily the same. 
\item Trees in LightGBM are constructed in a leaf-wise manner; tree complexity is governed by hyperparameters such as \texttt{max\_depth} and
\texttt{num\_leaves} \cite{LightGBMdoc}. This can result in asymmetric trees. 
\item Trees in CatBoost are symmetric or oblivious; this restriction can be thought of as a regularization helping to avoid overfitting   \cite{2018arXiv181011363V}. 
See below for a definition of oblivious decision trees. 
\end{itemize}

\begin{definition}\label{oblivious definition}
An oblivious (symmetric) decision tree is a perfect binary tree (i.e. $\#\text{leaves}=2^{\text{depth}}$) 
in which the splits across each level are done with respect to the same feature and threshold.
\end{definition}

\begin{example}
The second tree from Figure \ref{fig:failure1} and the tree in Figure \ref{fig:repeated} are oblivious whereas the first tree from Figure \ref{fig:failure1} and the tree in Figure \ref{fig:partition} are not. 
\end{example}

\subsection{TreeSHAP algorithm and its variants}\label{subsec:TreeSHAP}

Although there are numerous algorithms for computing global feature importance values for a tree-based model, local feature attribution methods are not fully investigated \cite[Supplementary Results, \Sec 2]{lundberg2020local}.
The TreeSHAP algorithm is one of the most common local methods for interpreting tree-based models.  
TreeSHAP is not model agnostic and takes internal parameters such as values at the leaves and the splitting proportions at internal nodes into account. The algorithm has two variants which both use dynamic programming to obtain polynomial-time performance.
The original TreeSHAP algorithm \cite{2018arXiv180203888L},  the path-dependent variant,
is meant to estimate conditional Shapley values; but its approach to approximating conditional expectations turns out to be imperfect \cite[p. 18]{2022arXiv220707605C}. 
Indeed, the Shapley values generated by the path-dependent TreeSHAP come from a certain game associated with the trained ensemble \cite[Algorithm 1]{2018arXiv180203888L}. This game is presented in Definition \ref{TreeSHAP definition} below. In \Sec \ref{sec:main}, we shall observe that this game differs from both conditional and marginal games, and as a matter of fact, it depends on the tree structure, not just the input-output function of the model.  
In contrast, the other variant, the interventional TreeSHAP,
estimates marginal Shapley values through utilizing a background dataset \cite{lundberg2020local}; it thus depends only on the input-output function of the model. 
The downside is that a fair number of background samples is required for the accuracy of the estimation while  the algorithm becomes slow for even moderately large background datasets (see the complexity analysis below). Indeed, experiments in the aforementioned paper use only $200$ background samples 
\cite[p. 66]{lundberg2020local}. 
According to TreeSHAP documentation \cite{TreeSHAPdoc}, the recommended size for the background dataset is between $100$ and $1000$. 
In \Sec \ref{subsec:CatBoost}, we resolve this issue in the case of oblivious ensembles by presenting a method for estimating marginal Shapley values which does not require any background dataset and instead, employs 
internal parameters; the accuracy of our method is dictated by the size of the training dataset.\\
\indent
In implementing TreeSHAP, the variant is determined by the hyperparameter 
\texttt{feature\_perturbation}
which should be 
\texttt{"tree\_path\_dependent"}
for path-dependent, and 
\texttt{"interventional"}
for the interventional variant---which is currently the default \cite{TreeSHAPdoc}. 
For an ensemble $\mathcal{T}$, the time complexity of the path-dependent TreeSHAP is 
\begin{equation}\label{path complexity}
O(|\mathcal{T}|\cdot \mathcal{L} \cdot \mathcal{D}^2)    
\quad (\mathcal{L} \,(\text{resp. } \mathcal{D}):=\text{max.  number of leaves (resp. max. depth) of any tree from }\mathcal{T}),
\end{equation}
while that of the interventional one is 
\begin{equation}\label{interventional complexity}
O(|\mathcal{T}|\cdot \mathcal{L} \cdot |D_*|)
\quad (\mathcal{L} \text{ as above and }D_* \text{ the background dataset});
\end{equation}
see \cite[pp. 64--66]{lundberg2020local} for descriptions of these algorithms and their complexity analysis.
\begin{definition}\label{TreeSHAP definition}
Let $\mathcal{T}$ be a trained ensemble of  decision trees and $\mathbf{X}=(X_1,\dots,X_n)$ the features. 
Denoting the training set with the response values removed by $D$, following our convention in \Sec \ref{subsec:ensembles}, 
information such as leaf scores or members of $D$ that end up at a given node can be read off from $\mathcal{T}$.
We define the associated TreeSHAP game 
$v^\Tree(\cdot;\mathcal{T})$ as 
$v^\Tree(\cdot;\mathcal{T})
=\sum_{T\in\mathcal{T}}v^\Tree(\cdot;T)$
where, for decision trees, games $v^\Tree(\cdot;T)$ are defined recursively 
in the following manner. 
In case that $T$ has no splits (so $T$ is a single leaf), $v^\Tree(\cdot;T)$ assigns the value at the unique leaf of $T$ to every subset of $N=\{1,\dots,n\}$.   
Next, suppose the split at the root of $T$ takes place with respect  to feature  
$X_{i_*}$ and threshold $t_*$. Thus we have the left subtree $T^<$ and the right subtree $T^>$
along with smaller datasets 
$D^<:=\{\mathbf{x}\in D\mid x_{i_*}<t_*\}$
and 
$D^>:=\{\mathbf{x}\in D\mid x_{i_*}>t_*\}$. Then set
\begin{equation}\label{TreeSHAP game}
    v^\Tree(S;T)(\mathbf{x}):=
    \begin{cases}
    v^\Tree(S;T^>)(\mathbf{x}) &\text{if }i_*\in S \text{ and } x_{i_*}>t_*\\
    v^\Tree(S;T^<)(\mathbf{x}) &\text{if }i_*\in S \text{ and } x_{i_*}<t_*\\
    \frac{|D^>|}{|D|}\cdot v^\Tree(S;T^>)(\mathbf{x})
    +\frac{|D^<|}{|D|}\cdot v^\Tree(S;T^<)(\mathbf{x}) & \text{if }i_*\notin S
    \end{cases}
    \quad (\mathbf{x}\in\Bbb{R}^n, S\subseteq N).
\end{equation}
\end{definition}
\noindent
Note that 
$\{i\in N\mid X_i \text{ appears in }T\}$ 
is a carrier for each $v^\Tree(\cdot;T)$; and 
$$
v^\Tree(\varnothing;T)(\mathbf{x})=\frac{1}{|D|}\sum_{\mathbf{x}'\in D}g(\mathbf{x}'), \quad 
v^\Tree(N;T)(\mathbf{x})=g(\mathbf{x})
$$
where $g:\Bbb{R}^n\rightarrow\Bbb{R}$ is the function computed by the decision tree $T$, and 
$g(\mathbf{x}')$ is the leaf value at $\mathbf{x}'$.

It is not hard to show that the TreeSHAP game $v^\Tree$ becomes the empirical marginal game $\hat{v}^{\ME}$ (see \eqref{empirical}) if the predictors are independent. But the games are different in general; compare with  \cite{amoukou2022accurate}.\\
\indent
We finish the section by pointing out  the related algorithms which mostly build upon the original TreeSHAP method \cite{2018arXiv180203888L}. Paper \cite{2021arXiv210909847Y} introduces ``Fast TreeSHAP'' as an improvement of the path-dependent TreeSHAP while \cite{karczmarz2022improved} discusses an improvement when the Banzhaf value is used in place of Shapley. 
Another approach is the unpublished work of Saabas \cite{Saabasdoc} (cf. \cite[Supplementary Results, \Sec 3]{lundberg2020local}) where only features appearing along the decision path can get non-zero attributions.\footnote{Just like the path-dependent TreeSHAP, Saabas' method also fails the implementation invariance.}  
Finally, \cite{campbell2022exact} introduces a modified version of $S\mapsto v^\Tree(S;T)$ from \eqref{TreeSHAP game} where, instead of taking a weighted average, one ``ejects'' the decision tree $T$ if the split is done with respect to a feature absent from $S$. This game is presented in Definition \ref{Eject definition} of Appendix \ref{subappendix:eject}. The authors then write the Shapley formula for this new game and simplify it to reduce the complexity. Nevertheless, as shown in the appendix, this approach also lacks implementation invariance, just like the original path-dependent TreeSHAP.

\section{Main results}\label{sec:main}

\subsection{TreeSHAP is not implementation invariant}\label{subsec:example}
When two models are \textit{functionally equivalent}, namely, they generate equal outputs for the same input, 
it is reasonable to ask for their associated feature attributions to coincide. This is the implementation invariance axiom for attribution methods which is set forth in \cite{sundararajan2017axiomatic}. 
Indeed, if this property holds, one can treat feature attributions as well-defined operators on some appropriate space of models. This has been carried out for conditional and marginal feature attributions in \cite{2021arXiv210210878M} where the resulting operators are thoroughly studied via tools from functional analysis. The goal of this section is to show that, unlike conditional and marginal frameworks, the path-dependent TreeSHAP fails this axiom and can depend on the model's make-up.

\begin{example}\label{main example}
\begin{figure}
\includegraphics[width=11cm]{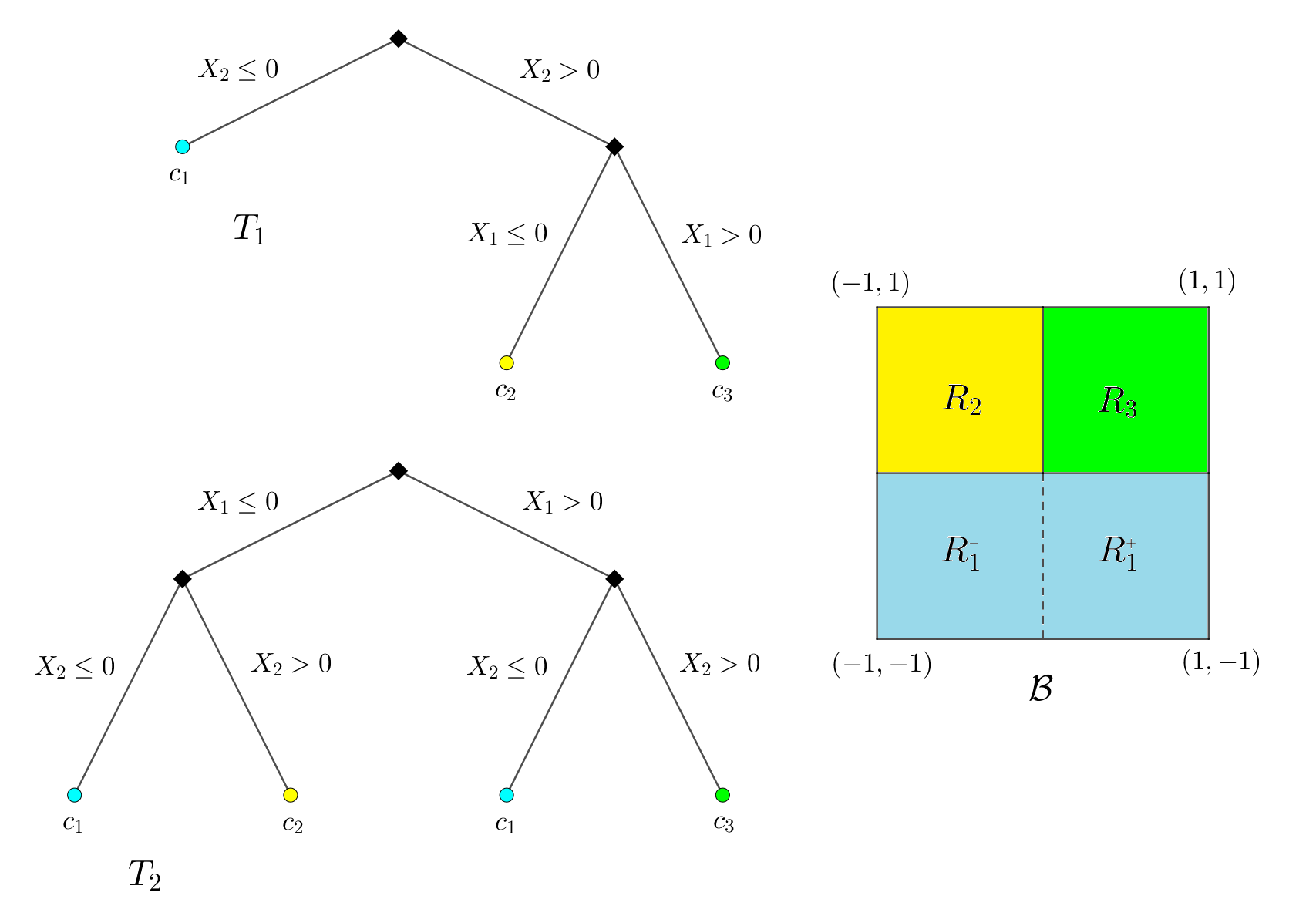}
\caption{The picture for Example \ref{main example} demonstrating that TreeSHAP (\cite{2018arXiv180203888L}) can depend on the model make-up. 
Here, the features $X_1$ and $X_2$ are supported in the rectangle $\mathcal{B}=[-1,1]\times[-1,1]$ on the right which is partitioned into subrectangles  $R_1=R_1^{\minus}\cup R_1^{\plus}$, $R_2$ and $R_3$. The decision trees $T_1$ and $T_2$ on the left compute the same function   $g=c_1\cdot\mathbbm{1}_{R_1}+c_2\cdot\mathbbm{1}_{R_2}+c_3\cdot\mathbbm{1}_{R_3}$; the leaves are colored based on the colors of corresponding subrectangles on the right. Shapley values for various games associated with these trees are computed in Example \ref{main example}. In particular, over each of the subrectangles, the Shapley values arising from the marginal game, or from TreeSHAP, are constant expressions in terms of probabilities of 
${\rm{P}}_\mathbf{X}(R_1^{\minus}),{\rm{P}}_\mathbf{X}(R_1^{\plus}),{\rm{P}}_\mathbf{X}(R_2),
{\rm{P}}_\mathbf{X}(R_3)$ and leaf values $c_1,c_2,c_3$; 
see Table \ref{Tab:main example table}. 
Although the former Shapley values depend only $g$, the latter turn out to be different for $T_1$ and $T_2$. In \eqref{parameters},  these parameters are chosen so that TreeSHAP ranks features $X_1$ and $X_2$ differently for any input from $R_2$, whereas the decision trees compute the same function and are almost indistinguishable in terms of impurity measures.}
\label{fig:failure1}
\end{figure}
Consider a simple regression problem with two predictors $\mathbf{X}=(X_1,X_2)$ 
that are supported in the square $\mathcal{B}=[-1,1]\times[-1,1]$, and the response variable $Y$ which takes value in $\{c_1,c_2,c_3\}$. 
We assume that $g(x)=\Bbb{E}[Y\mid \mathbf{X}=x]$ is captured by the function 
$g:=c_1\cdot\mathbbm{1}_{R_1}+c_2\cdot\mathbbm{1}_{R_2}+c_3\cdot\mathbbm{1}_{R_3}$ where 
$$
R_1:=[-1,1]\times[-1,0],\quad R_2:=[-1,0]\times[0,1], \quad R_3:=[0,1]\times[0,1]
$$
are subrectangles inside $\mathcal{B}$. 
Following our convention in \Sec \ref{subsec:ensembles}, we assume that events $X_1=0$ and $X_2=0$ are of probability zero.
Therefore, the pushforward probability measure ${\rm{P}}_\mathbf{X}$ on $\Bbb{R}^2$ is supported in $\mathcal{B}$; and $R_1$, $R_2$ and $R_3$ provide a measure-theoretic partition of $\mathcal{B}$.
The ground truth $g=c_1\cdot\mathbbm{1}_{R_1}+c_2\cdot\mathbbm{1}_{R_2}+c_3\cdot\mathbbm{1}_{R_3}$ can be captured via two different decision trees $T_1$ and $T_2$ where $T_1$ splits on $X_2$ at the root whereas the split at the root of $T_2$ is done with respect to $X_1$. Thus $T_2$ also partitions the bottom rectangle 
$R_1=[-1,1]\times[-1,0]$ into $R_1^{\minus}:=[-1,0]\times[-1,0]$ and $R_1^{\plus}:=[0,1]\times[-1,0]$.
These are all demonstrated in Figure \ref{fig:failure1}.\\
\indent
Next, we discuss the conditional, marginal and TreeSHAP feature attributions. 
To determine which of $X_1$ or $X_2$ is the most contributing feature at a point $\mathbf{x}=(x_1,x_2)$ of $\mathcal{B}$, the sign of the difference in Shapley values should be considered:
\begin{equation}\label{Delta}
\Delta\varphi[v](\mathbf{x}):=\varphi_1[v(\mathbf{x})]- \varphi_2[v(\mathbf{x})]
=\frac{1}{2}\big(v(\{1\})(\mathbf{x})-v(\{2\})(\mathbf{x})\big),   
\end{equation}
where the Shapley formula in the case of two players \eqref{two-player Shapley} was employed,
and
$$
v\in\left\{v^\CE(\cdot;\mathbf{X},g),v^\ME(\cdot;\mathbf{X},g),v^\Tree(\cdot;T_1),v^\Tree(\cdot;T_2)\right\}.
$$
For these choices of $v$, Table \ref{Tab:main example table} presents  
$2\Delta\varphi[v]=v(\{1\})-v(\{2\})$ 
as a piecewise function with respect to the partition of $\mathcal{B}$ into $R_1^{\minus},R_1^{\plus},R_2,R_3$ 
(see Figure \ref{fig:failure1}).
The first row of Table \ref{Tab:main example table} expresses $2\Delta\varphi\big[v^\CE\big]$ in terms of 
\begin{equation}\label{auxiliary1}
\alpha(x_1):=\Bbb{E}[1_{[0,1]}(X_2)\mid X_1=x_1], \quad 
\beta(x_2):=\Bbb{E}[1_{[0,1]}(X_1)\mid X_2=x_2]. 
\end{equation}
Such functions do not come up in the case of the marginal game where, on each of $R_1^{\minus},R_1^{\plus},R_2$ or $R_3$,
the function $2\Delta\varphi\big[v^\ME\big]$ is almost surely constant with a value which is an expression in terms of 
\begin{equation}\label{auxiliary2}
p_1^{\minus}:={\rm{P}}_\mathbf{X}(R_1^{\minus}),\quad p_1^{\plus}:={\rm{P}}_\mathbf{X}(R_1^{\plus}),\quad 
p_1:={\rm{P}}_\mathbf{X}(R_1)=p_1^{\minus}+p_1^{\plus},\quad
p_2:={\rm{P}}_\mathbf{X}(R_2), \quad p_3:={\rm{P}}_\mathbf{X}(R_3).
\end{equation}
We finally get to the TreeSHAP games (see Definition \ref{TreeSHAP definition}) 
for $T_1$ and $T_2$ where a  training set $D\subset\mathcal{B}$ (with response variables removed) comes into play. The proportions of training instances ending up in each of the subrectangles can be retrieved from trained decision trees $T_1$ and $T_2$; these estimate the probabilities appeared in \eqref{auxiliary2}. 
\begin{equation}\label{auxiliary3}
\hat{p}_1^{\minus}:=\frac{|D\cap R_1^{\minus}|}{|D|},\quad 
\hat{p}_1^{\plus}:=\frac{|D\cap R_1^{\plus}|}{|D|},\quad 
\hat{p}_1:=\frac{|D\cap R_1|}{|D|}=\hat{p}_1^{\minus}+\hat{p}_1^{\plus},\quad
\hat{p}_2:=\frac{|D\cap R_2|}{|D|}, \quad \hat{p}_3:=\frac{|D\cap R_3|}{|D|}.
\end{equation}
The last two rows of Table \ref{Tab:main example table} present the corresponding differences $2\Delta\varphi$ as piecewise constant functions. The constant value assumed by $2\Delta\varphi$ on each of the subsquares $R_1^{\minus}$, $R_1^{\plus}$, $R_2$ or $R_3$ is in terms of the outputs $c_1,c_2,c_3$ of the simple function $g$ and fractions from \eqref{auxiliary3} which converge to the corresponding probabilities from \eqref{auxiliary2} as $|D|\to\infty$ (assuming that $D$ is drawn i.i.d.). It is not hard to choose these parameters so that, on one of the top subsquares, say on $R_2$, $2\Delta\varphi$ becomes negative for $T_1$ and positive for $T_2$. In such a situation, for instances from $R_2$, TreeSHAP ranks $X_2$ as the most contributing feature to the output of $T_1$ while in the case of $T_2$, it sees $X_1$ as that kind of feature for the same instances. Given that $T_1$ and $T_2$ compute the exact same function. This demonstrates a stark violation of the implementation invariance axiom from  \cite{sundararajan2017axiomatic}. 
A practitioner may neglect this issue by arguing that the training algorithm picks the ``best'' decision tree, so only one of $T_1$ or $T_2$ is relevant. But it is indeed possible to choose the parameters so that $T_1$ and $T_2$ are very close in terms of the impurity measures which are usually employed in constructing classification/regression decision trees.    
As an example, set\footnote{Notice that $p_1+p_2+p_3=p_1^{\minus}+p_1^{\plus}+p_2+p_3$ should be $1$.} 
\begin{equation}\label{parameters}
\begin{split}
&\hat{p}_1^{\minus}\approx p_1^{\minus}=0.33, \quad \hat{p}_1^{\plus}\approx p_1^{\plus}=0.01,
\quad \hat{p}_1\approx p_1=p_1^{\minus}+p_1^{\plus}=0.34, \quad 
\hat{p}_2\approx p_2=0.27, \quad \hat{p}_3\approx p_3=0.39;\\
&c_1=2.03,\quad c_2=1,\quad c_3=2.
\end{split}
\end{equation}
For these parameters, the weighted Gini impurities of the data after the splits at the roots are very close for $T_1$ and $T_2$
\begin{equation}\label{impurity1}
\frac{2\hat{p}_2\hat{p}_3}{\hat{p}_2+\hat{p}_3}\approx \frac{2\hat{p}_1^{\minus}\hat{p}_2}{\hat{p}_1^{\minus}+\hat{p}_2}+
\frac{2\hat{p}_1^{\plus}\hat{p}_3}{\hat{p}_1^{\plus}+\hat{p}_3}.    
\end{equation}
The same is true for the weighted variances after the first splits
\begin{equation}\label{impurity2}
\frac{\hat{p}_2\hat{p}_3}{\hat{p}_2+\hat{p}_3}(c_2-c_3)^2\approx
\frac{\hat{p}_1^{\minus}\hat{p}_2}{\hat{p}_1^{\minus}+\hat{p}_2}(c_1-c_2)^2+
\frac{\hat{p}_1^{\plus}\hat{p}_3}{\hat{p}_1^{\plus}+\hat{p}_3}(c_1-c_3)^2.    
\end{equation}
Consequently, with these parameters, functionally equivalent decision trees $T_1$ and $T_2$ are also (almost) equally likely outcomes of the training algorithm. But at the same time, the corresponding values of $2\Delta\varphi$ on $R_2$ (cf. Table \ref{Tab:main example table}) have different signs,
meaning that the top feature in terms of TreeSHAP differs for these trees for at least $27\%$ of data points. Detailed computations for this example can be found in  Appendix \ref{subappendix:main example's details}.

\begin{table}[ht]
\begin{tabular}{|l|c|c|c|c|}
\hline
\backslashbox{$v$}{$2\Delta\varphi$}
&
on $R_1^{\minus}$
&
on $R_1^{\plus}$
&
on $R_2$
&
on $R_3$
\\
\Xhline{2pt}
$v^\CE(\cdot;\mathbf{X},g)(\mathbf{x})$
&
\tiny
$(c_2-c_1)\cdot\alpha(x_1)$
\normalsize
&
\tiny
$(c_3-c_1)\cdot\alpha(x_1)$
\normalsize
&
\tiny
$
\begin{matrix}
\\
(c_1-c_2)\cdot(1-\alpha(x_1))\\
+(c_2-c_3)\cdot\beta(x_2)\\
\\
\end{matrix}
$
\normalsize
&
\tiny
$
\begin{matrix}
(c_1-c_3)\cdot(1-\alpha(x_1))\\
+(c_3-c_2)\cdot(1-\beta(x_2))
\end{matrix}
$
\normalsize
\\
\hline
$v^\ME(\cdot;\mathbf{X},g)(\mathbf{x})$
&
\tiny
$
(c_2-c_1)(p_2+p_3)
$
\normalsize
&
\tiny
$
(c_3-c_1)(p_2+p_3)
$
\normalsize
&
\tiny
$
\begin{matrix}
\\
(c_1-c_2)p_1\\
+(c_2-c_3)(p_1^{\plus}+p_3)
\\
\phantom{a}
\end{matrix}
$
\normalsize
&
\tiny
$
\begin{matrix}
\\
(c_1-c_3)p_1\\
+(c_3-c_2)(p_1^{\minus}+p_2)
\\
\phantom{a}
\end{matrix}
$
\normalsize
\\
\hline
$v^\Tree(\cdot;T_1)(\mathbf{x})$
&
\tiny
$
(c_2-c_1)(\hat{p}_2+\hat{p}_3)
$
\normalsize
&
\tiny
$
(c_3-c_1)(\hat{p}_2+\hat{p}_3)
$
\normalsize
&
\tiny
$
\begin{matrix}
\\
c_1\hat{p}_1+c_2(\hat{p}_2+\hat{p}_3)\\
-\left(c_2\frac{\hat{p}_2}{\hat{p}_2+\hat{p}_3}+c_3\frac{\hat{p}_3}{\hat{p}_2+\hat{p}_3}\right)\\
\\
\end{matrix}
$
\normalsize
&
\tiny
$
\begin{matrix}
c_1\hat{p}_1+c_3(\hat{p}_2+\hat{p}_3)\\
-\left(c_2\frac{\hat{p}_2}{\hat{p}_2+\hat{p}_3}+c_3\frac{\hat{p}_3}{\hat{p}_2+\hat{p}_3}\right)
\end{matrix}
$
\normalsize
\\
\hline
$v^\Tree(\cdot;T_2)(\mathbf{x})$
&
\tiny
$
(c_2-c_1)\frac{\hat{p}_2}{\hat{p}_1^{\minus}+\hat{p}_2}
$
\normalsize
&
\tiny
$
(c_3-c_1)\frac{\hat{p}_3}{\hat{p}_1^{\plus}+\hat{p}_3}
$
\normalsize
&
\tiny
$
\begin{matrix}
\\
\left(c_1\frac{\hat{p}_1^{\minus}}{\hat{p}_1^{\minus}+\hat{p}_2}+c_2\frac{\hat{p}_2}{\hat{p}_1^{\minus}+\hat{p}_2}\right)\\
-\left(c_2(\hat{p}_1^{\minus}+\hat{p}_2)+c_3(\hat{p}_1^{\plus}+\hat{p}_3)\right)\\
\\
\end{matrix}
$
\normalsize
&
\tiny
$
\begin{matrix}
\left(c_1\frac{\hat{p}_1^{\plus}}{\hat{p}_1^{\plus}+\hat{p}_3}+c_3\frac{\hat{p}_3}{\hat{p}_1^{\plus}+\hat{p}_3}\right)\\
-\left(c_2(\hat{p}_1^{\minus}+\hat{p}_2)+c_3(\hat{p}_1^{\plus}+\hat{p}_3)\right)
\end{matrix}
$
\normalsize
\\
\hline
\end{tabular}
\caption{
The table for Example \ref{main example} where $\mathbf{X}=(X_1,X_2)$ are the predictors, and 
two decision trees $T_1$ and $T_2$ computing the same function $g$ are considered as in Figure \ref{fig:failure1}. For different games associated with $\mathbf{X}$, $g$, $T_1$ and $T_2$, the table captures the values of $2\Delta\varphi$ over various parts of the input space. Here, $\Delta\varphi$ is the difference $\varphi_1-\varphi_2$ of the Shapley values of the game under consideration (see \eqref{Delta}). It is observed that the first two rows only depend on $(\mathbf{X},g)$ while on the last two rows, for TreeSHAP games, $2\Delta\varphi$ differs for $T_1$ and $T_2$. For a choice of parameters such as \eqref{parameters}, on $R_2$ one has $\varphi_1<\varphi_2$ for $T_1$ and  $\varphi_1>\varphi_2$ for $T_2$; hence inconsistent rankings of features by TreeSHAP.}
\label{Tab:main example table}
\end{table}
\end{example}
Notice that, as expected, 
 the entries of  Table \ref{Tab:main example table} pertaining to the conditional game have conditional expectation terms such as \eqref{auxiliary1}. This is not the case for the marginal game where, on the second row, we see constants. The same is true for the TreeSHAP games on the last two rows as well. Indeed, 
$v^\ME(\cdot;\mathbf{X},g)$, $v^\Tree(\cdot;T_1)$ and $v^\Tree(\cdot;T_2)$ are piecewise constant with respect to the partition $\widetilde{\mathscr{P}(T_1)}=\widetilde{\mathscr{P}(T_2)}$ of $\mathcal{B}$ into subsquares $R_1^{\minus},R_1^{\plus},R_2,R_3$. This is an instance of Theorem \ref{main theorem} in the next 
section. 

\subsection{Marginal and TreeSHAP feature attributions are piecewise constant functions}\label{subsec:theorem}
\begin{figure}
\includegraphics[width=14cm]{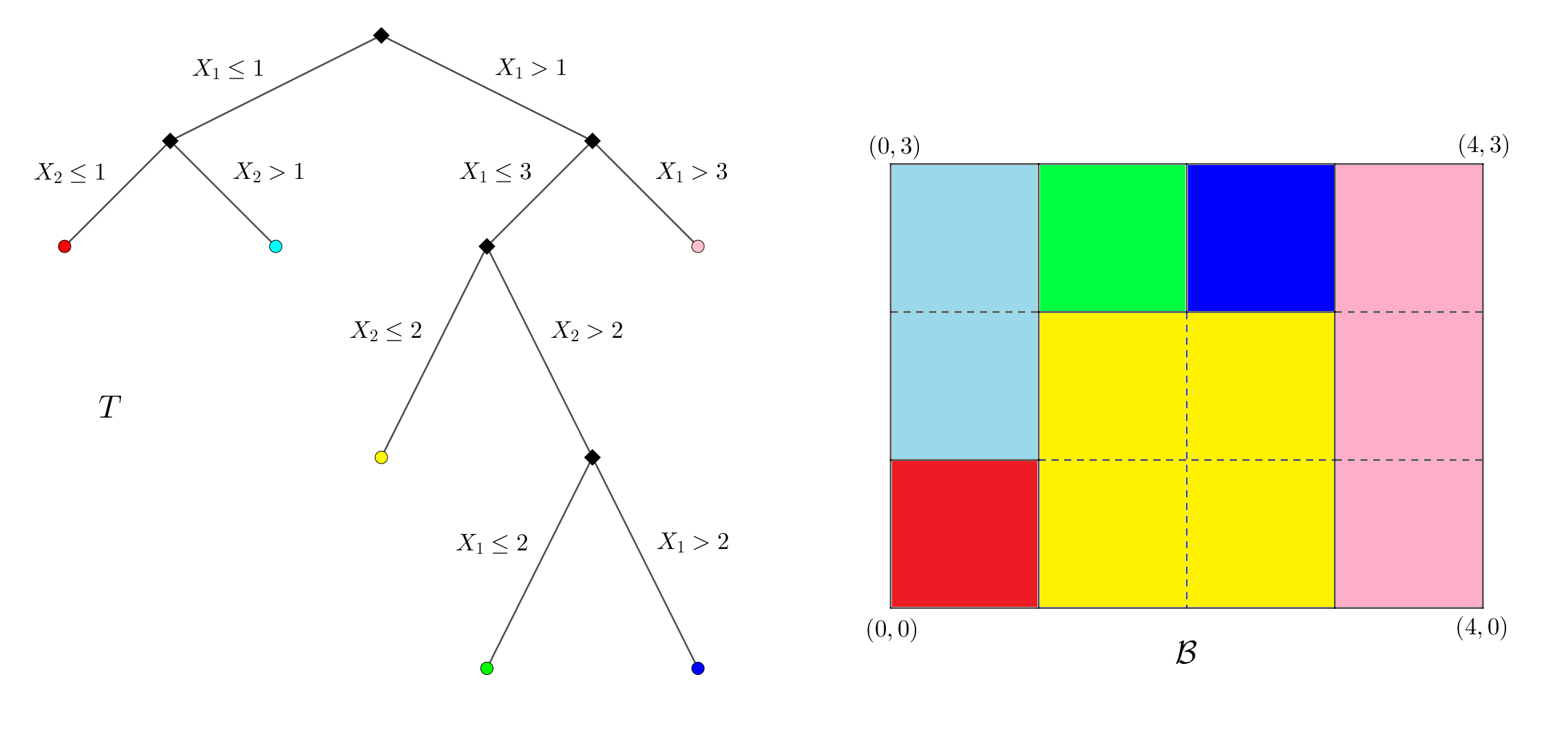}
\caption{The partition determined by a decision tree $T$ and its completion into a grid are demonstrated. 
The tree on the left implements a simple function $g=g(x_1,x_2)$ supported in $\mathcal{B}=[0,4]\times[0,3]$.
In the corresponding partition $\mathscr{P}(T)$ of $\mathcal{B}$ on the right, $g$ is constant on the smaller subrectangles each corresponding to the leaf of the same color. 
Here, the tree is not oblivious and $\mathscr{P}(T)$ is not a grid.
Adding the dotted lines refines $\mathscr{P}(T)$ to a three by four grid $\widetilde{\mathscr{P}(T)}$. 
On each piece of the grid, the associated marginal game (and thus its Shapley values) is almost surely constant with a value which is an expression in terms of probabilities 
${\rm{P}}_{\mathbf{X}}(\tilde{R})\,(\tilde{R}\in \widetilde{\mathscr{P}(T)})$ (cf. Theorem \ref{main theorem}); these in general cannot be retrieved from the trained tree unless it is oblivious.}
\label{fig:partition}
\end{figure}

We now arrive at one of our main results which generalizes observations made after Example \ref{main example} by showing that, for a decision tree, marginal and TreeSHAP feature attributions are 
constant on each piece of the grid partition determined by the tree (see Figure \ref{fig:partition}). 
This fails for the conditional feature attributions as observed in Example \ref{main example}.
\begin{theorem}\label{main theorem}
With notation as in  Sections \ref{subsec:convention}, \ref{subsec:ensembles}, let $T$ be a regression tree trained on data $D$ which implements a function $g=g_T:\Bbb{R}^n\rightarrow\Bbb{R}$. 
Consider the corresponding marginal 
$$v^\ME:S\mapsto v^\ME(S;\mathbf{X},g)(\mathbf{x})\quad (S\subseteq N)$$
and TreeSHAP 
$$v^\Tree:S\mapsto v^\Tree(S;T)(\mathbf{x})
\quad (S\subseteq N)$$ 
games. Then for any linear game value $h$ and any $i\in N$, 
$\mathbf{x}\mapsto h_i\big[v^\ME\big](\mathbf{x})$ 
and 
$\mathbf{x}\mapsto h_i\left[v^\Tree\right](\mathbf{x})$ are  simple functions with respect to the grid partition $\widetilde{\mathscr{P}(T)}$ away from the decision boundary\footnote{This is cut by all hyperplanes 
$X_i=\text{threshold}$ where $T$ splits on $(X_i,\text{threshold})$ at some node.} of $T$ whose ${\rm{P}}_{\mathbf{X}}$-measure is zero due to \eqref{assumption}. More precisely, away from the decision boundary:
\begin{enumerate}
\item  
$h_i\big[v^\ME\big]$ coincides with  a linear combination of indicator functions
$\left\{\mathbbm{1}_{\tilde{R}}\mid \tilde{R}\in \widetilde{\mathscr{P}(T)}\right\}$ 
with coefficients that are linear combination of elements of 
$\left\{{\rm{P}}_{\mathbf{X}}(\tilde{R})\mid \tilde{R}\in \widetilde{\mathscr{P}(T)}\right\}$;
\item     
$h_i\left[v^\Tree\right]$ coincides with
a linear combination of indicator functions
$\left\{\mathbbm{1}_{\tilde{R}}\mid \tilde{R}\in \widetilde{\mathscr{P}(T)}\right\}$ 
with coefficients that are rational expressions of elements of 
$\left\{\hat{{\rm{P}}}_{\mathbf{X}}(R)\mid R\in \mathscr{P}(T)\right\}$
where $\hat{{\rm{P}}}_{\mathbf{X}}(R)$ is the estimation based on data $D$ for the probability ${\rm{P}}_{\mathbf{X}}(R)$ of ending up at the closed rectangular region $R$:
\begin{equation}\label{estimator}
\hat{{\rm{P}}}_{\mathbf{X}}(R):=\frac{|D\cap R|}{|D|}.    
\end{equation}
\end{enumerate}
\end{theorem}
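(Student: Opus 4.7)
The plan is to reduce both claims to the statement that for each fixed coalition $S\subseteq N$, the function $\mathbf{x}\mapsto v(S)(\mathbf{x})$ is itself a simple function on $\widetilde{\mathscr{P}(T)}$ whose constant values depend on $\{{\rm{P}}_{\mathbf{X}}(\tilde{R})\}_{\tilde{R}\in\widetilde{\mathscr{P}(T)}}$ or $\{\hat{{\rm{P}}}_{\mathbf{X}}(R)\}_{R\in\mathscr{P}(T)}$ in the claimed way. Since any linear game value $h$ is scalar-linear in $v$, so that $h_i[v](\mathbf{x})$ is a finite linear combination of the $v(S)(\mathbf{x})$'s, and since finite linear combinations of simple functions on $\widetilde{\mathscr{P}(T)}$ with linear (respectively rational) coefficients in those probabilities are again of the same type, the theorem for $h_i$ then follows from the corresponding per-coalition statement.

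For part (1), I would begin from the simple-function decomposition $g=\sum_{j=1}^{\ell}c_j\,\mathbbm{1}_{R_j}$ of \eqref{simple function}. Since each $R_j\in\mathscr{P}(T)$ factors coordinate-wise as $R_j=\prod_{i=1}^n R_j^{(i)}$ for intervals $R_j^{(i)}\subseteq[l_i,u_i]$, one has
\[
\mathbbm{1}_{R_j}(\mathbf{x}_S,\mathbf{X}_{-S})=\prod_{i\in S}\mathbbm{1}_{R_j^{(i)}}(x_i)\cdot\prod_{i\notin S}\mathbbm{1}_{R_j^{(i)}}(X_i),
\]
and taking expectations (and using Fubini) yields
\[
v^{\ME}(S;\mathbf{X},g)(\mathbf{x})=\sum_{j=1}^{\ell}c_j\left(\prod_{i\in S}\mathbbm{1}_{R_j^{(i)}}(x_i)\right){\rm{P}}_{\mathbf{X}}\!\left(\mathbf{X}\in\prod_{i\in S}[l_i,u_i]\times\prod_{i\notin S}R_j^{(i)}\right).
\]
Because the one-dimensional partition of $[l_i,u_i]$ used to build $\widetilde{\mathscr{P}(T)}$ contains every threshold defining $R_j^{(i)}$, each factor $\mathbbm{1}_{R_j^{(i)}}(x_i)$ is constant on every grid piece, and the cylinder inside the probability is a finite disjoint union of grid pieces, so its measure is a sum of $\{{\rm{P}}_{\mathbf{X}}(\tilde{R})\}$'s.

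For part (2), I would induct on the depth of $T$. If $T$ is a single leaf, $v^{\Tree}(\cdot;T)$ is a constant, which is trivially of the claimed form. In the inductive step, assume the root of $T$ splits on $(X_{i_*},t_*)$ with children $T^<,T^>$ and half-space regions $R^<,R^>\subset\mathcal{B}$. The grids $\widetilde{\mathscr{P}(T^<)}$ and $\widetilde{\mathscr{P}(T^>)}$, together with the single extra threshold $t_*$ in coordinate $i_*$, refine to $\widetilde{\mathscr{P}(T)}$. By the inductive hypothesis, both $v^{\Tree}(S;T^<)$ and $v^{\Tree}(S;T^>)$ are of the claimed form on their respective sub-grids; plugging them into \eqref{TreeSHAP game}, when $i_*\in S$ the branching is controlled by the grid-measurable indicators $\mathbbm{1}_{R^<},\mathbbm{1}_{R^>}$, and when $i_*\notin S$ we form a convex combination with weights $\tfrac{|D^<|}{|D|},\tfrac{|D^>|}{|D|}$, both of which are rational in $\{\hat{{\rm{P}}}_{\mathbf{X}}(R)\}_{R\in\mathscr{P}(T)}$ via the telescoping identity $\hat{{\rm{P}}}_{\mathbf{X}}(R^<)=\sum_{R\in\mathscr{P}(T),\,R\subseteq R^<}\hat{{\rm{P}}}_{\mathbf{X}}(R)$.

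The main obstacle I anticipate is the bookkeeping in part (2): one must verify that at the end of the recursion the coefficients are rational expressions in the \emph{leaf}-level empirical probabilities $\{\hat{{\rm{P}}}_{\mathbf{X}}(R)\}_{R\in\mathscr{P}(T)}$ and not in some larger set of empirical probabilities naturally associated to internal nodes. The telescoping identity above resolves this, since products, sums, and quotients of rational expressions in a fixed finite set of indeterminates remain rational in that same set. All equalities hold off the decision boundary of $T$, which is a finite union of coordinate hyperplanes $\{X_i=\text{threshold}\}$; by \eqref{assumption} this set has ${\rm{P}}_{\mathbf{X}}$-measure zero, so the qualifier ``away from the decision boundary'' in the statement is automatic.
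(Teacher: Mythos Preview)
Your proof is correct. Part~(1) follows the same line as the paper's argument: both exploit the product structure of the leaf indicator $\mathbbm{1}_{R_j}$ to separate the $S$- and $(-S)$-coordinates, then observe that the resulting indicator factors are constant on grid cells while the probability factor decomposes as a sum of elements of $\{{\rm{P}}_{\mathbf{X}}(\tilde R)\}_{\tilde R\in\widetilde{\mathscr{P}(T)}}$.

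Part~(2) is where the two arguments diverge. The paper does not induct on depth; instead it unrolls the recursion \eqref{TreeSHAP game} completely into an explicit path formula
\[
v^\Tree(S;T)(\mathbf{x})=\sum_{j=1}^\ell c_j\,\tau(S;\mathsf{P}_j)(\mathbf{x}),
\]
where $\mathsf{P}_j$ is the root-to-leaf path to the $j^{\rm th}$ leaf and $\tau(S;\mathsf{P}_j)$ is a product over the internal nodes along that path, each factor being either an indicator $\mathbbm{1}_{\epsilon_u(x_{j_u}-t_{j_u})>0}$ (when the split feature lies in $S$) or a ratio $w_u(\mathsf{P}_j)$ of consecutive node counts (when it does not). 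Constancy on $\widetilde{\mathscr{P}(T)}$ and rationality of the coefficients in the leaf-level $\hat{{\rm{P}}}_{\mathbf{X}}(R)$'s are then read off directly from this closed form. Your inductive argument is equally valid and arguably the more natural one given that $v^\Tree$ is defined recursively; the only care required, which you correctly flag, is that the empirical probabilities appearing in the induction hypothesis for $T^{<}$ are computed relative to $D^{<}$ rather than $D$, and one must rewrite $|D^{<}\cap R|/|D^{<}|$ as $\hat{{\rm{P}}}_{\mathbf{X}}(R)/\sum_{R'\subseteq R^{<}}\hat{{\rm{P}}}_{\mathbf{X}}(R')$ to land in the right field of rational expressions. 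The paper's explicit formula, on the other hand, makes the structure of the coefficients (products of conditional split proportions along decision paths) immediately visible, which is what the paper uses in the surrounding discussion to contrast $v^\Tree$ with $v^\ME$.
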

A proof will be presented in Appendix \ref{subappendix:simple}. 
The theorem also indicates that the feature attributions obtained from marginal and TreeSHAP games are very different functions. Indeed, for the former, one needs to compute/estimate probabilities 
${\rm{P}}_{\mathbf{X}}(\tilde{R}),\tilde{R}\in\widetilde{\mathscr{P}(T)}$,
which in general cannot be recovered from the trained model since $\tilde{R}$ comes from a partition finer than what the tree determines. In contrast, $h_i\left[v^\Tree\right]$ is in terms of estimated probabilities 
$\hat{{\rm{P}}}_{\mathbf{X}}(R), R\in\mathscr{P}(T)$.
They become close to ${\rm{P}}_{\mathbf{X}}(R)$ when the training set  (assuming it is drawn i.i.d.) is large; 
see Lemma \ref{error}. But even then, $h_i\left[v^\Tree\right]$ is different from $h_i\big[v^\ME\big]$ since it can contain rational non-linear expressions of these probabilities.

\subsection{Fewer features per tree: Implications to computing marginal values}\label{subsec:observations}
In an ensemble $\mathcal{T}$ of decision trees trained on features $\mathbf{X}=(X_1,\dots,X_n)$, 
only a portion of trees splits on a specific feature. By Lemma \ref{dummy players}, that feature is a null player of the marginal game associated with any of the other trees.   
Therefore, if one wants to quantify the contribution of $X_i$ as $h_i\big[v^\ME\big]$ 
via a linear game value $h$ such as Shapley or Banzhaf, 
then only those decision trees which split on $X_i$ matter (due to the null-player property of $h$); and for those trees, one essentially deals with a game whose players are the features appearing in that tree (due to the carrier-dependence property of $h$).\\
\indent
To make this more precise, as in \Sec \ref{subsec:ensembles}, write the trees in $\mathcal{T}$ as 
$T_1,\dots,T_t$. Let $f$ denote the function computed by $\mathcal{T}$ and 
$g_j$ the one computed by $T_j$;
thus $f=\sum_{j=1}^tg_j$. 
Denote the indices of features appearing in the decision tree $T_j$, i.e. the features on which $T_j$ splits, by $U_j\subseteq N$. Thus $g_j$ may be written as $g_j=\tilde{g}_j\circ\pi_{U_j}$ where $\pi_{U_j}:\mathbf{x}\mapsto\mathbf{x}_{U_j}$ is the projection operator.  
Invoking Lemma \ref{dummy players},  $U_j$ is a carrier for $v^\ME(\cdot;\mathbf{X},g_j)$; and  for a game value $h$ of form \eqref{linear general variant} satisfying the carrier-dependence property: 
\small
\begin{equation}\label{fewer features}
h_i\big[v^\ME(\cdot;\mathbf{X},f)\big](\mathbf{x})=
\sum_{\substack{1\leq j\leq t\\
T_j\in\mathcal{T}^{(i)}}}
\sum_{S\subseteq U_j\setminus\{i\}}
w(S;U_j,i)\left(v^\ME(S\cup\{i\};\mathbf{X}_{U_j},\tilde{g}_j)(\mathbf{x}_{U_j})
-v^\ME(S;\mathbf{X}_{U_j},\tilde{g}_j)(\mathbf{x}_{U_j})\right).
\end{equation}
\normalsize
The total number of summands in this double summation is no more than
$$(\text{number of trees in which } X_i \text{ appears})\cdot 2^{\max_{1\leq j\leq t}|U_j|-1}=\big|\mathcal{T}^{(i)}\big|\cdot 2^{\max_{1\leq j\leq t}|U_j|-1}.$$
However, applying $h$ directly to  $v^\ME(\cdot;\mathbf{X},f)$ without utilizing the fact that not all features appear in all trees results in 
$$
h_i\big[v^\ME(\cdot;\mathbf{X},f)\big](\mathbf{x})
=\sum_{S\subseteq N\setminus \{i\}}w(S;N,i)\left(v^\ME(S\cup\{i\};\mathbf{X},f)(\mathbf{x})
-v^\ME(S;\mathbf{X},f)(\mathbf{x})\right),
$$
which has $2^{n-1}$ summands. In the presence of a structural constraint which limits the number of distinct features per tree, $2^{n-1}$ can be much larger than the number of summands in \eqref{fewer features}.

\begin{example}\label{EBM}
Explainable Boosting Machines \cite{2019arXiv190909223N} are based on 
\textit{Generalized Additive Models plus Interactions} \cite{lou2013accurate}. In the boosting procedure, each tree is trained on one or two features. Hence the model computes a function of the form 
$$
f(\mathbf{x})=\sum_{i\in N}f_i(x_i)+\sum_{(i,j)\in\mathcal{U}}f_{ij}(x_i,x_j)
$$
where interaction terms are indexed by the subset $\mathcal{U}$ of $\{(i,j)\mid 1\leq i< j\leq n\}$.
With the knowledge of the constituent univariate and bivariate  parts of the model, the marginal Shapley values 
$\varphi_k\big[v^\ME\big]=\varphi_k\left[N,v^\ME(\cdot;\mathbf{X},f)\right]$ $(k\in N)$
can be computed easily:
\small
\begin{equation*}
\begin{split}
\varphi_k\big[v^\ME\big](\mathbf{x})
=&\left(f_k(x_k)-\Bbb{E}[f_k(X_k)]\right)
+\sum_{\substack{1\leq i<k\\
(i,k)\in\mathcal{U}}}\frac{1}{2}\left(f_{ik}(x_i,x_k)-\Bbb{E}[f_{ik}(X_i,X_k)]\right)
+\sum_{\substack{k<j\leq n\\
(k,j)\in\mathcal{U}}}\frac{1}{2}\left(f_{kj}(x_k,x_j)-\Bbb{E}[f_{kj}(X_k,X_j)]\right)
\\
&+\sum_{\substack{1\leq i<k\\
(i,k)\in\mathcal{U}}}
\frac{1}{2}\left(\Bbb{E}[f_{ik}(X_i,x_k)]-\Bbb{E}[f_{ik}(x_i,X_k)]\right)
+\sum_{\substack{k<j\leq n\\
(k,j)\in\mathcal{U}}}
\frac{1}{2}\left(\Bbb{E}[f_{kj}(x_k,X_j)]-\Bbb{E}[f_{kj}(X_k,x_j)]\right).
\end{split}    
\end{equation*}
\normalsize
This can be used to estimate the marginal Shapley values of an EBM model based on a background dataset.\footnote{The method 
\texttt{predict\_and\_contrib}
returns the univariate and bivariate terms of an EBM model.}
\end{example}

\begin{remark}
Equation \eqref{fewer features} remains valid for coalitional game values
with the null-player and coalitional carrier-dependence properties, e.g. the Owen value which is known to be helpful in explaining ML models \cite{2021arXiv210210878M}. Nevertheless, for EBM models (discussed above), the marginal Owen values do not differ from Shapley since for a game with two players Shapley and Owen values always coincide. But for more complicated yet still constrained (in terms of number of features per tree) models like CatBoost (see below),  partitioning features yields different marginal values (cf. Appendix \ref{appendix:generalization}).     
\end{remark}

\subsection{Marginal Shapley values for ensembles of oblivious trees}\label{subsec:CatBoost}
In previous sections, we observed that marginal feature attributions for a decision tree $T$ are piecewise constant functions, but with respect to the grid $\widetilde{\mathscr{P}(T)}$; they do not necessarily remain constant on a region belonging to the original partition $\mathscr{P}(T)$. The key observation of this section is that if $T$ is oblivious (cf. Definition \ref{oblivious definition}), then  $\mathscr{P}(T)=\widetilde{\mathscr{P}(T)}$. Invoking this and the symmetry of trees, we obtain a formula in terms of internal model parameters for marginal Shapley and Banzhaf values of ensembles of oblivious trees, e.g. CatBoost models.  
We first need some preliminary work before stating the theorem. 

\begin{definition-notation}\label{groundwork}
Let $T$ be an oblivious decision tree of depth $m$. Denote the distinct features appearing in $T$ from top to bottom by 
$X'_1,\dots,X'_k$, say a subset of the ambient set of features $\{X_1,\dots,X_n\}$. 
Starting from the root, suppose  $T$ splits with respect to $X'_{r_1},\dots,X'_{r_m}$ 
($r_1,\dots,r_m\in\{1,\dots,k\}$ not necessarily distinct);   
 and write the corresponding thresholds as 
$t'_1,\dots,t'_m$---i.e. splits at all $2^{s-1}$ internal nodes of the $s^{\rm{th}}$ level  occur based on comparing $X'_{r_s}$ with $t'_s$.
\begin{itemize}
\item 
We label each leaf with a binary code $\mathbf{a}\in\{0,1\}^m$ where, for the $s^{\rm{th}}$ bit, $a_s=0$ amounts to $X'_{r_s}\leq t'_s$ while $a_s=1$ corresponds to $X'_{r_s}>t'_s$. Any such code can also be thought of as a path from the root to a leaf.
\item  One has a partition $\mathsf{p}:=\{S_1,\dots,S_k\}$ of $M:=\{1,\dots,m\}$, 
indexed with elements of $K:=\{1,\dots,k\}$, 
 where $S_q:=\{s\in M\mid r_s=q\}$ for any $q\in K$.
\item 
When an oblivious tree splits on a feature more than once, some of the regions cut by the tree become vacuous; see Figure \ref{fig:repeated}.\footnote{For instance, if $T$ splits on $X'_i$ twice with thresholds $t_*$ and $\tilde{t}_*$, then all regions with 
$X'_i\in[\max(t_*,\tilde{t}_*),\min(t_*,\tilde{t}_*)]$ become vacuous (or of probability zero when 
$t_*=\tilde{t}_*$ because the event $X'_i=t_*$ is of probability zero due to our assumption in \eqref{assumption}.)}
A leaf is called \textit{realizable} if the path to it from the root does not encounter conflicting thresholds for a feature. The corresponding set of realizable binary codes is denoted by $\mathcal{R}\subseteq\{0,1\}^m$. Regions encoded by elements of $\{0,1\}^m\setminus\mathcal{R}$ are vacuous (or of probability zero).
\item We now define purely combinatorial functions parametrized by a partition $\mathsf{p}$ of a segment 
$M=\{1,\dots,m\}$ of natural numbers. First, for binary codes $\mathbf{e},\mathbf{e}'\in\{0,1\}^m$, 
set
\begin{equation}\label{auxiliary4'}
\mathcal{E}(\mathbf{e},\mathbf{e}';\mathsf{p}):=\big\{q\in K\mid \mathbf{e}_{S_q}=\mathbf{e}'_{S_q}\big\}.  
\end{equation}
Next, based on this, we define a function $\mathcal{E}^{-1}$ which takes in a  binary code 
$\mathbf{e}\in\{0,1\}^m$ and a subset $Q$ of $K=\{1,\dots,k=|\mathsf{p}|\}$ as inputs:
\begin{equation}\label{auxiliary5'}
\mathcal{E}^{-1}(\mathbf{e},Q;\mathsf{p}):=
\big\{\mathbf{e}'\in\{0,1\}^m\mid \mathcal{E}(\mathbf{e},\mathbf{e}';\mathsf{p})=Q\big\}. 
\end{equation}

\end{itemize}
\end{definition-notation}

Now consider $(\mathbf{X},f)$ where $\mathbf{X}=(X_1,\dots,X_n)$ are the predictors and $f:\Bbb{R}^n\rightarrow\Bbb{R}$ is a function implemented by an ensemble $\mathcal{T}$ of oblivious decision trees.
The notions discussed in Definition-Notation \ref{groundwork} can be considered for any arbitrary tree $T$ from the ensemble: We denote the depth by $m(T)$ and the number of distinct features on which $T$ splits by $k(T)$. The leaves of $T$ are in a bijection with binary codes in   $\{0,1\}^{m(T)}$; the subset of realizable ones is denoted by $\mathcal{R}(T)\subseteq\{0,1\}^{m(T)}$. Moreover, the levels on which different features from $\{X_1,\dots,X_n\}$ appear determine a partition of size $k(T)$ of 
$\{1,\dots,m(T)\}$ which we denote by $\mathsf{p}(T)$. The tree defines an inclusion 
\begin{equation}\label{inclusions}
\iota(\cdot;T):
K(T):=\{1,\dots,k(T)\}\hookrightarrow N=\{1,\dots,n\}    
\end{equation}
with the property that $X_{\iota(i;T)}$ is the $i^{\rm{th}}$ (enumerated from the root) distinct feature on which $T$ splits (i.e. $X'_i=X_{\iota(i;T)}$ in terms of the above notation).   
The corresponding element of $\mathsf{p}(T)$ captures all levels where $T$ splits with respect to $X_{\iota(i;T)}$.
Moreover, $T$ belongs to the subset $\mathcal{T}^{(i_*)}$ of trees splitting on a given feature $X_{i_*}$ if and only if 
$i_*$ is in the image of $\iota(\cdot;T)$. 
Finally, for any leaf of $T$ encoded with $\mathbf{e}\in\{0,1\}^{m(T)}$, we denote the value at the leaf with $c(\mathbf{e};T)$, the corresponding rectangular region in $\Bbb{R}^n$ with $R(\mathbf{e};T)$, and the probability ${\rm{P}}_{\mathbf{X}}(R(\mathbf{e};T))$ of a data point ending up in that region with $p(\mathbf{e};T)$.

\begin{theorem}\label{Catboost theorem}
With the notation as above, 
for any $i_*\in N=\{1,\dots,n\}$ and ${\rm{P}}_{\mathbf{X}}$-almost every explicand $\mathbf{x}\in\Bbb{R}^n$,
the marginal Shapley value $\varphi_{i_*}\big[v^\ME\big](\mathbf{x})$ of the regressor implemented by $\mathcal{T}$ is given by 
\begin{equation}\label{symmetric formula ensemble}
\varphi_{i_*}\big[v^\ME\big](\mathbf{x})=\sum_{\substack{T\in\mathcal{T}^{(i_*)}\\
i\in K(T), \iota(i;T)=i_*}} 
\big[\phi^+(\mathbf{a};i,T)-\phi^-(\mathbf{a};i,T)\big]
\quad (K(T):=\{1,\dots,k(T)\})
\end{equation}
where $\mathbf{a}=\mathbf{a}(\mathbf{x};T)$ is so that  $\mathbf{x}\in R(\mathbf{a};T)$ is satisfied.
The terms inside the brackets in \eqref{symmetric formula ensemble} are defined in terms of expressions\footnote{The quantity $\mathfrak{s}(\mathbf{e},Q;T)$ may be interpreted as an expectation: this is the expected leaf score taken over leaves $\mathbf{u}$ that ``partially'' coincide with $\mathbf{e}$ (in the sense of $\mathcal{E}(\mathbf{e},\mathbf{u};\mathsf{p}(T))=Q$) assuming that their scores are replaced with that of $\mathbf{e}$.} 
\begin{equation}\label{auxiliary5''}
\mathfrak{s}(\mathbf{e},Q;T):=
c(\mathbf{e};T)\cdot\Big(\sum_{\mathbf{u}\in\mathcal{E}^{-1}
(\mathbf{e},Q;\mathsf{p}(T))\cap\mathcal{R}(T)}p(\mathbf{u};T)\Big)   
\quad (Q\subseteq K(T):=\{1,\dots,k(T)\})
\end{equation}
\normalsize
as
\begin{equation}\label{contributions}
\begin{split}
& \phi^+(\mathbf{a};i,T):=
\sum_{\substack{Z\subseteq K(T)\\ i\in Z}}   
\sum_{\substack{W\subseteq K(T)\\
W\supseteq Z}}\omega^+(|W|,|Z|;k(T))\cdot 
\Big(\sum_{\mathbf{b}\in\mathcal{E}^{-1}(\mathbf{a},W;\mathsf{p}(T))\cap\mathcal{R}(T)}
\mathfrak{s}(\mathbf{b},-Z;T)\Big),\\
& \phi^-(\mathbf{a};i,T):=
\sum_{\substack{W\subseteq K(T)\\ i\notin W}}   
\sum_{\substack{Z\subseteq K(T)\\
Z\subseteq W}}\omega^-(|W|,|Z|;k(T))\cdot
\Big(\sum_{\mathbf{b}\in\mathcal{E}^{-1}(\mathbf{a},W;\mathsf{p}(T))\cap\mathcal{R}(T)}
\mathfrak{s}(\mathbf{b},-Z;T)\Big),
\end{split}    
\end{equation}
where $-Z:=K(T)\setminus Z$, and the coefficients $w^+$ and $w^-$ are defined as 
\begin{equation}\label{weights_Shapley}
\omega^+(w,z;k):=\frac{(z-1)!\,(k-w)!}{(k+z-w)!},\quad   
\omega^-(w,z;k):=\frac{z!\,(k-w-1)!}{(k+z-w)!}.
\end{equation}
\indent 
Via substituting \eqref{auxiliary5''} in \eqref{contributions}, 
each of $\phi^+(\mathbf{a};i,T)$ or $\phi^-(\mathbf{a};i,T)$
can be obtained, through less than 
\begin{equation}\label{complexity multiplication}
\left(2+\frac{m(T)}{k(T)}\right)^{k(T)}\leq 3^{m(T)}
\end{equation}
multiplication operations, 
as 
a summation with no more than  
\begin{equation}\label{complexity0}
3^{k(T)-1}\cdot\left(\frac{m(T)}{k(T)}\right)^{k(T)}\leq 3^{m(T)-1}
\end{equation}
summands, each of them a multiple of the product of two of parameters 
$\{c(\mathbf{b};T)\}_{\mathbf{b}\in\mathcal{R}(T)}$ and 
$\{p(\mathbf{u};T)\}_{\mathbf{u}\in\mathcal{R}(T)}$ which are associated with the leaves of  $T$. 
Thus formula \eqref{symmetric formula ensemble}  
for computing  $\varphi_{i_*}\big[v^\ME\big](\mathbf{x})$ has no more than
\begin{equation}\label{complexity1}
2\cdot \big|\mathcal{T}^{(i_*)}\big|\cdot 3^{(\max_{T\in\mathcal{T}}m(T))-1}
<|\mathcal{T}|\cdot\mathcal{L}^{\log_23}
\end{equation}
terms once expanded.  
\\
\indent
All these hold for the Banzhaf value $Bz_i\big[v^\ME\big](\mathbf{x})$ too 
after replacing coefficients $w^{\pm}(w,z;k)$ 
with 
\begin{equation}\label{weights_Banzhaf}
\tilde{\omega}(w,z;k)=\frac{2^{w-z}}{2^{k-1}}.
\end{equation}
\end{theorem}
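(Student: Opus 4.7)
The plan is a three-step reduction: first trim the ensemble to a single tree, then exploit the oblivious (grid) structure to write the marginal game as an explicit double sum over realizable leaves, and finally reorganize the Shapley sum via a four-case argument whose inner $S$-sum telescopes to $\omega^\pm$ through a Beta-integral identity.

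By additivity of the marginal game in $f$ and linearity of $\varphi$, it suffices to treat each tree separately. The null-player property of $\varphi$ (applied via Lemma \ref{dummy players}) kills all trees not in $\mathcal{T}^{(i_*)}$, while its carrier-dependence property shrinks the ambient set of players from $N$ to $K(T)$ via the identification $\iota(\cdot;T)$. The problem therefore reduces to the following single-tree statement: for an oblivious $T$ and $\mathbf{x}\in R(\mathbf{a};T)$, one has $\varphi_i[v^{ME}](\mathbf{x})=\phi^+(\mathbf{a};i,T)-\phi^-(\mathbf{a};i,T)$, the Shapley value being taken over the reduced game with player set $K(T)$.

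The key geometric input is $\mathscr{P}(T)=\widetilde{\mathscr{P}(T)}$ for oblivious $T$, which says that under the substitution $(\mathbf{x}_S,\mathbf{X}_{-S})$ the bits of the hit leaf at levels $\bigcup_{q\in S}S_q$ are pinned to match $\mathbf{a}$, while those at $\bigcup_{q\notin S}S_q$ are random, determined by $\mathbf{X}_{-S}$. Grouping by the realized leaf $\mathbf{b}$ and expressing the marginal probability of the unpinned bits as a sum of $p(\mathbf{u};T)$ over leaves $\mathbf{u}$ matching $\mathbf{b}$ off the pinned coordinates yields the clean double-sum representation
\[
v^{ME}(S)(\mathbf{x})\;=\;\sum_{\substack{\mathbf{b}\in\mathcal{R}(T)\\ \mathbf{b}_{S_q}=\mathbf{a}_{S_q}\,\forall q\in S}} c(\mathbf{b};T)\sum_{\substack{\mathbf{u}\in\mathcal{R}(T)\\ \mathbf{u}_{S_q}=\mathbf{b}_{S_q}\,\forall q\notin S}} p(\mathbf{u};T).
\]
Obliviousness is essential; for a non-oblivious tree this would involve probabilities of rectangles strictly finer than those in $\mathscr{P}(T)$.

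Substituting this into the Shapley identity and swapping the $S$-summation outside, for each pair $(\mathbf{b},\mathbf{u})\in\mathcal{R}(T)^2$ set $W:=\mathcal{E}(\mathbf{a},\mathbf{b};\mathsf{p}(T))$ and $Y:=\mathcal{E}(\mathbf{b},\mathbf{u};\mathsf{p}(T))$; the pair contributes to $v^{ME}(S')$ exactly when $-Y\subseteq S'\subseteq W$. A four-case analysis on the independent conditions $i\in W$ and $i\in Y$ shows that the ``both-in'' and ``both-out'' cases contribute equally to $v^{ME}(S\cup\{i\})$ and $v^{ME}(S)$ and hence cancel in the Shapley difference, while the two asymmetric cases produce the positive and negative contributions respectively. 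Setting $Z:=-Y$ turns these into the indexing conditions $i\in Z\subseteq W$ for $\phi^+$ and $Z\subseteq W,\,i\notin W$ for $\phi^-$. The remaining inner $S$-sum collapses via a standard Beta-integral manipulation (binomial-expand $(1+\tfrac{x}{1-x})^{|W|-|Z|}$ inside the integrand $x^{|Z|-1}(1-x)^{k-|Z|}$ and reduce using $\int_0^1 x^{a-1}(1-x)^{b-1}\,dx=\tfrac{(a-1)!(b-1)!}{(a+b-1)!}$) to exactly $\omega^\pm(|W|,|Z|;k)$. For Banzhaf, the constant coefficient $\alpha_{Bz}(s)=1/2^{k-1}$ reduces the same inner sum trivially to $2^{|W|-|Z|}/2^{k-1}=\tilde\omega$. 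The complexity bounds follow by counting realizable pairs $(\mathbf{b},\mathbf{u})$ satisfying the case-specific constraints coordinate-by-coordinate and applying AM-GM to $\sum_q|S_q|=m(T)$. The main obstacle is the four-case bookkeeping, in particular verifying the cancellation of the symmetric cases and correctly tracking the offset by $\{i\}$ in the $S$-range when $i\in -Y$, especially once realizability restrictions on $\mathcal{R}(T)$ are imposed for features recurring at multiple levels.
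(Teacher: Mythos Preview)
Your proposal is correct and follows essentially the same route as the paper's proof: reduce to a single oblivious tree via linearity and carrier-dependence, write $v^{\mathrm{ME}}(S)(\mathbf{x})$ as the double sum over realizable $(\mathbf{b},\mathbf{u})$, reparametrize by $(Z,W)=\big(K\setminus\mathcal{E}(\mathbf{b},\mathbf{u};\mathsf{p}),\,\mathcal{E}(\mathbf{a},\mathbf{b};\mathsf{p})\big)$, identify the surviving asymmetric cases $i\in Z\subseteq W$ and $Z\subseteq W\not\ni i$, and bound the term count by AM--GM on $\sum_q|S_q|=m$. The only substantive deviation is that you collapse the inner sum $\sum_{Z\setminus\{i\}\subseteq Q\subseteq W\setminus\{i\}}\tfrac{|Q|!(k-|Q|-1)!}{k!}$ via the Beta-integral representation $\tfrac{s!(k-s-1)!}{k!}=\int_0^1 x^s(1-x)^{k-s-1}\,dx$ followed by the binomial theorem, whereas the paper invokes the combinatorial hockey-stick identity (Lemma~\ref{hockey-stick}); the two are equivalent devices yielding the same $\omega^\pm$.
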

A proof can be found in Appendix \ref{subappendix:CatBoost}.
The theorem can be formulated for game values 
introduced in Theorem \ref{classification} too, and also for coalitional game values such as the Owen value albeit it becomes tedious. So we skip it here and we 
refer the reader to Appendix \ref{appendix:generalization}.

\begin{remark}
Formula \eqref{symmetric formula ensemble} for marginal Shapley values of ensembles of symmetric trees cannot be simplified any further because,
once $\phi^+(\mathbf{a};i,T)$ and $\phi^-(\mathbf{a};i,T)$ are expanded as in \eqref{auxiliary5''} and \eqref{contributions}, the resulting pairs $(\mathbf{b},\mathbf{u})$  are distinct.
\end{remark}

\begin{remark}
Notice the two speedups employed in the theorem. The summation in 
\eqref{symmetric formula ensemble}
only considers the trees relevant to the feature under consideration; and numbers 
$\phi^+(\mathbf{a};i,T)$, $\phi^-(\mathbf{a};i,T)$ therein are described in summations 
\eqref{contributions} by focusing only on realizable binary codes $\mathbf{b},\mathbf{u}$ which causes the number of summands to decrease when the tree has many repeated features. 
\end{remark}

\begin{remark}\label{limited features 1}
The complexity of computing marginal Shapley values for a symmetric tree via the previous theorem is sublinear in terms of the number of leaves once a restriction is posed on the  number of distinct features on which the tree depends. 
This is because if $k(T)\leq k_*$ as $m(T)\to\infty$, then 
$$
3^{k(T)-1}\cdot\left(\frac{m(T)}{k(T)}\right)^{k(T)}=O\left(m(T)^{k_*}\right)
\quad (m(T)=\log_2(\text{\# of leaves})).
$$ 
\end{remark}

\begin{example}\label{Repeated feature example} 
The goal of this example is to verify Theorem \ref{Catboost theorem} in the case of the symmetric tree of depth three illustrated in Figure \ref{fig:repeated}. 
The features $(X_1,X_2)$ are supported in the rectangle $\mathcal{B}:=[0,3]\times[0,2]$ which is partitioned  into six subrectangles by the tree. 
Following Definition-Convention \ref{groundwork}, we label the leaves of $T$ and their corresponding subrectangles with binary codes in $\{0,1\}^3$. The set of realizable binary codes is of size six $\mathcal{R}=\{000,010,100,110,101,111\}$. The simple function computed by $T$ is 
\begin{equation}\label{auxiliary6}
g=c_{000}\cdot\mathbbm{1}_{R_{000}}+c_{010}\cdot\mathbbm{1}_{R_{010}}
+c_{100}\cdot\mathbbm{1}_{R_{100}}+c_{110}\cdot\mathbbm{1}_{R_{110}}
+c_{101}\cdot\mathbbm{1}_{R_{101}}+c_{111}\cdot\mathbbm{1}_{R_{111}}
\end{equation}
where 
\small
$$R_{000}=[0,1]\times[0,1],\,R_{010}=[0,1]\times[1,2],\,R_{100}=[1,2]\times[0,1],R_{110}=[1,2]\times[1,2],\,
R_{101}=[2,3]\times[0,1],\,R_{111}=[2,3]\times[1,2].$$
\normalsize
The feature $X_1$ appears on the first and third levels while $X_2$ appears only on the second level. This amounts to the partition
$$
\mathsf{p}(T)=\mathsf{p}=\{S_1=\{1,3\},S_2=\{2\}\}
$$
of $M=\{1,2,3\}$ which is indexed by $K=\{1,2\}$. 
To obtain the set $\mathcal{R}\subset\{0,1\}^3$ of realizable codes, all codes with the first bit $0$ and the third bit $1$, i.e. codes $001$ and $011$, should be excluded.  
Now suppose we want to compute the value that $\varphi_1\big[v^\ME\big]$ attains on 
one of the subrectangles, say on $R_{110}=[1,2]\times[1,2]$; that is, $\mathbf{a}=110$. 
In \eqref{contributions}, we are interested in pairs $(\mathbf{b},\mathbf{u})$ of elements of $\mathcal{R}$ with 
\begin{equation}\label{Z and W}
\begin{split}
&\mathcal{E}(\mathbf{a},\mathbf{b};\mathsf{p})=W,\quad
\mathcal{E}(\mathbf{b},\mathbf{u};\mathsf{p})=\{1,2\}\setminus Z,
\quad 1\in Z\subseteq W\subseteq\{1,2\};\\
&\mathcal{E}(\mathbf{a},\mathbf{b};\mathsf{p})=W,\quad
\mathcal{E}(\mathbf{b},\mathbf{u};\mathsf{p})=\{1,2\}\setminus Z,
\quad Z\subseteq W\subseteq\{2\}.
\end{split}    
\end{equation}
Since $S_1=\{1,3\}$, we have 
$b_1=a_1=1$ and $b_3=a_3=0$ on the first line, and 
$u_1=b_1$ and $u_3=b_3$ on the second. 
As for the second bits, given that $S_2=\{2\}$, they are determined based on if $2$ belongs to $Z$ and $W$ or not. 
All possibilities for bits $b_2$ and $u_2$ 
are summarized in Table \ref{Tab: Repeated Example} along with the corresponding weights for these pairs as defined in \eqref{weights_Shapley}.
We now invoke Theorem \ref{Catboost theorem} to compute $\varphi_1\big[v^\ME\big](\mathbf{x})$ where 
$\mathbf{x}\in R_{110}$. 
In the first summation $\phi^+(\mathbf{a};i,T)$ from \eqref{contributions}, 
one should have $\mathbf{b}\in\{100,110\}$, and  the summation becomes 
$$
A_1:=\frac{1}{2}\,c_{100}\left(p_{000}+p_{101}\right)+
\frac{1}{2}\,c_{110}\left(p_{000}+p_{101}\right)+
c_{110}\left(p_{010}+p_{111}\right).
$$
Notice that the three summands above correspond to the rows of  
Table \ref{Tab: Repeated Example}.
Next, in the summation $\phi^-(\mathbf{a};i,T)$ from \eqref{contributions}, to be subtracted from the former, one has $\mathbf{b}\in\{000,010,101,111\}$ and the summation becomes
$$
A_2:=\frac{1}{2}\left(c_{000}\,p_{000}+c_{101}\,p_{101}\right)
+\frac{1}{2}\left(c_{010}\,p_{000}+c_{111}\,p_{101}\right)
+\left(c_{010}\,p_{010}+c_{111}\,p_{111}\right).
$$
Therefore, $\varphi_1\big[v^\ME\big](\mathbf{x})=A_1-A_2$ whenever $\mathbf{x}\in R_{110}$. 
To verify this directly, notice that for the model \eqref{auxiliary6} the first marginal Shapley value over $R_{110}$
is given by:
\begin{equation}
\begin{split}
\varphi_1\big[v^\ME\big](\mathbf{x})=&
\frac{1}{2}\big[\Bbb{E}[g(x_1,X_2)]-\Bbb{E}[g(X_1,x_2)]\big]
+\frac{1}{2}\big[g(\mathbf{x})-\Bbb{E}[g(X_1,X_2)]\big]\\
=&\frac{1}{2}\big[(c_{110}(p_{010}+p_{110}+p_{111})+c_{100}(p_{000}+p_{100}+p_{101}))\\
&\quad -(c_{010}(p_{010}+p_{000})+c_{110}(p_{110}+p_{100})+c_{111}(p_{111}+p_{101}))\big]\\
&+\frac{1}{2}\big[c_{110}-(c_{000}p_{000}+c_{100}p_{100}+c_{101}p_{101}
+c_{010}p_{010}+c_{110}p_{110}+c_{111}p_{111})\big].
\end{split}
\end{equation}
It is not hard to check that the above expression coincides with $A_1-A_2$ after simplification. 
\begin{table}[ht]
\begin{tabular}{|c|c|c|c?c|c|c|c|}
\cline{2-7}
\multicolumn{1}{c|}{}& $Z$ & $W$ & $\omega^+$ & $Z$ & $W$ & $\omega^-$\\
\hline
$u_2=b_2,\, b_2\neq a_2$ & $\{1\}$ & $\{1\}$ & $\frac{1}{2}$
& $\varnothing$ & $\varnothing$ & $\frac{1}{2}$\\[0.3ex]
\hline
$u_2\neq b_2,\, b_2=a_2$ & $\{1,2\}$ & $\{1,2\}$ & $\frac{1}{2}$ 
& $\{2\}$ & $\{2\}$ & $\frac{1}{2}$\\[0.3ex]
\hline
$u_2=b_2,\, b_2=a_2$ & $\{1\}$ & $\{1,2\}$ & $1$ & $\varnothing$ & $\{2\}$ & $1$\\
\hline
\end{tabular}
\caption{ Table required for Example \ref{Repeated feature example} where the marginal Shapley values are computed for the tree illustrated in Figure \ref{fig:repeated}. 
All choices in \eqref{Z and W} for nested subsets $Z\subseteq W$  are outlined along their ramifications to the second bit of the binary codes, and the respective weights (cf. \eqref{weights_Shapley}) which appear in the formula.}
\label{Tab: Repeated Example}
\end{table}
\begin{figure}
\includegraphics[width=14cm]{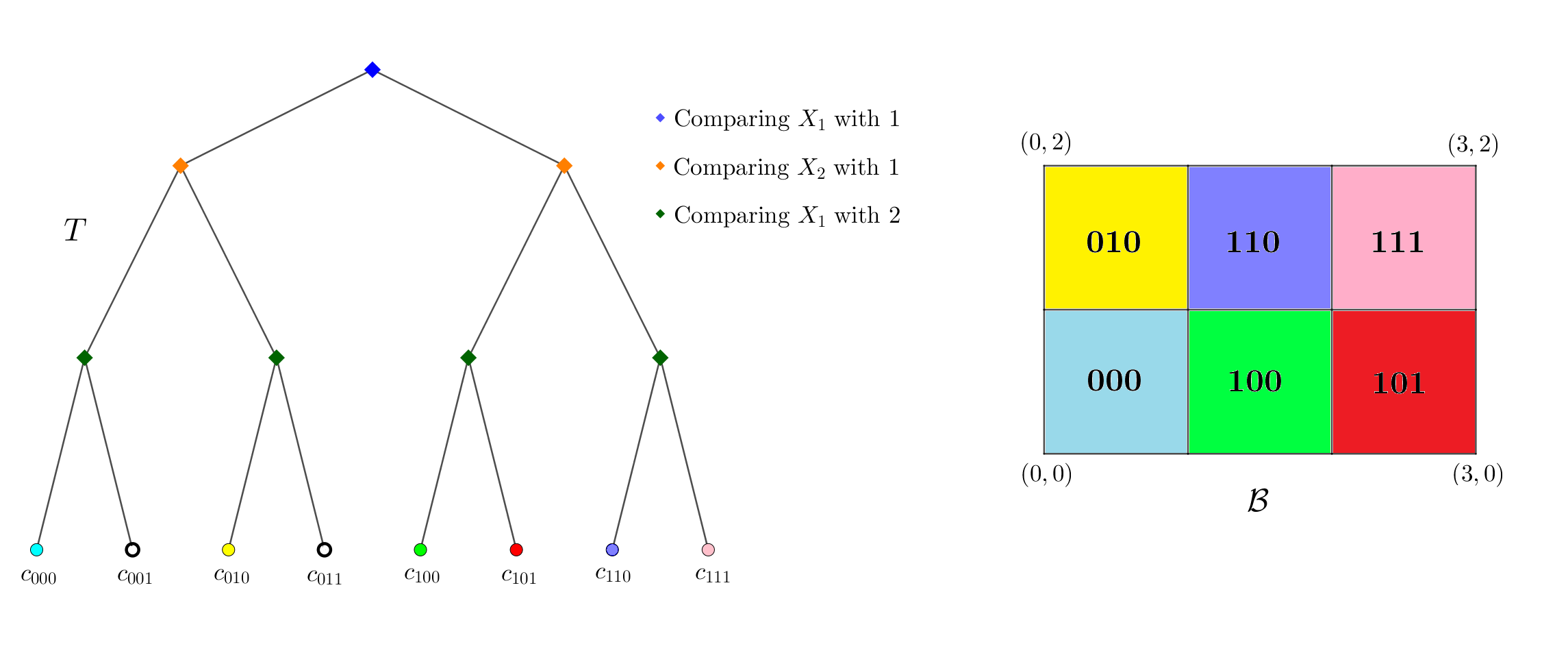}
\caption{The picture for Example \ref{Repeated feature example} 
which is concerned with the oblivious decision tree $T$ of depth three appearing on the left.
At each split, we go right if the feature is larger than the threshold and go left otherwise. The leaves can thus be encoded with elements of $\{0,1\}^3$; the same holds for the regions into which the tree, as on the right, partitions the rectangle $\mathcal{B}=[0,3]\times[0,2]$ where the features $(X_1,X_2)$ are supported. 
But two of the binary codes, $001$ and $011$, do not amount to any region since the paths from the root to their corresponding leaves encounter conflicting thresholds for feature $X_1$. Any other leaf of $T$ corresponds to the region of the same color on the right.}
\label{fig:repeated}
\end{figure}
\end{example}

\begin{example}\label{distinct features}
Let us apply Theorem \ref{Catboost theorem} to an oblivious decision tree $T$ of depth $m$ which does not split on any feature on more than one level. With the notation as in Definition-convention \ref{groundwork}, one has 
$m(T)=k(T)=m$; and 
the leaves, along with their values and their corresponding regions, can be encoded with elements of $\{0,1\}^m$---all of them are realizable, i.e. $\mathcal{R}=\mathcal{R}(T)$ coincides with  
$\{0,1\}^m$. Enumerating the features appearing in the tree from the top
as $X_1,\dots,X_m$ (all distinct), 
the resulting partition $\mathsf{p}=\mathsf{p}(T)$ of $M=\{1,\dots,m\}$  coincides with the partition 
$\{S_q=\{q\}\}_{q\in M}$
into singletons. It is easy to check that in this case 
$\mathcal{E}^{-1}(\mathbf{e},Q;\mathsf{p})$ has only one element, which, denoted by 
$\sigma(\mathbf{e};Q)=\left(\sigma(\mathbf{e};Q)_i\right)_{i\in M}$, is given by:
$$
\sigma(\mathbf{e};Q)_i:=
\begin{cases}
e_i & \text{ if } i\in Q,\\
1-e_i & \text{otherwise}.
\end{cases}
$$
Now for any $i\in M$ and $\mathbf{a}\in\{0,1\}^m$, the followings holds for ${\rm{P}}_{\mathbf{X}}$-a.e. point $\mathbf{x}$
from the region 
$R_{\mathbf{a}}:=R(\mathbf{a};T)$: 

\footnotesize
\begin{equation}\label{no repetition Shapley}
\varphi_i\big[v^\ME\big](\mathbf{x})=
\sum_{\substack{Z\subseteq M\\ i\in Z}}   
\sum_{\substack{W\subseteq M \\
W\supseteq Z}}\omega^+(|W|,|Z|;m)\cdot
c_{\sigma(\mathbf{a};W)}\cdot p_{\sigma(\sigma(\mathbf{a};W);-Z)}
-\sum_{\substack{W\subseteq M\\ i\notin W}}   
\sum_{\substack{Z\subseteq M\\
Z\subseteq W}}\omega^-(|W|,|Z|;m)
\cdot c_{\sigma(\mathbf{a};W)}\cdot p_{\sigma(\sigma(\mathbf{a};W);-Z)},    
\end{equation}
\begin{equation}\label{no repetition Banzhaf}
Bz_i\big[v^\ME\big](\mathbf{x})=
\sum_{\substack{Z\subseteq M\\ i\in Z}}   
\sum_{\substack{W\subseteq M \\
W\supseteq Z}}\tilde{\omega}(|W|,|Z|;m)\cdot
c_{\sigma(\mathbf{a};W)}\cdot p_{\sigma(\sigma(\mathbf{a};W);-Z)}
-\sum_{\substack{W\subseteq M\\ i\notin W}}   
\sum_{\substack{Z\subseteq M\\
Z\subseteq W}}\tilde{\omega}(|W|,|Z|;m)
\cdot c_{\sigma(\mathbf{a};W)}\cdot p_{\sigma(\sigma(\mathbf{a};W);-Z)},    
\end{equation}
\normalsize
where for any $\mathbf{e}\in\{0,1\}^m$, 
$p_\mathbf{e}:=p(\mathbf{e};T)$ and $c_\mathbf{e}:=c(\mathbf{e};T)$ are the probability and the score for the leaf encoded by $\mathbf{e}$.
Notice that there are precisely $2\cdot 3^{m-1}$ summands in either  
\eqref{no repetition Shapley} or \eqref{no repetition Banzhaf} because the number of pairs of nested subsets of $M\setminus\{i\}$ is $3^{|M\setminus\{i\}|}=3^{m-1}$. We conclude that the bound \eqref{complexity1} is sharp.
\end{example}

Theorem \ref{Catboost theorem} suggests a two-stage algorithm for approximating marginal Shapley values as the Shapley values of the empirical marginal game \eqref{empirical}; this amounts to replacing the true probabilities $p(\mathbf{u};T)$ 
associated with leaves with their estimations based on the training data:
\begin{equation}\label{probability estimation precomputation}
\hat{p}(\mathbf{u};T)=\frac{|D\cap R(\mathbf{u};T)|}{|D|}\quad 
(\mathbf{u}\in\mathcal{R}(T)\subseteq\{0,1\}^{m(T)}, T\in\mathcal{T}).
\end{equation}
\begin{algorithm}\label{algorithm}
With the notation as in Theorem \ref{Catboost theorem}, suppose $f:\Bbb{R}^n\rightarrow\Bbb{R}$ is a model computed by an ensemble $\mathcal{T}$ of oblivious decision trees. 
In the  precomputation phase, for each leaf of each tree, the empirical marginal Shapley values 
arising from that tree are computed for data points that end up at that leaf. These are recorded in look-up tables which are then used in the production phase  to estimate the vector of marginal Shapley values for an input explicand.\\
\indent 
First, $\mathcal{T}$ should be parsed to obtain the following quantities used in the precomputation algorithm:
\begin{itemize}
\item functions 
$\{\iota(\cdot;T)\}_{T\in\mathcal{T}}$ enumerating features appearing in each tree;
\item  depths $\{m(T)\}_{T\in\mathcal{T}}$, distinct feature counts  
$\{k(T)\}_{T\in\mathcal{T}}$ and partitions $\{\mathsf{p}(T)\}_{T\in\mathcal{T}}$ associated with the trees as well as leaf scores $\{\{c(\mathbf{b};T)\}_{\mathbf{b}\in\mathcal{R}(T)}\}_{T\in\mathcal{T}}$
and estimated probabilities 
$\{\{\hat{p}(\mathbf{u};T)\}_{\mathbf{u}\in\mathcal{R}(T)}\}_{T\in\mathcal{T}}$\footnote{
A code for ``unpacking'' a trained XGBoost, LightGBM or CatBoost model and retrieving quantities 
such as distinct features in each tree, depths, estimated leaf probabilities etc. is available on 
\url{https://github.com/FilomKhash/Tree-based-paper}.};
\item combinatorial data 
$\big\{\big\{\mathcal{E}^{-1}(\mathbf{e},Q;\mathsf{p}(T))\big\}_{\mathbf{e}\in\{0,1\}^{m(T)},Q\subseteq K(T)}\big\}_{T\in\mathcal{T}}$.
\end{itemize}
\begin{enumerate}
\setlength{\itemindent}{-.2in}
\item[$\blacktriangleright$] \textbf{Step 1 (precomputation).}   
\end{enumerate}
\small
\begin{algorithmic}
\State \textbf{Input:}
$\{\{c(\mathbf{b};T)\}_{\mathbf{b}\in\mathcal{R}(T)}\}_{T\in\mathcal{T}}$,
$\{\{\hat{p}(\mathbf{u};T)\}_{\mathbf{u}\in\mathcal{R}(T)}\}_{T\in\mathcal{T}}$, 
$\left\{\left\{\mathcal{E}^{-1}(\mathbf{e},Q;\mathsf{p}(T))\right\}_{\mathbf{e}\in\{0,1\}^{m(T)},Q\subseteq K(T)}\right\}_{T\in\mathcal{T}}$
\State \textbf{Output:} Shapley values
$\{\{{\boldsymbol{\hat{\phi}}}(\mathbf{a};T)\}_{\mathbf{a}\in\mathcal{R}(T)}\}_{T\in\mathcal{T}}$
\For {$T\in\mathcal{T}$}
\For {$Q\subseteq K(T)$} 
\Comment{First, we obtain $\mathfrak{s}(\mathbf{e},Q;T)$ as $\mathbf{e}$ and $Q$ vary.}
\For {$\mathbf{e}\in\mathcal{R}(T)$}
\State $s\gets 0$
\For {${\mathbf{u}\in\mathcal{E}^{-1}(\mathbf{e},Q;\mathsf{p}(T))\cap\mathcal{R}(T)}$}
\State $s\gets s+\hat{p}(\mathbf{u};T)$
\EndFor 
\State $\mathfrak{s}(\mathbf{e},Q;T)\gets c(\mathbf{e};T)\cdot s$
\EndFor 
\EndFor
\For {$\mathbf{a}\in\mathcal{R}(T)$} 
\Comment{Shapley values for $\mathbf{a}$ are computed as the difference of expressions in \eqref{contributions}.}
\State $\boldsymbol{\hat{\phi}}(\mathbf{a};T)=(\hat{\phi}_i(\mathbf{a};T))_{1\leq i\leq k(T)}\gets 
(0)_{1\leq i\leq k(T)}$ 
\For {$W\subseteq K(T)$}
\For {${\mathbf{b}\in\mathcal{E}^{-1}(\mathbf{a},W;\mathsf{p}(T))\cap\mathcal{R}(T)}$}
\For {$Z\subseteq W$}
\State $\text{increment}^{\plus}\gets\frac{(|Z|-1)!\,(k(T)-|W|)!}{(K(T)+|Z|-|W|)!}\cdot\mathfrak{s}(\mathbf{b},K(T)\setminus Z;T)$
\State $\text{increment}^{\minus}\gets\frac{|Z|!\,(k(T)-|W|-1)!}{(k(T)+|Z|-|W|)!}\cdot\mathfrak{s}(\mathbf{b},K(T)\setminus Z;T)$
\For {$i\in Z$}
\State $\hat{\phi}_i(\mathbf{a};T)\gets \hat{\phi}_i(\mathbf{a};T)+
\text{increment}^{\plus}$
\EndFor
\For {$i\in K(T)\setminus W$}
\State $\hat{\phi}_i(\mathbf{a};T)\gets \hat{\phi}_i(\mathbf{a};T)-
\text{increment}^{\minus}$
\EndFor
\EndFor
\EndFor
\EndFor
\EndFor
\EndFor
\State \Return 
$\{\{\boldsymbol{\hat{\phi}}(\mathbf{a};T)\}_{\mathbf{a}\in\mathcal{R}(T)}\}_{T\in\mathcal{T}}$
\end{algorithmic}
\normalsize
\begin{enumerate}
\setlength{\itemindent}{-.2in}
\item[$\blacktriangleright$]
\textbf{Step 2 (computation).}   
The outputs  $\{\{\boldsymbol{\hat{\phi}}(\mathbf{a};T)\}_{\mathbf{a}\in\mathcal{R}(T)}\}_{T\in\mathcal{T}}$ of the previous step are saved as look-up tables: one table for each tree $T\in\mathcal{T}$ whose rows are vectors
$\boldsymbol{\hat{\phi}}(\mathbf{a};T) (\mathbf{a}\in\mathcal{R}(T))$  of size $k(T)$.
They then can be employed  for explaining new data points on the fly as long as the model is in production.
\end{enumerate}  
\small
\begin{algorithmic}
\State \textbf{Input:} an explicand $\mathbf{x}$ along with functions 
$\{\iota(\cdot;T)\}_{T\in\mathcal{T}}$ 
and the precomputed Shapley values 
$\{\{\boldsymbol{\hat{\phi}}(\mathbf{a};T)\}_{\mathbf{a}\in\mathcal{R}(T)}\}_{T\in\mathcal{T}}$ 
\State \textbf{Output:} The vector of estimated marginal Shapley values 
$\boldsymbol{\hat{\varphi}}:=\Big(\widehat{\varphi_i\big[v^\ME\big]}(\mathbf{x})\Big)_{1\leq i\leq n}$ 
\State $\boldsymbol{\hat{\varphi}}\gets (0)_{1\leq i\leq n}$
\For {$T\in\mathcal{T}$}
\State $\mathbf{a}\gets\mathbf{a}(\mathbf{x};T)$ 
\Comment{The leaf at which $\mathbf{x}$ ends up may be found with binary search or library's native methods.}
\For {$i\in K(T)$}
\State $\hat{\varphi}_{\iota(i;T)}\gets \hat{\varphi}_{\iota(i;T)}+\hat{\phi}_i(\mathbf{a};T)$
\EndFor
\EndFor
\State \Return $\boldsymbol{\hat{\varphi}}$
\end{algorithmic}
\normalsize
\end{algorithm}

In the implementation, the algorithm  can be highly vectorized, and also parallelized across the trees in the ensemble (or even across the leaves of a single tree if necessary). Its complexity can be analyzed in light of  inequality \eqref{complexity1} for the number of terms in the formula for a marginal Shapley value of an oblivious tree. The aforementioned inequality yields the complexity 
$O\left(|\mathcal{T}|\cdot \mathcal{L}^{\log_2 3}\right)$ 
(in which the constant hidden in the $O$ notation is small).
But, for each tree, the precomputation algorithm goes through all leaves and all features on which the tree splits. This yields the total complexity       
$O\left(|\mathcal{T}|\cdot \mathcal{L}^{\log_2 6}\cdot\log(\mathcal{L})\right)$ for precomputation and  $O\left(|\mathcal{T}|\cdot\mathcal{L}\cdot\log(\mathcal{L})\right)$ for storage of the look-up tables  (again, with a small hidden constant).
To compare with path-dependent and interventional TreeSHAP algorithms (cf. \Sec \ref{subsec:TreeSHAP}) which do not conduct a precomputation, the complexity per leaf should be considered which is 
$O\left(|\mathcal{T}|\cdot \mathcal{L}^{\log_2 3}\cdot\log(\mathcal{L})\right)$.
All these facts are reflected in Table \ref{Tab: complexity}. 
The complexity terms appearing therein for Algorithm \ref{algorithm} are elaborated on in Appendix \ref{subappendix:CatBoost}.\\
\indent
At the heart of the precomputation algorithm is the computation of a preimage 
$\mathcal{E}^{-1}(\mathbf{e},Q;\mathsf{p}(T))$ 
for each tree $T$, where $\mathbf{e}\in \{0,1\}^{m(T)}$  and 
$Q \subseteq K(T)$. The computational complexity, and that of its memory storage, is $O(\mathcal{L}^2)$ for each tree. These values are then used to compute the values 
$\mathfrak{S}=\{\mathfrak{s}(\mathbf{e},Q;T)\}_{\mathbf{e},Q}$ obtained in the first part of Step 1 of Algorithm 3.12, whose complexity is also $O(\mathcal{L}^2)$ for each tree, rather than 
$O\left(\mathcal{L}^{\log_2 6} \cdot \log(\mathcal{L}) \approx \mathcal{L}^{2.585} \cdot \log(\mathcal{L})\right)$. The elements of $\mathfrak{S}$ are then used to compute the game value at every leaf and for every player; re-using the values $\mathfrak{S}$ multiple times is the main aspect of the precomputation algorithm, which helps to reduce overall complexity by sacrificing the memory storage.

\begin{remark}
Suppose during the period that the ensemble $\mathcal{T}$ is in production, the data distribution changes qualitatively. 
In such situations, assuming the same tree structures are kept, 
one can first update the leaf weights, and then redo the precomputation step of Algorithm \ref{algorithm} to update the look-up tables accordingly. 
\end{remark}

\begin{remark}\label{categorical}
By incorporating generalizations of Theorem \ref{Catboost theorem} from
Appendix \ref{appendix:generalization},
Algorithm \ref{algorithm} can be modified to estimate those marginal feature attributions of CatBoost models that stem from a larger class of game values, or from coalitional game values such as the Owen value. The Owen values are indeed relevant since they can be used to recover the Shapley values of  categorical features that were one-hot encoded before the training. We elaborate on this in  
Proposition \ref{encoded Proposition}. This is one of the rare cases where Shapley values can be recovered after a non-linear transformation of features. It is infeasible to do something similar for the other highly complex built-in encodings of categorical features in the CatBoost 
library.     
\end{remark}

The last theorem of the section bounds the $L^2$-error of the algorithm when it is applied to a dataset $\mathbf{D}$ drawn i.i.d. from $\Bbb{R}^n$ according to ${\rm{P}}_{\mathbf{X}}$. 
In that setting, the estimated probabilities $\hat{p}(\mathbf{u};T)$, now treated as random variables, converge to the true probabilities $p(\mathbf{u};T)$ as $|\mathbf{D}|\to\infty$ (see Lemma \ref{error}).
Another quantity related to these probabilities that comes up in our error term is $1$ minus the Gini impurity; that is, the following quantity between zero and one associated with a tree:
\begin{equation}\label{Gini}
{\rm{Gini}}(\mathbf{X},T):=\sum_{\mathbf{u}\in\mathcal{R}(T)}p(\mathbf{u};T)^2.  
\end{equation}
In other words, ${\rm{Gini}}(\mathbf{X},T)$  is the sum of squares of probabilities that ${\rm{P}}_{\mathbf{X}}$ 
assigns to the regions cut by the decision tree $T$.

\begin{theorem}\label{error analysis}
Consider $(\mathbf{X},f)$ where $\mathbf{X}=(X_1,\dots,X_n)$ are the predictors and the model $f:\Bbb{R}^n\rightarrow\Bbb{R}$ is implemented via an ensemble $\mathcal{T}$ of oblivious trees. 
Fix $i\in N$ and denote the subset of trees that split on $X_i$ by $\mathcal{T}^{(i)}$. 
Suppose the dataset in Algorithm \ref{algorithm} is a random sample 
${\bf D}:=\left\{\mathcal{X}^{(1)},\dots,\mathcal{X}^{(\mathscr{D})}\right\}$ drawn i.i.d. from $\Bbb{R}^n$ according to ${\rm{P}}_{\mathbf{X}}$; hence the (unbiased) estimator 
$\widehat{ \varphi_i\big[v^\ME\big]}({\bf x}; {\bf D})$ of the $i^{\rm{th}}$ marginal Shapley value becomes a random variable. 
Then for ${\rm{P}}_{\mathbf{X}}$-almost every $\mathbf{x}\in\Bbb{R}^n$ one has 
\begin{equation}\label{inequality}
\sqrt{\Bbb{E}\left[\left|\widehat{\varphi_i\big[v^\ME\big]}(\mathbf{x}; {\bf D})
-\varphi_i\big[v^\ME\big](\mathbf{x})\right|^2\right]}
\leq \frac{C}{\sqrt{|\mathbf{D}|}}\cdot\max_{T\in\mathcal{T}^{(i)}} 
\sqrt{\sum_{\mathbf{b}\in\mathcal{R}(T)\subseteq\{0,1\}^{m(T)}}c(\mathbf{b};T)^2}
\end{equation}
where, on the right-hand side, the second term is the maximum possible value for the
$L^2$-norm of the vector formed by the leaf scores\footnote{It is more precise to say ``relevant'' leaf scores because the leaves corresponding to degenerate regions are ignored.} of a tree from $\mathcal{T}$ which splits on $X_i$;
and  in the first term, we have the constant 
\begin{equation}\label{Lipschitz constant}
C:=4\,\left|\mathcal{T}^{(i)}\right|
\cdot\max_{T\in\mathcal{T}^{(i)}}
\sqrt[4]{{\rm{Gini}}(\mathbf{X},T)}\cdot
\sqrt[4]{\frac{1.5}{k(T)}\left(1+\frac{m(T)}{k(T)}\right)^{k(T)}}
\end{equation}
which is no larger than 
$4\,|\mathcal{T}|\cdot \sqrt[4]{\frac{3\mathcal{L}}{\log_2(\mathcal{L})}}$ 
with $\mathcal{L}$ being the maximum possible number of leaves.
In particular, for a fixed $f$,
$\widehat{\varphi_i\big[v^\ME\big]}(\mathbf{x};{\bf D}) \to \varphi_i\big[v^\ME\big](\mathbf{x})$ in $L^2(\Omega, \mathcal{F},\Bbb{P})$ as the size  of $\mathbf{D}$ tends to infinity.
\end{theorem}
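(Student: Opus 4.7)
The plan is to exploit that, by Theorem \ref{Catboost theorem}, the marginal Shapley value $\varphi_i\bigl[v^\ME\bigr](\mathbf{x})$ is a \emph{linear} function of the leaf probabilities $\{p(\mathbf{u};T)\}_{T\in\mathcal{T}^{(i)},\,\mathbf{u}\in\mathcal{R}(T)}$, and the estimator $\widehat{\varphi_i[v^\ME]}(\mathbf{x})$ produced by Algorithm \ref{algorithm} arises solely by substituting the empirical counterparts $\hat{p}(\mathbf{u};T)$ from \eqref{probability estimation precomputation} for the true $p(\mathbf{u};T)$. First, I would write the error as a sum indexed by the trees that actually split on $X_i$:
$$
\widehat{\varphi_i\big[v^\ME\big]}(\mathbf{x}) - \varphi_i\big[v^\ME\big](\mathbf{x})=\sum_{T\in\mathcal{T}^{(i)}}\sum_{\mathbf{u}\in\mathcal{R}(T)}\mu_T(\mathbf{u};\mathbf{x})\bigl(\hat{p}(\mathbf{u};T)-p(\mathbf{u};T)\bigr),
$$
where $\mu_T(\mathbf{u};\mathbf{x})$ is a linear combination of the leaf scores $\{c(\mathbf{b};T)\}_\mathbf{b}$ with coefficients $\lambda(\mathbf{b},\mathbf{u})$ that are built from the Shapley weights $\omega^\pm$ of \eqref{weights_Shapley} and the combinatorial indicators $\mathbbm{1}[\mathcal{E}(\mathbf{a},\mathbf{b};\mathsf{p}(T))=W,\;\mathcal{E}(\mathbf{b},\mathbf{u};\mathsf{p}(T))=-Z]$ read off from \eqref{contributions} and \eqref{auxiliary5''}. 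Minkowski's inequality ($L^2$-triangle inequality) across trees then reduces the problem to bounding the per-tree RMSE and yields the overall prefactor $|\mathcal{T}^{(i)}|$.

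For a fixed tree $T$, the inner sum is an empirical mean minus its expectation: $\sum_\mathbf{u}\mu_T(\mathbf{u};\mathbf{x})\bigl(\hat p(\mathbf{u};T)-p(\mathbf{u};T)\bigr)=\frac{1}{|\mathbf{D}|}\sum_{j=1}^{|\mathbf{D}|}(Y_j-\Bbb{E}[Y_j])$, where $Y_j:=\sum_\mathbf{u}\mu_T(\mathbf{u};\mathbf{x})\mathbbm{1}_{R(\mathbf{u};T)}(\mathcal{X}^{(j)})$ are i.i.d. Since the regions $\{R(\mathbf{u};T)\}_{\mathbf{u}\in\mathcal{R}(T)}$ partition $\mathcal{B}$ up to a null set, $Y_j$ takes the value $\mu_T(\mathbf{u};\mathbf{x})$ on the disjoint event of probability $p(\mathbf{u};T)$, so
$$
\Bbb{E}\bigl[|L_T(\mathbf{x})|^2\bigr]=\tfrac{1}{|\mathbf{D}|}\mathrm{Var}(Y_1)\leq \tfrac{1}{|\mathbf{D}|}\Bbb{E}[Y_1^2]=\tfrac{1}{|\mathbf{D}|}\sum_{\mathbf{u}\in\mathcal{R}(T)}\mu_T(\mathbf{u};\mathbf{x})^2\,p(\mathbf{u};T).
$$

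The crux is then two applications of Cauchy--Schwarz. Viewing $\sum_\mathbf{u}\mu_T^2\cdot p$ as a pairing of the sequences $(\mu_T^2)_\mathbf{u}$ and $(p)_\mathbf{u}$ gives $\sum_\mathbf{u}\mu_T^2 p\leq \|\mu_T\|_4^2\sqrt{\mathrm{Gini}(\mathbf{X},T)}$, which is where the $\sqrt[4]{\mathrm{Gini}}$ factor of \eqref{Lipschitz constant} enters. Next, since $\mu_T(\mathbf{u};\mathbf{x})=\sum_\mathbf{b}\lambda(\mathbf{b},\mathbf{u})c(\mathbf{b};T)$, Cauchy--Schwarz in $\mathbf{b}$ produces $\mu_T(\mathbf{u};\mathbf{x})^2\leq \|\lambda(\cdot,\mathbf{u})\|_2^2\cdot\|c(\cdot;T)\|_2^2$, so $\|\mu_T\|_4^4\leq \|c(\cdot;T)\|_2^4\cdot\sum_\mathbf{u}\|\lambda(\cdot,\mathbf{u})\|_2^4$. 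The residual combinatorial quantity is then bounded via the term-count estimates \eqref{complexity multiplication}--\eqref{complexity0} and the pointwise estimate $|\omega^\pm(w,z;k)|\leq 1/z$; a careful enumeration of realizable pairs $(\mathbf{b},\mathbf{u})$ grouped by the nested subsets $Z\subseteq W\subseteq K(T)$ they are compatible with gives the factor $\tfrac{1.5}{k(T)}(1+m(T)/k(T))^{k(T)}$ appearing in \eqref{Lipschitz constant}.

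Assembling the three estimates yields a per-tree $L^2$-bound of the form $\|c(\cdot;T)\|_2\cdot\sqrt[4]{\mathrm{Gini}(T)}\cdot\sqrt[4]{\text{comb}(T)}/\sqrt{|\mathbf{D}|}$; maximizing over $T\in\mathcal{T}^{(i)}$ and applying Minkowski establishes \eqref{inequality} with constant \eqref{Lipschitz constant}. The looser bound $C\leq 4|\mathcal{T}|\sqrt[4]{3\mathcal{L}/\log_2\mathcal{L}}$ then follows from $\mathrm{Gini}(T)\leq 1$, $k(T)\geq 1$, $(1+m(T)/k(T))^{k(T)}\leq 3^{m(T)}\leq \mathcal{L}^{\log_2 3}$, and $m(T)\geq\log_2\mathcal{L}$, while $L^2$-convergence as $|\mathbf{D}|\to\infty$ is immediate. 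The main obstacle I anticipate is the combinatorial bookkeeping for $\sum_\mathbf{u}\|\lambda(\cdot,\mathbf{u})\|_2^4$: each $\lambda(\mathbf{b},\mathbf{u})$ aggregates contributions of the Shapley weights $\omega^\pm$ across all nested $Z\subseteq W\subseteq K(T)$ compatible with the triple $(\mathbf{a},\mathbf{b},\mathbf{u})$ modulo $\mathsf{p}(T)$, and recovering the sharp factor $(1+m/k)^k/k$ in \eqref{Lipschitz constant}---rather than a crude $9^{k-1}$-type overestimate---requires tracking the sizes of the blocks $|S_q|$ of the partition $\mathsf{p}(T)$ and exploiting the restriction to realizable codes in $\mathcal{R}(T)$.
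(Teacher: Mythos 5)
Your route to the per-tree variance bound is genuinely different from, and arguably cleaner than, the paper's. By writing the per-tree error as $\tfrac{1}{|\mathbf{D}|}\sum_j(Y_j-\Bbb{E}[Y_j])$ with $Y_j=\sum_{\mathbf{u}}\mu_T(\mathbf{u};\mathbf{x})\,\mathbbm{1}_{R(\mathbf{u};T)}(\mathcal{X}^{(j)})$, and then using $\mathrm{Var}(Y_1)\le\Bbb{E}[Y_1^2]=\sum_{\mathbf{u}}\mu_T(\mathbf{u};\mathbf{x})^2\,p(\mathbf{u};T)$ (the events $\{\mathcal{X}^{(1)}\in R(\mathbf{u};T)\}$ being disjoint), you account for \emph{all} cross-covariances at once, including those between terms sharing the same $\mathbf{u}$ but different $(Z,W,\mathbf{b})$. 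The paper instead (i) shifts leaf scores so that $c_{\mathbf{b}}\ge 0$ (costing a factor $2$), (ii) splits $\phi^+-\phi^-$ by the triangle inequality (another factor $2$), and (iii) invokes Lemma \ref{error} to argue nonpositivity of the remaining cross inner products. Your approach sidesteps (i)--(iii), which means the factor $4$ in \eqref{Lipschitz constant} no longer has an obvious source and would have to come out of whatever combinatorial estimate you produce.

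That is exactly where the gap is. After two Cauchy--Schwarz steps you reduce to $\bigl(\sum_{\mathbf{u}}\lVert\lambda(\cdot,\mathbf{u})\rVert_2^4\bigr)^{1/4}$, so to recover \eqref{Lipschitz constant} you would need $\sum_{\mathbf{u}}\lVert\lambda(\cdot,\mathbf{u})\rVert_2^4\le 256\cdot\tfrac{1.5}{k(T)}\bigl(1+\tfrac{m(T)}{k(T)}\bigr)^{k(T)}$. This is not what the paper's Lemma \ref{auxiliary lemma 1} bounds: the paper never nests a sum over $\mathbf{u}$ inside a fourth power. Instead, after applying $\omega^\pm\le 1/|Z|$ and $\lVert\hat p_{\mathbf{u}}-p_{\mathbf{u}}\rVert_2^2\le p_{\mathbf{u}}(1-p_{\mathbf{u}})/|\mathbf{D}|$, the paper relaxes the constraint $Z\subseteq W$ and extracts $\sum_W\sum_{\mathbf{b}\in\mathcal{E}^{-1}(\mathbf{a},W;\mathsf{p})\cap\mathcal{R}}c_{\mathbf{b}}^2=\lVert c\rVert_2^2$ through a $\max_{\mathbf{b}}$ decoupling, and only \emph{then} applies Cauchy--Schwarz in $(Z,\mathbf{u})$ to isolate $\sqrt{\mathrm{Gini}}$ and $\sum_{Z\ni i}|Z|^{-4}\prod_{q\in Z}|S_q|$, which is precisely what Lemma \ref{auxiliary lemma 1} handles. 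Your different order of Cauchy--Schwarz (first in $\mathbf{u}$, then in $\mathbf{b}$) yields a structurally different combinatorial quantity, and you neither prove the required bound nor explain how Lemma \ref{auxiliary lemma 1} adapts to it; you correctly flag this as "the main obstacle," but it is left open. A secondary slip: for the crude estimate $C\le 4|\mathcal{T}|\sqrt[4]{3\mathcal{L}/\log_2\mathcal{L}}$ your chain $(1+m/k)^k\le 3^m\le\mathcal{L}^{\log_2 3}$ together with $k\ge 1$ gives only $1.5\,\mathcal{L}^{\log_2 3}$, which is far weaker than $3\mathcal{L}/\log_2\mathcal{L}$; the paper instead invokes Lemmas \ref{auxiliary lemma 2} and \ref{auxiliary lemma 3} to get $\tfrac{1}{k}(1+m/k)^k\le 2\cdot 2^m/m\le 2\mathcal{L}/\log_2\mathcal{L}$.
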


See Appendix \ref{subappendix:CatBoost} for a proof.


\begin{remark}
Recall that the estimations  of Shapley values generated by the algorithm above coincide 
with the Shapley values of the empirical marginal game defined based on the training set, i.e. $\widehat{\varphi_i\big[v^\ME\big]}({\bf x};{\bf D})=\varphi_i\big[\hat{v}^\ME(\cdot; {\bf D}, f)({\bf x})\big]$ in the above setting. 
Thus, due to \eqref{empirical},  the estimator is an average of i.i.d. random variables, 
making Theorem \ref{error analysis} essentially a consequence of the Strong Law of Large Numbers \cite{koralov2007}.
\end{remark}

\begin{remark}
Theorem \ref{error analysis} establishes strong convergence of the estimator, which implies convergence in probability \cite{koralov2007}. In particular, \eqref{inequality} and the Chebyshev inequality \cite{koralov2007} imply for any threshold $t>0$ that
\begin{equation}\label{cheb}
\Bbb{P} \Big( |\widehat{\varphi_i\big[v^\ME\big]}({\bf x};{\bf D})-\varphi_i[v^\ME]({\bf x})| \geq t \Big) \leq \frac{
{\rm{Var}}(\widehat{\varphi_i\big[v^\ME\big]}({\bf x},{\bf D}))}{t^2} \leq \frac{1}{t^2} \cdot \frac{c}{k},
\end{equation}
where $c>0$ is the multiplier of $1/\sqrt{|{\bf D}|}$ on the right-hand side of \eqref{inequality}. Thus, given any  $\epsilon>0$, one can always pick the number $k_{\epsilon}>0$ using the bound \eqref{cheb} such that the estimator based on ${\bf D}$ falls in the interval 
$(\varphi_i({\bf x})-\epsilon,\varphi_i({\bf x})+\epsilon)$  
with probability at least $1-\epsilon$ whenever $|{\bf D}|\geq k_{\epsilon}$.
\end{remark}

\begin{remark}
To get from the Lipschitz constant \eqref{Lipschitz constant} to the cruder one $4\,|\mathcal{T}|\cdot \sqrt[4]{\frac{3\mathcal{L}}{\log_2(\mathcal{L})}}$, the term $\sqrt[4]{{\rm{Gini}}(\mathbf{X},T)}$ was replaced with $1$. But the reader should keep in mind that, in principle, ${\rm{Gini}}(\mathbf{X},T)$ can be as small as $\frac{1}{\mathcal{L}}$. 
\end{remark}

\begin{remark}
The inequality \eqref{inequality} holds for ${\rm{P}}_{\mathbf{X}}$-a.e. $\mathbf{x}\in\Bbb{R}^n$. 
It thus yields an upper bound for the 
\textit{mean integrated squared error} (MISE) of the estimator $\widehat{\varphi_i\big[v^\ME\big]}$ as well.  
\end{remark}


\section{Numerical experiments}\label{sec:experiments}
This last part is devoted to our experiments with data. 
The claimed complexity of the algorithm (see Table \ref{Tab: complexity}) is verified in 
\Sec \ref{subsec:experiments3} through working with a synthetic dataset; and its performance is moreover benchmarked with TreeSHAP (cf. Figure \ref{fig::basic_info} and Figure \ref{fig::complexity_comparison}).
Next, in \Sec \ref{subsec:experiments1}, we train CatBoost, XGBoost and LightGBM models on four public datasets. We compute and compare various quantities pertinent  to the structure of trees 
(cf. Table \ref{Tab: topology}) to showcase some of the points previously made. 
Next, in \Sec \ref{subsec:experiments2}, we apply Algorithm \ref{algorithm} to the four CatBoost models trained in the preceding  section to compute their marginal Shapley values, and we record the time 
(cf. Table \ref{Tab: time}).     

We refer the reader to Appendix \ref{appendix:code and data} for more on the data and codes used for this section.

\subsection{Complexity  of Algorithm \ref{algorithm}}\label{subsec:experiments3}
\begin{figure}
  \centering
  \begin{subfigure}[t]{0.32\textwidth}
    \centering
    \includegraphics[width=\textwidth]{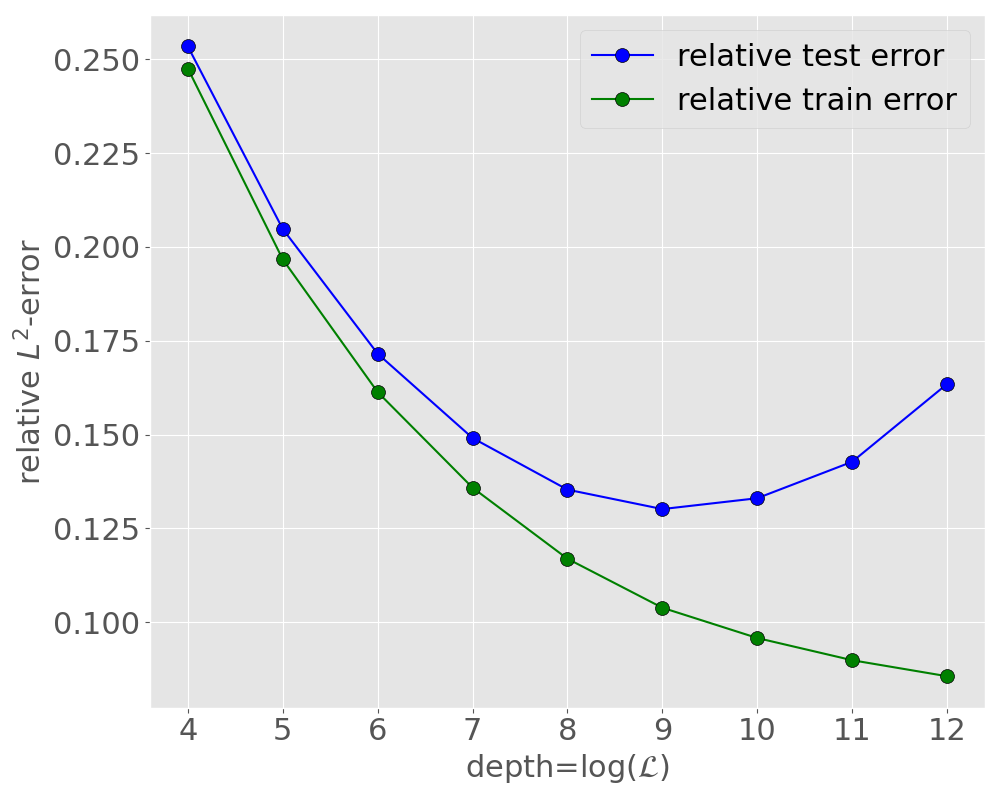} \caption{Training and test errors}\label{fig::ml_errors}
  \end{subfigure}
  \begin{subfigure}[t]{0.32\textwidth}
    \centering
    \includegraphics[width=\textwidth]{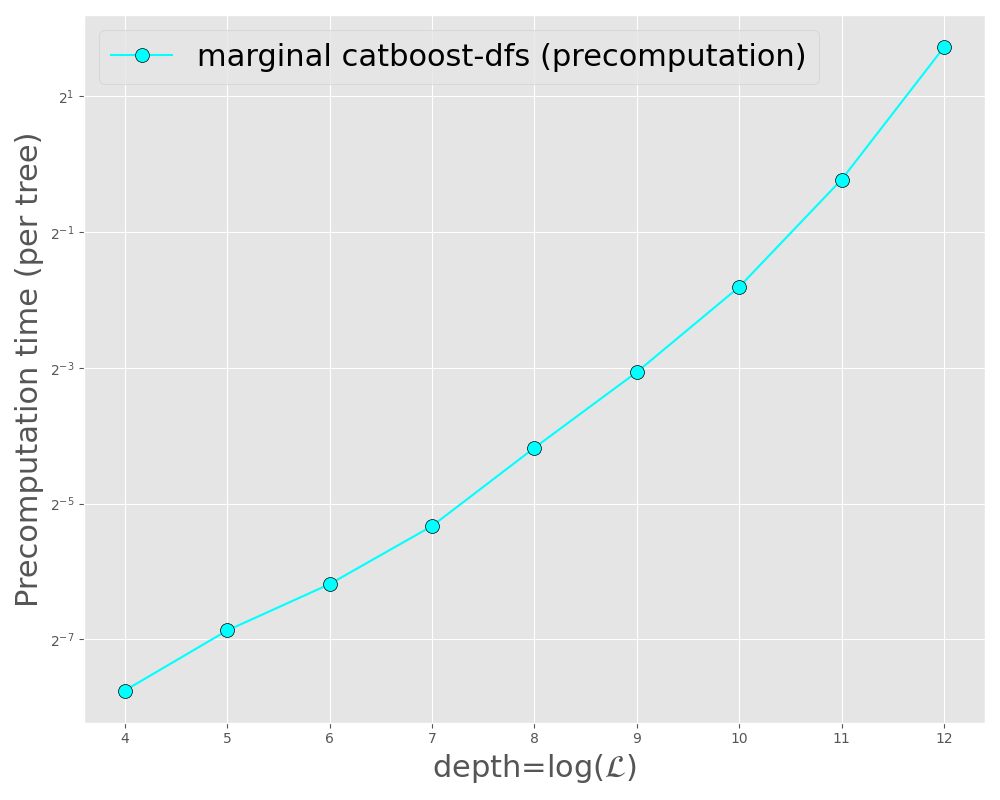}\caption{Precomputation time per tree}\label{fig::precomp_time} 
  \end{subfigure}
  \begin{subfigure}[t]{0.32\textwidth}
    \centering
    \includegraphics[width=\textwidth]{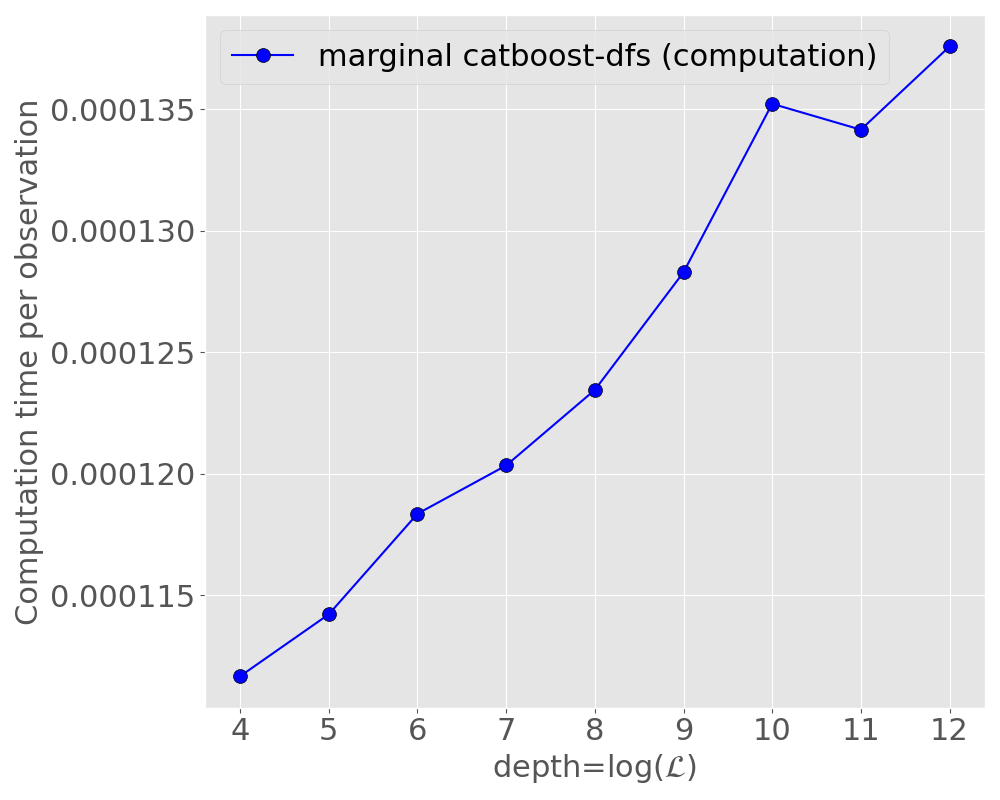} \caption{Computation time per observation} \label{fig::comp_time} 
  \end{subfigure}
  \caption{The execution times for the two steps of Algorithm \ref{algorithm} are depicted for CatBoost models of various depths which were trained on synthetic data \eqref{synth_data_model} for our experiment in \Sec \ref{subsec:experiments3}. 
  The plot on the left illustrates the training and test errors for these models. The one in the middle shows the average time it took to precompute the Shapley values for a tree from the ensemble in the logarithmic scale. 
  Finally, the last plot captures the on-the-fly computation time for obtaining the Shapley values of 1,000 random data point based on the precomputed tables.}\label{fig::basic_info} 
\end{figure}
\begin{figure}
  \centering
  \begin{subfigure}[t]{0.45\textwidth}
    \centering
    \includegraphics[width=\textwidth]{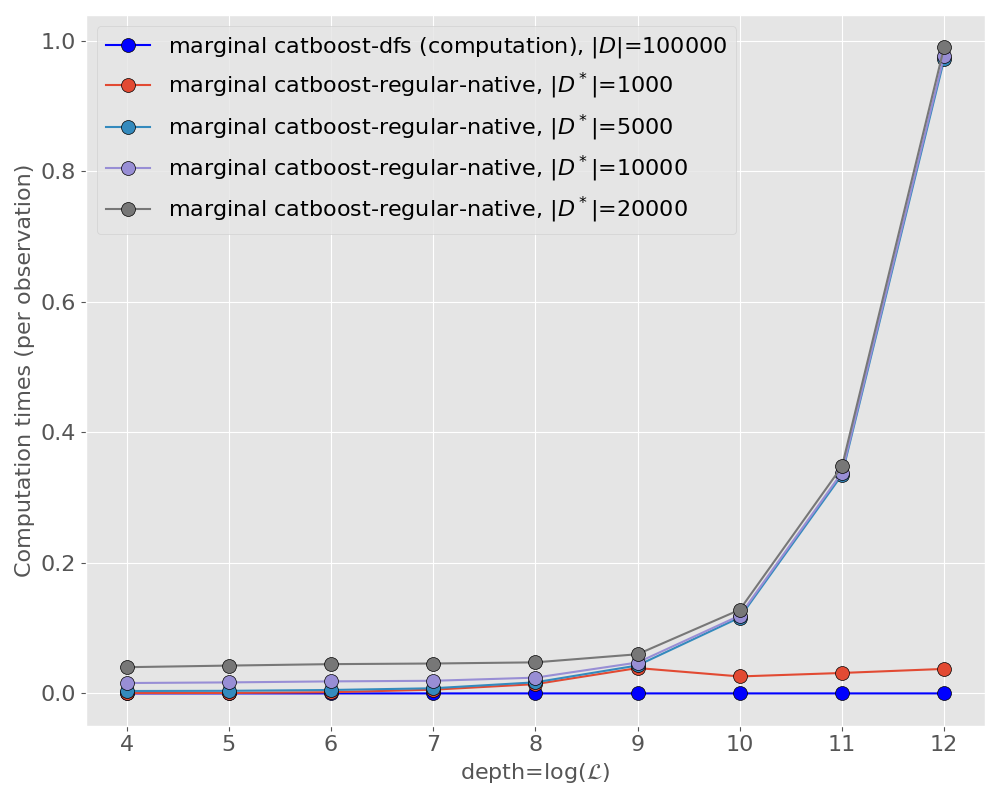} \caption{Computation time comparison}\label{fig::comp_time_all}
  \end{subfigure}
  \begin{subfigure}[t]{0.45\textwidth}
    \centering
    \includegraphics[width=\textwidth]{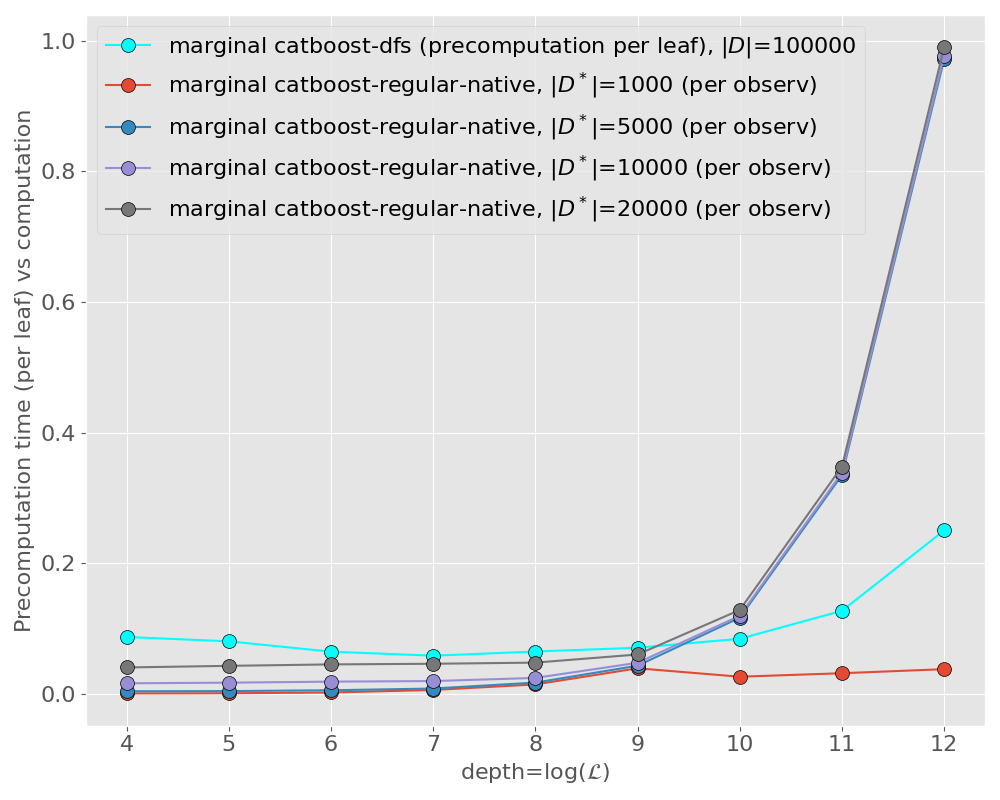}\caption{Precomputation time per leaf vs. computation time}\label{fig::precomp_time_vs_computation} 
  \end{subfigure}
  \caption{The execution times are plotted for  the interventional TreeSHAP---using the CatBoost's built-in method \texttt{get\_feature\_importance(shap\_calc\_type="Regular")} and background datasets $D_*$ of various sizes---and the two steps of our proposed algorithm \ref{algorithm}---which requires no background dataset and its accuracy is dictated by the training set $D$---once applied to CatBoost models trained for \Sec \ref{subsec:experiments3}. The plots confirm the complexity analysis outlined in Table 
  \ref{Tab: complexity}.}\label{fig::complexity_comparison} 
\end{figure}
In this section, we use a synthetic dataset to investigate the complexity of Algorithm \ref{algorithm}, both precomputation and computation steps, in terms of the maximum number of leaves in a CatBoost ensemble. \\
\indent 
Suppose  the predictors $\mathbf{X}=(X_1,X_2,\dots, X_n)$ obey the normal distribution     
$\mathcal{N}(0,I_n)$ where $I_n$  is the $n\times n$ identity matrix. The model for the response variable is assumed to be:
\begin{equation}\label{synth_data_model}
Y = \sum_{i=1}^{n}a_i \cdot X_i + \sum_{1\leq i<j\leq n} b_{ij}X_i \cdot X_j + \epsilon
\end{equation}
where $a_i$  and $b_{ij}$  are coefficients in intervals $(1,5)$ and $(-0.5,0.5)$ chosen randomly (once); and $\epsilon \sim \mathcal{N}(0,\delta)$, with $\delta=0.05$, is a noise term.\\
\indent
For our numerical experiment, we generated $100,000$ samples drawn from the data generating model \eqref{synth_data_model} with $n=40$. We then trained a collection of CatBoost regressors $\hat{f}_1,\hat{f}_2,\dots,\hat{f}_9$   where the \texttt{max\_depth} parameter is set to be $4,5,\dots,12$ respectively (which in turn yields ensembles with the maximum number of leaves, $\mathcal{L}$, being $2^4,2^5,\dots,2^{12}$ respectively). Other tuned training hyperparameters are: \texttt{n\_estimators}$=300$, \texttt{subsample}$=0.8$, \texttt{learning\_rate}$=0.1$. A subset of $50,000$ samples drawn from \eqref{synth_data_model} was used as the test set.
The relative $L^2$-errors for training and test sets are depicted on Figure \ref{fig::ml_errors}.\\
\indent
We then carried out Step 1 of Algorithm \ref{algorithm} on an AWS machine with 32 cores and  256GB of memory. This step precomputes marginal Shapley values at every realizable leaf of each tree, relying only on the internal parameters of trained models, and without taking in any background dataset. For these nine models, the average precomputation time, per tree in the ensemble, is plotted in Figure \ref{fig::precomp_time} in the logarithmic scale. Recall from \Sec \ref{subsec:CatBoost} that the theoretical complexity of the precomputation algorithm should be $O\left(\mathcal{L}^{\log_2 6}\cdot\log(\mathcal{L})\right)$; compare with Table \ref{Tab: complexity}. The true complexity observed on Figure \ref{fig::comparison_native}, however, can be expressed in the form $c \cdot \mathcal{L}^p \cdot \log(\mathcal{L})$ where $c$ depends on the depth of the tree and $p$ varies from $1.6$ for small depths to $1.8$ for depth equal to $12$. As the depth increases, we expect that the complexity curve would tend to a straight line with the slope $\log_2 6 \approx 2.5849$.\\
\indent
We remind the reader that the precomputation occurs only once. Its results, the (local) game values associated with every realizable leaf of each  tree in the ensemble, are saved in look-up tables. 
These are then used in Step 2 (on-the-fly computation step) of Algorithm \ref{algorithm} to output the vector of Shapley values for any input data point. 
We performed this step for $1,000$  randomly picked samples of the training set for each of the models.  The average computation time per observation for these nine models is depicted on Figure \ref{fig::comp_time}. As expected from Table \ref{Tab: complexity}, the complexity is linear in terms of  $\log(\mathcal{L})$.\\
\indent 
We next compare  Algorithm \ref{algorithm} with the interventional TreeSHAP algorithm which requires a background dataset as an input.  Background datasets $D_*$ of sizes $1,000$, $5,000$, $10,000$, and $20,000$ were used for this part. Unlike \cite{lundberg2020local}, we avoid using small background datasets with a few hundred samples since, for them, the  relative statistical error can be quite high (for details, see \cite{2023arXiv230310216K}).
To estimate marginal (interventional) Shapley values for them, the \texttt{get\_feature\_importance} method of CatBoost, with the default setting \texttt{shap\_calc\_type="Regular"}, was used. 
According to the documentation, this implements the interventional TreeSHAP algorithm \cite{lundberg2020local}.\footnote{Check \url{https://catboost.ai/en/docs/concepts/python-reference_catboost_get_feature_importance}.}
Figure \ref{fig::comp_time_all} illustrates the average computation time of our inherently interpretable algorithm (Step 2 of Algorithm \ref{algorithm}) compared to the interventional TreeSHAP with the aforementioned background datasets.
As expected from Table \ref{Tab: complexity}, the latter are exponential in terms of $\log(\mathcal{L})$ whereas the former is linear, and thus much faster. \\
\indent
As for the precomputation time of our algorithm (Step 1 of Algorithm \ref{algorithm}), comparison with the execution times of interventional TreeSHAP is, in general, difficult. However, renormalizing the precomputation time  on the average number of leaves of trees in the ensemble is, in principle, comparable to the computation times per observation of the on-the-fly algorithms. 
For ensembles $\hat{f}_1,\hat{f}_2,\dots,\hat{f}_9$, Figure \ref{fig::precomp_time_vs_computation} compares the precomputation times per leaf (that is, the precomputation time divided by the average number of leaves in any given tree) with the computation times of algorithms that require background datasets. We note that the accuracy of Algorithm \ref{algorithm} corresponds to the empirical marginal Shapley value based on the training set which has $100,000$ samples, much larger than the background datasets. Meanwhile, our precomputation  per leaf is only a few times slower than the on-the-fly computation by the interventional TreeSHAP.\\
\indent 
Finally, after the first draft of this paper appeared on arXiv, we discovered that the CatBoost method \texttt{get\_feature\_importance} allows for the argument \texttt{shap\_calc\_type} to be set to \texttt{"Exact"} and \texttt{shap\_mode} to be set to \texttt{"UsePreCalc"}. It appears that 
these options allow for the precomputation at every leaf of each tree, and subsequently allow for the evaluation of explanations at every given observation based on the precomputed information.
No detailed documentation on what these options are, or error analysis (such as Theorem \ref{error analysis}) is available at the time of writing. Furthermore, the two steps cannot be run independently, which  unfortunately limits the use of the \texttt{"UsePreCalc"} option.
Nevertheless, to make our treatment comprehensive, we compared the precomputation time of our algorithm with the native CatBoost one (where we simply provided a single explicand). 
Figure \ref{fig::comparison_native} illustrates the time comparison in terms of the number of leaves, both with and without the logarithmic scale. Notice that ours (Step 1 of Algorithm \ref{algorithm}) is faster; and its complexity is linear in the logarithmic scale in accord with our theoretical analysis (cf. Table \ref{Tab: complexity}). 
The plot hence shows that the two methods are different.
At last, it must be mentioned that our proposed algorithm can be generalized to any other game value characterized by Theorem \ref{classification}, or even to certain coalitional game values such as the Owen value; see the results of Appendix \ref{appendix:generalization}.
To the best of our knowledge, no competitor algorithm presents a similar option.
Indeed, we repeated our experiments for generating the marginal Shapley values for CatBoost models trained on synthetic data \eqref{synth_data_model}, this time for the marginal Owen values, based on a proprietary code that computes these values in a two-stage procedure similar to Algorithm \eqref{algorithm}. Figure \ref{fig::basic_info_Owen} illustrates computation and precomputation times.  
\begin{figure}
  \centering
  \begin{subfigure}[t]{0.45\textwidth}
    \centering
    \includegraphics[width=\textwidth]{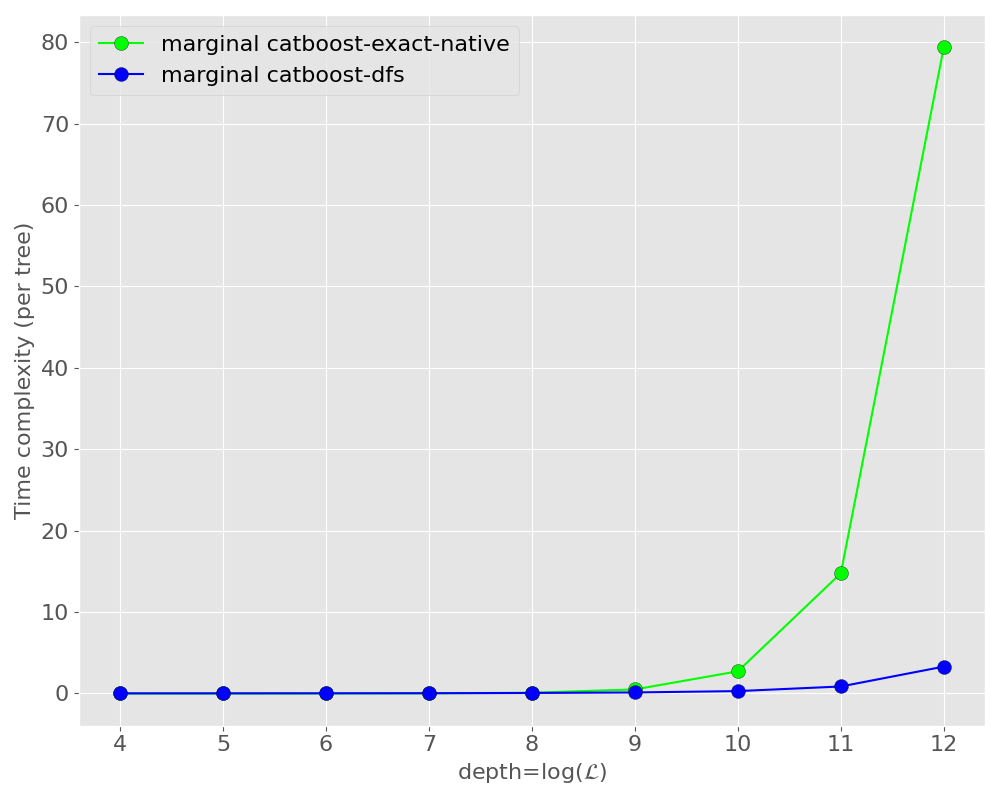} \caption{Average time required for a tree: precomputation vs.  a native method of CatBoost}
  \end{subfigure}  
  \begin{subfigure}[t]{0.45\textwidth}
    \centering
    \includegraphics[width=\textwidth]{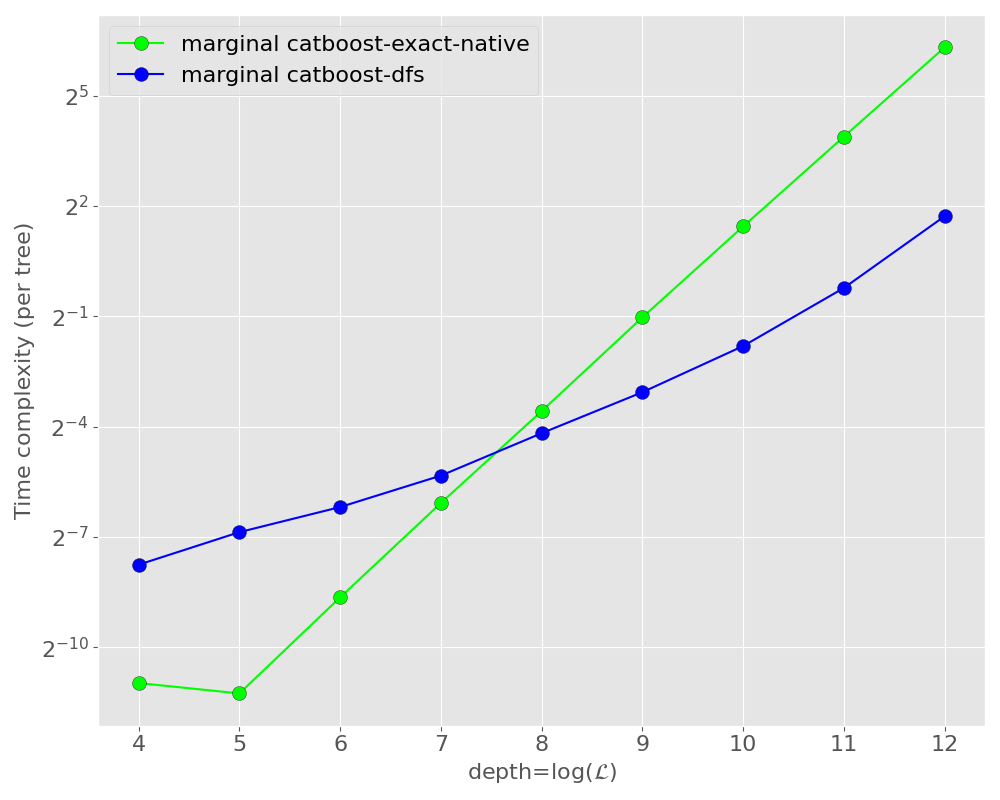} \caption{Average time required for a tree: precomputation vs.  a native method of CatBoost (logarithmic scale)}
  \end{subfigure}  
  \caption{The figure benchmarks the precomputation time per tree for our algorithm (Step 1 of Algorithm \ref{algorithm}) with the per tree execution time for the CatBoost's built-in method \texttt{get\_feature\_importance(shap\_calc\_type="Exact",shap\_mode="UsePreCalc")} in both standard and logarithmic scales.}\label{fig::comparison_native} 
\end{figure}

\begin{figure}
  \centering
  \begin{subfigure}[t]{0.45\textwidth}
    \centering
    \includegraphics[width=\textwidth]{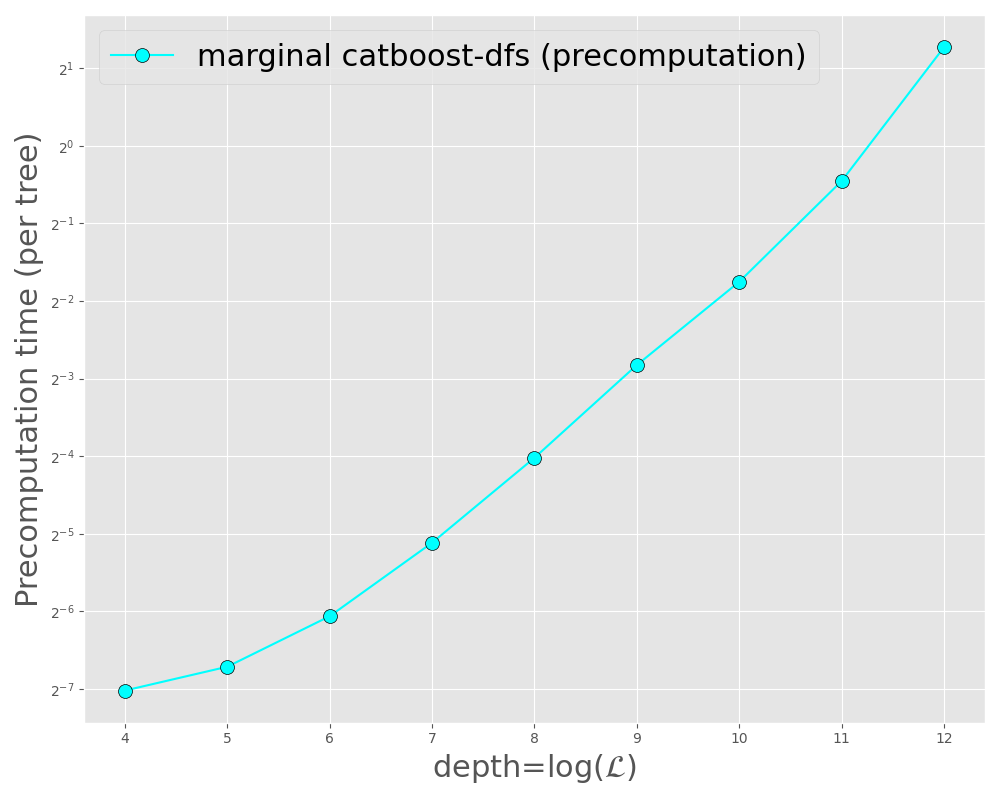}\caption{Precomputation time per tree for Owen values}\label{fig::precomp_time_Owen} 
  \end{subfigure}
  \begin{subfigure}[t]{0.45\textwidth}
    \centering
    \includegraphics[width=\textwidth]{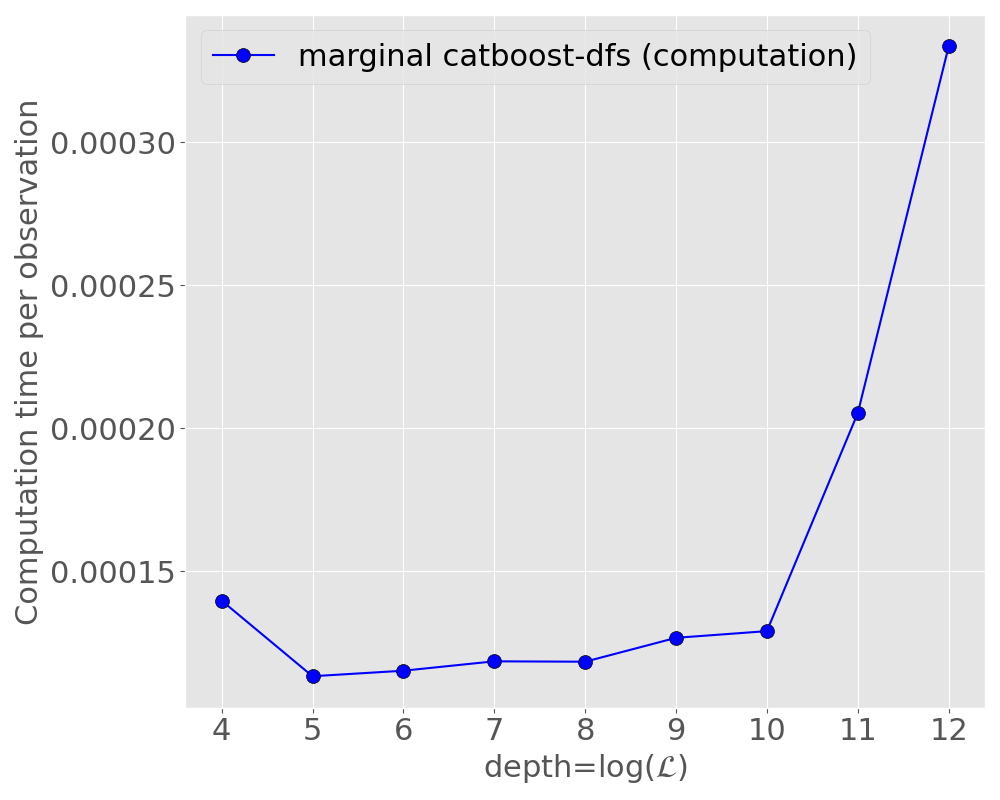} \caption{Computation time per observation for Owen values} \label{fig::comp_time_owen} 
  \end{subfigure}
  \caption{An analog of Figure \ref{fig::basic_info} for a generalization of Algorithm \ref{algorithm} to the case of marginal Owen values based on formula \eqref{symmetric formula tree Owen}.
   The plots show Owen values precomputation and computation times for CatBoost ensembles with \texttt{max\_depth} varying from  $4$ to $12$ trained on synthetic data \eqref{synth_data_model} with a fixed underlying partition of the $n=40$ predictors into $8$ groups of size $5$.
   The first plot is in the logarithmic scale.}\label{fig::basic_info_Owen} 
\end{figure}

\subsection{Experiments with public datasets}\label{subsec:experiments1}
For this section, we trained models on public datasets through libraries that construct trees very differently. 
The datasets used are the Superconductivity dataset \cite{misc_superconductivty_data_464}, 
the Ailerons dataset \cite{misc_ailerons_data}, the Online News Popularity dataset \cite{misc_online_news_popularity_332}, 
and the Higgs dataset \cite{misc_higgs_280}. Table \ref{Tab: Datasets} outlines the task, the number of variables and the sizes for training, validation and test sets for each of them. The features are non-categorical in all cases. See Appendix \ref{appendix:code and data} for descriptions of these datasets. \\
\indent
For each dataset, we trained a CatBoost, a LightGBM and an XGBoost model with the libraries' default growth policies. 
Their performances over the test set, recorded in Table \ref{Tab: metrics}, indicate that 
 each model enjoys a reasonable predictive power;
and their outputs are much closer to each other than to the ground truth.
But the models' internal structures differ considerably in the sense that their constituent trees are topologically very different. This is observed in Table \ref{Tab: topology} where we disentangle each ensemble to obtain quantities such as the average number of leaves, the average number of distinct features per tree etc.  
Based on the table, and following the same notation, we observe that:
\begin{enumerate}
\setlength\itemsep{0.3em}
\item Comparing the number of leaves $\ell(T)$ with its maximum possible value $2^{m(T)}$ indicates that, overall, 
the trees in LightGBM ensembles are sparse while those in CatBoost ensembles are prefect binary (as expected);
XGBoost ensembles lie some place in between.

\item On average,  the number of distinct features $k(T)$ is smaller than the maximum possible number of features appearing in the tree $T$, i.e. the depth $m(T)$ in the case of CatBoost models and the number of internal nodes $\ell(T)-1$ for  
trees from XGBoost and LightGBM ensembles.\footnote{In training, the number of features used for constructing a tree was not limited by any hyperparameter (i.e. a hyperparameter such as  \texttt{colsample\_bytree} in XGBoost and LightGBM was not used).}
This  indicates that  trees occasionally split more than once with respect to some feature (or some feature appears on more than one level in the case of CatBoost).

\item Following the discussion in \Sec \ref{subsec:observations}, for game values with null-player and carrier-dependence properties such as Shapley and Banzhaf, only the trees that split on a given feature matter for computing its marginal attribution. Furthermore, the contribution of each of those trees to the feature attribution under consideration depends only on the distinct features appearing in that tree. Now, revisiting Table  
\ref{Tab: topology}, we first observe that $\overline{\left|\mathcal{T}^{(i)}\right|}$ is often much smaller than $|\mathcal{T}|$; namely a feature appears only in a portion of trees from the ensemble. Secondly, the total number of variables, $n$, is much larger than the average number of distinct features per tree $\overline{k(T)}$.  
All in all, we observe that $\overline{\left|\mathcal{T}^{(i)}\right|}\cdot 2^{\overline{k(T)}}\ll 2^n$, 
where $2^n$ is the complexity of naively computing a game value such as Shapley for $n$ players.  
This reinforces one of this paper's key points: 
For tree ensembles, the complexity of computing marginal feature attributions such as Shapley, Banzhaf or Owen  can be lowered through taking individual trees and their structures into account; see \eqref{fewer features}.

\item  Recall that Theorem \ref{main theorem} puts forward a method for estimating the vector of marginal Shapley values of a CatBoost model at an explicand. These are generated based on the whole training set and with complexity $O\left(|\mathcal{T}|\cdot \mathcal{L}^{\log_2 3}\cdot\log(\mathcal{L})\right)$ 
where $\mathcal{L}$ is the maximum possible number of leaves (compare with Table 
\ref{Tab: complexity}). 
To get estimations with a comparable accuracy from the interventional TreeSHAP, take the 
training set as the background dataset. According to \eqref{interventional complexity}, 
the complexity would then become $O(|\mathcal{T}|\cdot \mathcal{L} \cdot |D|)$. 
This is a much higher complexity:  Table \ref{Tab: topology} indicates that,
for CatBoost models we trained on datasets from Table \ref{Tab: Datasets}, 
the ratio $\frac{|D|}{\mathcal{L}^{\log_2 3-1}\cdot\log(\mathcal{L})}$ is always larger than $40$.
\end{enumerate}

\begin{table}[ht]
\begin{tabular}{|c|c|c|c|c|c|}
\hline
 Dataset & Task & Features &  Train & Validation & Test\\
\Xhline{2pt}
 \href{https://archive.ics.uci.edu/ml/datasets/superconductivty+data}{Superconductivity} \cite{misc_superconductivty_data_464}& Regression &
81 & 12,757 & 4,253 & 4,253\\
\hline 
 \href{https://www.openml.org/search?type=data&status=active&id=296}{Ailerons} 
\cite{misc_ailerons_data} & Regression &
40 & 5,723 & 1,431 &  6,596\\
\hline 
 \href{https://archive.ics.uci.edu/ml/datasets/online+news+popularity}{Online News} 
\cite{misc_online_news_popularity_332} & Classification &
58 & 23,786 & 7,929 &  7,929\\
\hline 
 \href{https://archive.ics.uci.edu/ml/datasets/HIGGS}{Higgs} 
\cite{misc_higgs_280} & Classification &
28 & 10,000,000 & 500,000 &  500,000\\
\hline
\end{tabular}
\caption{Datasets used for experiments in \Sec \ref{sec:experiments}.}
\label{Tab: Datasets}
\end{table}

\begin{table}[ht]

\begin{tabular}{|c|ccc|caccc|c|c|}
\cline{2-9}
\multicolumn{1}{c|}{}&\multicolumn{4}{ca}{Superconductivity} & \multicolumn{4}{c|}{Ailerons} 
\\ \clineB{2-9}{5}
\multicolumn{1}{c|}{}&$\mathbf{y}_{\rm{cat}}$ & $\mathbf{y}_{\rm{lgbm}}$ & $\mathbf{y}_{\rm{xgb}}$ & $\mathbf{y}_{\rm{test}}$ &
$\mathbf{y}_{\rm{cat}}$ & $\mathbf{y}_{\rm{lgbm}}$ & $\mathbf{y}_{\rm{xgb}}$ & $\mathbf{y}_{\rm{test}}$
\\ \hline
$\mathbf{y}_{\rm{cat}}$ & \cellcolor{teal} 1 &  \cellcolor{teal} 0.9905 & \cellcolor{teal} 0.9788 & \cellcolor{teal} 0.9206 & \cellcolor{teal} 1 & \cellcolor{teal} 0.9678 &  \cellcolor{teal} 0.9347 & \cellcolor{teal} 0.8297\\ 
$\mathbf{y}_{\rm{lgbm}}$ & \cellcolor{teal} 0.9904 &  \cellcolor{teal} 1 &  \cellcolor{teal} 0.9814 & \cellcolor{teal} 0.9212 & \cellcolor{teal} 0.9694 &  \cellcolor{teal} 1 &  \cellcolor{teal} 0.9499 & \cellcolor{teal} 0.8336\\
$\mathbf{y}_{\rm{xgb}}$ & \cellcolor{teal} 0.9780 &  \cellcolor{teal} 0.9809 & \cellcolor{teal} 1 & \cellcolor{teal} 0.9166 & \cellcolor{teal} 0.9409 &  \cellcolor{teal} 0.9524 & \cellcolor{teal} 1 & \cellcolor{teal} 0.8003 \\
\hline
\end{tabular}

\vspace*{0.5 cm}

\begin{tabular}{|c|ccc|caccc|c|c|}
\cline{2-9}
\multicolumn{1}{c|}{}&\multicolumn{4}{ca}{Online News} & \multicolumn{4}{c|}{Higgs} 
\\ \clineB{2-9}{5}
\multicolumn{1}{c|}{}&$\mathbf{y}_{\rm{cat}}$ & $\mathbf{y}_{\rm{lgbm}}$ & $\mathbf{y}_{\rm{xgb}}$ & $\mathbf{y}_{\rm{test}}$ &
$\mathbf{y}_{\rm{cat}}$ & $\mathbf{y}_{\rm{lgbm}}$ & $\mathbf{y}_{\rm{xgb}}$ & $\mathbf{y}_{\rm{test}}$
\\ \hline
$\mathbf{y}_{\rm{cat}}$ & \cellcolor{teal} 1 &  \cellcolor{teal} 0.8695  &  \cellcolor{teal} 0.8324  &  \cellcolor{yellow} 0.7324 & \cellcolor{teal} 1 & \cellcolor{teal} 0.9420  & \cellcolor{teal} 0.9460   & \cellcolor{yellow} 0.8465 \\ 
$\mathbf{y}_{\rm{lgbm}}$ & \cellcolor{teal} 0.8786  &  \cellcolor{teal} 1 &  \cellcolor{teal} 0.8668  &  \cellcolor{yellow} 0.7328 & \cellcolor{teal} 0.9461  &  \cellcolor{teal} 1 &  \cellcolor{teal} 0.9533  &  \cellcolor{yellow} 0.8402 \\
$\mathbf{y}_{\rm{xgb}}$ & \cellcolor{teal} 0.8259  & \cellcolor{teal} 0.8512   & \cellcolor{teal} 1 & \cellcolor{yellow} 0.7238  & \cellcolor{teal} 0.9473  &  \cellcolor{teal} 0.9511  & \cellcolor{teal} 1 &  \cellcolor{yellow} 0.8409 \\
\hline
\end{tabular}

\caption{
The performance over the test set for the CatBoost, LightGBM and XGBoost models trained for regression tasks (top) and 
for classification tasks (bottom) are presented here. (See Table \ref{Tab: Datasets} for the underlying datasets.)  
Here, $\mathbf{y}_{\rm{test}}$ denotes the correct target values/labels and 
$\mathbf{y}_{\rm{cat}}$, $\mathbf{y}_{\rm{lgbm}}$, $\mathbf{y}_{\rm{xgb}}$ 
are the predicted vectors over the test set (predicted probabilities in the case of classification). 
Each cell captures a metric between the vectors associated with the corresponding row and column. The cells in teal denote $R^2$ scores while those in yellow denote AUC scores (only relevant for the classification tasks---the table on the bottom). In both tables, the predicted vectors $\mathbf{y}_{\rm{cat}}$, $\mathbf{y}_{\rm{lgbm}}$, $\mathbf{y}_{\rm{xgb}}$ 
are compared based on the $R^2$ score.
}
\label{Tab: metrics}

\end{table}

\begin{table}[ht]
\renewcommand{\arraystretch}{1.5}
\begin{tabular}{|c|c|c|c|c|c|c|c|c|c|}
\hline
Dataset & model type & $|\mathcal{T}|$ & $\overline{\left|\mathcal{T}^{(i)}\right|}$ & $\overline{\ell(T)}$ 
& $\overline{m(T)}$ 
& $\overline{k(T)}$  & $n$
\\ \Xhline{2pt}
\multirow{3}{*}{Superconductivity} &
CatBoost & 300 & 27.70 & 256$^*$  & 8$^*$ &   7.48  & \multirow{3}{*}{81}\\
\cline{2-7}
&LightGBM & 300 & 85.48 & 31$^*$  & 11.48  & 23.08  &  \\
\cline{2-7}
&XGBoost & 300 & 92.69 & 42.21  & 6$^*$&  25.03  & \\ \Xhline{1pt}
\multirow{3}{*}{Ailerons} &
CatBoost & 50 & 7.50 & 126.72 & 6.98   & 6.00  & \multirow{3}{*}{40}\\
\cline{2-7}
&LightGBM & 50 & 14.00 & 25$^*$ & 8.42   & 11.20  &  \\
\cline{2-7}
&XGBoost & 40 & 1.98 & 3.78 & 1.85   & 1.98   & \\ \Xhline{1pt}
\multirow{3}{*}{Online News} &
CatBoost &  167 & 19.86  &  128$^*$  &  7$^*$ &  6.90  & \multirow{3}{*}{58}\\
\cline{2-7}
&LightGBM & 88  & 32.78  &  31$^*$  & 10.41   &  21.60  &  \\
\cline{2-7}
&XGBoost & 88 & 26.98  &  27.09 & 5$^*$  &  17.78 & \\ \Xhline{1pt}
\multirow{3}{*}{Higgs} &
CatBoost &  1000 &  251.32 & 256$^*$ &  8$^*$ & 7.04 & \multirow{3}{*}{28}\\
\cline{2-7}
&LightGBM & 1000 & 636.68 & 60$^*$ & 13.18 & 17.83 &  \\
\cline{2-7}
&XGBoost & 1000 & 524.93 & 31.89 & 5$^*$ & 14.70 & \\ \hline

\end{tabular}
\captionsetup{singlelinecheck=off}
\caption[foo]{Various quantities associated with the ensembles trained on datasets from Table \ref{Tab: Datasets}. 
\begin{itemize}
\setlength\itemsep{0.5em}
\item $|\mathcal{T}|$: number of trees in the ensemble,\hspace{0.6cm} $\bullet$ $n$: the total number of features,
\item $\overline{\left|\mathcal{T}^{(i)}\right|}$: the average number of trees in the ensemble that split on a feature,
\item $\overline{m(T)}$: the average depth of trees,\hspace{1cm} $\bullet$ $\overline{\ell(T)}$: the average number of leaves,
\item $\overline{k(T)}$: the average number of distinct features per tree.
\end{itemize}
Asterisk indicates that the quantity whose average is under consideration does not vary across the ensemble.}
\label{Tab: topology}
\end{table}



\subsection{Applying Algorithm \ref{algorithm} to models from \Sec \ref{subsec:experiments1}}\label{subsec:experiments2}
For this section, an  implementation of Algorithm \ref{algorithm} was applied to the four CatBoost models from \Sec \ref{subsec:experiments1}. 
To this end, a proprietary, internal library based on Algorithm \ref{algorithm} (and its generalizations; cf. Appendix \ref{appendix:generalization}) was used. 
This library generates accurate estimations of marginal Shapley (as well as Banzhaf, Owen etc.)
feature attributions for ensembles of oblivious trees without reliance on any background dataset.\\
\indent
The execution time for the precomputation step, as well as  the time required for generating Shapley values for 100 random test samples,
are recorded in Table \ref{Tab: time}. Observe that the algorithm is very fast. 
Similar execution times for the Owen value are available from Table \ref{Tab: time Owen}.\\
\indent
The look-up tables constructed for these CatBoost models are available on \url{https://github.com/FilomKhash/Tree-based-paper}
along with a code which provides a sanity check via verifying the efficiency property of Shapley 
and Owen values. We refer the reader to Appendix \ref{appendix:code and data} for more on  our experiments and the relevant material.

\begin{table}[ht]
\small
\begin{tabular}{|c|c|c|c|c|c|c|}
\hline
 \shortstack{Dataset\\\phantom{a}} &  \shortstack{Features\\\phantom{a}} &  \shortstack{Trees\\\phantom{a}} & 
 \shortstack{\\Precomputation\\ (no multithreading)}  &  \shortstack{\\Precomputation\\ (with multithreading)}  
 & \shortstack{Total Size\\\phantom{a}} & \shortstack{Computation\\(100 samples)} \\
\Xhline{2pt}
 Superconductivity & 81 & 300 & 20.118s & 6.197s & 10.6MB & 0.030s \\
\hline 
Ailerons  & 40 &  50 & 0.600s &  0.367s & 808KB & 0.005s \\
\hline 
Online News  & 58 &  167 & 3.518s & 1.888s & 3.19MB & 0.017s\\
\hline 
Higgs & 28 & 1000 & 51.140s & 19.152s & 32.3MB & 0.068s \\
\hline
\end{tabular}
\normalsize
\caption{The execution times of an  optimized implementation of Algorithm \ref{algorithm} once applied to explain CatBoost models from \Sec \ref{subsec:experiments1} on an AWS machine with 16 cores and 128GB of memory. The algorithm first precomputes a look-up table for each decision tree. In each case, the total time required to construct all of the tables, both with or without multithreading, is recorded  along with the total size of these tables. The last column captures the time it took to compute marginal Shapley values for 100 randomly chosen test instances based on these look-up tables.}
\label{Tab: time}
\end{table}

{
\begin{table}[ht]
\footnotesize
\begin{tabular}{|c|c|c|c|c|c|c|c|}
\hline
 \shortstack{Dataset\\\phantom{a}} &  \shortstack{Features\\\phantom{a}} &  \shortstack{Trees\\\phantom{a}} & 
 \shortstack{Partition\\Size} &
 \shortstack{\\Precomputation\\ (no multithreading)}  &  \shortstack{\\Precomputation\\ (with multithreading)} 
 & \shortstack{Total \\Size} & \shortstack{Computation\\(100 samples)} \\
\Xhline{2pt}
 Superconductivity & 81 & 300 & 11 & 32.022s & 10.138s & 10.6MB  & 0.034s \\
\hline 
Ailerons  & 40 &  50 & 24 & 2.059s &  1.025s & 808KB  & 0.006s \\
\hline 
Online News  & 58 &  167 & 37 & 12.827s & 3.715s & 3.19MB  & 0.015s\\
\hline 
Higgs & 28 & 1000 & 27 & 252.005s & 61.908s & 32.2MB  & 0.068s \\
\hline
\end{tabular}
\normalsize
\caption{The analog of Table \ref{Tab: time} for computing marginal Owen values of CatBoost ensembles from Section \ref{subsec:experiments1} with a proprietary code based on formula \eqref{symmetric formula tree Owen}. The code was run on an AWS machine with 16 cores and 128GB of memory. For each dataset, the features were partitioned through a hierarchical clustering process based on a sophisticated measure of dependence developed in \cite{Reshef2015MeasuringDP}.}
\label{Tab: time Owen}
\end{table}
}

\appendix
\section{Availability of data and code}\label{appendix:code and data}
Here, we elaborate on the experiments that we carried out for this paper, and on the supplementary material which is available from \url{https://github.com/FilomKhash/Tree-based-paper}. 

The computations in Example \ref{main example} concerning path-dependent and interventional TreeSHAP for decision trees from Figure \ref{fig:failure1}  are confirmed in the notebook \texttt{TreeSHAP\_Sanity\_Check.ipynb}.  

In Sections \ref{subsec:experiments1},\ref{subsec:experiments2}, we deal with models trained on the following public datasets; also see Table \ref{Tab: Datasets}. 
\begin{enumerate}
\item \href{https://archive.ics.uci.edu/ml/datasets/superconductivty+data}{The Superconductivity dataset} \cite{misc_superconductivty_data_464}: 
a regression dataset where the 
superconductivity critical temperature should be predicted based on 81 features extracted from the superconductor’s chemical formula. The original dataset has 21,263 instances. We randomly split it into training, validation and test sets in 60:20:20 proportions.

\item \href{https://www.dcc.fc.up.pt/~ltorgo/Regression/ailerons.html}{The Ailerons dataset} \cite{misc_ailerons_data}: 
a regression dataset originating from a control problem for an F-16 aircraft where the control action on the ailerons of the aircraft should be predicted based on 40 features describing the status of the airplane. The original data comes in a test set with 6,596 instances and a training set with 7,154 instances. The latter was split into a smaller training set and a validation set in 80:20 proportions.

\item \href{https://archive.ics.uci.edu/ml/datasets/online+news+popularity}{The Online News Popularity dataset} \cite{misc_online_news_popularity_332}:
a dataset with features about articles published in a period of two years on the news website Mashable. We used it for a binary classification task where one should predict if an article has been shared at least  1,400 times in social networks or not. The prediction should be done based on 58 features (two of the features from the original dataset are non-predictive).
The data we obtained from the data source had 39,644 instances. We randomly split it into training, validation and test sets in 60:20:20 proportions.

\item \href{https://archive.ics.uci.edu/ml/datasets/HIGGS}{The Higgs dataset} \cite{misc_higgs_280}: 
a binary classification dataset where, based on 28 features that are functions of the kinematic properties, one should distinguish between a signal process which produces Higgs bosons and a background process which does not. 
The original dataset has 11,000,000 instances; and the data source states that the last 500,000 instances should be used as the test set. We randomly picked 500,000 instances from the rest as the validation set.
\end{enumerate}

As mentioned in \Sec \ref{subsec:experiments1}, for each of the datasets above, we trained a 
CatBoost, a LightGBM and an XGBoost model. These models are available on the GitHub repository along with  the notebook 
\texttt{r2\_score.ipynb}
which replicates the metrics for them as outlined in Table \ref{Tab: metrics}. Moreover, the quantities in Table \ref{Tab: topology}, which are related to the structure of trees in these models, were generated with the code in   
\texttt{Retrieve\_splits.ipynb}.
The notebook contains three functions 
\texttt{retrieve\_catboost}, 
\texttt{retrieve\_lgbm}
and
\texttt{retrieve\_xgb}
which respectively parse a trained CatBoost, LightGBM or XGBoost decision tree 
to extract information such as depth, number of leaves, distinct features on which the tree splits, thresholds for splits, and the rectangular regions corresponding to leaves along with their associated probabilities and values. 
Notice that such information is a prerequisite for applying Algorithm \ref{algorithm} to CatBoost models. 
We have furthermore included a script \texttt{EnsembleParser.py} on the GitHub repository which can be used to obtain such statistics for an arbitrary trained CatBoost, LightGBM or XGBoost model.

For \Sec \ref{subsec:experiments2}, a proprietary, optimized implementation of Algorithm \ref{algorithm} was applied to the four CatBoost models from \Sec \ref{subsec:experiments1} (trained on public datasets) on an AWS machine with 16 cores and 128GB of memory. 
The code for the precomputation step uses the package Numba for just-in-time (JIT) compilation, and can also leverage multithreading. 
The execution times for the precomputation step of Algorithm \ref{algorithm} are recorded in Table \ref{Tab: time}, both with and without multithreading (the JIT compilation time, which was less than seven seconds, is not included). 
The table also includes the time it took to compute Shapley values for a random subsample of size 100 in the second step of the algorithm  via utilizing the look-up tables generated in the first step. The total size of these tables on disk can also be found there. Since our internal library is not available on GitHub, we provide a sanity check to confirm that the look-up tables indeed contain Shapley values. This is done through verifying the efficiency property of Shapley values of a machine learning model (see Example \ref{efficiency-ML}) in \texttt{explanations.ipynb}.   
The code in this notebook randomly picks a tree from the CatBoost ensemble and a leaf of that tree, and then verifies that, at that leaf, the sum of Shapley values of features appearing in the tree  agrees with the leaf's output minus the average of all leaf values (i.e. the average of raw predictions of that decision tree over the training set). 
For the same four CatBoost models, we also present execution times for computing the Owen values in Table \ref{Tab: time Owen} of \Sec \ref{subsec:experiments2}. For each of the four public datasets, the partition of variables which underlies the Owen value generation was obtained through hierarchical clustering of variables based on \textit{maximal information coefficient} \cite{Reshef2015MeasuringDP}. The notebook \texttt{grouping.ipynb} details this process. This is also available on our GitHub repository along with the partitions used for Table \ref{Tab: time Owen} as \texttt{json} files. The look-up tables constructed for Owen values are available as well, and again the efficiency property may be verified for them as a sanity check.\footnote{The second part of Proposition \ref{simplification coalitional} along with the efficiency property of the Shapley value immediately imply that the identity appearing in Example \ref{efficiency-ML} carries over to the marginal or conditional Owen values of a model.}

The experiments in \Sec \ref{subsec:experiments3} were designed to benchmark the complexity of our internal explanation library with other explanation methods.
The comparison was done for  CatBoost models of varying depths trained on a synthetic dataset generated by \eqref{synth_data_model}. 
The code was run on an AWS machine with 32 cores and 256GB of memory.
The reader can find the execution times of our proprietary code on the GitHub repository along with scripts \texttt{plotter.py} and \texttt{plotter\_owen.py} for recreating figures of \Sec \ref{subsec:experiments3}.

\section{Background from game theory}\label{appendix:game}
\subsection{Properties of game values}\label{subappendix:properties}
The goal of the current section is to briefly review basic concepts from cooperative game theory and various properties of game values.\\
\indent 
Let  $N$ be a  finite non-empty subset of $\Bbb{N}$ and denote its cardinality by $n$. 
An $n$-person game with $N$ as its set of players is a set function $v:2^N\rightarrow\Bbb{R}$; 
$v$ is called a \textit{cooperative} game if $v(\varnothing)=0$. A game value $h$ is a function that to any cooperative game $(N,v)$ assigns a vector $\left(h_i[N,v]\right)_{i\in N}$. 
We first review some important properties of game values. 
\begin{definition}\label{terminology}
Let $h$ be a game value; that is, an assignment $(N,v)\mapsto\left(h_i[N,v]\right)_{i\in N}$ 
where $(N,v)$ varies among cooperative games. 
\begin{itemize}
\item The game value $h$ is \textit{linear} if 
$h[N,v_1+r\cdot v_2]=h[N,v_1]+r\cdot h[N,v_2]$ 
for any scalar $r\in\Bbb{R}$ and any two cooperative games $v_1$ and $v_2$ with the same set of players $N$.
\item An element $i\in N$ is called a null (dummy) player for $(N,v)$ if $v(S\cup\{i\})=v(S)$ for any $S\subseteq N\setminus\{i\}$. We say $h$ satisfies the \textit{null-player property} if $h_i[N,v]=0$ whenever $i$ is a null player.  
\item A \textit{carrier} for a game $(N,v)$ is a subset $U\subseteq N$ with the property that $v(S)=v(S\cap U)$ for any $S\subseteq N$; namely, a subset whose complement consists of null players. The game value $h$ satisfies the  \textit{carrier-dependence property} if for any carrier $U$ of a game $(N,v)$ one has $h_i[N,v]=h_i[U,v]$ for all $i\in U$. (On the right-hand side, $h$ is applied to the restrictions of the original game $v:2^N\rightarrow\Bbb{R}$ to $2^U$.)   
\item The game value $h$ satisfies the \textit{efficiency property} if 
\begin{equation}\label{efficiency}
\sum_{i\in N}h_i[N,v]=v(N)    
\end{equation}
for any cooperative game $(N,v)$.  
\item The game value $h$ satisfies the \textit{symmetry property} if for any game $(N,v)$ and any permutation $\sigma$ of $N$ one has $h_{\sigma(i)}[N,\sigma^*v]=h_i[N,v]$ for all $i\in N$ where  $\sigma^*v$ is defined as 
$\sigma^*v(S):=v\left(\sigma^{-1}(S)\right)$. Equivalently, $h$ is symmetric if and only if for any game $(N,v)$ and any $i,j\in N$:
\begin{equation}\label{symmetry}
v(S\cup\{i\})=v(S\cup\{j\})\, \forall S\subseteq N\setminus\{i,j\} 
\Rightarrow h_i[N,v]=h_j[N,v].
\end{equation}
\item The game value $h$ satisfies the \textit{strong monotonicity property} if for any two games $(N,v_1), (N,v_2)$ and any $i\in N$:
\begin{equation}\label{monotonicity}
v_1(S\cup\{i\})-v_1(S)\geq v_2(S\cup\{i\})-v_2(S)\, \forall S\subseteq N\setminus\{i\}
\Rightarrow h_i[N,v_1]\geq h_i[N,v_2].
\end{equation}
\end{itemize}
\end{definition}

As mentioned before, this article mostly deals with games $(N,v)$ where $N=\{1,\dots,n\}$ and with game values defined explicitly by a formula of the form \eqref{linear general} (where $h_i[N,v]$ was replaced with $h_i[v]$ to simplify the notation). The next lemma justifies our convention 
(also see Remark \ref{arbitrary N}).
\begin{lemma}\label{properties}
Linear game values with the null-player property are precisely those that can be written as  
\begin{equation}\label{linear general variant}
h_i[N,v]=\sum_{S\subseteq N\setminus \{i\}}w(S;N,i)\left(v(S\cup\{i\})-v(S)\right)\quad (i\in N)    
\end{equation}
where $(N,v)$ is a cooperative game and $w(\cdot;N,i)$ is a real-valued function defined on the set of subsets of $N\setminus\{i\}$.
Furthermore, for a game value $h$ of this form, properties from Definition \ref{terminology} can be rephrased  as:
\begin{enumerate}
\item carrier dependence  $\Leftrightarrow$ 
$\sum_{S\subseteq N\setminus\{i\},S\cap U=S'}w(S;N,i)=w(S';U,i)$ for all $i\in U\subsetneq N$ and $S'\subseteq U\setminus\{i\}$;
\item efficiency $\Leftrightarrow$ $\sum_{S=N\setminus\{i\}}w(S;N,i)=1$ \&  
$\sum_{i\in S}w(S\setminus\{i\};N,i)=\sum_{i\in N\setminus S}w(S;N,i)$ $\forall\varnothing\neq S\subsetneq N$;
\item symmetry $\Leftrightarrow$ there exists a function $\alpha(\cdot;N)$ 
with $w(S;N,i)=\alpha(|S|;N)$ for all $i\in N$ and $S\subseteq N\setminus\{i\}$;
\item strong monotonicity $\Leftrightarrow w(S;N,i)\geq 0$ $\forall S\subseteq N\setminus\{i\}$.
\end{enumerate}
\end{lemma}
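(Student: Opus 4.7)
The plan is to work throughout with the basis of cooperative games on $N$ given by the unanimity games $u_T(S):=\mathbbm{1}[T\subseteq S]$, as $T$ ranges over the nonempty subsets of $N$. These span the vector space of cooperative games and are well-suited to the null-player axiom because any $i\notin T$ is automatically a null player in $u_T$ (both $u_T(S)$ and $u_T(S\cup\{i\})$ depend only on whether $T\subseteq S$). Since every statement in the lemma is an identity linear in $v$, checking it on $\{u_T\}$ will suffice.

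For the first claim, the easy direction is clear: any $h$ of the form \eqref{linear general variant} is linear in $v$, and if $i$ is null then each term $v(S\cup\{i\})-v(S)$ vanishes. Conversely, given a linear $h$ with null-player property, set $\beta_i(T):=h_i[N,u_T]$, which is zero whenever $i\notin T$. A direct evaluation of \eqref{linear general variant} on $u_T$ for $T\ni i$ shows that the desired formula holds on the basis if and only if the unknown weights satisfy
\[
\sum_{S:\,R\subseteq S\subseteq N\setminus\{i\}} w(S;N,i) \;=\; \beta_i(R\cup\{i\}), \qquad R\subseteq N\setminus\{i\}.
\]
This is an upward incidence equation on the Boolean lattice $2^{N\setminus\{i\}}$, and Möbius inversion produces the unique solution $w(S;N,i)=\sum_{R\supseteq S}(-1)^{|R|-|S|}\beta_i(R\cup\{i\})$. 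Linearity then extends \eqref{linear general variant} from the basis to all games.

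For the four equivalences in the second claim, the template is the same in each case: substitute \eqref{linear general variant}, rearrange to isolate coefficients of $v$ on a basis, and compare. For carrier-dependence, reindexing the sum defining $h_i[N,v]$ by $S':=S\cap U$ (when $U$ is a carrier and $i\in U$) produces $\sum_{S'\subseteq U\setminus\{i\}}\big(\sum_{S\cap U=S'}w(S;N,i)\big)\bigl(v(S'\cup\{i\})-v(S')\bigr)$, which must match $h_i[U,v]$ for all games on $U$; since the values of the restricted game can be prescribed freely, the stated identity for the $w$'s is forced. For efficiency, collecting the coefficient of each $v(T)$ in $\sum_{i\in N}h_i[N,v]$ and matching it with the coefficient of $v(T)$ in $v(N)$ (which is $\delta_{T,N}$) yields exactly the normalization $\sum_{i\in N}w(N\setminus\{i\};N,i)=1$ together with the interior balance $\sum_{i\in T}w(T\setminus\{i\};N,i)=\sum_{i\in N\setminus T}w(T;N,i)$ for proper nonempty $T$. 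For symmetry, the identity $h_{\sigma(i)}[N,\sigma^* u_T]=h_i[N,u_T]$ combined with $\sigma^* u_T=u_{\sigma(T)}$ shows that $\beta_i(T)$ depends only on $(|T|,|N|)$; Möbius inversion then transfers this to $w(S;N,i)=\alpha(|S|;N)$, and the converse is a direct substitution. For strong monotonicity, the nonnegativity of $w(S;N,i)$ forces the implication \eqref{monotonicity}, while the reverse direction is obtained by taking $v_1,v_2$ so that their difference of marginal contributions at $i$ is supported on a single $S$ (for example, by setting $v_1$ to be a game with $v_1(S\cup\{i\})-v_1(S)=1$ and all other marginals at $i$ equal to zero, which one can explicitly construct from indicator games), whence $h_i[N,v_1]-h_i[N,v_2]=w(S;N,i)\geq 0$.

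The main technical obstacle is the Möbius-inversion step in the first claim: one must verify that the formally defined $w$ actually reproduces the prescribed $\beta_i(T)$ on every unanimity game simultaneously, and that the null-player hypothesis on $h$ is exactly what is needed so that the Möbius coefficients collapse to give zero when $i\notin T$. Beyond this, each direction of the second claim follows the same reduce-to-basis-and-compare-coefficients template, so once the framework is set up the remaining verifications are essentially bookkeeping.
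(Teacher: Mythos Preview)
Your proposal is correct. The approach differs from the paper's in two places: the derivation of the representation \eqref{linear general variant} and the symmetry equivalence. For the representation, the paper writes a linear $h$ as $h_i[N,v]=\sum_{S\subseteq N}\gamma(S;N,i)\,v(S)$, splits off the terms $\gamma(S\cup\{i\};N,i)\bigl(v(S\cup\{i\})-v(S)\bigr)$, and observes that the remainder $\sum_{\varnothing\neq S\subseteq N\setminus\{i\}}\bigl(\gamma(S\cup\{i\};N,i)+\gamma(S;N,i)\bigr)v(S)$ must vanish identically on games for which $i$ is null, forcing those coefficients to be zero and giving $w(S;N,i)=\gamma(S\cup\{i\};N,i)$ directly. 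For symmetry, the paper substitutes $\sigma^*v$ into \eqref{linear general variant}, changes variables $S\mapsto\sigma(S)$, and compares coefficients to get $w(S;N,i)=w(\sigma(S);N,\sigma(i))$, whence $w$ depends only on $|S|$. Your route via unanimity games and M\"obius inversion is equally valid and yields the bonus of an explicit closed formula $w(S;N,i)=\sum_{R\supseteq S}(-1)^{|R|-|S|}h_i[N,u_{R\cup\{i\}}]$; the paper's route is slightly more elementary since it avoids appealing to the unanimity basis or inversion on the Boolean lattice. For carrier dependence, efficiency, and strong monotonicity your arguments coincide with the paper's (reindex by $S\cap U$; collect coefficients of $v(T)$; test on a game whose $i$-marginals are supported on a single $S$, e.g.\ $v_1=\mathbbm{1}_{\{S\cup\{i\}\}}$).
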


\begin{proof}
If $h$ is linear, then, given any finite subset $N\subset\Bbb{N}$, there exist constants 
$\left\{\gamma(S;N,i)\right\}_{i\in N,S\subseteq N\setminus\{i\}}$   
in terms of which the outputs of $h$ for any cooperative game 
$v:2^N\rightarrow\Bbb{R}$ may be written as
$$
h_i[N,v]=\sum_{S\subseteq N}\gamma(S;N,i)\,v(S).
$$
The equation above may be rewritten as 
\begin{equation}\label{auxiliary7}
h_i[N,v]=
\sum_{S\subseteq N\setminus\{i\}}\gamma(S\cup\{i\};N,i)\left(v(S\cup\{i\})-v(S)\right)
+\sum_{\varnothing\neq S\subseteq N\setminus\{i\}}\left(\gamma(S\cup\{i\};N,i)+\gamma(S;N,i)\right)v(S).
\end{equation}
(Keep in mind that $v(\varnothing)=0$.) 
If $i\in N$ is a null player for $v$, then the first summation vanishes. Hence $h$ satisfies the null-player property if and only if 
$\sum_{\varnothing\neq S\subseteq N\setminus\{i\}}\left(\gamma(S\cup\{i\};N,i)+\gamma(S;N,i)\right)v(S)=0$
for any cooperative game $v:2^N\rightarrow\Bbb{R}$ for which $i$ is a null player. 
But this happens exactly when all coefficients $\gamma(S\cup\{i\};N,i)+\gamma(S;N,i)$ are zero since, in the former sum, each term $v(S)$ can be any arbitrary real number. 
Substituting $\gamma(S\cup\{i\};N,i)+\gamma(S;N,i)=0$ in \eqref{auxiliary7} and setting 
$w(S;N,i):=\gamma(S\cup\{i\};N,i)$, we arrive at the formula 
$$
h_i[N,v]=\sum_{S\subseteq N\setminus \{i\}}w(S;N,i)\left(v(S\cup\{i\})-v(S)\right)\quad (i\in N) 
$$
for linear game values $h$ that satisfy the null-player property. 
Conversely, any game value of the form above clearly has the null-player property. 
This establishes the first claim. 
So we assume that $h$ is of the form above for the rest of the proof. 
Now let $U\subseteq N$ be a carrier for $(N,v)$ and $i\in U$.
In the formula above, $v(S)$ and $v(S\cup\{i\})$ can be replaced with $v(S\cap U)$ and 
$v\big((S\cap U)\cup \{i\}\big)$ respectively. Thus
\small
$$
h_i[N,v]=\sum_{S\subseteq N\setminus \{i\}}w(S;N,i)\left(v\big((S\cap U)\cup \{i\}\big)-v(S\cap U)\right) 
=\sum_{S'\subseteq U\setminus\{i\}}\Big(\sum_{S\subseteq N\setminus\{i\},S\cap U=S'}w(S;N,i)\Big)
(v(S'\cup\{i\})-v(S')).
$$
\normalsize
For $h$ to satisfy the carrier dependence, the above expression must always coincide with 
$$h_i[U,v]=\sum_{S'\subseteq U\setminus \{i\}}w(S';U,i)\left(v(S'\cup\{i\})-v(S')\right),$$
thus the equality of corresponding coefficients: 
$\sum_{S\subseteq N\setminus\{i\},S\cap U=S'}w(S;N,i)=w(S';U,i)$ (which always holds when $U=N$).\\
\indent
Next, we turn into part (2). For a game value $h$ of the form \eqref{linear general variant}, 
the efficiency property \eqref{efficiency} turns into
$$
\sum_{i\in N}\sum_{S\subseteq N\setminus \{i\}}w(S;N,i)\left(v(S\cup\{i\})-v(S)\right)
=v(N)
$$
where $(N,v)$ is a cooperative game, i.e. $v(\varnothing)=0$. 
The left-hand side may be rewritten as 
$$
\Big(\sum_{S=N\setminus\{i\}}w(S;N,i)\Big)v(N)
+\sum_{\varnothing\neq S\subsetneq N}
\Big(\sum_{i\in S}w(S\setminus\{i\};N,i)-\sum_{i\in N\setminus S}w(S;N,i)\Big)v(S).
$$
This must agree with $v(N)$ for any cooperative game $(N,v)$. 
But, as $v$ varies among cooperative games $2^N\rightarrow\Bbb{R}$,
$\left(v(S)\right)_{\varnothing\neq S\subseteq N}$ can be any element of $\Bbb{R}^{2^n-1}$. Therefore, the efficiency holds precisely when in the last expression the coefficient of $v(N)$ is $1$ and the rest of the coefficients are $0$:
$$
\sum_{S=N\setminus\{i\}}w(S;N,i)=1, \quad \sum_{i\in S}w(S\setminus\{i\};N,i)=\sum_{i\in N\setminus S}w(S;N,i)
\quad(S\subsetneq N, S\neq\varnothing).
$$
As for the symmetry property, for any permutation $\sigma$ of the set of players $N$, one has
\begin{equation*}
\begin{split}
h_{\sigma(i)}[N,\sigma^*v]
&=\sum_{S\subseteq N\setminus \{\sigma(i)\}}w(S;N,\sigma(i))
\big(\sigma^*v(S\cup\{\sigma(i)\})-\sigma^*v(S)\big)\\    
&=\sum_{S\subseteq N\setminus \{\sigma(i)\}}w(S;N,\sigma(i))
\left(v\left(\sigma^{-1}(S)\cup\{i\}\right)-v\left(\sigma^{-1}(S)\right)\right)\\
&=\sum_{S\subseteq N\setminus \{i\}}w(\sigma(S);N,\sigma(i))\left(v(S\cup\{i\})-v(S)\right).
\end{split}
\end{equation*}
Comparing with the formula \eqref{linear general variant} for $h_i[N,v]$, the symmetry is equivalent to the equality of corresponding coefficients, i.e. 
$w(S;N,i)=w(\sigma(S);N,\sigma(i))$ for any permutation $\sigma:N\rightarrow N$. This happens exactly when 
$w(S;N,i)$ depends only on the cardinality of the proper subset $S$ of $N$ 
because for any two pairs 
$$
(S_1,i_1) \,(S_1\subseteq N\setminus\{i_1\}) \quad \& \quad (S_2,i_2) \,(S_2\subseteq N\setminus\{i_2\})
$$
with $|S_1|=|S_2|$,
there exists a permutation $\sigma$ of $N$ with $S_2=\sigma(S_1)$ and $i_2=\sigma(i_1)$.\\
\indent
At last, we characterize the strong monotonicity property for game values of form \eqref{linear general variant}.
Due to linearity, \eqref{monotonicity} amounts to  
$$
h_i[N,v]=\sum_{S\subseteq N\setminus \{i\}}w(S;N,i)\left(v(S\cup\{i\})-v(S)\right)\geq 0
$$
whenever all differences $v(S\cup\{i\})-v(S)$ are non-negative. This amounts to $\sum_{S\subseteq N\setminus \{i\}}w(S;N,i)\cdot r_{S}\geq 0$
for any $(r_S)_{S\subseteq N\setminus \{i\}}\in[0,\infty)^{2^{n-1}}$ since any such vector can be realized as 
$\left(v(S\cup\{i\})-v(S)\right)_{S\subseteq N\setminus \{i\}}$
for a suitable cooperative game $v:2^N\rightarrow\Bbb{R}$. Clearly, the former inequality holds for all 
vectors with non-negative entries
if and only if all the coefficients $w(S;N,i)$ are non-negative. 
\end{proof}

\begin{remark}\label{non-cooperative}
There is a systematic way of extending linear game values so that they can be applied to non-cooperative games as well \cite[\Sec 3.5]{2021arXiv210210878M}. Indeed, conditional and marginal games associated with $(\mathbf{X},f)$ as in \eqref{games} are not cooperative since they assign $\Bbb{E}[f(\mathbf{X})]$ to $\varnothing$. But, a benefit of working with a formula such as \eqref{linear general variant} is that it remains unchanged after replacing a game 
$v:S\mapsto v(S)$ with the cooperative one  $S\mapsto v(S)-v(\varnothing)$. 
In other words, \eqref{linear general variant} automatically extends\footnote{Extensions of the form $v\mapsto h[N,S\mapsto v(S)-v(\varnothing)]$ to non-cooperative games are called \textit{centered}.} to non-cooperative games; and we shall freely apply such game values to marginal and conditional games.
All properties appeared in Definition \ref{properties} generalize to the case of non-cooperative games in an obvious way except the efficiency property 
which should be replaced with
\begin{equation}\label{efficiency-variant}
\sum_{i\in N}h_i[N,v]=v(N)-v(\varnothing).    
\end{equation}
\end{remark}

\begin{remark}\label{arbitrary N}
The carrier-dependence property from Definition \ref{properties} naturally brings up games whose sets of players are not necessarily segments of $\Bbb{N}$. On the other hand, it is tempting to define game values only for games $(N,v)$ with $N=\{1,\dots,n\}$ as in \eqref{linear general} rather than working with the more precise form \eqref{linear general variant}. To alleviate this problem, notice that if a game value of form \eqref{linear general}---which is defined only in the case of $N=\{1,\dots,n\}$---is symmetric, then by Lemma \ref{properties} its weights $w(S;n,i)$ depend only on the cardinality of $S$. It can thus be unambiguously applied to any game $(N,v)$ by setting   $w(S;N,i):=w(\sigma^{-1}(S);n,\sigma^{-1}(i))$ where $\sigma:\{1,\dots,n\}\rightarrow N$ is an arbitrary bijection.\footnote{This extension to games $(N,v)$ with arbitrary $N$ is unique if one imposes an \textit{isomorphism invariance} axiom which is stronger than the symmetry axiom and  requires $h_{\sigma(i)}[N_1,\sigma^*v]=h_i[N_2,v]$ for any game $(N_2,v)$ and any bijection 
$\sigma:N_1\rightarrow N_2$.}
\end{remark}

\begin{remark}
The famous characterization of the Shapley value in 
\cite{shapley1953value}
as the unique linear game value satisfying null-player, symmetry and efficiency properties 
can also be recovered from Lemma \ref{properties}: substituting $w(S;N,i)=\alpha(|S|;N)$ from part (3) in the equations from part (2) implies that
$$
|N|\cdot\alpha(|N|-1;N)=1 \quad\&\quad 
|S|\cdot\alpha(|S|-1;N)=(|N|-|S|)\cdot\alpha(|S|;N).
$$
From these, one can inductively show that $\alpha(|S|;N)=w(S;N,i)$ is equal to 
$\frac{|S|!\,(|N|-|S|-1)!}{|N|!}$; consequently, $h_i[N,v]$ coincides with the right-hand side of \eqref{ShapleyFormula}.
\end{remark}

\subsection{Marginal and conditional games}\label{subappendix:games}
The marginal and conditional games associated with predictors $\mathbf{X}=(X_1,\dots,X_n)$ 
and a Borel-measurable $f:\Bbb{R}^n\rightarrow\Bbb{R}$ are defined as pointwise games in \eqref{games}. 
Of course, they (along with feature attributions arising from them) can also be treated as random variables
which, by abuse of notation, we denote as
\begin{equation}\label{games-variant}
v^\ME(S;\mathbf{X},f)(\omega):=
\Bbb{E}_{\mathbf{X}_{-S}}[f(\mathbf{X}_S(\omega),\mathbf{X}_{-S})], \quad 
v^\CE(S;\mathbf{X},f)(\omega):=
\Bbb{E}[f(\mathbf{X})\mid \mathbf{X}_S](\omega) 
\quad\quad (S\subseteq N);
\end{equation}  
where $\omega$ belongs to the sample space $\Omega$.\footnote{Any slice of a Borel-measurable function is Borel measurable too. Thus, fixing $\omega\in\Omega$,  $f(\mathbf{X}_S(\omega),\mathbf{X}_{-S})$ is a random variable on $(\Omega,\mathcal{F},\Bbb{P})$ and it thus makes sense to speak of its expectation in \eqref{games-variant}.}  
In order for the expectations  in \eqref{games-variant} (conditional or unconditional) to exist, we always assume that the predictors belong to $L^2(\Omega,\mathcal{F},\Bbb{P})$ and 
$f$ is uniformly bounded on $\Bbb{R}^n$.\footnote{Lipschitz-continuous functions work too although $f$ is not continuous in our context.}

\begin{example}\label{efficiency-ML}
For these games, the efficiency property \eqref{efficiency-variant} turns into
\begin{equation}
\sum_{i\in N}\varphi_i\big[v^\ME\big]=\sum_{i\in N}\varphi_i\big[v^\CE\big]
=f-\Bbb{E}\left[f(\mathbf{X})\right]   
\end{equation}
for the marginal and conditional Shapley values. 
\end{example}

\begin{example}\label{regression}
For a linear model 
$f(\mathbf{X})=c_1X_1+\dots+c_nX_n$,
marginal and conditional Shapley values (treated as random variables here) are given by  
\begin{equation*}
\varphi_i\big[v^\ME\big]=c_i\left(X_i-\Bbb{E}[X_i]\right)\quad (i\in N)
\end{equation*}
and
\begin{equation*}
\begin{split}
\varphi_i\big[v^\CE\big]=
&c_i\Big(X_i-\sum_{S\subseteq N\setminus\{i\}}\frac{|S|!\,(|N|-|S|-1)!}{|N|!}
\,\Bbb{E}[X_i\mid \mathbf{X}_S]\Big)\\
&+\sum_{j\in N\setminus\{i\}}
c_j\Big(\sum_{S\subseteq N\setminus\{i,j\}}\frac{|S|!\,(|N|-|S|-1)!}{|N|!}
\left(\Bbb{E}[X_j\mid\mathbf{X}_{S\cup\{i\}}]-\Bbb{E}[X_j\mid \mathbf{X}_S]\right)\Big) \quad (i\in N).
\end{split}    
\end{equation*}
Notice that the latter contains conditional expectation terms while the former is purely in terms of coefficients of the model $f$. Moreover, if the predictors are independent, the two kinds of explanations coincide since all terms $\Bbb{E}[X_i\mid \mathbf{X}_S]$, 
$\Bbb{E}[X_j\mid \mathbf{X}_{S\cup\{i\}}]$ and
$\Bbb{E}[X_j\mid \mathbf{X}_S]$ become $\Bbb{E}[X_i]$ or $\Bbb{E}[X_j]$. Finally, 
the identities above showcase why in Lemma \ref{dummy players}
certain assumptions on predictors are required in the case of the conditional game.
When $c_i=0$, the variable $x_i$ is absent from $f(\mathbf{x})=c_1x_1+\dots+c_nx_n$, and we see that the $i^{\rm{th}}$ marginal Shapley value vanishes. This does not hold for its conditional counterpart unless $X_i$ is independent of the rest of variables. On the other extreme, when $c_i$ is the only non-zero coefficient, 
$\{i\}$ does not become a carrier for $v^\CE$, i.e. $\varphi_i\big[v^\CE\big]\neq c_i\left(X_i-\Bbb{E}[X_i]\right)$ in general. This is because the conditional expectations $\Bbb{E}[X_i\mid \mathbf{X}_S]$ are not necessarily the same as $\Bbb{E}[X_i]$ unless a further assumption, 
e.g. the independence of $X_i$ from $\mathbf{X}_{-i}$, is imposed. 
\end{example}

\begin{proof}[Proof of Lemma \ref{dummy players}]
If $f$ is independent of variables indexed by elements outside of $U\subset N$, then
\begin{equation*}
\begin{split}
&v^\ME(S;\mathbf{X},f)(\mathbf{x})=\Bbb{E}[f(\mathbf{x}_S,\mathbf{X}_{-S})]
=\Bbb{E}[f(\mathbf{x}_{S\cap U},\mathbf{x}_{S\setminus U},\mathbf{X}_{-S})]
=\Bbb{E}[f(\mathbf{x}_{S\cap U},\mathbf{X}_{S\setminus U},\mathbf{X}_{-S})]\\
&=\Bbb{E}[f(\mathbf{x}_{S\cap U},\mathbf{X}_{-(S\cap U)})]=v^\ME(S\cap U;\mathbf{X},f)(\mathbf{x})
\end{split}    
\end{equation*}
which implies that $U$ is a carrier. In particular, elements of $N\setminus U$ are null players. 
As for the other claim of part (1), notice that the term $\Bbb{E}[f(\mathbf{x}_S,\mathbf{X}_{-S})]$
above may be written as 
$$\Bbb{E}\big[\tilde{f}(\mathbf{x}_{S\cap U},\mathbf{X}_{U\setminus S})\big]
=v^\ME(S\cap U;\mathbf{X}_U,f)(\mathbf{x}_U).$$
\\
\indent
For the second part, we shall need the following fact: Given random vectors $\mathbf{W},\mathbf{Y}$ and $\mathbf{Z}$, one has 
$\Bbb{E}[\mathbf{W}\mid \mathbf{Y}=\mathbf{y},\mathbf{Z}=\mathbf{z}]
=\Bbb{E}[\mathbf{W}\mid\mathbf{Y}=\mathbf{y}]$ 
provided that $(\mathbf{W},\mathbf{Y})$ is independent from $\mathbf{Z}$.
Writing $f$ as $f=\tilde{f}\circ\pi_U$ again, if $\mathbf{X}_U$ is independent of $\mathbf{X}_{-U}$, then one has  
\begin{equation*}
\begin{split}
&v^\CE(S;\mathbf{X},f)(\mathbf{x})
=\Bbb{E}[f(\mathbf{X})\mid \mathbf{X}_S=\mathbf{x}_S]  
=\Bbb{E}\big[\tilde{f}(\mathbf{X}_U)\mid \mathbf{X}_{S\cap U}=\mathbf{x}_{S\cap U},
\mathbf{X}_{S\setminus U}=\mathbf{x}_{S\setminus U}\big]\\  
&=\Bbb{E}\big[\tilde{f}(\mathbf{X}_U)\mid \mathbf{X}_{S\cap U}=\mathbf{x}_{S\cap U}\big]
=\Bbb{E}[f(\mathbf{X})\mid \mathbf{X}_{S\cap U}=\mathbf{x}_{S\cap U}]
=v^\CE(S\cap U;\mathbf{X},f)(\mathbf{x}).
\end{split} 
\end{equation*}
The remaining claim follows immediately by setting $U=N\setminus\{i\}$: If $f$ does not depend on $x_i$ and $X_i$ is independent of 
$\mathbf{X}_{-i}=\mathbf{X}_{N\setminus \{i\}}$, then $U$ is a carrier for the conditional game, and hence $i\in N\setminus U$ should be a null player.
\end{proof}

\begin{remark}
The marginal framework for feature attribution is sometimes called interventional because it can be thought of as enforcing independence between two complementary subsets of features $\mathbf{X}_S$ and $\mathbf{X}_{-S}$ through causal intervention \cite{janzing2020feature}. This amounts to a flat causal graph, not a general 
DAG (Directed Acyclic Graph), so the name can be misleading 
\cite[p. 7]{2022arXiv220707605C}. 
\end{remark}

\subsection{Coalitional game values}\label{subappendix:coalitional game values}
A coalitional game value $\mathfrak{h}$ takes a cooperative game $(N,v)$ along with a partition $\mathfrak{P}$ of $N$ as inputs and returns a vector 
$\left(\mathfrak{h}_i[N,v,\mathfrak{P}]\right)_{i\in N}$.
Just like \eqref{linear general variant}, it is convenient to assume that  $\mathfrak{h}$ is in the form 
\begin{equation}\label{linear general variant coalitional}
\mathfrak{h}_i[N,v,\mathfrak{P}]=\sum_{S\subseteq N\setminus \{i\}}w(S;N,i,\mathfrak{P})\left(v(S\cup\{i\})-v(S)\right)\quad (i\in N)    
\end{equation}
which is linear and satisfies the null-player property in a clear sense.
The most important example for us is the Owen value \eqref{OwenFormula} which is of this form. \\
\indent 
The efficiency property from Definition \ref{properties} immediately generalizes to coalitional game values, and it is not hard to see that it holds for the Owen value. 
On the other hand, formulating symmetry or carrier-dependence properties for coalitional game values is slightly more subtle.
\begin{definition}\label{coalitional definition}
Let $\mathfrak{h}$ be a coalitional game value. 
\begin{itemize}
\item We say that $\mathfrak{h}$ has the \textit{coalitional symmetry property} if 
given a game $(N,v)$ and a partition $\mathfrak{P}:=\{S_1,\dots,S_m\}$ of $N$ one has the followings:\footnote{These properties also come up in a characterization of the Owen value in \cite{owen1977values}.}
\begin{enumerate}
\item $\mathfrak{h}\big[N,v,\{S_j\}_{j=1}^m\big]=\mathfrak{h}\big[N,v,\{S_{\tau(j)}\}_{j=1}^m\big]$
for any permutation $\tau$ of $M=\{1,\dots,m\}$ (i.e. the order of members of $\mathfrak{P}$ does not matter);
\item 
$\mathfrak{h}_{\sigma(i)}\big[N,\sigma^*v,\{\sigma(S_j)\}_{j=1}^m\big]
=\mathfrak{h}_i\big[N,v,\{S_j\}_{j=1}^m\big]$ for any permutation $\sigma$ of $N=\{1,\dots,n\}$.
\end{enumerate}
\item We say that $\mathfrak{h}$ has the \textit{coalitional carrier-dependence property} if given a game $(N,v)$, 
a carrier $U\subseteq N$, and a partition $\mathfrak{P}$ of $N$, one has 
$\mathfrak{h}_i[N,v,\mathfrak{P}]=\mathfrak{h}_i[U,v,\mathfrak{P}']$ for all $i\in U$ 
where 
$\mathfrak{P}':=\{S\cap U\mid S\in\mathfrak{P}, S\cap U\neq\varnothing\}$
is a partition of $U$.
\end{itemize}
\end{definition}

\begin{lemma}\label{carrier}
The Shapley value and the Banzhaf value satisfy the carrier dependence property; and the Owen value satisfies the coalitional carrier dependence property. 
\end{lemma}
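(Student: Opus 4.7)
The Shapley and Banzhaf values are already presented in the form \eqref{linear general variant} with weights depending only on $|S|$ and $|N|$, so by part (1) of Lemma \ref{properties} their carrier-dependence reduces to the single combinatorial identity
\begin{equation*}
\sum_{\substack{S\subseteq N\setminus\{i\}\\ S\cap U=S'}} w(S;N,i)\;=\;w(S';U,i)
\qquad (i\in U\subsetneq N,\ S'\subseteq U\setminus\{i\}).
\end{equation*}
For the Banzhaf value, $w(S;N,i)=2^{-(|N|-1)}$ and there are exactly $2^{|N|-|U|}$ subsets $S$ of the required form $S'\sqcup T$ with $T\subseteq N\setminus U$, so the left-hand side collapses to $2^{-(|U|-1)}$, which is $w(S';U,i)$. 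For the Shapley value, I would use the Beta-function representation $\tfrac{s!(n-s-1)!}{n!}=\int_0^1 x^s(1-x)^{n-s-1}\,dx$, pull the sum over $t=|T|$ inside the integral, and simplify via $\sum_{t=0}^{n-u}\binom{n-u}{t}(\tfrac{x}{1-x})^t=(1-x)^{-(n-u)}$; the integrand reduces to $x^{|S'|}(1-x)^{|U|-|S'|-1}$, whose integral is $\tfrac{|S'|!(|U|-|S'|-1)!}{|U|!}=w(S';U,i)$.

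For the Owen value, I would first note that \eqref{OwenFormula} is of the form \eqref{linear general variant coalitional}, and then observe that the proof of Lemma \ref{properties}(1) carries over verbatim to yield a coalitional analog: coalitional carrier-dependence is equivalent to
\begin{equation*}
\sum_{\substack{T\subseteq N\setminus\{i\}\\ T\cap U=T'}} w_{\mathrm{Ow}}(T;N,i,\mathfrak{P}) \;=\; w_{\mathrm{Ow}}(T';U,i,\mathfrak{P}').
\end{equation*}
Reading off \eqref{OwenFormula}, $w_{\mathrm{Ow}}(T;N,i,\mathfrak{P})$ is nonzero exactly when $T$ splits as $T=Q\cup K$ with $Q=\bigcup_{r\in R}S_r$ for some $R\subseteq M\setminus\{j\}$ and $K\subseteq S_j\setminus\{i\}$, where $S_j$ is the block containing $i$. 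Setting $M':=\{r\in M:S_r\cap U\neq\varnothing\}$ and $S'_r:=S_r\cap U$, one has $j\in M'$ and $\mathfrak{P}'=\{S'_r\}_{r\in M'}$. Fixing $T'=Q'\cup K'$ on the $U$-side (with $Q'=\bigcup_{r\in R'}S'_r$, $R'\subseteq M'\setminus\{j\}$, $K'\subseteq S'_j\setminus\{i\}$), the $T$'s on the left-hand side contributing a nonzero weight correspond bijectively to pairs $(R'',K'')$ with $R''\subseteq M\setminus M'$ and $K''\subseteq S_j\setminus S'_j$, via $R=R'\sqcup R''$ and $K=K'\sqcup K''$.

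Consequently the sum factors as a product of an outer sum over $R''$ and an inner sum over $K''$. Each factor is precisely the Shapley-type identity already established, applied once with $(N,U,i)$ replaced by $(M,M',j)$ (yielding $\tfrac{|R'|!(|M'|-|R'|-1)!}{|M'|!}$), and once with $(N,U,i)$ replaced by $(S_j,S'_j,i)$ (yielding $\tfrac{|K'|!(|S'_j|-|K'|-1)!}{|S'_j|!}$). Their product is $w_{\mathrm{Ow}}(T';U,i,\mathfrak{P}')$, completing the argument; cases where $T'$ itself admits no such decomposition yield zero on both sides. The main obstacle is purely bookkeeping: isolating the supports of the Owen weights and verifying that the outer choice (coalitions absorbed from $M\setminus M'$) and the inner choice (elements adjoined to $K$ from $S_j\setminus S'_j$) are independent so that the two Shapley-type identities can be invoked factor-by-factor.
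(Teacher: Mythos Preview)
Your argument is correct. The Banzhaf case is identical to the paper's, and your Owen bookkeeping (the bijection $T\leftrightarrow(R'',K'')$ and the factorization of the resulting sum) is sound: the constraint $T\cap U=T'$ together with the support condition on the Owen weights forces $R\cap M'=R'$ and $K\cap S'_j=K'$, so the outer and inner choices are indeed independent, and each factor is exactly the Shapley identity you established.

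Your route differs from the paper's in two places. For the Shapley value, the paper invokes the combinatorial identity of Lemma~\ref{hockey-stick} (a hockey-stick–type sum), whereas you use the integral representation $\tfrac{s!(n-s-1)!}{n!}=\int_0^1 x^s(1-x)^{n-s-1}\,dx$ and the binomial theorem inside the integral; both prove the same weight identity, but your Beta-function trick avoids any induction or auxiliary combinatorial lemma. For the Owen value, the paper defers to Proposition~\ref{simplification coalitional}, which establishes coalitional carrier-dependence for the whole family $\mathfrak{h}^{\mathcal{A}_1,\mathcal{A}_2}$ via the general simplification in Lemma~\ref{simplification}. Your direct factorization is more elementary and transparent for Owen specifically, but the paper's detour pays off later since the same proposition covers every coalitional value built from two collections satisfying the backward Pascal identity.
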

\begin{proof}
According to part (1) of Lemma \ref{properties}, to establish the carrier dependence for the Shapley value \eqref{ShapleyFormula}, it suffices to show that 
$$
\sum_{S\subseteq N\setminus\{i\},S\cap U=S'}\frac{|S|!\,(|N|-|S|-1)!}{|N|!}
=\frac{|S'|!\,(|U|-|S'|-1)!}{|U|!}
$$
for any finite subset $U$ of $N$ containing $i\in N$ and any $S'\subseteq U\setminus\{i\}$. 
The left-hand side may be rewritten as 
$\sum_{S'\subseteq S\subseteq S'\sqcup (N-U)}\frac{|S|!\,(|N|-|S|-1)!}{|N|!}$; 
the result now follows from Lemma \ref{hockey-stick}.\\
\indent
Similarly, for the Banzhaf value, the identity from part (1) of Lemma  \ref{properties} holds:
$$
\sum_{S\subseteq N\setminus\{i\},S\cap U=S'}\frac{1}{2^{n-1}}=2^{|N|-|U|}\cdot\frac{1}{2^{n-1}}=\frac{1}{2^{|U|-1}}.
$$
For the carrier-dependence property, we refer the reader to Proposition \ref{simplification coalitional} where the coalitional carrier-dependence is established for a family of coalitional values generalizing the Owen value. 
\end{proof}

We finish this section by discussing the quotient game and the \textit{quotient game property}.
In the context of Explainable AI, these game-theoretic concepts are used in \cite{2021arXiv210210878M} to unify marginal and conditional feature attributions of a model through  appropriate groupings of predictors, and also to address the instability of the former type of attributions.  
For a game $(N,v)$ and a partition $\mathfrak{P}:=\{S_1,\dots,S_m\}$ of its set of players $N$, the quotient game $v^{\mathfrak{P}}$ has $M=\{1,\dots,m\}$ as its set of players and is defined as 
\begin{equation}\label{quotinet game}
v^{\mathfrak{P}}(R):=v\left(\cup_{r\in R}S_r\right) \quad (R\subseteq M).
\end{equation}
A coalitional game value $\mathfrak{h}$ is said to admit the quotient game property if one has 
\begin{equation}\label{quotinet game property}
\sum_{i\in S_j}\mathfrak{h}_i[N,v,\mathfrak{P}]=
\mathfrak{h}_j\big[M,v^{\mathfrak{P}},\bar{M}\big]
\quad (j\in M \text{ and }\bar{M}:=\{\{i\}\mid i\in M\})
\end{equation}
for all $(N,v)$ and $\mathfrak{P}$. 

\begin{example}\label{two-step}
The two-step Shapley value defined in \cite{kamijo2009two} 
\begin{equation}\label{two-step Shapley}
TSh_i[N,v,\mathfrak{P}]:=
\varphi_i[S_j,v]+\frac{1}{|S_j|}\big(\varphi_j\big[M,v^{\mathfrak{P}}\big]-v(S_j)\big)    
\quad (j\in M, i\in S_j)
\end{equation}
has  the coalitional symmetry and quotient game properties, but it does not satisfy the coalitional carrier-dependence or the null-player properties.  
\end{example}

\subsection{An application to categorical variables}\label{subappendix:OneHot}
In this short section, we argue that concepts such as a coalitional game value or quotient game 
naturally emerge in the context of machine learning explanations when   
it comes to one-hot encoding.\\
\indent 
Let us first formalize the setting. Write the predictors as 
\begin{equation}\label{categorical predictors}
\mathbf{X}=(X_1,\dots,X_n)=(\mathbf{X}_C,\mathbf{X}_{-C})     
\end{equation}
where $C\subseteq N=\{1,\dots,n\}$, a subset of size $c\leq n$, captures the indices of categorical feature. Denote the range of the categorical feature $X_i$, $i\in C$, by the finite set 
$\{u_{i,1},\dots,u_{i,\kappa_i}\}$ 
which can be indexed by $\mathcal{K}_i:=\{1,\dots,\kappa_i\}$.
The one-hot encoding of $X_i$ yields a vector of random variables taking values in $\{0,1\}$:
\begin{equation}\label{encoded}
\tilde{\mathbf{X}}^{(i)}:=(\tilde{X}^{(i)}_{1},\dots,\tilde{X}^{(i)}_{\kappa_i})  
\quad (i\in C)\quad 
\text{ where }  \tilde{X}^{(i)}_{j}:=\mathbbm{1}_{X_i=u_{i,j}} \quad (1\leq j\leq\kappa_i).
\end{equation}
Therefore, the set of predictors after the encoding is 
\begin{equation}\label{encoded all}
\tilde{\mathbf{X}}:=\big((\tilde{\mathbf{X}}^{(i)})_{i\in C},\mathbf{X}_{N\setminus C}\big).    
\end{equation}
These are defined on the same probability space $(\Omega,\mathcal{F},\Bbb{P})$, 
and can be indexed with the elements of the disjoint union
\begin{equation}\label{encoded players}
\tilde{N}:=(\sqcup_{i\in C}\mathcal{K}_i)\sqcup (N\setminus C).    
\end{equation}
Given a model $f=f(x_1,\dots.x_n)=f(\mathbf{x}_C,\mathbf{x}_{N\setminus C})$, 
its transformed version $\tilde{f}$ takes points
$\tilde{\mathbf{x}}:=\big((\tilde{\mathbf{x}}^{(i)})_{i\in C},\mathbf{x}_{N\setminus C}\big)$
as inputs where $\tilde{\mathbf{x}}^{(i)}\in\Bbb{R}^{\kappa_i}$ and 
$\mathbf{x}_{N\setminus C}\in\Bbb{R}^{n-c}$.
It should yield the same output as the original model $f$:
\begin{equation}\label{encoded function}
\tilde{f}\big((\mathbf{e}^{(i,j_i)})_{i\in C},\mathbf{x}_{N\setminus C}\big)
=f\big(\mathbf{x}_C=(u_{i,j_i})_{i\in C},\mathbf{x}_{N\setminus C}\big)
\quad (j_i\in\{1,\dots,\kappa_i\}),
\end{equation}
where each $\mathbf{e}^{(i,j_i)}\in\Bbb{R}^{\kappa_i}$ is the standard basis vector whose only 
non-zero entry is $1$, located at the dimension $j_i$.
Notice that, above, the one-hot encoding was done as
\begin{equation}\label{encoding}
\mathbf{x}=\big(\mathbf{x}_C=(u_{i,j_i})_{i\in C},\mathbf{x}_{N\setminus C}\big)\mapsto
\tilde{\mathbf{x}}=\big((\mathbf{e}^{(i,j_i)})_{i\in C},\mathbf{x}_{N\setminus C}\big).
\end{equation}
We next relate the marginal game $v^\ME(\cdot;\mathbf{X},f)$ defined based on the original predictors and model to the marginal game  $v^\ME(\cdot;\tilde{\mathbf{X}},\tilde{f})$ associated with the encoded predictors and the transformed model. The set of players for the former is $N$ while the set of players for the latter is the set $\tilde{N}$ from \eqref{encoded players}. This set admits a natural partition $\mathfrak{P}:=\{\mathcal{K}_i\mid i\in C\}\cup\{\{i\}\mid i\in N\setminus C\}$. 
Its elements are in a one-to-one correspondence with elements of $N$. Consequently, the 
set of players for both $v^\ME(\cdot;\mathbf{X},f)$ and the quotient game 
$v^{\ME,\mathfrak{P}}(\cdot;\tilde{\mathbf{X}},\tilde{f})$
can be identified with $N$.
\begin{lemma}\label{encoded Lemma}
With the notation and convention as above, for any $S\subseteq N$ we have 
$v^\ME(S;\mathbf{X},f)=v^{\ME,\mathfrak{P}}(S;\tilde{\mathbf{X}},\tilde{f})$
almost surely. 
\end{lemma}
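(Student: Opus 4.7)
The plan is to unfold both sides of the claimed identity and check that the two expectations coincide pointwise for ${\rm P}_{\mathbf{X}}$-almost every $\mathbf{x}$. First I would make the natural bijection $\mathfrak{P} \leftrightarrow N$ explicit: each $i \in C$ corresponds to the block $\mathcal{K}_i \in \mathfrak{P}$, and each $i \in N \setminus C$ to the singleton $\{i\} \in \mathfrak{P}$. Under this identification, a subset $S \subseteq N$ is associated with
$$
\tilde{S} := \Big(\sqcup_{i \in S \cap C} \mathcal{K}_i\Big) \sqcup (S \setminus C) \subseteq \tilde{N},
$$
and the quotient game definition \eqref{quotinet game} gives
$$
v^{\ME,\mathfrak{P}}(S; \tilde{\mathbf{X}}, \tilde{f})(\tilde{\mathbf{x}}) \;=\; v^\ME(\tilde{S}; \tilde{\mathbf{X}}, \tilde{f})(\tilde{\mathbf{x}}) \;=\; \Bbb{E}\bigl[\tilde{f}(\tilde{\mathbf{x}}_{\tilde{S}}, \tilde{\mathbf{X}}_{-\tilde{S}})\bigr],
$$
whereas $v^\ME(S; \mathbf{X}, f)(\mathbf{x}) = \Bbb{E}[f(\mathbf{x}_S, \mathbf{X}_{-S})]$.

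Next I would fix an $\mathbf{x}$ in the support of ${\rm P}_{\mathbf{X}}$, let $\tilde{\mathbf{x}}$ denote its one-hot encoding \eqref{encoding}, and argue that $\Bbb{P}$-almost surely
$$
f(\mathbf{x}_S, \mathbf{X}_{-S}) \;=\; \tilde{f}(\tilde{\mathbf{x}}_{\tilde{S}}, \tilde{\mathbf{X}}_{-\tilde{S}}).
$$
This reduces to verifying, coordinate by coordinate, that the argument of $\tilde{f}$ on the right is almost surely the one-hot encoding of the argument of $f$ on the left. Here is the four-way case split: for $i \in S \cap C$, fixing the block $\tilde{\mathbf{x}}^{(i)} = \mathbf{e}^{(i, j_i)}$ is equivalent to fixing $x_i = u_{i, j_i}$; for $i \in C \setminus S$, by \eqref{encoded} the random block $\tilde{\mathbf{X}}^{(i)}$ is almost surely $\mathbf{e}^{(i, j)}$ with $u_{i, j} = X_i$, so its randomness mirrors that of $X_i$ exactly; for $i \in (N \setminus C) \cap S$ the coordinate is fixed to $x_i$ in both games; and for $i \in (N \setminus C) \setminus S$ it is $X_i$ in both games. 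Invoking the defining identity \eqref{encoded function} of $\tilde{f}$ on encoded points then yields the pointwise equality displayed above.

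Taking expectations of that pointwise identity and applying Fubini/dominated convergence (legitimate since $f$ is bounded) gives the equality of the two marginal expectations at $\mathbf{x}$. Since this holds for ${\rm P}_{\mathbf{X}}$-a.e. $\mathbf{x}$, the two random variables in the lemma agree almost surely. The only real obstacle will be the bookkeeping: keeping the translations $\mathfrak{P} \leftrightarrow N$ and $S \leftrightarrow \tilde{S}$ transparent and handling the four-way case split cleanly. Once this is set up, the proof is essentially a direct consequence of \eqref{encoded} and \eqref{encoded function}.
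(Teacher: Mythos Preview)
Your proposal is correct and follows essentially the same approach as the paper: both unfold the quotient-game definition, use that each encoded block $\tilde{\mathbf{X}}^{(i)}$ is almost surely a standard basis vector, and invoke \eqref{encoded function} to identify the integrands before taking expectations. Your version is simply more explicit about the bijection $S\leftrightarrow\tilde S$ and the four-way case split, whereas the paper collapses this into a one-line appeal to \eqref{encoded} and \eqref{encoded function}.
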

\begin{proof}
Let us treat $v^\ME(S;\mathbf{X},f)$ and $v^{\ME,\mathfrak{P}}(S;\tilde{\mathbf{X}},\tilde{f})$ as random variables on $(\Omega,\mathcal{F},\Bbb{P})$ as in \eqref{games-variant}. 
Thus for any $\omega\in\Omega$ one has 
$v^\ME(S;\mathbf{X},f)(\omega)=
\Bbb{E}_{\mathbf{X}_{-S}}\big[f(\mathbf{X}_S(\omega),\mathbf{X}_{-S})\big]
=\Bbb{E}_{\mathbf{X}_{N\setminus S}}\big[f(\mathbf{X}_S(\omega),\mathbf{X}_{N\setminus S})\big]$, and 
\begin{equation}\label{auxiliary23}
v^{\ME,\mathfrak{P}}(S;\tilde{\mathbf{X}},\tilde{f})(\omega)
=\Bbb{E}_{\mathbf{X}_{N\setminus (C\cup S)}}\Bbb{E}_{(\tilde{\mathbf{X}}_i)_{i\in C\setminus S}}
\big[
\tilde{f}\big((\tilde{\mathbf{X}}^{(i)}(\omega))_{i\in C\cap S},
(\tilde{\mathbf{X}}^{(i)})_{i\in C\setminus S},
\mathbf{X}_{S\setminus C}(\omega),
\mathbf{X}_{N\setminus (C\cup S)}\big)
\big], 
\end{equation}
where the definition of the quotient game, \eqref{quotinet game}, was invoked.
Due to the fact that each $\mathbf{X}^{(i)}$ is obtained from one-hot encoding of the categorical feature $X_i$ as in \eqref{encoded},  $\mathbf{X}^{(i)}$, with probability $1$, is equal to a standard basis vector in $\Bbb{R}^{\kappa_i}$. Equation \eqref{encoded function} now implies that the right-hand side of \eqref{auxiliary23}  is the same as 
$\Bbb{E}_{\mathbf{X}_{N\setminus S}}\big[f(\mathbf{X}_S(\omega),\mathbf{X}_{N\setminus S})\big]$
with probability $1$.  
\end{proof}

We are finally in a position to present an application to feature attributions in machine learning. 
Suppose a model is trained using the encoded features $(\tilde{\mathbf{X}}^{(i)})_{i\in C}$, but we want to recover the marginal Shapley values of the original categorical features 
$(X_i)_{i\in C}$. A common misconception is that, to this end, the Shapley values of the encoded features should be added up. The correct answer is to ``enrich'' the Shapley value to its coalitional counterpart, the Owen value. This can be generalized to the game values $h^{\mathcal{A}}$ from 
\eqref{good game value} which mimic the Shapley value: To obtain marginal attributions based on $h^{\mathcal{A}}$ for a categorical feature from the attributions of its encodings, one should employ coalitional analogs $\mathfrak{h}^{\mathcal{A}_1,\mathcal{A}_2}$ of $h^{\mathcal{A}}$---see Appendix \ref{subappendix:classification coalitional}---that imitate the definition of the Owen value.  

\begin{proposition}\label{encoded Proposition}
With the notation and convention as above, we have
$$
\varphi_i\big[v^\ME(\cdot;\mathbf{X},f)\big](\mathbf{x})
=\sum_{j\in\mathcal{K}_i}Ow_j\big[v^\ME(\cdot;\tilde{\mathbf{X}},\tilde{f}),\mathfrak{P}\big](\tilde{\mathbf{x}}) \quad
\text{for }{\rm{P}}_\mathbf{X}\text{-a.e. }\mathbf{x} \text{ and any }i\in N
$$
where $\tilde{\mathbf{x}}$ is obtained from $\mathbf{x}$ after the one-hot encoding, as in \eqref{encoding}.\\
\indent
More generally, consider a game value $h^{\mathcal{A}}$,  
defined as in Theorem \ref{classification} based on a collection 
$\mathcal{A}=\{\alpha(s,n)\}_{\substack{n\in\Bbb{N}\\ 0\leq s<n}}$
of numbers satisfying \eqref{backward Pascal}. One then has 
$$
h^{\mathcal{A}}_i\big[v^\ME(\cdot;\mathbf{X},f)\big](\mathbf{x})
=\sum_{j\in\mathcal{K}_i}\mathfrak{h}^{\mathcal{A}_1,\mathcal{A}_2}_j
\big[v^\ME(\cdot;\tilde{\mathbf{X}},\tilde{f}),\mathfrak{P}\big](\tilde{\mathbf{x}}) 
\quad
\text{for }{\rm{P}}_\mathbf{X}\text{-a.e. }\mathbf{x} \text{ and any }i\in N
$$
where $\mathfrak{h}^{\mathcal{A}_1,\mathcal{A}_2}$ is the coalitional game value defined as in 
\eqref{coalitional generalization} with $\mathcal{A}_1=\mathcal{A}$ 
and $\mathcal{A}_2:=\left\{\alpha_2(s,n):=\frac{s!(n-s-1)!}{n!}\right\}_{\substack{n\in\Bbb{N}\\ 0\leq s<n}}$.
\end{proposition}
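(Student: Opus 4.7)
The plan is to reduce the statement to the quotient-game property of the Owen value (and its appropriate generalization) by invoking Lemma \ref{encoded Lemma}. That lemma already identifies the marginal game of the original pair with the $\mathfrak{P}$-quotient of the marginal game of the encoded pair: for a.e.\ $\mathbf{x}$ and every $S\subseteq N$,
\[
v^\ME(S;\mathbf{X},f)(\mathbf{x})=v^{\ME,\mathfrak{P}}(S;\tilde{\mathbf{X}},\tilde{f})(\tilde{\mathbf{x}}).
\]
Under the canonical bijection between the index set $M$ of $\mathfrak{P}$ and $N$ (where $\mathcal{K}_i\leftrightarrow i\in C$ and the singletons $\{i\}\leftrightarrow i\in N\setminus C$), the right-hand side, viewed as a game on $N$, equals the quotient of the encoded marginal game. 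So it suffices to prove the purely game-theoretic identities
\[
\sum_{j\in\mathcal{K}_i}Ow_j[\tilde{N},v,\mathfrak{P}]=\varphi_i[N,v^{\mathfrak{P}}]\quad\text{and}\quad \sum_{j\in\mathcal{K}_i}\mathfrak{h}^{\mathcal{A}_1,\mathcal{A}_2}_j[\tilde{N},v,\mathfrak{P}]=h^{\mathcal{A}}_i[N,v^{\mathfrak{P}}]
\]
for any game $v$ on $\tilde{N}$ and every $i\in N$, and then plug in $v=v^\ME(\cdot;\tilde{\mathbf{X}},\tilde{f})(\tilde{\mathbf{x}})$.

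To establish the Owen identity, I would start from the explicit formula \eqref{OwenFormula}, sum over $i\in S_j=\mathcal{K}_j$, and swap the order of summation so that the outer sum is over $R\subseteq M\setminus\{j\}$ with weights $\tfrac{|R|!(|M|-|R|-1)!}{|M|!}$. The inner double sum
\[
\sum_{i\in S_j}\sum_{K\subseteq S_j\setminus\{i\}}\frac{|K|!(|S_j|-|K|-1)!}{|S_j|!}\bigl(v(Q\cup K\cup\{i\})-v(Q\cup K)\bigr)
\]
is precisely the sum of Shapley values in the subgame $K\mapsto v(Q\cup K)-v(Q)$ on the player set $S_j$; by Shapley efficiency it collapses to $v(Q\cup S_j)-v(Q)=v^{\mathfrak{P}}(R\cup\{j\})-v^{\mathfrak{P}}(R)$. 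What remains is exactly $\varphi_j[M,v^{\mathfrak{P}}]$, which under the bijection $M\leftrightarrow N$ is $\varphi_i[N,v^{\mathfrak{P}}]$. Combining with Lemma \ref{encoded Lemma} yields the first assertion.

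For the general statement, I would invoke the formula \eqref{coalitional generalization} of $\mathfrak{h}^{\mathcal{A}_1,\mathcal{A}_2}$ which (mirroring the Owen value) decomposes into an outer sum over $R\subseteq M\setminus\{j\}$ with weights $\alpha_1(|R|,|M|)$ and an inner sum over $K\subseteq S_j\setminus\{i\}$ with weights $\alpha_2(|K|,|S_j|)$. The choice $\alpha_2(s,n)=\tfrac{s!(n-s-1)!}{n!}$ is exactly what makes the inner sum, after summing over $i\in S_j$, reduce by Shapley efficiency to $v^{\mathfrak{P}}(R\cup\{j\})-v^{\mathfrak{P}}(R)$ as above. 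What survives is $\sum_R\alpha_1(|R|,|M|)\bigl(v^{\mathfrak{P}}(R\cup\{j\})-v^{\mathfrak{P}}(R)\bigr)=h^{\mathcal{A}_1}_j[M,v^{\mathfrak{P}}]=h^{\mathcal{A}}_i[N,v^{\mathfrak{P}}]$, and combining with Lemma \ref{encoded Lemma} closes the argument.

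The main obstacle is notational rather than conceptual: keeping the bijection between $\mathfrak{P}$ and $N$ consistent, and confirming that the coalitional generalization \eqref{coalitional generalization} indeed has the nested outer/inner structure with $\alpha_2$ acting as ``Shapley weights within a coalition''; once this is fixed, the computation is a one-line telescoping via Shapley efficiency on the inner game. No genuinely new game-theoretic input beyond Lemma \ref{encoded Lemma} and the efficiency of the Shapley value is needed.
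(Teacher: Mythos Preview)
Your proposal is correct and follows essentially the same route as the paper: invoke Lemma \ref{encoded Lemma} to identify $v^\ME(\cdot;\mathbf{X},f)$ with the $\mathfrak{P}$-quotient of $v^\ME(\cdot;\tilde{\mathbf{X}},\tilde{f})$, and then use the quotient game property of $\mathfrak{h}^{\mathcal{A}_1,\mathcal{A}_2}$ (which holds precisely because $\alpha_2$ gives the Shapley weights, so the inner sum collapses by efficiency). The only difference is packaging: the paper cites this quotient game property from Proposition \ref{simplification coalitional}(3), whereas you derive it inline by the same Shapley-efficiency telescoping on the inner game.
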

\begin{proof}
Invoking part (3) of Proposition \ref{simplification coalitional}, coalitional game values  $\mathfrak{h}^{\mathcal{A}_1,\mathcal{A}_2}$ with $\mathcal{A}_2$ as above satisfy the quotient game property \eqref{quotinet game property}. Lemma \ref{encoded Lemma} now concludes the proof.
\end{proof}

\section{Computations related to TreeSHAP}
\subsection{Computations for Example \ref{main example}}\label{subappendix:main example's details}

Let us elaborate by first discussing the case of conditional and marginal games which are defined as in \eqref{games}. It is convenient to write the function implemented by $T_1$ and $T_2$ as  
\begin{equation}\label{main example function}
\begin{split}
g(\mathbf{X})&=c_1\cdot\mathbbm{1}_{R_1}(\mathbf{X})+c_2\cdot\mathbbm{1}_{R_2}(\mathbf{X})+c_3\cdot\mathbbm{1}_{R_3}(\mathbf{X})\\
&=c_1\cdot\mathbbm{1}_{[-1,0]}(X_2)+c_2\cdot\mathbbm{1}_{[-1,0]}(X_1)\cdot\mathbbm{1}_{[0,1]}(X_2)
+c_3\cdot\mathbbm{1}_{[0,1]}(X_1)\cdot\mathbbm{1}_{[0,1]}(X_2).
\end{split}
\end{equation}
One then has 
\small
\begin{equation}\label{main example conditional}
\begin{split}
v^\CE(\{1\};\mathbf{X},g)(x_1,x_2)=&c_1\cdot\Bbb{E}[\mathbbm{1}_{[-1,0]}(X_2)\mid X_1=x_1]\\
&+c_2\cdot\mathbbm{1}_{[-1,0]}(x_1)\cdot\Bbb{E}[\mathbbm{1}_{[0,1]}(X_2)\mid X_1=x_1]
+c_3\cdot\mathbbm{1}_{[0,1]}(x_1)\cdot\Bbb{E}[\mathbbm{1}_{[0,1]}(X_2)\mid X_1=x_1],\\
v^\CE(\{2\};\mathbf{X},g)(x_1,x_2)=&c_1\cdot\mathbbm{1}_{[-1,0]}(x_2)\\
&+c_2\cdot\mathbbm{1}_{[0,1]}(x_2)\cdot\Bbb{E}[\mathbbm{1}_{[-1,0]}(X_1)\mid X_2=x_2]
+c_3\cdot\mathbbm{1}_{[0,1]}(x_2)\cdot\Bbb{E}[\mathbbm{1}_{[0,1]}(X_1)\mid X_2=x_2].
\end{split}
\end{equation}
\normalsize
Conditioning on which of $R_1^{\minus}$, $R_1^{\plus}$, $R_2$ or $R_3$ the point $\mathbf{x}$ belongs to, the difference of expressions above can be written in terms of $c_1,c_2,c_3$ and functions from \eqref{auxiliary1}, hence the first row Table \ref{Tab:main example table}.
As for the marginal game, plugging \eqref{main example function} in the definition yields 
\small
\begin{equation}\label{main example marginal}
\begin{split}
&v^\ME(\{1\};\mathbf{X},g)(x_1,x_2)=c_1\cdot\Bbb{P}(X_2\in [-1,0])
+c_2\cdot\mathbbm{1}_{[-1,0]}(x_1)\cdot\Bbb{P}(X_2\in [0,1])
+c_3\cdot\mathbbm{1}_{[0,1]}(x_1)\cdot\Bbb{P}(X_2\in [0,1]),\\
&v^\ME(\{2\};\mathbf{X},g)(x_1,x_2)=c_1\cdot\mathbbm{1}_{[-1,0]}(x_2)
+c_2\cdot \mathbbm{1}_{[0,1]}(x_2)\cdot\Bbb{P}(X_1\in [-1,0])
+c_3\cdot\mathbbm{1}_{[0,1]}(x_2)\cdot\Bbb{P}(X_1\in [0,1]).
\end{split}    
\end{equation}
\normalsize
Probabilities appearing above can be written in terms of probabilities \eqref{auxiliary2} assigned to subrectangles visible in Figure \ref{fig:failure1}.
Simplifying the difference of expressions from \eqref{main example marginal} then yields the second row of 
Table \ref{Tab:main example table}.\\
\indent
Based on Definition \ref{TreeSHAP definition}, the TreeSHAP games associated with $T_1$ and $T_2$ are given by
\begin{equation}\label{main example TreeSHAP}
\begin{split}
&v^\Tree(\{1\};T_1)=\hat{p}_1c_1+(\hat{p}_2+\hat{p}_3)(c_2\cdot\mathbbm{1}_{x_1\leq 0}+c_3\cdot\mathbbm{1}_{x_1>0}),\\
&v^\Tree(\{2\};T_1)=c_1\cdot\mathbbm{1}_{x_2\leq 0}+
\big(\frac{\hat{p}_2}{\hat{p}_2+\hat{p}_3}c_2+\frac{\hat{p}_3}{\hat{p}_2+\hat{p}_3}c_3\big)\cdot
\mathbbm{1}_{x_2>0};\\
&v^\Tree(\{1\};T_2)=
\big(\frac{\hat{p}_1^{\minus}}{\hat{p}_1^{\minus}+\hat{p}_2}c_1+\frac{\hat{p}_2}{\hat{p}_1^{\minus}+\hat{p}_2}c_2\big)
\cdot\mathbbm{1}_{x_1\leq 0}
+\big(\frac{\hat{p}_1^{\plus}}{\hat{p}_1^{\plus}+\hat{p}_3}c_1+\frac{\hat{p}_3}{\hat{p}_1^{\plus}+\hat{p}_3}c_3\big)
\cdot\mathbbm{1}_{x_1>0},\\
&v^\Tree(\{2\};T_2)=
(\hat{p}_1^{\minus}+\hat{p}_2)(c_1\cdot\mathbbm{1}_{x_2\leq 0}+c_2\cdot\mathbbm{1}_{x_2>0})
+(\hat{p}_1^{\plus}+\hat{p}_3)(c_1\cdot\mathbbm{1}_{x_2\leq 0}+c_3\cdot\mathbbm{1}_{x_2>0}).
\end{split}
\end{equation}
\normalsize
Their differences are recorded in the last two rows of Table \ref{Tab:main example table}.
In particular, over $R_2$, where $x_1\leq 0$ and $x_2\geq 0$, the differences become 
$$
v^\Tree(\{1\};T_1)-v^\Tree(\{2\};T_1)=
c_1\hat{p}_1+c_3(\hat{p}_2+\hat{p}_3)-\big(c_2\frac{\hat{p}_2}{\hat{p}_2+\hat{p}_3}+c_3\frac{\hat{p}_3}{\hat{p}_2+\hat{p}_3}\big),
$$
and 
$$
v^\Tree(\{1\};T_2)-v^\Tree(\{2\};T_2)=
\big(c_1\frac{\hat{p}_1^{\minus}}{\hat{p}_1^{\minus}+\hat{p}_2}+c_2\frac{\hat{p}_2}{\hat{p}_1^{\minus}+\hat{p}_2}\big)-\big(c_2(\hat{p}_1^{\minus}+\hat{p}_2)+c_3(\hat{p}_1^{\plus}+\hat{p}_3)\big).
$$
For the parameters in \eqref{parameters}, the former is negative while the latter becomes positive. At the same time, the impurity measures are very close for the two trees; see \eqref{impurity1} and \eqref{impurity2}.

\subsection{Eject TreeSHAP}\label{subappendix:eject}
Here, we discuss the ``eject'' variant of TreeSHAP  \cite[Algorithm 3]{campbell2022exact}, and 
exhibit an example which demonstrates that this method also
suffers from dependence on model make-up.

\begin{definition}\label{Eject definition}
With the notation as in Definition \ref{TreeSHAP definition}, consider a trained ensemble $\mathcal{T}$ of regressor trees.
One can assume that, during the training process, values are assigned to non-terminal nodes of trees from $\mathcal{T}$ the same way values at terminal nodes are determined. The most natural example is to take the value of a node to be the average of response values of the training instances ended up there.
We define the associated eject TreeSHAP game 
$v^\TreeEj(\cdot;\mathcal{T})$ as 
$v^\TreeEj(\cdot;\mathcal{T})
=\sum_{T\in\mathcal{T}}v^\TreeEj(\cdot;T)$
where, for decision trees, games $v^\TreeEj(\cdot;T)$ are defined recursively 
in the following manner. 
In case that $T$ has no splits (so $T$ is a single leaf), $v^\TreeEj(\cdot;T)$ assigns the value at the unique leaf of $T$ to every subset of $N=\{1,\dots,n\}$.   
Next suppose the split at the root of $T$ takes place with respect  to feature  
$X_{i_*}$ and threshold $t_*$. Then, with $T^<$, $T^>$, $D^<$ and $D^>$ as in Definition \ref{TreeSHAP definition}, we set 
\begin{equation}\label{Eject game}
    v^\TreeEj(S;T)(\mathbf{x}):=
    \begin{cases}
    v^\TreeEj(S;T^>)(\mathbf{x}) &\text{if }i_*\in S \text{ and } x_{i_*}>t_*\\
    v^\TreeEj(S;T^<)(\mathbf{x}) &\text{if }i_*\in S \text{ and } x_{i_*}<t_*\\
    \text{the value assigned to the root of }T & \text{if }i_*\notin S
    \end{cases}
    \quad (\mathbf{x}\in\Bbb{R}^n, S\subseteq N).
\end{equation}
\end{definition}
\noindent 
As pointed out in \cite{campbell2022exact}, ``local dummy players'' emerge in this setting: 
$X_i$ may appear in a tree $T\in\mathcal{T}$ but $i\in N$ can be a dummy (null) player of $v^\TreeEj(\cdot;T)(\mathbf{x})$ if $\mathbf{x}$ ends up at a leaf of $T$ whose path to the root does not encounter any split on $X_i$.

\begin{example}\label{Eject example}
Again, we work with a problem with two predictors $(X_1,X_2)$ supported in the rectangle $\mathcal{B}=[-1,1]\times[-1,1]$. 
Consider the partition of $\mathcal{B}$ illustrated in Figure \ref{fig:failure2} into subsquares $[-1,0]\times[-1,0]$, $[-1,0]\times[0,1]$, $[0,1]\times[-1,0]$ and $[0,1]\times[0,1]$; we assume the probability assigned to each of them is $0.25$. The decision trees $T_1$ and $T_2$ appearing in that picture both compute the simple function 
$$g:=c_1\cdot\mathbbm{1}_{[-1,0]\times[-1,0]}
+c_2\cdot\mathbbm{1}_{[-1,0]\times[0,1]}
+c_3\cdot\mathbbm{1}_{[0,1]\times[-1,0]}
+c_1\cdot\mathbbm{1}_{[0,1]\times[0,1]}.$$  
Indeed, these trees differ only in terms of the order of levels. The reader can easily check that they also have identical impurity measures (the same Gini impurity and the same weighted variance after the splits at roots).
As in Definition \ref{Eject game}, one can define games $v^\TreeEj(\cdot;T_1)$ and $v^\TreeEj(\cdot;T_2)$ based on these trees. Here, we assume that the value assigned to a non-terminal node is the average of response variables for training instances ended up there. Assuming that the training set is very large, the value at a node becomes very close to the average of values of its descendant leaves (keep in mind that, here, for each split the two alternatives  are equally likely). Over the top-left subsquare one has:
\begin{equation}\label{eject games}
\begin{split}
&v^\TreeEj(\{1\};T_1)(\mathbf{x})\approx\frac{c_1+c_2}{2}, 
\quad v^\TreeEj(\{2\};T_1)(\mathbf{x})\approx\frac{2c_1+c_2+c_3}{4},\\
&v^\TreeEj(\{1\};T_2)(\mathbf{x})\approx\frac{2c_1+c_2+c_3}{4},
\quad v^\TreeEj(\{2\};T_2)(\mathbf{x})\approx\frac{c_1+c_2}{2},
\end{split}    
\end{equation}
where $\mathbf{x}\in[-1,0]\times[0,1]$. Notice how the values of the eject game at $\{1\}$ and $\{2\}$ are swapped when we switch from $T_1$ to $T_2$. Therefore, assuming $c_2\neq c_3$, the differences $\Delta\varphi=\varphi_1-\varphi_2$ of the Shapley values (cf. \eqref{Delta}) for these two games have opposite signs. 
This means that, over the top-left square, i.e. for roughly $25\%$ of data points, the eject TreeSHAP applied to $T_1$ ranks features differently than the eject TreeSHAP applied to $T_2$.
\end{example}

\begin{figure}
\includegraphics[width=14cm]{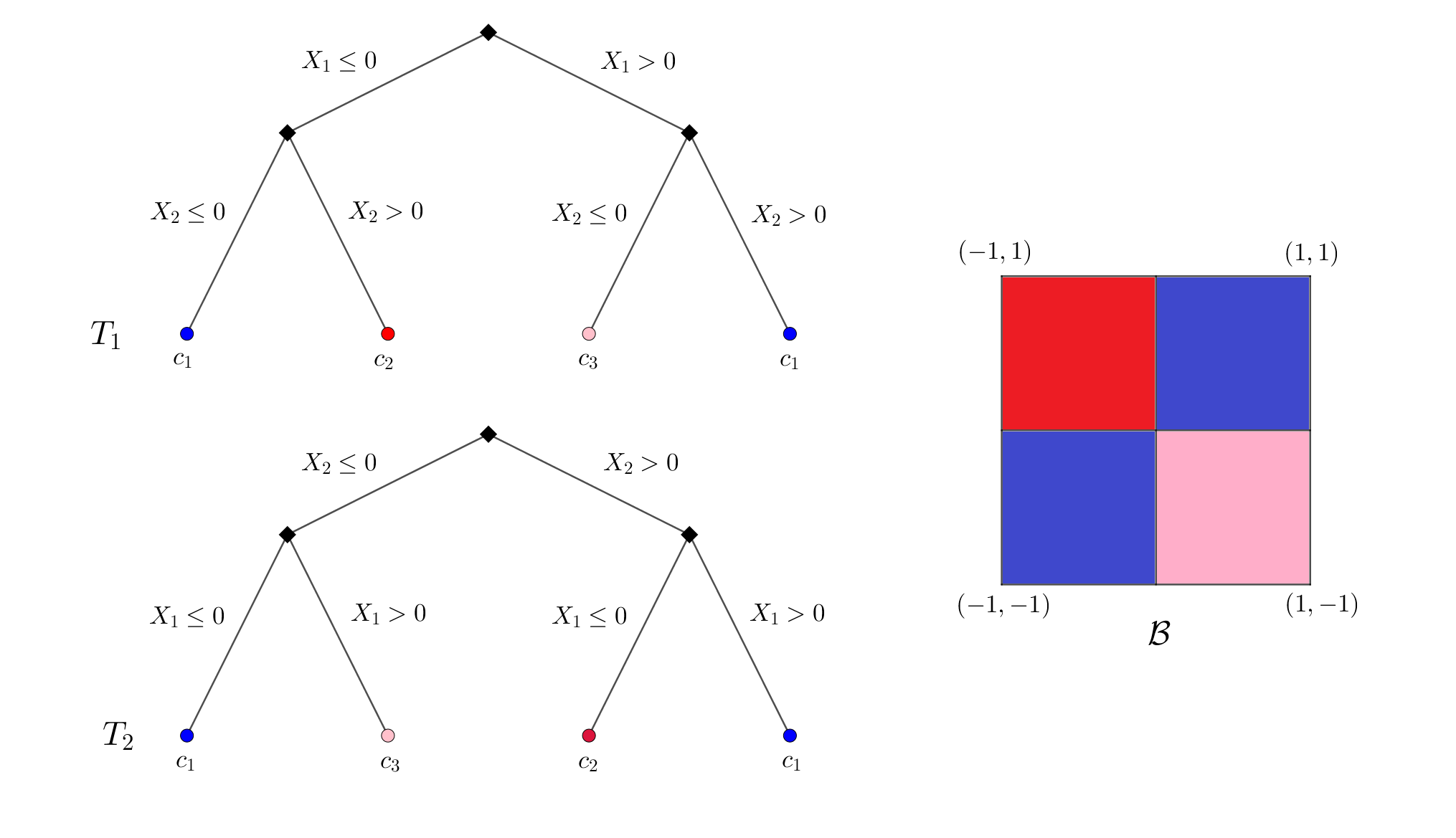}
\caption{The picture for Example \ref{Eject example} demonstrating that eject TreeSHAP (\cite{campbell2022exact}) can depend on the model make-up. For the two decision trees on the left, the splits are the same but occur in different orders.
The trees compute the same function 
$g=c_1\cdot\mathbbm{1}_{[-1,0]\times[-1,0]}
+c_2\cdot\mathbbm{1}_{[-1,0]\times[0,1]}
+c_3\cdot\mathbbm{1}_{[0,1]\times[-1,0]}
+c_1\cdot\mathbbm{1}_{[0,1]\times[0,1]}$
and determine the partition on the right of $\mathcal{B}=[-1,1]\times[-1,1]$, where the features are supported, into four subsquares. Under the assumption that these subsquares are equally probable, the associated eject TreeSHAP games (cf. Definition \ref{Eject definition}) 
are presented in \eqref{eject games}. The rankings of features based on the Shapley values of these games are never the same for inputs from the top-left subsquare (unless $c_2=c_3$).} 
\label{fig:failure2}
\end{figure}

\section{Technical proofs}\label{appendix:proofs}

\subsection{Proofs for \Sec \ref{subsec:theorem}}\label{subappendix:simple}
\begin{proof}[Proof of Theorem \ref{main theorem}]
Let us first consider the case of the marginal game. Writing the function $g$ computed by $T$ as 
$g=c_1\cdot\mathbbm{1}_{R_1}+\dots+c_\ell\cdot\mathbbm{1}_{R_\ell}$, 
the  partition $\mathscr{P}(T)$ of $\mathcal{B}$ is given by $\{R_1,\dots,R_\ell\}$.
The marginal game is linear:
$$
v^\ME(S;\mathbf{X},g)=\sum_{i=1}^\ell c_i\cdot v^\ME(S;\mathbf{X},\mathbbm{1}_{R_i}).
$$
Therefore, it suffices to prove the first part in the case of a simple function $\mathbbm{1}_R$ where 
$R\in\mathscr{P}(T)$. We shall show that for any $S\subseteq N$ and any grid element 
$\tilde{R}\in\widetilde{\mathscr{P}(T)}$, 
the function $\mathbf{x}\mapsto v^\ME(S;\mathbf{X},\mathbbm{1}_R)(\mathbf{x})$ 
is constant on the interior of $\tilde{R}$. This will imply that $v^\ME(S;\mathbf{X},\mathbbm{1}_R)$ is ${\rm{P}}_\mathbf{X}$-a.s. constant since, due to hypothesis \eqref{assumption}, the boundary of each grid element is of measure zero, i.e. 
${\rm{P}}_\mathbf{X}(\partial\tilde{R})=0$. 
Using the definition \eqref{games} of the marginal game, one has
\begin{equation}\label{auxiliary8}
v^\ME(S;\mathbf{X},\mathbbm{1}_R)(\mathbf{x})=\Bbb{E}[\mathbbm{1}_R(x_S,\mathbf{X}_{-S})]
=\mathbbm{1}_{R_S}(x_S)\cdot\Bbb{E}[\mathbbm{1}_{R_{-S}}(\mathbf{X}_{-S})]
\end{equation}
where, following the notation in \Sec \ref{subsec:convention}, $R_S$ and $R_{-S}$ are projections of the rectangle $R\subset\Bbb{R}^n$ onto coordinates $(x_i)_{i\in S}$ and $(x_i)_{i\in N\setminus S}$ respectively. 
A key point to notice is that for 
any two rectangles $R'\in\mathscr{P}(T)$ and $\tilde{R}'\in\widetilde{\mathscr{P}(T)}$, and any $S'\subseteq N$, 
either $\tilde{R}'_{S'}\subseteq R'_{S'}$ or the interiors of $R'_{S'}$ and $\tilde{R}'_{S'}$ are disjoint. 
This is due to the fact that the grid $\widetilde{\mathscr{P}(T)}$ is the product of the partitions determined by projections of $\mathscr{P}(T)$ along various dimensions; see Figure \ref{fig:partition}.
We conclude that when $\mathbf{x}\in{\rm{int}}(\tilde{R})$, \eqref{auxiliary8} may be written as
$$
v^\ME(S;\mathbf{X},\mathbbm{1}_R)(\mathbf{x})
=\begin{cases}
\sum_{\left\{\tilde{R}'\in\widetilde{\mathscr{P}(T)}\mid \tilde{R}'_{-S}\subseteq R_{-S}\right\}}{\rm{P}}_{\mathbf{X}}(\tilde{R}')& 
\text{if } \tilde{R}_S\subseteq R_S,\\
0&\text{otherwise.}
\end{cases}
$$
Consequently, $\mathbf{x}\in {\rm{int}}(\tilde{R})\mapsto v^\ME(S;\mathbf{X},\mathbbm{1}_R)(\mathbf{x})$
is a constant function whose value is a linear combination of elements of
$\left\{{\rm{P}}_{\mathbf{X}}(\tilde{R}')\mid \tilde{R}'\in\widetilde{\mathscr{P}(T)}\right\}$.\\
\indent
Next, we establish the claims made in part (2) on the TreeSHAP game $v^\Tree(\cdot;T)$. 
As before, we denote the regions corresponding to the leaves of $T$ by $R_1,\dots,R_\ell$ and the leaf scores by 
$c_1,\dots,c_\ell$. Moreover, the paths from the root of $T$ to the leaves are denoted by 
$\mathsf{P}_1,\dots,\mathsf{P}_\ell$. 
The recursive formula \eqref{TreeSHAP game} for $v^\Tree(\cdot;T)$ can be written explicitly in terms of these paths as 
\begin{equation}\label{TreeSHAP game explicit} 
v^\Tree(S;T)(\mathbf{x})=\sum_{i=1}^\ell c_i\cdot\tau(S;\mathsf{P}_i)(\mathbf{x}),
\end{equation}
where for any path $\mathsf{P}$ from the root of $T$ to a leaf and any $S\subseteq N$, the function 
$\mathbf{x}\mapsto\tau(S;\mathsf{P})(\mathbf{x})$ is defined  as follows: 
Starting from the root, write the non-leaf nodes of $\mathsf{P}$ as $\mathsf{v}_1,\dots,\mathsf{v}_{m}$
and the leaf where $\mathsf{P}$ terminates as $\mathsf{v}_{m+1}$.
For any $1\leq u\leq m$,
record the split at $\mathsf{v}_u$ as 
$$
\left(X_{j_u},t_{j_u},\epsilon_u\right) (j_u\in N, t_{j_u}\in\Bbb{R}\text{ and }\epsilon_u\in\{\pm 1\})
$$
where, to follow the path and get from $\mathsf{v}_u$ to the next node of $\mathsf{P}$, one should have 
$\epsilon_u(X_{j_u}-t_{j_u})>0$, rather than the alternative $\epsilon_u(X_{j_u}-t_{j_u})<0$.
(Keep in mind that the events $X_{j_u}=t_{j_u}$ are of probability zero due to our hypothesis \eqref{assumption}, and are neglected henceforth.)
The function $\tau(S;\mathsf{P})$ is now defined as
\begin{equation}\label{path function}
\begin{split}
&\tau(S;\mathsf{P}):=\prod_{u=1}^m\tau^{(u)}(S;\mathsf{P}) \text{ where }
\tau^{(u)}(S;\mathsf{P})(\mathbf{x}):=
\begin{cases}
1&\text{ if }j_u\in S \text{ and }\epsilon_u(x_{j_u}-t_{j_u})>0,\\
0&\text{ if }j_u\in S \text{ and }\epsilon_u(x_{j_u}-t_{j_u})<0,\\
w_u(\mathsf{P})&\text{ otherwise;}
\end{cases}\\
&\text{ in which }w_u(\mathsf{P}):=
\frac{|\{\mathbf{x}\in D\mid \epsilon_1(x_{j_1}-t_{j_1})>0,\dots,\epsilon_u(x_{j_u}-t_{j_u})>0\}|}
{|\{\mathbf{x}\in D\mid \epsilon_1(x_{j_1}-t_{j_1})>0,\dots,\epsilon_{u-1}(x_{j_{u-1}}-t_{j_{u-1}})>0\}|}.    
\end{split}
\end{equation}
Notice that $w_u(\mathsf{P})$ is the following ratio:
$$
w_u(\mathsf{P})=\frac{\text{ number of data points ended up at }\mathsf{v}_{u+1}}
{\text{ number of data points ended up at }\mathsf{v}_{u}}
=\frac{\sum_{R\in\mathscr{P}(T) \text{ corresponds to a leaf descendant of }\mathsf{v}_{u+1}}\hat{{\rm{P}}}_{\mathbf{X}}(R)}
{\sum_{R\in\mathscr{P}(T) \text{ corresponds to a leaf descendant of }\mathsf{v}_{u}}\hat{{\rm{P}}}_{\mathbf{X}}(R)}.
$$
We now finish the proof. Throughout the interior of any grid element $\tilde{R}\in\widetilde{\mathscr{P}(T)}$,
each expression $\epsilon_u(x_{j_u}-t_{j_u})$ is either always positive or always negative. Hence the function $\tau(S;\mathsf{P})$ does not vary therein. Its value  on ${\rm{int}}(\tilde{R})$
($0$, $1$ or a product of numbers $w_u(\mathsf{P})$)
is a rational expression of terms $\hat{{\rm{P}}}_{\mathbf{X}}(R)\,(R\in \mathscr{P}(T))$. 
All these hold for the restriction of 
$v^\Tree(S;T)$ to ${\rm{int}}(\tilde{R})$ too since $v^\Tree(S;T)$ is a linear combination of functions 
$\tau(S;\mathsf{P})$ as in \eqref{TreeSHAP game explicit}.
\end{proof}

\subsection{Proofs for \Sec \ref{subsec:CatBoost}}\label{subappendix:CatBoost}
We begin with the following combinatorial lemma which is used multiple times in our proofs.
\begin{lemma}\label{hockey-stick}
Let $N$ be a finite set and $Z\subseteq W$ two proper subsets of $N$. Then
$$
\sum_{Z\subseteq S\subseteq W}\frac{|S|!\,(|N|-|S|-1)!}{|N|!}=\frac{|Z|!\,(|N|-|W|-1)!}{(|N|+|Z|-|W|)!}.
$$
\end{lemma}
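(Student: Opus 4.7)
The plan is to interpret each summand through the beta integral identity
\begin{equation*}
\frac{s!\,(n-s-1)!}{n!} \;=\; \int_0^1 t^{s}(1-t)^{n-s-1}\,dt \qquad (0\le s\le n-1),
\end{equation*}
which is valid here because the hypothesis $W\subsetneq N$ forces $|N|-|S|-1\ge |N|-|W|-1\ge 0$ for every $S$ in the index set. Writing $n:=|N|$, $z:=|Z|$, $w:=|W|$, substituting and interchanging the finite sum with the integral reduces the claim to evaluating
\begin{equation*}
\int_0^1 \Big(\sum_{Z\subseteq S\subseteq W} t^{|S|}(1-t)^{n-|S|-1}\Big)\,dt.
\end{equation*}

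Next I would reparametrize the inner sum by writing each $S$ with $Z\subseteq S\subseteq W$ uniquely as $S=Z\sqcup S'$ with $S'\subseteq W\setminus Z$, and factor the exponents using $|S|=z+|S'|$ together with $n-|S|-1=(n-w-1)+(w-z-|S'|)$. This pulls the $Z$- and $(N\setminus W)$-dependent part out of the sum, leaving
\begin{equation*}
\int_0^1 t^{z}(1-t)^{n-w-1}\sum_{S'\subseteq W\setminus Z} t^{|S'|}(1-t)^{(w-z)-|S'|}\,dt
\;=\;\int_0^1 t^{z}(1-t)^{n-w-1}\big(t+(1-t)\big)^{w-z}\,dt,
\end{equation*}
where the telescoping of the inner sum to $1$ is just the binomial theorem. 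Evaluating the remaining beta integral gives $\dfrac{z!\,(n-w-1)!}{(n-w+z)!}=\dfrac{|Z|!\,(|N|-|W|-1)!}{(|N|+|Z|-|W|)!}$, which is the right-hand side.

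The main obstacle is essentially bookkeeping: confirming that the exponent factorization is correct and that the edge case $W=N$ (where the beta integral would diverge) is excluded by the standing hypothesis that $W$ is a proper subset of $N$. An alternative route would be induction on $|W\setminus Z|$, using a Pascal-style split over whether a distinguished element of $W\setminus Z$ belongs to $S$, but the integral proof above is cleaner and avoids manipulating ratios of factorials directly.
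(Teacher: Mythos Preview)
Your proof is correct and takes a genuinely different route from the paper's. The paper first groups terms by $|S|=s$, rewrites the summand as $\frac{z!(w-z)!(n-w-1)!}{n!}\binom{s}{z}\binom{n-s-1}{n-w-1}$, and then invokes the Vandermonde-type identity $\sum_{s=z}^{w}\binom{s}{z}\binom{n-s-1}{n-w-1}=\binom{n}{n+z-w}$ (a generalization of the hockey-stick identity) to collapse the sum. Your argument instead recognizes each Shapley weight as a beta integral, so that the sum over $S$ with $Z\subseteq S\subseteq W$ becomes an integral of a product over the elements of $W\setminus Z$; the binomial theorem then collapses the inner sum to $1$ without any combinatorial identity beyond $(t+(1-t))^{w-z}=1$, and a single beta evaluation finishes. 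Both arguments need $W\subsetneq N$ to keep the relevant factorial (respectively the exponent $n-w-1$) nonnegative, and both mention the induction on $|W\setminus Z|$ as an alternative. Your approach is arguably cleaner in that it sidesteps the hockey-stick identity entirely; the paper's approach has the advantage of staying purely combinatorial and making the connection to the lemma's name explicit.
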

\begin{proof}
One way to establish this identity is by induction on $|W\setminus Z|$. 
For a direct proof, denote $|S|,|N|,|Z|,|W|$ by $s,n,z,w$ respectively. We then have 
\begin{equation*}
\begin{split}
\sum_{Z\subseteq S\subseteq W}\frac{|S|!\,(|N|-|S|-1)!}{|N|!}
&=\sum_{s=z}^w\frac{s!(n-s-1)!}{n!}\binom{w-z}{s-z}
=\frac{z!(w-z)!(n-w-1)!}{n!}\sum_{s=z}^w\binom{s}{z}\binom{n-s-1}{n-w-1}\\
&=\frac{z!(w-z)!(n-w-1)!}{n!}\binom{n}{n+z-w}=\frac{z!(n-w-1)!}{(n+z-w)!};
\end{split}    
\end{equation*}
where on the last line we have used
$\sum_{s=z}^w\binom{s}{z}\binom{n-s-1}{n-w-1}=\binom{n}{n+z-w}$
which is a generalization of the \textit{hockey-stick identity} for binomial coefficients. 
\end{proof}

\begin{proof}[Proof of Theorem \ref{Catboost theorem}]
By the virtue of \eqref{fewer features},
formula \eqref{symmetric formula ensemble} for $\varphi_i\big[v^\ME\big]$
may be obtained from the case of a single tree. 
Therefore, we shall derive a formula for marginal Shapley and Banzhaf values of a single oblivious decision tree $T$.
With the notation as in Definition-Notation \ref{groundwork}, suppose $T$ is of depth $m$ and
write the distinct features on which the tree splits as 
$X'_1,\dots,X'_k$.
So the tree $T$ implements a simple function $g:\Bbb{R}^k\rightarrow\Bbb{R}$; and the goal is to investigate Shapley and Banzhaf values of the corresponding  marginal game   
$$
v^\ME=v^\ME(\cdot,\mathbf{X}',g)\quad (\mathbf{X}':=(X'_1,\dots,X'_k))
$$
which to each $\mathbf{x}\in\Bbb{R}^k$ assigns a game played on $K=\{1,\dots,k\}$. 
We will suppress $\mathbf{X}',g$ for the sake of brevity hereafter.\\
\indent
The leaves of $T$ can be encoded by elements of $\{0,1\}^m$. The subset of realizable codes is denoted by 
$\mathcal{R}\subseteq\{0,1\}^m$ which captures leaves that determine non-vacuous regions. The leaf score and
the rectangular region in $\Bbb{R}^k$ corresponding to a binary code $\mathbf{b}\in\{0,1\}^m$ are written as $c_{\mathbf{b}}$ and $R_{\mathbf{b}}$ respectively. We set 
$$p_{\mathbf{b}}:=\Bbb{P}\left(\mathbf{X}'\in R_{\mathbf{b}}\right)
=\Bbb{E}\left[\mathbbm{1}_{R_\mathbf{b}}(\mathbf{X}')\right].$$
The grid partition of the hypercube $\mathcal{B}\subset\Bbb{R}^k$, where the features are supported, is
given by 
$$\mathscr{P}(T)=\widetilde{\mathscr{P}(T)}=\{R_{\mathbf{b}}\}_{\mathbf{b}\in\mathcal{R}}.$$
One can write $g$ as 
$g(\mathbf{x})=\sum_{\mathbf{b}\in\mathcal{R}}c_{\mathbf{b}}\cdot\mathbbm{1}_{R_\mathbf{b}}(\mathbf{x})$. 
Thus for any $\mathbf{x}\in\Bbb{R}^k$ and $Q\subseteq K$:
\begin{equation}\label{auxiliary9}
v^\ME(Q)(\mathbf{x})=
\sum_{\mathbf{b}\in\mathcal{R}}c_{\mathbf{b}}\cdot
\Bbb{E}\left[\mathbbm{1}_{R_\mathbf{b}}(\mathbf{x}_Q,\mathbf{X}'_{-Q})\right]
=\sum_{\mathbf{b}\in\mathcal{R}}c_{\mathbf{b}}\cdot
\mathbbm{1}_{R_{\mathbf{b},Q}}(\mathbf{x}_Q)\cdot
\Bbb{P}\left(\mathbf{X}'_{-Q}\in R_{\mathbf{b},-Q}\right),    
\end{equation}
where the subscripts $Q$ and $-Q$ denote projections 
$(x_i)_{i\in K}\mapsto(x_i)_{i\in Q}$ and $(x_i)_{i\in K}\mapsto(x_i)_{i\in K\setminus Q}$
respectively. 
Boundaries of regions cut by $T$ are of measure zero due to \eqref{assumption}. Hence ${\rm{P}}_{\mathbf{X}}$-a.e. data point $\mathbf{x}\in\Bbb{R}^k$ belongs to the interior of a grid element; say 
$\mathbf{x}\in {\rm{int}}(R_{\mathbf{a}})$ where $\mathbf{a}\in\mathcal{R}$.
Projections $R_{\mathbf{a},Q}$ and $R_{\mathbf{b},Q}$ of two different grid elements have an interior point in common if and only if the paths from the root to the leaves encoded by $\mathbf{a}$ and $\mathbf{b}$ split similarly whenever a feature $X'_i$ with $i\in Q$ is encountered. Recall that the levels of $T$ on which  features $X'_1,\dots,X'_k$ appear determine a partition $\mathsf{p}=\{S_1,\dots,S_k\}$ of $M$.
Therefore, the interiors of $R_{\mathbf{a},Q}$ and $R_{\mathbf{b},Q}$ intersect non-trivially if and only if
$\mathbf{a}_{\mathsf{p}(Q)}=\mathbf{b}_{\mathsf{p}(Q)}$ where $\mathsf{p}(Q)$ is the disjoint union 
\begin{equation}\label{partition_set}
\mathsf{p}(Q):=\cup_{q\in Q}S_q.
\end{equation}
Similarly, the projection onto coordinates in $K\setminus Q$ belongs to ${\rm{int}}(R_{\mathbf{b},-Q})$ 
precisely when on levels indexed by elements of $-\mathsf{p}(Q):=M\setminus \mathsf{p}(Q)$ we split in the same manner that the path to the leaf encoded by $\mathbf{b}$ splits. 
In view of all this, \eqref{auxiliary9} yields:
\begin{equation}\label{auxiliary10}
v^\ME(Q)(\mathbf{x})=
\sum_{\substack{\mathbf{b}\in\mathcal{R}\\
\mathbf{b}_{\mathsf{p}(Q)}=\mathbf{a}_{\mathsf{p}(Q)}}}
c_{\mathbf{b}}\cdot
\Bigg(\sum_{\substack{\mathbf{u}\in\mathcal{R}\\
\mathbf{u}_{-\mathsf{p}(Q)}=\mathbf{b}_{-\mathsf{p}(Q)}}}
p_{\mathbf{u}}\Bigg),
\quad Q\subseteq K \text{ and }\mathbf{x}\in{\rm{int}}(R_{\mathbf{a}}).
\end{equation}
Fix $\mathbf{a}\in\mathcal{R}$, $\mathbf{x}\in{\rm{int}}(R_\mathbf{a})$ and $i\in K$. 
We will use \eqref{auxiliary10} to simplify 
\begin{equation}\label{auxiliary11}
\varphi_i\big[v^\ME\big](\mathbf{x})=
\sum_{Q\subseteq K\setminus\{i\}}\frac{|Q|!\,(|K|-|Q|-1)!}{|K|!}
\left(v^\ME(Q\cup\{i\})(\mathbf{x})-v^\ME(Q)(\mathbf{x})\right).
\end{equation}
Notice that for any $Q\subseteq K\setminus\{i\}$:
\begin{equation*}
v^\ME(Q)(\mathbf{x})
=\sum_{\substack{\mathbf{b}\in\mathcal{R}\\
\mathbf{b}_{S_i}=\mathbf{a}_{S_i}\\
\mathbf{b}_{\mathsf{p}(Q)}=\mathbf{a}_{\mathsf{p}(Q)}}}
\sum_{\substack{\mathbf{u}\in\mathcal{R}\\
\mathbf{u}_{S_i}=\mathbf{b}_{S_i}\\
\mathbf{u}_{-\mathsf{p}(Q\cup\{i\})}=\mathbf{b}_{-\mathsf{p}(Q\cup\{i\})}}}
c_{\mathbf{b}}\,p_{\mathbf{u}}
+\sum_{\substack{\mathbf{b}\in\mathcal{R}\\
\mathbf{b}_{S_i}\neq \mathbf{a}_{S_i}\\
\mathbf{b}_{\mathsf{p}(Q)}=\mathbf{a}_{\mathsf{p}(Q)}}}
\sum_{\substack{\mathbf{u}\in\mathcal{R}\\
\mathbf{u}_{S_i}=\mathbf{b}_{S_i}\\
\mathbf{u}_{-\mathsf{p}(Q\cup\{i\})}=\mathbf{b}_{-\mathsf{p}(Q\cup\{i\})}}}
c_{\mathbf{b}}\,p_{\mathbf{u}};
\end{equation*}
and
\begin{equation*}
v^\ME(Q\cup\{i\})(\mathbf{x})
=\sum_{\substack{\mathbf{b}\in\mathcal{R}\\
\mathbf{b}_{S_i}=\mathbf{a}_{S_i}\\
\mathbf{b}_{\mathsf{p}(Q)}=\mathbf{a}_{\mathsf{p}(Q)}}}
\sum_{\substack{\mathbf{u}\in\mathcal{R}\\
\mathbf{u}_{S_i}=\mathbf{b}_{S_i}\\
\mathbf{u}_{-\mathsf{p}(Q\cup\{i\})}=\mathbf{b}_{-\mathsf{p}(Q\cup\{i\})}}}
c_{\mathbf{b}}\,p_{\mathbf{u}}
+\sum_{\substack{\mathbf{b}\in\mathcal{R}\\
\mathbf{b}_{S_i}=\mathbf{a}_{S_i}\\
\mathbf{b}_{\mathsf{p}(Q)}=\mathbf{a}_{\mathsf{p}(Q)}}}
\sum_{\substack{\mathbf{u}\in\mathcal{R}\\
\mathbf{u}_{S_i}\neq\mathbf{b}_{S_i}\\
\mathbf{u}_{-\mathsf{p}(Q\cup\{i\})}=\mathbf{b}_{-\mathsf{p}(Q\cup\{i\})}}}
c_{\mathbf{b}}\,p_{\mathbf{u}}.
\end{equation*}
Substituting in \eqref{auxiliary11}:
\begin{equation}\label{auxiliary12}
\begin{split}
\varphi_i\big[v^\ME\big](\mathbf{x})=
\sum_{Q\subseteq K\setminus\{i\}}\frac{|Q|!\,(|K|-|Q|-1)!}{|K|!}
\Bigg(
&\sum_{\substack{\mathbf{b}\in\mathcal{R}\\
\mathbf{b}_{S_i}\neq \mathbf{a}_{S_i}\\
\mathbf{b}_{\mathsf{p}(Q)}=\mathbf{a}_{\mathsf{p}(Q)}}}
\sum_{\substack{\mathbf{u}\in\mathcal{R}\\
\mathbf{u}_{S_i}=\mathbf{b}_{S_i}\\
\mathbf{u}_{-\mathsf{p}(Q\cup\{i\})}=\mathbf{b}_{-\mathsf{p}(Q\cup\{i\})}}}
c_{\mathbf{b}}\,p_{\mathbf{u}}\\
&-\sum_{\substack{\mathbf{b}\in\mathcal{R}\\
\mathbf{b}_{S_i}=\mathbf{a}_{S_i}\\
\mathbf{b}_{\mathsf{p}(Q)}=\mathbf{a}_{\mathsf{p}(Q)}}}
\sum_{\substack{\mathbf{u}\in\mathcal{R}\\
\mathbf{u}_{S_i}\neq\mathbf{b}_{S_i}\\
\mathbf{u}_{-\mathsf{p}(Q\cup\{i\})}=\mathbf{b}_{-\mathsf{p}(Q\cup\{i\})}}}
c_{\mathbf{b}}\,p_{\mathbf{u}}
\Bigg).    
\end{split}
\end{equation}
\normalsize
The preceding formula can be simplified by noticing that a pair $(\mathbf{b},\mathbf{u})$ of realizable binary codes may come up more than once. In each double summation from \eqref{auxiliary12},  
conditions are posed on bits that belong to $S_i$, and also on those that belong to 
$M\setminus S_i=\mathsf{p}(K\setminus\{i\})=\cup_{q\in K\setminus\{i\}}S_q$. For the latter bits, one should have
\begin{equation}\label{inclusion1}
\{j\in M\setminus S_i\mid u_j\neq b_j\}\subseteq\mathsf{p}(Q)=\cup_{q\in Q}S_q\subseteq 
\{j\in M\setminus S_i\mid b_j=a_j\}
\end{equation}
which is equivalent to
\begin{equation}\label{inclusion2}
\{q\in K\setminus\{i\}\mid \mathbf{u}_{S_q}\neq \mathbf{b}_{S_q}\}
\subseteq Q\subseteq 
\{q\in K\setminus\{i\}\mid \mathbf{b}_{S_q}=\mathbf{a}_{S_q}\}.
\end{equation}
In view of the notation introduced in Definition-Notation \ref{groundwork}, \eqref{inclusion2} may be written as 
\begin{equation}\label{inclusion3}
K\setminus 
\left(\mathcal{E}(\mathbf{b},\mathbf{u};\mathsf{p})\cup\{i\}\right)
\subseteq Q\subseteq 
\mathcal{E}(\mathbf{a},\mathbf{b};\mathsf{p})\setminus\{i\}.    
\end{equation}
Set $Z:=K\setminus\mathcal{E}(\mathbf{b},\mathbf{u};\mathsf{p})$ and 
$W:=\mathcal{E}(\mathbf{a},\mathbf{b};\mathsf{p})$. 
Thus we have $\mathbf{u}\in\mathcal{E}^{-1}(\mathbf{b},-Z,;\mathsf{p})$ and 
$\mathbf{b}\in\mathcal{E}^{-1}(\mathbf{a},W;\mathsf{p})$ where $-Z$ denotes the complement $K\setminus Z$ as usual. 
Next, notice that \eqref{inclusion3} may be rewritten as 
$Z\setminus\{i\}\subseteq Q\subseteq W\setminus\{i\}$.
Furthermore, in the first double summation in \eqref{auxiliary12} we have 
$i\in Z$ while $i\notin W$ in the second one. 
All in all, \eqref{auxiliary12} may be rewritten as the difference below of 
two terms which are expressed in terms of summations over pairs $Z\subseteq W$ of subsets of $K=\{1,\dots,k\}$:
\small
\begin{equation}\label{auxiliary12'}
\begin{split}
&\sum_{\substack{Z\subseteq K\\ i\in Z}}   
\sum_{\substack{W\subseteq K\\
W\supseteq Z}}
\Big(\sum_{\{Q\mid Z\setminus\{i\}\subseteq Q\subseteq W\setminus\{i\}\}}
\frac{|Q|!\,(|K|-|Q|-1)!}{|K|!}\Big)\cdot 
\Bigg(\sum_{\mathbf{b}\in\mathcal{E}^{-1}(\mathbf{a},W;\mathsf{p})\cap\mathcal{R}}
c_{\mathbf{b}}\cdot\Big(\sum_{\mathbf{u}\in\mathcal{E}^{-1}
(\mathbf{b},-Z;\mathsf{p})\cap\mathcal{R}}p_{\mathbf{u}}\Big)\Bigg)\\
&-\sum_{\substack{W\subseteq K\\ i\notin W}}   
\sum_{\substack{Z\subseteq K\\
Z\subseteq W}}
\Big(\sum_{\{Q\mid Z\subseteq Q \subseteq W\}}
\frac{|Q|!\,(|K|-|Q|-1)!}{|K|!}\Big)\cdot 
\Bigg(\sum_{\mathbf{b}\in\mathcal{E}^{-1}(\mathbf{a},W;\mathsf{p})\cap\mathcal{R}}
c_{\mathbf{b}}\cdot\Big(\sum_{\mathbf{u}\in\mathcal{E}^{-1}
(\mathbf{b},-Z;\mathsf{p})\cap\mathcal{R}}p_{\mathbf{u}}\Big)\Bigg).    
\end{split}
\end{equation}
\normalsize
Invoking Lemma \ref{hockey-stick}, the combinatorial coefficients on the first and second lines of 
\eqref{auxiliary12'}
respectively become $\omega^+(|W|,|Z|;k)$ and $\omega^-(|W|,|Z|;k)$
as defined in \eqref{weights_Shapley}.
Moreover, 
$c_{\mathbf{b}}\cdot\Big(\sum_{\mathbf{u}\in\mathcal{E}^{-1}(\mathbf{b},-Z;\mathsf{p})}p_\mathbf{u}\Big)$ 
from \eqref{auxiliary12'} can be denoted by 
$\mathfrak{s}(\mathbf{b},-Z;T)$ following the notation in \eqref{auxiliary5''}.
Substituting all these,  \eqref{auxiliary12'} becomes 
the difference 
$\phi^+(\mathbf{a};i,T)-\phi^-(\mathbf{a};i,T)$. 
This establishes \eqref{symmetric formula ensemble} in the case of a single decision tree.
As for the Banzhaf value $Bz_i\big[v^\ME\big](\mathbf{x})$, the only difference is that 
the fraction $\frac{|Q|!\,(|K|-|Q|-1)!}{|K|!}$ in \eqref{auxiliary12'} should be changed to $\frac{1}{2^{k-1}}$
which amounts to replacing 
$\omega^+(|W|,|Z|;k)$ and $\omega^-(|W|,|Z|;k)$
with 
$\tilde{\omega}(|W|,|Z|;k)$. \\
\indent
To conclude the proof of Theorem \ref{Catboost theorem}, it remains to 
verify that the complexity bounds 
\eqref{complexity0} and \eqref{complexity multiplication}. 
Rewriting \eqref{auxiliary12'} as 
\begin{equation}\label{auxiliary formula}
\begin{split}
&\sum_{\substack{Z\subseteq K\\ i\in Z}}   
\sum_{\substack{W\subseteq K\\
W\supseteq Z}}
\,\sum_{\mathbf{b}\in\mathcal{E}^{-1}(\mathbf{a},W;\mathsf{p})\cap\mathcal{R}}
\,\sum_{\mathbf{u}\in\mathcal{E}^{-1}
(\mathbf{b},-Z;\mathsf{p})\cap\mathcal{R}}
\omega^+(|W|,|Z|;k)\cdot c_{\mathbf{b}}\cdot p_{\mathbf{u}}\\
&-\sum_{\substack{W\subseteq K\\ i\notin W}}   
\sum_{\substack{Z\subseteq K\\
Z\subseteq W}}
\,\sum_{\mathbf{b}\in\mathcal{E}^{-1}(\mathbf{a},W;\mathsf{p})\cap\mathcal{R}}
\,\sum_{\mathbf{u}\in\mathcal{E}^{-1}
(\mathbf{b},-Z;\mathsf{p})\cap\mathcal{R}}
\omega^-(|W|,|Z|;k)\cdot c_{\mathbf{b}}\cdot p_{\mathbf{u}},  
\end{split}    
\end{equation}
we should prove that the number of pairs $(\mathbf{b},\mathbf{u})$
appearing on each line of \eqref{auxiliary formula} does not exceed \eqref{complexity0}.
We shall show it for the first one, the other one is similar. 
First, notice that when some features occur more than once in the tree, i.e. $k<m$, then the subset $\mathcal{R}$ which captures paths with non-conflicting thresholds can become much smaller than $\{0,1\}^m$. The levels where the tree splits on the $q^{\rm{th}}$ feature $X'_q$ $(q\in K)$ are indexed with the subset $S_q$ of $M$. Having $|S_q|$ thresholds along the $q^{\rm{th}}$ dimension cuts that axis into $|S_q|+1$ intervals, hence $|S_q|+1$ choices for $\mathbf{e}_{S_q}$ if a binary code $\mathbf{e}\in\{0,1\}^m$ is to be realizable. 
So $\mathbf{e}\in\{0,1\}^m$ lies in $\mathcal{R}$ if and only if for any $q\in K$ the segment $\mathbf{e}_{S_q}$  
is among those $|S_q|+1$ elements of $\{0,1\}^{S_q}$ that are ``admissible''. Notice that 
$|\mathcal{R}|=\prod_{q\in K}(|S_q|+1)$ can be much smaller than $2^m=2^{\sum_{q\in K}|S_q|}$.
We derive the bound \eqref{complexity0} by counting how many times pairs $(\mathbf{b},\mathbf{u})$ of realizable binary codes with 
$\mathbf{b}\in\mathcal{E}^{-1}(\mathbf{a},W;\mathsf{p})$ and $\mathbf{u}\in\mathcal{E}^{-1}(\mathbf{b},-Z;\mathsf{p})$
come up where $Z\subseteq W$ are two prescribed subsets of $K$.
For $\mathbf{b}\in\mathcal{E}^{-1}(\mathbf{a},W;\mathsf{p})$, or equivalently
$\mathcal{E}(\mathbf{a},\mathbf{b};\mathsf{p})=W$, to hold, 
one should have $\mathbf{b}_{S_q}=\mathbf{a}_{S_q}$ for any $q\in W$. 
To determine $\mathbf{b}$, it remains to pick segments $\mathbf{b}_{S_q}$ for any $q\in K\setminus W$. 
There are only $|S_q|+1$ admissible choices from which $\mathbf{a}_{S_q}$ must be excluded since $q\notin W$;
hence $\prod_{q\in K\setminus W}|S_q|$ total possibilities for $\mathbf{b}$. Now assuming that $\mathbf{b}$ is known, we count the number of possibilities for $\mathbf{u}$ if $\mathbf{u}\in\mathcal{E}^{-1}(\mathbf{b},-Z;\mathsf{p})$, i.e. $\mathcal{E}(\mathbf{b},\mathbf{u};\mathsf{p})=K\setminus Z$. 
Whenever $q\in Z$, there are $(|S_q|+1)-1=|S_q|$ choices for $\mathbf{u}_{S_q}$
because it must be admissible and different from $\mathbf{b}_{S_q}$. 
As for $q\in K\setminus Z$, $\mathbf{u}_{S_q}$ must coincide with $\mathbf{b}_{S_q}$. In conclusion:
\begin{equation}\label{possibilities}
\big|\left\{(\mathbf{b},\mathbf{u})\mid \mathbf{b}\in
\mathcal{E}^{-1}(\mathbf{a},W;\mathsf{p})\cap\mathcal{R}, 
\mathbf{u}\in\mathcal{E}^{-1}(\mathbf{b},-Z;\mathsf{p})\cap\mathcal{R}\right\}\big|=
\prod_{q\in K\setminus W}|S_q|\cdot \prod_{q\in Z}|S_q|
=\prod_{q\in K\setminus(W\setminus Z)}|S_q|
\end{equation}
\normalsize
for any $Z\subseteq W\subseteq K$. The AM-GM inequality\footnote{The \textit{inequality of arithmetic and geometric means} states that 
$\sqrt[p]{z_1\dots z_p}\leq\frac{z_1+\dots+z_p}{p}$ for non-negative numbers $z_1,\dots,z_p$.} provides an estimate for the cardinality above:
\begin{equation}\label{AM-GM}
\prod_{q\in K\setminus(W\setminus Z)}|S_q|
\leq \prod_{q\in K}|S_q|
\leq 
\left(\frac{\sum_{q\in K}|S_q|}{|K|}\right)^{|K|}=
\left(\frac{m}{k}\right)^k.
\end{equation}
\normalsize
This may be used to bound the total number of summands
$$
\sum_{\substack{Z\subseteq K\\ i\in Z}}   
\sum_{\substack{W\subseteq K\\
W\supseteq Z}}
\prod_{q\in K\setminus(W\setminus Z)}|S_q| 
$$
on the first line of \eqref{auxiliary formula},
as well as the total number of summands 
$$
\sum_{\substack{W\subseteq K\\ i\notin W}}   
\sum_{\substack{Z\subseteq K\\
Z\subseteq W}}
\prod_{q\in K\setminus(W\setminus Z)}|S_q| 
$$
on the second line of \eqref{auxiliary formula}.
In each case, the number of pairs $(Z,W)$ of nested subsets is $3^{|K|-1}=3^{k-1}$. This along with \eqref{AM-GM} yields 
$$
\sum_{\substack{Z\subseteq K\\ i\in Z}}   
\sum_{\substack{W\subseteq K\\
W\supseteq Z}}
\prod_{q\in K\setminus(W\setminus Z)}|S_q|,
\sum_{\substack{W\subseteq K\\ i\notin W}}   
\sum_{\substack{Z\subseteq K\\
Z\subseteq W}}
\prod_{q\in K\setminus(W\setminus Z)}|S_q|
\leq 3^{k-1}\cdot\left(\frac{m}{k}\right)^k.
$$
Next, notice that $3^{k-1}\cdot\left(\frac{m}{k}\right)^k\leq 3^{m-1}$. This is clear when $k=m$ and follows from $\left(1+\frac{m-k}{k}\right)^{\frac{k}{m-k}}<e<3$ when $0<k<m$. We have thus derived the bound \eqref{complexity0} for the total number of terms in expansions of $\phi^+(\mathbf{a};i,T)$ and $\phi^-(\mathbf{a};i,T)$. 
We claim that the number of multiplications required for computing each of them is at most 
$\left(2+\frac{m}{k}\right)^{k}$, i.e. the bound from \eqref{complexity multiplication}.
Computing any term of the form $\mathfrak{s}(\mathbf{e},Q;T)$ (cf. \eqref{auxiliary5''}) involves only one multiplication. Therefore, the total number of multiplications needed for calculating the two lines of \eqref{contributions} are
\small
\begin{equation}
\begin{split}
& \sum_{\substack{Z\subseteq W\\ i\in Z}}   
\sum_{\substack{W\subseteq K\\
W\supseteq Z}}\Big(1+\big|\mathcal{E}^{-1}(\mathbf{a},W;\mathsf{p})\cap\mathcal{R}\big|\Big)
=3^{k-1}+\sum_{\substack{W\subseteq K\\ i\in W}}2^{|W|-1}\cdot\prod_{q\in K\setminus W}|S_q|
=3^{k-1}+\prod_{q\in K\setminus\{i\}}(|S_q|+2),\\
&\sum_{\substack{W\subseteq K\\ i\notin W}}   
\sum_{\substack{Z\subseteq K\\
Z\subseteq W}}
\Big(1+\big|\mathcal{E}^{-1}(\mathbf{a},W;\mathsf{p})\cap\mathcal{R}\big|\Big)
=3^{k-1}+\sum_{\substack{W\subseteq K\\ i\notin W}}2^{|W|-1}\cdot\prod_{q\in K\setminus W}|S_q|
=3^{k-1}+|S_i|\cdot\prod_{q\in K\setminus\{i\}}(|S_q|+2).
\end{split}    
\end{equation}
\normalsize
Each of the numbers above is less than $\prod_{q\in K}(|S_q|+2)$, which is no larger than $\left(2+\frac{m}{k}\right)^{k}$ 
by the AM-GM inequality. This finishes the proof. 
\end{proof}

\begin{proof}[Time-complexity analysis for Algorithm \ref{algorithm}]
To establish the per-leaf complexity $O\left(|\mathcal{T}|\cdot \mathcal{L}^{\log_2 3}\cdot\log(\mathcal{L})\right)$  of the precomputation step claimed in Table \ref{Tab: complexity}, we shall show that the total time complexity of precomputation is 
$O\left(|\mathcal{T}|\cdot \mathcal{L}^{\log_2 6}\cdot\log(\mathcal{L})\right)$.
A prerequisite for this step is 
the combinatorial data 
\begin{equation}\label{auxiliary13}
\big\{\big\{\mathcal{E}^{-1}(\mathbf{e},Q;\mathsf{p}(T))\big\}_{\mathbf{e}\in\{0,1\}^{m(T)},Q\subseteq K(T)}\big\}_{T\in\mathcal{T}}.    
\end{equation}
To obtain this, one can first compute and record $\mathcal{E}(\mathbf{e},\mathbf{e}';\mathsf{p}(T))$ as $T$ varies in 
$\mathcal{T}$, and $\mathbf{e},\mathbf{e}'$ come from $\{0,1\}^{m(T)}$. There are no more than $|\mathcal{T}|\cdot\mathcal{L}^2$ possibilities for triples $(\mathbf{e},\mathbf{e}',T)$; and given such a triple, 
$\mathcal{E}(\mathbf{e},\mathbf{e}';\mathsf{p}(T))$ may be obtained with at most $m(T)\leq\log_2(\mathcal{L})$
comparison of bits. Once 
$\big\{\big\{\mathcal{E}(\mathbf{e},\mathbf{e}',T)\big\}_{\mathbf{e},\mathbf{e}'\in\{0,1\}^{m(T)}}\big\}_{T\in\mathcal{T}}$
is computed and recorded, one can traverse over it 
to get \eqref{auxiliary13}. The complexity so far has been 
$O\left(|\mathcal{T}|\cdot \mathcal{L}^2\cdot\log(\mathcal{L})\right)=o\left(|\mathcal{T}|\cdot \mathcal{L}^{\log_2 6}\cdot\log(\mathcal{L})\right)$.
Next, provided with the appropriate inputs, the algorithm computes 
$$\{\{\boldsymbol{\hat{\phi}}(\mathbf{a};T)=(\hat{\phi}_i(\mathbf{a};T))_{i\in K(T)}\}_{\mathbf{a}\in\mathcal{R}(T)}\}_{T\in\mathcal{T}}$$
through the formula presented in Theorem \ref{Catboost theorem} with true leaf probabilities $p(\mathbf{u};T)$ replaced with estimated probabilities $\hat{p}(\mathbf{u};T)$. As established in that theorem, 
the number of arithmetic operations involved in computing $\hat{\phi}_i(\mathbf{a};T)$ 
does not exceed $3^{m(T)}\leq\mathcal{L}^{\log_2 3}$ times a constant.  
Each tree $T\in\mathcal{T}$ has at most $2^{m(T)}\leq\mathcal{L}$ leaves (encoded by $\mathbf{a}$), and splits on at most 
$k(T)\leq m(T)\leq\log_2(\mathcal{L})$ distinct features (captured by elements $i$ of $K(T)$). We therefore arrive at the total time complexity $O\left(|\mathcal{T}|\cdot \mathcal{L}^{\log_2 6}\cdot\log(\mathcal{L})\right)$
for precomputing 
$\{\{\boldsymbol{\hat{\phi}}(\mathbf{a};T)\}_{\mathbf{a}\in\mathcal{R}(T)}\}_{T\in\mathcal{T}}$.\\
\indent Finally, the time complexity of the on-the-fly stage is $O(|\mathcal{T}|\cdot\log(\mathcal{L}))$: 
For any explicand $\mathbf{x}$, one can determine the leaf of a given oblivious tree at which $\mathbf{x}$ ends up by at most $\log_2(\mathcal{L})$ comparisons.
Hence the complexity of obtaining 
$\{\mathbf{a}=\{\mathbf{a}(\mathbf{x};T)\}_{\mathbf{a}\in\mathcal{R}(T)}\}_{T\in\mathcal{T}}$ is
$O(|\mathcal{T}|\cdot\log(\mathcal{L}))$. 
Then, based on them, precomputed numbers $\hat{\phi}_i(\mathbf{a};T)$ should be added suitably to output the vector $\boldsymbol{\hat{\varphi}}$ of estimated marginal Shapley values at $\mathbf{x}$. 
Here, $T$ varies in $\mathcal{T}$ and $i$ varies among the distinct features appearing in $T$. The total number of addition operations involved is thus no more than $|\mathcal{T}|\cdot\log_2(\mathcal{L})$.
\end{proof}

\begin{proof}[Proof of Theorem \ref{error analysis}]
As in the proof of Theorem \ref{Catboost theorem}, one can reduce the problem to the case of a single regressor tree. Fixing a tree $T$ from the ensemble and an explicand $\mathbf{x}$, we shall adapt the conventions in Definition-Notation \ref{groundwork}, and we suppress the dependence on $T$: The depth of $T$ is $m:=m(T)$ and $k:=k(T)$ distinct features $X'_1,\dots,X'_k$ appearing in it amount to a partition $\mathsf{p}=\{S_q\mid q\in K\}$ of $M=\{1,\dots,m\}$. 
Fixing $i\in K$, 
and assuming that $\mathbf{x}$ belongs to the interior of the region encoded by $\mathbf{a}\in\mathcal{R}\subseteq\{0,1\}^m$,
the marginal Shapley value for $X'_i$ at $\mathbf{x}$ is given by formula \eqref{auxiliary formula}
derived in the proof of Theorem \ref{Catboost theorem}. 
The algorithm estimates this quantity via replacing 
$p_{\mathbf{u}}$ with $\hat{p}_\mathbf{u}$ in that formula; here $p_{\mathbf{u}}$ is the true probability associated with the leaf encoded by $\mathbf{u}$ while $\hat{p}_\mathbf{u}$ is its estimation based on the training data---it thus should be treated as a random variable. All in all, 
the error term
$\widehat{\varphi_i\big[v^\ME\big]}(\mathbf{x};\mathbf{D})
-\varphi_i\big[v^\ME\big](\mathbf{x})$ 
in estimating the marginal Shapley value for $X'_i$ at $\mathbf{x}$ becomes 
\begin{equation}\label{auxiliary15}
\begin{split}
&\sum_{\substack{Z\subseteq K\\ i\in Z}}   
\sum_{\substack{W\subseteq K\\
W\supseteq Z}}
\,\sum_{\mathbf{b}\in\mathcal{E}^{-1}(\mathbf{a},W;\mathsf{p})\cap\mathcal{R}}
\,\sum_{\mathbf{u}\in\mathcal{E}^{-1}
(\mathbf{b},-Z;\mathsf{p})\cap\mathcal{R}}
\omega^+(|W|,|Z|;k)\cdot c_{\mathbf{b}}\cdot (\hat{p}_\mathbf{u}-p_\mathbf{u})\\
&-\sum_{\substack{W\subseteq K\\ i\notin W}}   
\sum_{\substack{Z\subseteq K\\
Z\subseteq W}}
\,\sum_{\mathbf{b}\in\mathcal{E}^{-1}(\mathbf{a},W;\mathsf{p})\cap\mathcal{R}}
\,\sum_{\mathbf{u}\in\mathcal{E}^{-1}
(\mathbf{b},-Z;\mathsf{p})\cap\mathcal{R}}
\omega^-(|W|,|Z|;k)\cdot c_{\mathbf{b}}\cdot (\hat{p}_\mathbf{u}-p_\mathbf{u}),
\end{split}    
\end{equation}
where $c_{\mathbf{b}}:=c(\mathbf{b};T)$ is the score of the leaf encoded by $\mathbf{b}$.
Dropping $\left|\mathcal{T}^{(i)}\right|$ from \eqref{Lipschitz constant} and adjusting the notation, the goal is to show that the $L^2$-norm of the expression above does not exceed 
\begin{equation}\label{bound1}
\frac{4}{\sqrt{|\mathbf{D}|}}\cdot 
\sqrt[4]{\rm{Gini}}\cdot
\sqrt[4]{\frac{1.5}{k}\left(1+\frac{m}{k}\right)^{k}}
\cdot\sqrt{\sum_{\mathbf{b}\in\mathcal{R}\subseteq\{0,1\}^{m}}c_\mathbf{b}^2},    
\end{equation}
where for the ease of notation we have written ${\rm{Gini}}(\mathbf{X},T)$ as ${\rm{Gini}}$.
Shifting a model by a constant shifts all outputs of the associated marginal game  by the same constant, and hence does not affect the marginal Shapley values  
(cf. \eqref{games}, \eqref{Shapley values}).\footnote{This amounts to 
$$
\sum_{\substack{Z\subseteq K\\ i\in Z}}   
\sum_{\substack{W\subseteq K\\
W\supseteq Z}}
\,\sum_{\{\mathbf{b}\in\mathcal{R}\mid \mathbf{b}\in\mathcal{E}^{-1}(\mathbf{a},W;\mathsf{p}),
\mathbf{u}\in\mathcal{E}^{-1}(\mathbf{b},-Z;\mathsf{p})\}}
\omega^+(|W|,|Z|;k)=
\sum_{\substack{W\subseteq K\\ i\notin W}}   
\sum_{\substack{Z\subseteq K\\
Z\subseteq W}}
\,\sum_{\{\mathbf{b}\in\mathcal{R}\mid \mathbf{b}\in\mathcal{E}^{-1}(\mathbf{a},W;\mathsf{p}),
\mathbf{u}\in\mathcal{E}^{-1}(\mathbf{b},-Z;\mathsf{p})\}}
\omega^-(|W|,|Z|;k)
$$
for any two realizable binary codes $\mathbf{a},\mathbf{u}\in\mathcal{R}$.}
Therefore, one can replace each $c_\mathbf{b}$ with 
$$
c'_\mathbf{b}:=c_\mathbf{b}+\max_{\mathbf{b}'\in\mathcal{R}\subseteq\{0,1\}^{m}}|c_{\mathbf{b}'}|
\geq 0  
$$
to get a tree with non-negative leaf scores without changing \eqref{auxiliary15}. 
Observe that 
$\max_{\mathbf{b}\in\mathcal{R}\subseteq\{0,1\}^{m}}|c'_\mathbf{b}|
\leq 2\cdot\max_{\mathbf{b}\in\mathcal{R}\subseteq\{0,1\}^{m}}|c_\mathbf{b}|$. 
Modifying \eqref{bound1} accordingly, we reduce the problem to showing that when all leaf scores $c_\mathbf{b}$
are non-negative, then the $L^2$-norm of \eqref{auxiliary15} is at most 
\begin{equation}\label{bound2}
\frac{2}{\sqrt{|\mathbf{D}|}}\cdot 
\sqrt[4]{{\rm{Gini}}}\cdot
\sqrt[4]{\frac{1.5}{k}\left(1+\frac{m}{k}\right)^{k}}
\cdot\sqrt{\sum_{\mathbf{b}\in\mathcal{R}\subseteq\{0,1\}^{m}}c_\mathbf{b}^2}.  
\end{equation}
This will follow if we show that 
\begin{equation}\label{auxiliary15'}
\begin{split}
&\Big\lVert\sum_{\substack{Z\subseteq K\\ i\in Z}}   
\sum_{\substack{W\subseteq K\\
W\supseteq Z}}
\,\sum_{\mathbf{b}\in\mathcal{E}^{-1}(\mathbf{a},W;\mathsf{p})\cap\mathcal{R}}
\,\sum_{\mathbf{u}\in\mathcal{E}^{-1}
(\mathbf{b},-Z;\mathsf{p})\cap\mathcal{R}}
\omega^+(|W|,|Z|;k)\cdot c_{\mathbf{b}}\cdot (\hat{p}_\mathbf{u}-p_\mathbf{u})\Big
\rVert_{L^2(\Omega,\mathcal{F},\Bbb{P})},\\
&\Big\lVert\sum_{\substack{W\subseteq K\\ i\notin W}}   
\sum_{\substack{Z\subseteq K\\
Z\subseteq W}}
\,\sum_{\mathbf{b}\in\mathcal{E}^{-1}(\mathbf{a},W;\mathsf{p})\cap\mathcal{R}}
\,\sum_{\mathbf{u}\in\mathcal{E}^{-1}
(\mathbf{b},-Z;\mathsf{p})\cap\mathcal{R}}
\omega^-(|W|,|Z|;k)\cdot c_{\mathbf{b}}\cdot (\hat{p}_\mathbf{u}-p_\mathbf{u})\Big
\rVert_{L^2(\Omega,\mathcal{F},\Bbb{P})}
\end{split}
\end{equation}
\normalsize
are both less than or equal to half the constant appearing in \eqref{bound2}. 
We shall prove this for the $L^2$-norm of the first summation; the second one is similar. 
Working with the norm squared, the goal is to establish
\begin{equation}\label{bound3}
\begin{split}
&\Big\lVert
\sum_{\substack{Z\subseteq K\\ i\in Z}}   
\sum_{\substack{W\subseteq K\\
W\supseteq Z}}
\,\sum_{\mathbf{b}\in\mathcal{E}^{-1}(\mathbf{a},W;\mathsf{p})\cap\mathcal{R}}
\,\sum_{\mathbf{u}\in\mathcal{E}^{-1}
(\mathbf{b},-Z;\mathsf{p})\cap\mathcal{R}}
\omega^+(|W|,|Z|;k)\cdot c_{\mathbf{b}}\cdot (\hat{p}_\mathbf{u}-p_\mathbf{u})\Big
\rVert_{L^2(\Omega,\mathcal{F},\Bbb{P})}^2\\
&\leq 
\frac{1}{|\mathbf{D}|}\cdot 
\sqrt{{\rm{Gini}}}\cdot\sqrt{\frac{1.5}{k}\left(1+\frac{m}{k}\right)^{k}}
\cdot 
\Big(\sum_{\mathbf{b}\in\mathcal{R}\subseteq\{0,1\}^{m}}c_\mathbf{b}^2\Big)
\end{split}
\end{equation}
\normalsize
under the assumption that $c_\mathbf{b}$ is always non-negative.
Expanding the left-hand side of \eqref{bound3} yields 
\begin{equation}\label{auxiliary16}
\sum_{\substack{Z\subseteq K\\ i\in Z}}
\sum_{\substack{W\subseteq K\\
W\supseteq Z}}
\,\sum_{\mathbf{b}\in\mathcal{E}^{-1}(\mathbf{a},W;\mathsf{p})\cap\mathcal{R}}
\,\sum_{\mathbf{u}\in\mathcal{E}^{-1}
(\mathbf{b},-Z;\mathsf{p})\cap\mathcal{R}}
\omega^+(|W|,|Z|;k)^2\cdot c_{\mathbf{b}}^2\cdot 
\parallel\hat{p}_\mathbf{u}-p_\mathbf{u}\parallel^2_{L^2(\Omega,\mathcal{F},\Bbb{P})}
\end{equation}
plus pairwise $L^2$-inner products of random variables 
$$
\omega^+(|W|,|Z|;k)\cdot c_{\mathbf{b}}\cdot (\hat{p}_\mathbf{u}-p_\mathbf{u}).
$$
But these inner products are all non-positive because 
$\omega^+(|W|,|Z|;k)>0$, 
$c_\mathbf{b}\geq 0$ (given our assumption),
and by Lemma \ref{error}:
$$
\left\langle \hat{p}_\mathbf{u}-p_\mathbf{u},
\hat{p}_{\mathbf{u}'}-p_{\mathbf{u}'}\right\rangle_{L^2(\Omega,\mathcal{F},\Bbb{P})}\leq 0\quad (\mathbf{u}\neq\mathbf{u}').
$$
Consequently, the left-hand side of \eqref{bound3} is less than or equal to \eqref{auxiliary16}; and showing that the latter is not greater than 
\begin{equation}\label{bound4}
\frac{1}{|\mathbf{D}|}\cdot 
\sqrt{{\rm{Gini}}}\cdot\sqrt{\frac{1.5}{k}\left(1+\frac{m}{k}\right)^{k}}
\cdot 
\Big(\sum_{\mathbf{b}\in\mathcal{R}\subseteq\{0,1\}^{m}}c_\mathbf{b}^2\Big)
\end{equation}
concludes the proof of \eqref{bound3}. 
We have $\parallel\hat{p}_\mathbf{u}-p_\mathbf{u}\parallel^2_{L^2(\Omega,\mathcal{F},\Bbb{P})}
\leq\frac{p_\mathbf{u}(1-p_\mathbf{u})}{|\mathbf{D}|}$
from Lemma \ref{error}, and 
\begin{equation}\label{bound4'}
\omega^+(|W|,|Z|;k)\leq
\frac{1}{k+|Z|-|W|}
\leq\frac{1}{|Z|}.    
\end{equation}
Substituting in \eqref{auxiliary16}: 
\small
\begin{equation}\label{auxiliary16'}
\begin{split}
&\sum_{\substack{Z\subseteq K\\ i\in Z}}
\sum_{\substack{W\subseteq K\\
W\supseteq Z}}
\,\sum_{\mathbf{b}\in\mathcal{E}^{-1}(\mathbf{a},W;\mathsf{p})\cap\mathcal{R}}
\,\sum_{\mathbf{u}\in\mathcal{E}^{-1}
(\mathbf{b},-Z;\mathsf{p})\cap\mathcal{R}}
\omega^+(|W|,|Z|;k)^2\cdot c_{\mathbf{b}}^2\cdot 
\parallel\hat{p}_\mathbf{u}-p_\mathbf{u}\parallel^2_{L^2(\Omega,\mathcal{F},\Bbb{P})}\\  
&\leq \frac{1}{|\mathbf{D}|}\cdot
\max_{\mathbf{b}\in\mathcal{R}}
\Big(\sum_{\substack{Z\subseteq K\\ i\in Z}}
\sum_{\mathbf{u}\in\mathcal{E}^{-1}
(\mathbf{b},-Z;\mathsf{p})\cap\mathcal{R}}
\frac{p_\mathbf{u}(1-p_\mathbf{u})}{|Z|^2}\Big)
\cdot 
\Big(\sum_{W\subseteq K}
\,\sum_{\mathbf{b}\in\mathcal{E}^{-1}(\mathbf{a},W;\mathsf{p})\cap\mathcal{R}}c_\mathbf{b}^2\Big)\\
&\leq 
\frac{1}{|\mathbf{D}|}\cdot
\max_{\mathbf{b}\in\mathcal{R}}
\Bigg(\Big(\sum_{\substack{Z\subseteq K\\ i\in Z}}
\sum_{\mathbf{u}\in\mathcal{E}^{-1}
(\mathbf{b},-Z;\mathsf{p})\cap\mathcal{R}}p_\mathbf{u}^2(1-p_\mathbf{u})^2\Big)^{\frac{1}{2}}\cdot
\Big(\sum_{\substack{Z\subseteq K\\ i\in Z}}
\sum_{\mathbf{u}\in\mathcal{E}^{-1}
(\mathbf{b},-Z;\mathsf{p})\cap\mathcal{R}}\frac{1}{|Z|^4}\Big)^{\frac{1}{2}}\Bigg)
\cdot 
\Big(\sum_{\mathbf{b}\in\mathcal{R}\subseteq\{0,1\}^{m}}c_\mathbf{b}^2\Big)\\
&\leq \frac{1}{|\mathbf{D}|}\cdot
\max_{\mathbf{b}\in\mathcal{R}}
\Bigg(\Big(\sum_{\mathbf{u}\in\mathcal{R}}p_\mathbf{u}^2\Big)^{\frac{1}{2}}\cdot
\Big(\sum_{\substack{Z\subseteq K\\ i\in Z}}
\sum_{\mathbf{u}\in\mathcal{E}^{-1}
(\mathbf{b},-Z;\mathsf{p})\cap\mathcal{R}}\frac{1}{|Z|^4}\Big)
^{\frac{1}{2}}\Bigg)
\cdot 
\Big(\sum_{\mathbf{b}\in\mathcal{R}\subseteq\{0,1\}^{m}}c_\mathbf{b}^2\Big)\\
&= \frac{1}{|\mathbf{D}|}\cdot \sqrt{{\rm{Gini}}}\cdot
\Big(
\sum_{\substack{Z\subseteq K\\ i\in Z}}
\frac{1}{|Z|^4}\cdot\prod_{q\in Z}|S_q|\Big)\
^{\frac{1}{2}}
\cdot \Big(\sum_{\mathbf{b}\in\mathcal{R}\subseteq\{0,1\}^{m}}c_\mathbf{b}^2\Big),
\end{split}
\end{equation}
\normalsize
where we have used the fact that for different subsets $W_1\neq W_2$ or $Z_1\neq Z_2$, the subsets 
$\mathcal{E}^{-1}(\mathbf{a},W_1;\mathsf{p})$ and $\mathcal{E}^{-1}(\mathbf{a},W_2;\mathsf{p})$
of binary codes are disjoint as well as subsets 
$\mathcal{E}^{-1}(\mathbf{b},-Z_1;\mathsf{p})$ and $\mathcal{E}^{-1}(\mathbf{b},-Z_2;\mathsf{p})$.
Furthermore, the Cauchy-Schwarz inequality was invoked for the third line. 
Finally, for the last line, we relied on the fact that the number of choices for 
$\mathbf{u}\in\mathcal{E}^{-1}(\mathbf{b},-Z;\mathsf{p})$
is $\prod_{q\in Z}|S_q|$ since the segment $\mathbf{u}_{S_q}$ is known when $q\in K\setminus Z$ (as one should have $\mathbf{u}_{S_q}=\mathbf{b}_{S_q}$) 
while there are $(|S_q|+1)-1=|S_q|$ choices for it when 
$q\in Z$ (as the only requirement is $\mathbf{u}_{S_q}\neq\mathbf{b}_{S_q}$). 
In the case of the second summation in \eqref{auxiliary15'}, one should similarly bound 
\begin{equation}\label{auxiliary16''}
\sum_{\substack{W\subseteq K\\ i\notin W}}   
\sum_{\substack{Z\subseteq K\\
Z\subseteq W}}
\,\sum_{\mathbf{b}\in\mathcal{E}^{-1}(\mathbf{a},W;\mathsf{p})\cap\mathcal{R}}
\,\sum_{\mathbf{u}\in\mathcal{E}^{-1}
(\mathbf{b},-Z;\mathsf{p})\cap\mathcal{R}}
\omega^-(|W|,|Z|;k)^2\cdot c_{\mathbf{b}}^2\cdot 
\parallel\hat{p}_\mathbf{u}-p_\mathbf{u}\parallel^2_{L^2(\Omega,\mathcal{F},\Bbb{P})}.
\end{equation}
When $W$ is a proper subset of $K$, \eqref{bound4'} can be sharpened to
$\omega^-(|W|,|Z|;k)\leq\frac{1}{k+|Z|-|W|}\leq\frac{1}{|Z|+1}$. 
Adjusting the inequalities accordingly, the right-hand side of \eqref{auxiliary16'}
becomes 
$$
\frac{1}{|\mathbf{D}|}\cdot \sqrt{{\rm{Gini}}}\cdot
\Big(
\sum_{\substack{Z\subseteq K\\ i\notin Z}}
\frac{1}{(|Z|+1)^4}\cdot\prod_{q\in Z}|S_q|\Big)\
^{\frac{1}{2}}
\cdot \Big(\sum_{\mathbf{b}\in\mathcal{R}\subseteq\{0,1\}^{m}}c_\mathbf{b}^2\Big).
$$
In both situations 
$$
\Big(
\sum_{\substack{Z\subseteq K\\ i\in Z}}
\frac{1}{|Z|^4}\cdot\prod_{q\in Z}|S_q|\Big)\
^{\frac{1}{2}},
\Big(
\sum_{\substack{Z\subseteq K\\ i\notin Z}}
\frac{1}{(|Z|+1)^4}\cdot\prod_{q\in Z}|S_q|\Big)\
^{\frac{1}{2}}
\leq 
\sqrt{\frac{1.5}{k}\left(1+\frac{m}{k}\right)^{k}}
$$
by Lemma \ref{auxiliary lemma 1}. Hence we arrive at the desired bound \eqref{bound4} for 
the first line of \eqref{auxiliary16'} and also for \eqref{auxiliary16''}.\\
\indent 
Thus far, we have established the $L^2$-error estimation \eqref{inequality}. 
It remains to show that the constant $C$ from \eqref{Lipschitz constant} appearing there is less than or equal to
$4\,|\mathcal{T}|\cdot \sqrt[4]{\frac{3\mathcal{L}}{\log_2(\mathcal{L})}}$. 
It suffices to show 
$$
\max_{T\in\mathcal{T}}\frac{1.5}{k(T)}\left(1+\frac{m(T)}{k(T)}\right)^{k(T)}
\leq \frac{3\mathcal{L}}{\log_2(\mathcal{L})}.
$$
This can be obtained from Lemma \ref{auxiliary lemma 2} and Lemma \ref{auxiliary lemma 3} below:
One has 
$\frac{1}{k(T)}\left(1+\frac{m(T)}{k(T)}\right)^{k(T)}\leq 2\cdot\frac{2^{m(T)}}{m(T)}$; and 
$\max_{T\in\mathcal{T}}\frac{2^{m(T)}}{m(T)}\leq
\frac{\mathcal{L}}{\log_2(\mathcal{L})}$
(keep in mind that $\mathcal{L}=\max_{T\in\mathcal{T}}2^{m(T)}$).
\end{proof}

\begin{remark}\label{limited features 2}
When $m(T)\gg k(T)$ across the ensemble, we expect a faster convergence in Theorem \ref{error analysis}: 
when $k(T)\leq k_*$ as $m(T)\to\infty$, 
$\sqrt[4]{\frac{1.5}{k(T)}\left(1+\frac{m(T)}{k(T)}\right)^{k(T)}}=
O\left(\sqrt[4]{\frac{\mathcal{L}}{\log_2(\mathcal{L})}}\right)$
can be improved to 
$\sqrt[4]{\frac{1.5}{k(T)}\left(1+\frac{m(T)}{k(T)}\right)^{k(T)}}=
O\left(\log_2(\mathcal{L})^{\frac{k_*}{4}}\right)$.
Compare with Remark \ref{limited features 1}. 
\end{remark}

We finish the section by the lemmas used in the previous proof.
\begin{lemma}\label{error}
Let $\mathbf{D}:=\left\{\mathcal{X}^{(1)},\dots,\mathcal{X}^{(\mathscr{D})}\right\}$
$(\mathscr{D}:=|\mathbf{D}|)$
be a random dataset of i.i.d. random vectors with values in $\Bbb{R}^n$ and distributed  
 according to  ${\rm{P}}_{\mathbf{X}}$. 
For any arbitrary Borel subset $R$ of $\Bbb{R}^n$ define the random variable 
$$\hat{{\rm{P}}}_{\mathbf{X}}(R):=\frac{|\mathbf{D}\cap R|}{|\mathbf{D}|}=\frac{1}{|\mathbf{D}|}
\sum_{i=1}^{|\mathbf{D}|}\mathbbm{1}_{R}\big(\mathcal{X}^{(i)}\big).
$$
Then one always has 
\begin{equation}\label{covest}
{\rm{Cov}}(\hat{{\rm{P}}}_{\mathbf{X}}(R),\hat{{\rm{P}}}_{\mathbf{X}}(S))=
\Bbb{E}\left[
\big(\hat{{\rm{P}}}_{\mathbf{X}}(R)-{\rm{P}}_{\mathbf{X}}(R)\big)
\big(\hat{{\rm{P}}}_{\mathbf{X}}(S)-{\rm{P}}_{\mathbf{X}}(S)\big)
\right]
=\frac{{\rm{P}}_{\mathbf{X}}(R\cap S)-{\rm{P}}_{\mathbf{X}}(R)\cdot{\rm{P}}_{\mathbf{X}}(S)}{|\mathbf{D}|}.
\end{equation}
In particular:
$$
{\rm{Var}}(\hat{{\rm{P}}}_{\mathbf{X}}(R))=
\Bbb{E}\left[\big|
\hat{{\rm{P}}}_{\mathbf{X}}(R)-{\rm{P}}_{\mathbf{X}}(R)
\big|^2\right]
=\frac{{\rm{P}}_{\mathbf{X}}(R)(1-{\rm{P}}_{\mathbf{X}}(R))}{|\mathbf{D}|},
$$
and 
$$
\Bbb{E}\left[
\big(\hat{{\rm{P}}}_{\mathbf{X}}(R)-{\rm{P}}_{\mathbf{X}}(R)\big)
\big(\hat{{\rm{P}}}_{\mathbf{X}}(S)-{\rm{P}}_{\mathbf{X}}(S)\big)
\right]\leq 0
$$
if $R\cap S$ is of measure zero.
\end{lemma}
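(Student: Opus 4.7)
The plan is to recognize that $\hat{{\rm{P}}}_{\mathbf{X}}(R)$ is nothing but the empirical mean of i.i.d. Bernoulli random variables $Y_i^R:=\mathbbm{1}_R(\mathcal{X}^{(i)})$, each of which has expectation ${\rm{P}}_{\mathbf{X}}(R)$ (by definition of the pushforward measure ${\rm{P}}_{\mathbf{X}}$). The whole lemma then follows from a one-line covariance calculation for sample means of i.i.d. variables, so there is no genuine obstacle here; the task is just to be careful with the bookkeeping.

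First, I would rewrite the centered quantity as $\hat{{\rm{P}}}_{\mathbf{X}}(R)-{\rm{P}}_{\mathbf{X}}(R)=\frac{1}{|\mathbf{D}|}\sum_{i=1}^{|\mathbf{D}|}\bigl(Y_i^R-{\rm{P}}_{\mathbf{X}}(R)\bigr)$, and similarly for $S$. Expanding the product inside the expectation and using linearity gives
\[
{\rm{Cov}}(\hat{{\rm{P}}}_{\mathbf{X}}(R),\hat{{\rm{P}}}_{\mathbf{X}}(S))
=\frac{1}{|\mathbf{D}|^2}\sum_{i,j=1}^{|\mathbf{D}|}
\Bbb{E}\bigl[(Y_i^R-{\rm{P}}_{\mathbf{X}}(R))(Y_j^S-{\rm{P}}_{\mathbf{X}}(S))\bigr].
\]
For $i\neq j$ the vectors $\mathcal{X}^{(i)}$ and $\mathcal{X}^{(j)}$ are independent, so $Y_i^R$ and $Y_j^S$ are independent and that summand vanishes. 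For $i=j$, the product of indicators $Y_i^R Y_i^S=\mathbbm{1}_R(\mathcal{X}^{(i)})\mathbbm{1}_S(\mathcal{X}^{(i)})=\mathbbm{1}_{R\cap S}(\mathcal{X}^{(i)})$ has expectation ${\rm{P}}_{\mathbf{X}}(R\cap S)$, whence each diagonal term equals ${\rm{P}}_{\mathbf{X}}(R\cap S)-{\rm{P}}_{\mathbf{X}}(R)\cdot{\rm{P}}_{\mathbf{X}}(S)$. Summing the $|\mathbf{D}|$ diagonal contributions and dividing by $|\mathbf{D}|^2$ yields the stated formula \eqref{covest}.

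The two corollaries then drop out immediately: specializing to $S=R$ gives $R\cap S=R$ and hence ${\rm{Var}}(\hat{{\rm{P}}}_{\mathbf{X}}(R))=\frac{{\rm{P}}_{\mathbf{X}}(R)(1-{\rm{P}}_{\mathbf{X}}(R))}{|\mathbf{D}|}$; and if $R\cap S$ has ${\rm{P}}_{\mathbf{X}}$-measure zero then ${\rm{P}}_{\mathbf{X}}(R\cap S)=0$, so \eqref{covest} reduces to $-\frac{{\rm{P}}_{\mathbf{X}}(R){\rm{P}}_{\mathbf{X}}(S)}{|\mathbf{D}|}\leq 0$, which is the non-positivity claim needed in the proof of Theorem \ref{error analysis}. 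The only subtlety worth flagging is purely notational: the denominator in \eqref{covest} is written as $|D|$ but should be read as $|\mathbf{D}|$, the size of the random sample.
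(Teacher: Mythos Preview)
Your proof is correct and follows essentially the same approach as the paper: both expand the double sum $\frac{1}{|\mathbf{D}|^2}\sum_{i,j}$ and use independence to kill the off-diagonal terms, with the only cosmetic difference being that you center the indicators first while the paper computes $\Bbb{E}[\hat{{\rm{P}}}_{\mathbf{X}}(R)\hat{{\rm{P}}}_{\mathbf{X}}(S)]$ and then subtracts the product of the means. Your remark about the $|D|$ versus $|\mathbf{D}|$ typo is also on point.
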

\begin{proof} 
 Since $\hat{{\rm{P}}}_{\mathbf{X}}(R)$ and $\hat{{\rm{P}}}_{\mathbf{X}}(S)$ are unbiased estimators, $\Bbb{E}\left[\hat{{\rm{P}}}_{\mathbf{X}}(R)\right]={\rm{P}}_{\mathbf{X}}(R)$ and $\Bbb{E}\left[\hat{{\rm{P}}}_{\mathbf{X}}(S)\right]={\rm{P}}_{\mathbf{X}}(S)$. Hence $${\rm{Cov}}(\hat{{\rm{P}}}_{\mathbf{X}}(R),\hat{{\rm{P}}}_{\mathbf{X}}(S))
 =\Bbb{E}\left[\big(\hat{{\rm{P}}}_{\mathbf{X}}(R)-{\rm{P}}_{\mathbf{X}}(R)\big)
\big(\hat{{\rm{P}}}_{\mathbf{X}}(S)-{\rm{P}}_{\mathbf{X}}(S)\big)\right]=
\Bbb{E}\left[\hat{{\rm{P}}}_{\mathbf{X}}(R) \cdot \hat{{\rm{P}}}_{\mathbf{X}}(S)\right]
-\Bbb{E}\left[\hat{{\rm{P}}}_{\mathbf{X}}(R)\right] \cdot \Bbb{E}\left[\hat{{\rm{P}}}_{\mathbf{X}}(S)\right].$$  Furthermore, since $\mathcal{X}^{(i)}$, $i\in \{1,\dots,|\mathbf{D}|\}$ are i.i.d.:
 \[
 \Bbb{E}\left[\hat{{\rm{P}}}_{\mathbf{X}}(R) \cdot \hat{{\rm{P}}}_{\mathbf{X}}(S)\right] 
 = \frac{1}{|\mathbf{D}|^2}\sum_{i,j} \Bbb{E}\left[\mathbbm{1}_{R}\big(\mathcal{X}^{(j)}\big)\cdot \mathbbm{1}_{S}\big(\mathcal{X}^{(i)}\big)\right]=\frac{1}{|\mathbf{D}|}\,{\rm{P}}_{\mathbf{X}}(R \cap S)+ \left(1-\frac{1}{|\mathbf{D}|}\right)({\rm{P}}_{\mathbf{X}}(R) \cdot {\rm{P}}_{\mathbf{X}}(S)).
 \]
 Combining with the previous identity gives \eqref{covest}. The rest follows directly from \eqref{covest}.
\end{proof}

\begin{lemma}\label{auxiliary lemma 1}
Consider a set $\{s_q\mid q\in K\}$ of numbers greater than or equal to $1$ which are indexed by elements of the set 
$K=\{1,\dots,k\}$. For any $i\in K$ and $W\subseteq K\setminus\{i\}$ one has 
\begin{equation}
\sum_{Z\subseteq W}\frac{1}{(|Z|+1)^4}\cdot\prod_{q\in Z\cup\{i\}}s_q
\leq\frac{1.5}{|W|+1}\cdot\prod_{q\in W\cup\{i\}}(s_q+1)
\leq\frac{1.5}{k}\left(1+\frac{\sum_{q\in K}s_q}{k}\right)^k.
\end{equation}
In particular:
$$
\sum_{Z\subseteq K\setminus\{i\}}\frac{1}{(|Z|+1)^4}\cdot\prod_{q\in Z}s_q
\leq 
\sum_{Z\subseteq K\setminus\{i\}}\frac{1}{(|Z|+1)^4}\cdot\prod_{q\in Z\cup\{i\}}s_q
\leq
\frac{1.5}{k}\left(1+\frac{\sum_{q\in K}s_q}{k}\right)^k.
$$
\end{lemma}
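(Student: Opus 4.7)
The plan is to handle the two displayed inequalities separately; the ``in particular'' claim will then fall out immediately. Throughout I write $w := |W|$. For the first inequality, I factor $s_i$ out of both sides (noting $i \notin W$) and apply the trivial bound $s_i \leq s_i + 1$ to reduce matters to
\[
\sum_{Z \subseteq W}\frac{1}{(|Z|+1)^4} \prod_{q \in Z} s_q \;\leq\; \frac{1.5}{w+1} \prod_{q \in W}(s_q+1).
\]
The substitution $s_q = 1 + t_q$ with $t_q \geq 0$, together with the expansions $\prod_{q \in Z} s_q = \sum_{Y \subseteq Z} \prod_{q \in Y} t_q$ and $\prod_{q \in W}(s_q+1) = \sum_{Y \subseteq W} 2^{w-|Y|}\prod_{q \in Y} t_q$, regroups both sides as nonnegative combinations of the monomials $\prod_{q \in Y} t_q$. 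Since the $t_q$'s are nonnegative, comparing coefficients termwise (and observing that they depend only on $y := |Y|$ and $w$) reduces the task to the purely combinatorial claim
\[
g(y,w) \;:=\; \sum_{\ell=0}^{w-y}\binom{w-y}{\ell}\frac{1}{(y+\ell+1)^4} \;\leq\; \frac{1.5 \cdot 2^{w-y}}{w+1} \qquad (0 \leq y \leq w).
\]

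The main obstacle is this bound on $g(y,w)$. My plan is to separate the $y$-- and $\ell$--dependencies using $(y+\ell+1)^4 \geq (y+1)(\ell+1)^3$, then apply the absorption identity $\binom{m}{\ell}/(\ell+1) = \binom{m+1}{\ell+1}/(m+1)$ with $m := w-y$ to obtain
\[
g(y,w) \;\leq\; \frac{1}{(y+1)(m+1)}\sum_{j=1}^{m+1}\binom{m+1}{j}\frac{1}{j^2}.
\]
The inner sum is bounded by $1.5\cdot 2^{m}$ via the following elementary chain: isolate the $j=1$ term (contributing $m+1$), use $1/j^2 \leq 1/4$ for $j \geq 2$ to bound the rest by $\frac{1}{4}(2^{m+1} - (m+2))$, then invoke $n \leq 2^{n-1}$ (valid for all $n \geq 1$) to collect terms. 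The estimate on $g(y,w)$ is then finalized using the trivial algebraic inequality $(y+1)(m+1) \geq y+m+1 = w+1$.

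For the second displayed inequality, set $v := |W|+1$ and exploit $s_q + 1 \geq 2$ for $q \in K \setminus (W \cup \{i\})$ to obtain $\prod_{q \in W \cup \{i\}}(s_q+1)\cdot 2^{k-v} \leq \prod_{q \in K}(s_q+1)$, coupled with the elementary inequality $v \cdot 2^{k-v} \geq k$ for $1 \leq v \leq k$ (which, via $j := k-v$, reduces to $2^j \geq j+1$ for $j \geq 0$). Together these give $\frac{1}{v}\prod_{q \in W \cup \{i\}}(s_q+1) \leq \frac{1}{k}\prod_{q \in K}(s_q+1)$, and applying AM-GM to the $k$ factors $(s_q+1)$ on the right yields the desired upper bound $\frac{1}{k}(1+\bar{s})^k$ with $\bar{s} := \frac{1}{k}\sum_q s_q$. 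The ``in particular'' consequences follow at once by specializing to $W = K \setminus \{i\}$ and noting that $\prod_{q \in Z} s_q \leq \prod_{q \in Z \cup \{i\}} s_q$, which holds because $s_i \geq 1$.
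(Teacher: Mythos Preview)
Your proof is correct and takes a genuinely different route from the paper's argument for the first inequality. The paper proceeds by induction on $|W|$: it checks the base cases $|W|\in\{0,1\}$ by hand, and for the inductive step adds a new element $w_*$ to $W$, splits the sum according to whether $w_*\in Z$, bounds $\frac{1}{(|Z|+2)^4}\leq\frac{(|W|+1)^4}{(|W|+2)^4}\cdot\frac{1}{(|Z|+1)^4}$, applies the induction hypothesis, and then verifies the residual polynomial inequality $2x^3-2x-1>0$ for $x=|W|+1\geq 2$. Your approach instead linearizes both sides via the substitution $s_q=1+t_q$, reduces to a termwise comparison of coefficients, and then handles the resulting combinatorial inequality $g(y,w)\leq \frac{1.5\cdot 2^{w-y}}{w+1}$ by the clean chain $(y+\ell+1)^4\geq(y+1)(\ell+1)^3$, the absorption identity, and the elementary estimate $\sum_{j\geq 1}\binom{n}{j}j^{-2}\leq 1.5\cdot 2^{n-1}$. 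This avoids the somewhat ad hoc induction and makes the role of the constant $1.5$ more transparent; on the other hand the paper's induction is shorter to state once the base cases are granted. For the second inequality both arguments are essentially the same monotonicity-plus-AM-GM idea: the paper shows the middle expression is nondecreasing by adjoining one element at a time (ratio $\geq 1$), while you obtain the same conclusion in one shot via $\prod_{q\notin W\cup\{i\}}(s_q+1)\geq 2^{k-v}$ together with $v\cdot 2^{k-v}\geq k$.
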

\begin{proof}
First, notice that if one increases the size of $W$ by adding a new element $w_*$:
$$
\frac{\frac{1.5}{|W|+2}\cdot\prod_{q\in W\cup\{w_*\}\cup\{i\}}(s_q+1)}
{\frac{1.5}{|W|+1}\cdot\prod_{q\in W\cup\{i\}}(s_q+1)}
=\frac{|W|+1}{|W|+2}\cdot(s_{w_*+1})\geq 2\,\frac{|W|+1}{|W|+2}\geq 1.
$$
Hence changing $W$ to $K\setminus\{i\}$ does not decrease 
$\frac{1}{|W|+1}\cdot\prod_{q\in W\cup\{i\}}(s_q+1)$:
$$
\frac{1.5}{|W|+1}\cdot\prod_{q\in W\cup\{i\}}(s_q+1)
\leq\frac{1.5}{k}\cdot\prod_{q\in K}(s_q+1)
\leq\frac{1.5}{k}\left(1+\frac{\sum_{q\in K}s_q}{k}\right)^k,
$$
where the AM-GM inequality was used for the last step.
It remains to show that 
$$
\sum_{Z\subseteq W}\frac{1}{(|Z|+1)^4}\cdot\prod_{q\in Z\cup\{i\}}s_q
\leq\frac{1.5}{|W|+1}\cdot\prod_{q\in W\cup\{i\}}(s_q+1)
$$
for any $W\subseteq K\setminus\{i\}$. We establish this via induction on $|W|$. 
For the base of the induction, the inequality can be checked directly: 
It trivially holds if $W=\varnothing$; and when $|W|=1$, say $W=\{w_0\}$, it amounts to
$$
s_i+\frac{s_is_{w_0}}{16}\leq\frac{3}{4}(s_i+1)(s_{w_0}+1)\Leftrightarrow
4s_i\leq 11s_is_{w_0}+12s_{w_0}+12;
$$
which holds since $s_i,s_{w_0}\geq 1$. Now suppose $|W|\geq 1$.
Adding an element $w_*\in K\setminus(W\cup\{i\})$ to $W$, one should obtain an upper bound for 
\begin{equation*}
\sum_{Z\subseteq W\cup\{w_*\}}\frac{1}{(|Z|+1)^4}\cdot\prod_{q\in Z\cup\{i\}}s_q
=\sum_{Z\subseteq W}\frac{1}{(|Z|+1)^4}\cdot\prod_{q\in Z\cup\{i\}}s_q
+s_{w_*}\Big(\sum_{Z\subseteq W}\frac{1}{(|Z|+2)^4}\cdot\prod_{q\in Z\cup\{i\}}s_q\Big).
\end{equation*}
In the second term on the right, one has 
$$
\frac{1}{(|Z|+2)^4}\leq\frac{(|W|+1)^4}{(|W|+2)^4}\cdot\frac{1}{(|Z|+1)^4};
$$
and moreover, by the induction hypothesis: 
$$
\sum_{Z\subseteq W}\frac{1}{(|Z|+1)^4}\cdot\prod_{q\in Z\cup\{i\}}s_q
\leq\frac{1.5}{|W|+1}\cdot\prod_{q\in W\cup\{i\}}(s_q+1).
$$
Consequently: 
\begin{equation}
\sum_{Z\subseteq W\cup\{w_*\}}\frac{1}{(|Z|+1)^4}\cdot\prod_{q\in Z\cup\{i\}}s_q
\leq\Big(1+\frac{(|W|+1)^4}{(|W|+2)^4}\cdot s_{w_*}\Big)\Big(\frac{1.5}{|W|+1}
\cdot\prod_{q\in W\cup\{i\}}(s_q+1)\Big);
\end{equation}
and it remains to show that 
$$
\frac{1.5}{|W|+1}\Big(1+\frac{(|W|+1)^4}{(|W|+2)^4}\cdot s_{w_*}\Big)
\leq \frac{1.5}{|W|+2}(1+s_{w_*}),
$$
or equivalently
$$
s_{w_*}\Big(\frac{1}{|W|+2}-\frac{(|W|+1)^3}{(|W|+2)^4}\Big)\geq
\frac{1}{|W|+1}-\frac{1}{|W|+2}.
$$
Since $s_{w_*}\geq 1$, it suffices to show that 
$\frac{1}{|W|+2}-\frac{(|W|+1)^3}{(|W|+2)^4}
\geq\frac{1}{|W|+1}-\frac{1}{|W|+2}$.
This always holds; indeed, denoting $|W|+1\geq 2$ by $x$, one has 
$$
\frac{2}{x+1}>\frac{1}{x}+\frac{x^3}{(x+1)^4}\quad 
x\in [2,\infty)
$$
since 
$$
x(x+1)^4\Big(\frac{2}{x+1}-\frac{1}{x}-\frac{x^3}{(x+1)^4}\Big)
=2x^3-2x-1>0
$$
whenever $x\geq 2$.
\end{proof}

\begin{lemma}\label{auxiliary lemma 2}
For any two positive integers $k\leq m$ one has 
$\frac{1}{k}\left(1+\frac{m}{k}\right)^k\leq 2\cdot\frac{2^m}{m}$.
\end{lemma}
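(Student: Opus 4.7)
The plan is to reduce, by the substitution $t := m/k \in [1,\infty)$, to a one-variable estimate. Multiplying through by $m = kt$ and passing to $\log_2$, the target inequality $\tfrac{1}{k}(1+t)^k \le 2\cdot \tfrac{2^{kt}}{kt}$ rearranges to
\[
\log_2 t - 1 \;\le\; k\bigl(t - \log_2(1+t)\bigr).
\]
Two standard facts then do most of the work. Since $2^t \ge 1+t$ for $t \ge 1$ (an immediate consequence of convexity of $2^{(\cdot)}$, with equality at $t = 1$), the bracket on the right is non-negative. Consequently, in the regime $t \in [1,2]$ the left-hand side is $\le 0$ and the inequality is trivial. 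In the regime $t > 2$, the right-hand side is a non-decreasing function of $k \ge 1$, so it is enough to verify the case $k = 1$:
\[
\log_2 t - 1 \;\le\; t - \log_2(1+t), \quad\text{i.e.,}\quad t(t+1) \;\le\; 2^{t+1}, \quad t \ge 2.
\]

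The remaining (and really the main) step is to justify this last one-variable estimate. I would set $h(t) := 2^{t+1} - t(t+1)$ and argue by elementary calculus. From
\[
h'(t) = 2^{t+1}\ln 2 - 2t - 1, \qquad h''(t) = 2^{t+1}(\ln 2)^2 - 2,
\]
one sees that $h''$ is strictly increasing and $h''(2) = 8(\ln 2)^2 - 2 > 0$ (since $\ln 2 > 1/2$), so $h'$ is strictly increasing on $[2,\infty)$; moreover $h'(2) = 8\ln 2 - 5 > 0$, using the crude bound $\ln 2 > 5/8$. Hence $h$ is strictly increasing on $[2,\infty)$ from the positive value $h(2) = 2$, which yields $h > 0$ there, as required.

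The subtle point I expect to be the real obstacle is that the inequality $t(t+1) \le 2^{t+1}$ is quite tight on $[1,2]$: its minimum on $[1,\infty)$ is strictly positive but only of size roughly $2$, so crude slope comparisons such as $(t+1)^2 \le 2^{t+1}$ or $t \cdot 2t \le 2^{t+1}$ both fail in this range. The reduction to $k = 1$ via the monotonicity observation above, together with the short $h, h', h''$ calculation on $[2,\infty)$, is what keeps the proof clean and avoids an awkward case analysis over small values of $k$.
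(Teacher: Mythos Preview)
Your proof is correct. It differs from the paper's argument in organization: the paper splits according to whether $k \ge m/2$ or $k < m/2$, handling the first case via the monotonicity of $x \mapsto (1+m/x)^x$ together with $1/k \le 2/m$, and the second via the bound $(1+m/k)^k \le 3^{m/2}$ followed by a direct check that $(\sqrt{3})^m/2 \le 2\cdot 2^m/m$ (finite verification for $m \le 7$, a ratio comparison for $m \ge 7$), with a separate binomial argument for $k=1$. Your substitution $t = m/k$, together with the observation that $k \mapsto k\bigl(t - \log_2(1+t)\bigr)$ is non-decreasing, replaces this multi-case analysis with a single one-variable calculus estimate $t(t+1) \le 2^{t+1}$ on $[2,\infty)$. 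The gain is a cleaner and more uniform argument; the paper's route, by contrast, avoids explicit derivative computations and stays closer to elementary inequalities and finite checking.
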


\begin{proof}
The function $x\mapsto\left(1+\frac{m}{x}\right)^x$ is increasing on $[1,\infty)$. Thus 
$\left(1+\frac{m}{k}\right)^k\leq 2^m$. So the desired inequality holds for 
$k\geq\frac{m}{2}$ since in that case 
$\frac{1}{k}\leq\frac{2}{m}$.
Next, we focus on the case of $k\leq\frac{m}{2}$. 
If $k=1$, one has 
$$
\frac{2^{m+1}}{m}-(1+m)\geq 
\frac{1+\binom{m+1}{1}+\binom{m+1}{2}+\binom{m+1}{3}}{m}-(m+1)
=\frac{1}{6}(m^3-3m^2+2m+12)>0\quad (m\in\Bbb{N}).
$$
So let us assume that $2\leq k\leq\frac{m}{2}$. Then:
$$
\frac{1}{k}\left(1+\frac{m}{k}\right)^k\leq 
\frac{1}{2}\left(1+\frac{m}{k}\right)^k\leq 
\frac{1}{2}\left(1+\frac{m}{\frac{m}{2}}\right)^{\frac{m}{2}}=\frac{(\sqrt{3})^m}{2}.
$$
It suffices to show that $\frac{(\sqrt{3})^m}{2}\leq 2\cdot\frac{2^m}{m}$ for all positive integers $m$. The inequality can be checked directly for $m\leq 7$. When $m\geq 7$, increasing $m$ by $1$, the left-hand side is multiplied by $\sqrt{3}$ while the right-hand side is multiplied by $\frac{2m}{m+1}$; and 
$\frac{2m}{m+1}>\sqrt{3}$ for $m\geq 7$.
\end{proof}

\begin{lemma}\label{auxiliary lemma 3}
For any two positive integers $m_1\leq m_2$ one has 
$\frac{2^{m_1}}{m_1}\leq \frac{2^{m_2}}{m_2}$.
\end{lemma}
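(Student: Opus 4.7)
The plan is to reduce the inequality to monotonicity of the sequence $\{a_m:=2^m/m\}_{m\geq 1}$ on the positive integers. If this sequence is shown to be non-decreasing, then an easy induction on $m_2-m_1$ starting from the base case $m_1=m_2$ (which is trivial) yields the desired conclusion $a_{m_1}\leq a_{m_2}$ for all positive integers $m_1\leq m_2$.

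To verify monotonicity, I would simply compute the ratio of consecutive terms
\[
\frac{a_{m+1}}{a_{m}}=\frac{2^{m+1}/(m+1)}{2^{m}/m}=\frac{2m}{m+1},
\]
and observe that $\frac{2m}{m+1}\geq 1$ is equivalent to $m\geq 1$, which holds throughout the range in question. Consequently $a_{m+1}\geq a_m$ for every $m\geq 1$, so the sequence is non-decreasing.

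There is no real obstacle here; the statement is essentially a one-line computation. One could alternatively appeal to the continuous analog by differentiating $f(x)=2^x/x$ and observing that $f'(x)=2^x(x\ln 2 - 1)/x^2\geq 0$ for $x\geq 1/\ln 2$, which covers all integers $m\geq 2$, and handle $m=1\to m=2$ by direct inspection ($a_1=a_2=2$); but the ratio argument above is cleaner and avoids the special case. I would therefore go with the discrete ratio calculation as the main (and only) step of the proof.
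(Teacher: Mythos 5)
Your proof is correct and uses exactly the same argument as the paper: both compute the ratio of consecutive terms $\frac{2^{m+1}/(m+1)}{2^m/m}=\frac{2m}{m+1}\geq 1$ to conclude that the sequence is non-decreasing. Nothing further to add.
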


\begin{proof}
The ratio
$$
\frac{\frac{2^{m+1}}{m+1}}{\frac{2^m}{m}}=\frac{2m}{m+1}
$$
is never smaller than $1$. Hence the sequence $\left\{\frac{2^m}{m}\right\}_{m\in\Bbb{N}}$
is increasing. 
\end{proof}

\section{Game values suitable for explaining tree ensembles}\label{appendix:general game values}

\subsection{Proof of Theorem \ref{classification}}\label{subappendix:classification}
In \Sec \ref{subsec:gamebackground}, we set forth four properties that we believe are desirable for game values used for generating marginal feature attributions of  tree ensembles.
Here, we prove Theorem \ref{classification} which classifies such game values. 
Notice that here we deal with general cooperative games $(N,v)$ where $N$ is a finite subset of $\Bbb{N}$.
\begin{proof}[Proof of Theorem \ref{classification}]
A game value of the form \eqref{good game value} satisfies linearity, symmetry and null-player axioms due to Lemma \ref{properties}. As for the carrier dependence, the identity 
$\sum_{S\subseteq N\setminus\{i\},S\cap U=S'}w(S;N,i)=w(S';U,i)$
from the first part of that lemma turns into 
\begin{equation}\label{auxiliary17}
\sum_{s=s'}^{n-u+s'}\binom{n-u}{s-s'}\alpha(s,n)=\alpha(s',u)    
\end{equation}
where $s:=|S|$, $u:=|U|<n$ and $s':=|S'|< u$.
When $u=n-1$, this amounts to $\alpha(s',n)+\alpha(s'+1,n)=\alpha(s',n-1)$, i.e. the backward Pascal identity \eqref{backward Pascal}. The identity can furthermore be employed to establish \eqref{auxiliary17}
via induction on $n-u$:
\begin{equation*}
\begin{split}
&\sum_{s=s'}^{n-u+s'}\binom{n-u}{s-s'}\alpha(s,n)=
\sum_{s=s'}^{n-u+s'}\left[\binom{n-u-1}{s-s'}+\binom{n-u-1}{s-s'-1}\right]\alpha(s,n)\\
&=\sum_{s=s'}^{n-(u+1)+s'}\binom{n-(u+1)}{s-s'}\alpha(s,n)
+\sum_{s=s'+1}^{n-(u+1)+(s'+1)}\binom{n-(u+1)}{s-(s'+1)}\alpha(s,n)\\
&=\alpha(s',u+1)+\alpha(s'+1,u+1)=\alpha(s',u)
\end{split}    
\end{equation*}
where \eqref{backward Pascal} is applied for the last equality and the induction hypothesis for the one before that.\\
\indent Next, we establish the converse implication. Invoking Lemma \ref{properties}, a game value linear game value $h$ with symmetry and null-player properties may be written as 
\begin{equation}\label{auxiliary18}
h_i[N,v]:=\sum_{S\subseteq N\setminus\{i\}}\alpha(|S|;N)\left(v(S\cup\{i\})-v(S)\right)    
\end{equation}
where $N$ is an arbitrary finite non-empty subset of $\Bbb{N}$, $v:2^N\rightarrow\Bbb{R}$  is a cooperative game, $i\in N$ and 
$$
\left\{\alpha(\cdot;N):\{0,\dots,|N|-1\}\rightarrow\Bbb{R}\right\}_{N\subset\Bbb{N},\, 0<|N|<\infty}
$$
is a family of functions encoded by non-empty, finite subsets of natural numbers. 
According to the lemma again, the carrier-dependence condition for \eqref{auxiliary18} can be rephrased as  
\begin{equation}\label{auxiliary17'}
\sum_{s=|S'|}^{|N|-|U|+|S'|}\binom{|N|-|U|}{s-|S'|}\alpha(s;N)=\alpha(|S'|;U)    
\end{equation}
whenever $S'\subsetneq U\subsetneq N$. Any finite subset $U$ of $\Bbb{N}$ is properly contained in a segment 
$\{1,\dots,n\}$ for a sufficiently large $n$. Setting $s':=|S'|$ and taking $N$ to be $\{1,\dots,n\}$ in \eqref{auxiliary17'}, we observe that 
\begin{equation}\label{auxiliary17''}
\alpha(s';U)=\sum_{s=s'}^{n-|U|+s'}\binom{n-|U|}{s-s'}\alpha(s;\{1,\dots,n\}) 
\quad\forall U\subset\{1,\dots,n\}, s'\in\{0,\dots,|U|-1\}.   
\end{equation}
Setting $U=\{1,\dots,n-1\}$ yields a backward Pascal identity 
\begin{equation}\label{backward Pascal'}
\alpha(s';\{1,\dots,n-1\})=\alpha(s';\{1,\dots,n\})+\alpha(s'+1;\{1,\dots,n\})\quad \forall s'\in\{0,\dots,n-1\}.    
\end{equation}
Therefore, the inductive argument used to derive \eqref{auxiliary17} can be repeated to simplify the right-hand side of \eqref{auxiliary17''} to $\alpha(s';\{1,\dots,u\})$. We deduce that any $\alpha(s';U)$ depends only on $s'$ and $|U|$. Abusing the notation to denote $\alpha(s;\{1,\dots,n\})$ by $\alpha(s,n)$, we obtain a collection $\mathcal{A}:=\{\alpha(s,n)\}_{\substack{n\in\Bbb{N}\\ 0\leq s<n}}$ for which,
because of \eqref{backward Pascal'}, the backward Pascal identity  holds; and $\alpha(|S|;N)=\alpha(|S|,|N|)$ for any two subsets $S\subsetneq N\subset\Bbb{N}$. Substituting in \eqref{auxiliary18} yields $h=h^{\mathcal{A}}$.
\end{proof}

The generalization below of Lemma \ref{hockey-stick}  was established in the course of proof. 
We record it for future references. 
\begin{lemma}\label{simplification}
If a game value 
$$
h_i[N,v]:=\sum_{S\subseteq N\setminus \{i\}}w(S;N,i)\left(v(S\cup\{i\})-v(S)\right)\quad (i\in N)   
$$
is equal to $h^\mathcal{A}$ for a collection 
$\mathcal{A}=\{\alpha(s,n)\}_{\substack{n\in\Bbb{N}\\ 0\leq s<n}}$ of real numbers as in Theorem \ref{classification}, 
then 
$$
\sum_{Z\subseteq S\subseteq W}w(S;N,i)=\alpha(|Z|,|N|+|Z|-|W|).
$$
whenever $Z\subseteq W\subseteq N\setminus\{i\}$.
\end{lemma}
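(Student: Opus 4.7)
The plan is to reduce the claim to a purely combinatorial identity about the numbers $\alpha(s,n)$, and then derive that identity from the backward Pascal relation \eqref{backward Pascal} by induction. The reduction is straightforward: once we know $h = h^{\mathcal{A}}$, Theorem \ref{classification} (or equivalently Lemma \ref{properties}(3) combined with the computation of $\alpha$ from the proof of Theorem \ref{classification}) forces $w(S;N,i) = \alpha(|S|,|N|)$, a quantity depending only on $|S|$ and $|N|$. Writing $z := |Z|$, $w := |W|$, $n := |N|$, the left-hand side becomes
\begin{equation*}
\sum_{Z\subseteq S\subseteq W}\alpha(|S|,|N|)
= \sum_{s=z}^{w}\binom{w-z}{s-z}\,\alpha(s,n),
\end{equation*}
since for each cardinality $s \in \{z,\dots,w\}$ there are exactly $\binom{w-z}{s-z}$ intermediate subsets $S$ with $Z\subseteq S\subseteq W$.

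So the task reduces to showing
\begin{equation*}
\sum_{s=z}^{w}\binom{w-z}{s-z}\,\alpha(s,n) \;=\; \alpha\bigl(z,\,n+z-w\bigr)
\end{equation*}
for all integers $0 \le z \le w < n$. I would prove this by induction on $w-z \ge 0$. The base case $w=z$ is trivial: the left-hand side is just $\alpha(z,n)$, and the right-hand side is $\alpha(z,n)$ as well. For the inductive step, assuming the identity for $w-z = d$, take $w'=w+1$ with $w+1 \le n-1$, split $\binom{w+1-z}{s-z} = \binom{w-z}{s-z} + \binom{w-z}{s-z-1}$ in the Pascal fashion, and regroup:
\begin{equation*}
\sum_{s=z}^{w+1}\binom{w+1-z}{s-z}\alpha(s,n)
= \sum_{s=z}^{w}\binom{w-z}{s-z}\alpha(s,n) + \sum_{s=z+1}^{w+1}\binom{w-z}{s-z-1}\alpha(s,n).
\end{equation*}
By the inductive hypothesis the two sums equal $\alpha(z,n+z-w)$ and $\alpha(z+1,n+z-w)$ respectively, and a single application of the backward Pascal identity \eqref{backward Pascal} collapses their sum to $\alpha(z,n+z-w-1) = \alpha(z, n+z-(w+1))$, as required.

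There is no real obstacle here: the argument is essentially already carried out inside the proof of Theorem \ref{classification} (where the special case $Z = S'$, $W = U \setminus \{i\}$ is handled to verify carrier dependence of $h^{\mathcal{A}}$). The present lemma is simply a repackaging, highlighting the fact that partial sums of the weights $w(S;N,i)$ over nested intervals $Z\subseteq S\subseteq W$ are governed by the same family $\mathcal{A}$, with the arity parameter shifted from $n$ to $n+z-w$. Recovering Lemma \ref{hockey-stick} amounts to specializing to the Shapley weights $\alpha(s,n) = \tfrac{s!(n-s-1)!}{n!}$, where one recovers the shift via $\alpha(z,n+z-w) = \tfrac{z!\,(n-w-1)!}{(n+z-w)!}$.
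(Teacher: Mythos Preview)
Your proof is correct and follows essentially the same approach as the paper: reduce $w(S;N,i)$ to $\alpha(|S|,|N|)$, rewrite the sum over $Z\subseteq S\subseteq W$ as $\sum_{s=z}^{w}\binom{w-z}{s-z}\alpha(s,n)$, and prove the resulting identity by induction on $w-z$ using Pascal's rule together with the backward Pascal identity \eqref{backward Pascal}. The paper's own proof simply points back to the derivation of \eqref{auxiliary17} inside the proof of Theorem~\ref{classification}, which is exactly the induction you spell out.
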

\begin{proof}
One needs to show $\sum_{Z\subseteq S\subseteq W}\alpha(|S|,n)=\alpha(|Z|,n+|Z|-|W|)$ where $n=|N|$ as usual.
The left-hand side can be written as $\sum_{s=z}^{w}\binom{w-z}{s-z}\alpha(s,n)$ where $z:=|Z|$ and $w:=|W|$; and we need to show that this is the same as $\alpha(z,n+z-w)$. This was essentially established in the proof of Theorem \ref{classification} where \eqref{auxiliary17} was derived from 
\eqref{backward Pascal}.
\end{proof}
\subsection{Coalitional game values for tree ensembles}\label{subappendix:classification coalitional}
We next introduce a family  of coalitional game values which mimic the definition of the Owen value 
\eqref{OwenFormula} and have desirable properties similar to those outlined in Theorem \ref{classification}. 
\begin{proposition}\label{simplification coalitional}
Let $\mathcal{A}_1:=\{\alpha_1(s,n)\}_{\substack{n\in\Bbb{N}\\ 0\leq s<n}}$ and 
$\mathcal{A}_2:=\{\alpha_2(s,n)\}_{\substack{n\in\Bbb{N}\\ 0\leq s<n}}$ be two collections of real numbers each satisfying the backward Pascal identity (cf. \eqref{backward Pascal}). Consider the following coalitional game value
\begin{equation}\label{coalitional generalization}
\mathfrak{h}^{\mathcal{A}_1,\mathcal{A}_2}_i[N,v,\mathfrak{P}]:=  
\sum_{R\subseteq M\setminus \{j\}}
\sum_{K\subseteq S_j\setminus\{i\}}
\alpha_1(|R|,|M|)\cdot
\alpha_2(|K|,|S_j|)
\left(v\left(Q\cup K\cup\{i\}\right)-
v\left(Q\cup K\right)\right)
\end{equation}
where $(N,v)$ is a cooperative game, $\mathfrak{P}:=\{S_1,\dots,S_m\}$ is a partition of $N$ indexed by elements of $M:=\{1,\dots,m\}$, $i\in N$ lies in $S_j$, and $Q:=\cup_{r\in R}S_r$.
\begin{enumerate}
\item The coalitional game value $\mathfrak{h}^{\mathcal{A}_1,\mathcal{A}_2}$ 
satisfies linearity, coalitional symmetry, null-player and coalitional carrier-dependence properties. 
\item If a coalitional game value 
\begin{equation}\label{linear general variant coalitional'}
\mathfrak{h}_i[N,v,\mathfrak{P}]:=\sum_{S\subseteq N\setminus \{i\}}w(S;N,i,\mathfrak{P})\left(v(S\cup\{i\})-v(S)\right)\quad (i\in N)      
\end{equation}
is equal to $\mathfrak{h}^{\mathcal{A}_1,\mathcal{A}_2}$ defined in \eqref{coalitional generalization}, then,  for any $Z\subseteq W\subseteq N\setminus\{i\}$, one has 
\begin{equation}\label{auxiliary19}
\sum_{Z\subseteq S\subseteq W}w(S;N,i,\mathfrak{P})
=\begin{cases}
\alpha_1(|\mathcal{Z}|,|M|+|\mathcal{Z}|-|\mathcal{W}|)\cdot\alpha_2(|Z\cap S_j|,|S_j|+|Z\cap S_j|-|W\cap S_j|) & \text{if }\mathcal{Z}\subseteq\mathcal{W}, \\
0 &\text{otherwise.}
\end{cases}
\end{equation}
where 
\begin{equation}\label{auxiliary19'}
\mathcal{Z}:=\{r\in M\setminus\{j\}\mid Z\cap S_r\neq\varnothing\},\quad
\mathcal{W}:=\{r\in M\setminus\{j\}\mid S_r\subseteq W\}.
\end{equation}
\item If $\alpha_2(s,n)=\frac{s!(n-s-1)!}{n!}$, then $\mathfrak{h}^{\mathcal{A}_1,\mathcal{A}_2}$ admits the quotient game property \eqref{quotinet game}.
\end{enumerate}
\end{proposition}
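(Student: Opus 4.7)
The three parts are tackled in sequence, with the common combinatorial ingredient being the hockey–stick summation of the backward Pascal identity \eqref{backward Pascal}, already codified in Lemma \ref{simplification}. For part (1), linearity is immediate from \eqref{coalitional generalization}, and the null-player property follows because every summand contains the difference $v(Q\cup K\cup\{i\})-v(Q\cup K)$, which vanishes when $i$ is a null player. The reordering clause of coalitional symmetry holds because the formula depends on $R$ only through $|R|$ and on the $S_r$'s only through the set $Q$; the player-permutation clause follows by a bijective change of variables. The substantive axiom is coalitional carrier-dependence. Given a carrier $U$ with $i\in U\cap S_j$, let $M_U:=\{r\in M\mid S_r\cap U\neq\varnothing\}$ (so $j\in M_U$), and set $R':=R\cap(M_U\setminus\{j\})$, $K':=K\cap U$. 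Using $v(\cdot)=v(\cdot\cap U)$, every marginal contribution becomes a function of $R'$ and $K'$ alone, so aggregating over preimages $R\supseteq R'$ with $R\setminus R'\subseteq M\setminus M_U$ and $K\supseteq K'$ with $K\setminus K'\subseteq S_j\setminus U$ reduces the claim to
\[
\sum_{T\subseteq M\setminus M_U}\alpha_1(|R'|+|T|,|M|)=\alpha_1(|R'|,|M_U|),\qquad
\sum_{T'\subseteq S_j\setminus U}\alpha_2(|K'|+|T'|,|S_j|)=\alpha_2(|K'|,|S_j\cap U|),
\]
both of which are iterated applications of the backward Pascal identity. Substituting these back yields exactly $\mathfrak{h}^{\mathcal{A}_1,\mathcal{A}_2}_i[U,v,\mathfrak{P}']$.

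For part (2), observe first that a subset $S\subseteq N\setminus\{i\}$ receives a non-zero weight $w(S;N,i,\mathfrak{P})$ in \eqref{linear general variant coalitional'} precisely when $S\cap S_r\in\{\varnothing,S_r\}$ for every $r\in M\setminus\{j\}$, in which case writing $R:=\{r\in M\setminus\{j\}\mid S_r\subseteq S\}$ and $K:=S\cap S_j$ gives $w(S;N,i,\mathfrak{P})=\alpha_1(|R|,|M|)\cdot\alpha_2(|K|,|S_j|)$. Under the constraint $Z\subseteq S\subseteq W$, partition-respecting subsets $S$ correspond bijectively to pairs $(R,K)$ with $\mathcal{Z}\subseteq R\subseteq\mathcal{W}$ and $Z\cap S_j\subseteq K\subseteq W\cap S_j$, where $\mathcal{Z},\mathcal{W}$ are the sets defined in \eqref{auxiliary19'}; if $\mathcal{Z}\not\subseteq\mathcal{W}$ no such $R$ exists and the sum vanishes. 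The two inner summations then decouple, and Lemma \ref{simplification} evaluates each one to a single value of $\alpha_1$ or $\alpha_2$, giving \eqref{auxiliary19}.

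For part (3), I would swap the order of summation in $\sum_{i\in S_j}\mathfrak{h}^{\mathcal{A}_1,\mathcal{A}_2}_i[N,v,\mathfrak{P}]$ so that $R$ (hence $Q$) is outermost. For each fixed $R$, the remaining double sum $\sum_{i\in S_j}\sum_{K\subseteq S_j\setminus\{i\}}\alpha_2(|K|,|S_j|)\big(v(Q\cup K\cup\{i\})-v(Q\cup K)\big)$ is the sum of the Shapley values of the cooperative game $K\mapsto v(Q\cup K)-v(Q)$ on the set of players $S_j$, and by efficiency this equals $v(Q\cup S_j)-v(Q)=v^{\mathfrak{P}}(R\cup\{j\})-v^{\mathfrak{P}}(R)$. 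The outer sum over $R\subseteq M\setminus\{j\}$ weighted by $\alpha_1(|R|,|M|)$ then matches $\mathfrak{h}^{\mathcal{A}_1,\mathcal{A}_2}_j[M,v^{\mathfrak{P}},\bar{M}]$, using $\alpha_2(0,1)=1$ for the Shapley choice when the outer formula is applied to the singleton partition $\bar{M}$. The main obstacle throughout is bookkeeping in the nested summations: once the correct regroupings and partition-compatible subsets are identified, each reduction is a clean invocation of either the backward Pascal identity or the efficiency of the Shapley value.
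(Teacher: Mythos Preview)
Your proposal is correct and follows essentially the same route as the paper: part (2) is argued exactly as in the paper via the decoupling into the $R$- and $K$-sums followed by Lemma \ref{simplification}; for part (3) you carry out the same efficiency reduction that the paper packages as a ``two-step formulation''; and for coalitional carrier-dependence you use the same backward-Pascal collapse, only you work directly with the double sum \eqref{coalitional generalization} while the paper first passes through the weight condition \eqref{auxiliary20} and then invokes the already-proved part (2). These are organizational rather than substantive differences.
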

\begin{proof}
Any coalitional game value of the form \eqref{linear general variant coalitional'}, including 
$\mathfrak{h}^{\mathcal{A}_1,\mathcal{A}_2}$, is obviously linear and satisfies the null-player property. 
Also given that its coefficients depend only on the cardinalities of subsets,  
it is not hard to see that $\mathfrak{h}^{\mathcal{A}_1,\mathcal{A}_2}$ 
has the coalitional symmetry property.\\
\indent 
We establish the coalitional carrier-dependence and the claim in (2) simultaneously. In \eqref{auxiliary19}, $w(S;N,i,\mathfrak{P})$ is non-zero only if $S$ is of the form 
$(\cup_{r\in R}S_r)\cup K$ where $R\subseteq M\setminus\{j\}$ and $K\subseteq S_j\setminus\{i\}$.  
A subset of this form fits between $Z$ and $W$ if and only if for subsets  $\mathcal{Z},\mathcal{W}$ from \eqref{auxiliary19'} one has $\mathcal{Z}\subseteq\mathcal{W}$.  
Thus the summation in \eqref{auxiliary19} vanishes when $\mathcal{Z}\nsubseteq\mathcal{W}$. 
Assuming that $\mathcal{Z}\subseteq\mathcal{W}$, 
subsets $S=(\cup_{r\in R}S_r)\cup K$ for which 
$Z\subseteq S\subseteq W$ are precisely those with 
$\mathcal{Z}\subseteq R\subseteq\mathcal{W}$ and 
$Z\cap S_j\subseteq K\subseteq W\cap S_j$. Hence:
\begin{equation}\label{auxiliary20'}
\sum_{Z\subseteq S\subseteq W}w(S;N,i,\mathfrak{P})=
\Big(\sum_{\mathcal{Z}\subseteq R\subseteq\mathcal{W}}\alpha_1(|R|,|M|)\Big)\cdot
\Big(\sum_{Z\cap S_j\subseteq K\subseteq W\cap S_j}\alpha_2(|K|,|S_j|)\Big).    
\end{equation}
The right-hand side can be simplified to 
$\alpha_1(|\mathcal{Z}|,|M|+|\mathcal{Z}|-|\mathcal{W}|)\cdot\alpha_2(|Z\cap S_j|,|S_j|+|Z\cap S_j|-|W\cap S_j|)$
by applying Lemma \ref{simplification} twice. Next, to establish the coalitional carrier-dependence property for a value of the form \eqref{linear general variant coalitional'}, we should establish a condition similar to the one from Proposition \ref{properties}:
\begin{equation}\label{auxiliary20}
\sum_{S\subseteq N\setminus\{i\},S\cap U=S'}w(S;N,i,\mathfrak{P})=w(S';U,i,\mathfrak{P}')  \quad 
\forall i\in U\subsetneq N, S'\subseteq U\setminus\{i\}.
\end{equation}
 On the left-hand side, we are dealing with subsets fitting between $S'\subset S'\sqcup(N\setminus U)$. Setting $Z=S'$ and $W=S'\sqcup(N\setminus U)$, 
consider $\mathcal{Z}$ and $\mathcal{W}$ as in \eqref{auxiliary19'}. If $\mathcal{Z}\nsubseteq\mathcal{W}$, then, as discussed before, the left-hand side of \eqref{auxiliary20} is zero. Notice that $\mathcal{Z}\nsubseteq\mathcal{W}$ means that there exists an 
$r\in R\setminus\{j\}$ with 
$S'\cap S_r\neq\varnothing$ and $S_r\nsubseteq S'\sqcup(N\setminus U)$. 
But then $w(S';U,i,\mathfrak{P}')$ should be zero because $S'$ intersects $S_r\cap U\in\mathfrak{P}'$ non-trivially, an element distinct  from 
$S_j\cap U\in\mathfrak{P}'$ (which contains $i$). 
Alternatively, when $\mathcal{Z}\subseteq\mathcal{W}$,
$S'$ may be written as the disjoint union
$$\left(\cup_{r\in\mathcal{Z}}(S'\cap S_r)\right)\cup(S'\cap S_j)
=\left(\cup_{r\in\mathcal{Z}}(S_r\cap U)\right)\cup(S'\cap S_j).$$ 
Thus
$w(S';U,i,\mathfrak{P}')=\alpha_1(|\mathcal{Z}|,|\mathfrak{P}'|)\cdot\alpha_2(|S'\cap S_j|,|S_j\cap U|).$
On the other hand, again, the left-hand side of \eqref{auxiliary20} may be rewritten as \eqref{auxiliary20'}
and then simplified by  applying Lemma \ref{simplification} to obtain
\small
$$\alpha_1(|\mathcal{Z}|,|M|+|\mathcal{Z}|-|\mathcal{W}|)
\cdot\alpha_2(|Z\cap S_j|,|S_j|+|Z\cap S_j|-|W\cap S_j|)
=\alpha_1(|\mathcal{Z}|,|M\setminus(\mathcal{W}\setminus\mathcal{Z})|)\cdot 
\alpha_2(|S'\cap S_j|,|S_j\setminus(W\setminus Z)|).$$ 
\normalsize
This coincides with $\alpha_1(|\mathcal{Z}|,|\mathfrak{P}'|)\cdot\alpha_2(|S'\cap S_j|,|S_j\cap U|)$ because $W\setminus Z$ is the complement of $U$, and 
$$
\mathcal{W}\setminus\mathcal{Z}=\{r\in M\mid S_r\subseteq W\setminus Z=N\setminus U\}
=\{r\in M\mid S_r\cap U=\varnothing\}
=\{r\in M\mid S_r\cap U\notin\mathfrak{P}'\}.
$$
Finally, to establish part (3), notice that the coalitional value 
$\mathfrak{h}^{\mathcal{A}_1,\mathcal{A}_2}$  admits a 
\textit{two-step formulation} (cf. \cite[\Sec 4.4]{2021arXiv210210878M}): 
\eqref{coalitional generalization} may be written as 
$$
\mathfrak{h}_i^{\mathcal{A}_1,\mathcal{A}_2}[N,v,\mathfrak{P}]
=h^{\mathcal{A}_2}_i\big[S_j,v^{(j)}\big]
$$
where 
$$
v^{(j)}(T):=h^{\mathcal{A}_1}_j\big[M,\hat{v}_T\big]\quad (T\subseteq S_j)
$$
with the \textit{intermediate game} $\hat{v}_T$ defined as 
$$
\hat{v}_T(R):=
\begin{cases}
v\left(\cup_{r\in R}S_r\right)  &\text{ if }j\notin R,\\
v\left(\cup_{r\in R\setminus\{j\}}S_r\cup T\right)  &\text{ if }j\in R,
\end{cases}
\quad (R\subseteq M, T\subseteq S_j).
$$
When $h^{\mathcal{A}_2}$ is the Shapley value, due to the efficiency property, one has
$$\sum_{i\in S_j}\mathfrak{h}_i^{\mathcal{A}_1,\mathcal{A}_2}[N,v,\mathfrak{P}]
=\sum_{i\in S_j}h^{\mathcal{A}_2}_i\big[S_j,v^{(j)}\big]
=v^{(j)}(S_j)=h^{\mathcal{A}_1}_j\big[M,\hat{v}_{S_j}=v^{\mathfrak{P}}\big]$$
where $v^{\mathfrak{P}}$ is the quotient game (cf. \eqref{quotinet game}). 
It is clear from \eqref{coalitional generalization} that $h^{\mathcal{A}_1}_j\big[M,v^{\mathfrak{P}}\big]$ is the same as 
$\mathfrak{h}^{\mathcal{A}_1,\mathcal{A}_2}_i\big[M,v^{\mathfrak{P}},\bar{M}\big]$, thus the quotient game property. 
\end{proof}

\section{Generalizations of Theorem \ref{Catboost theorem}}\label{appendix:generalization}
With proper adjustments, the formula presented in Theorem 
\ref{Catboost theorem} for marginal Shapley values of ensembles of oblivious trees carries over to more general game values $h^{\mathcal{A}}$ and $\mathfrak{h}^{\mathcal{A}_1,\mathcal{A}_2}$ introduced in \ref{subappendix:classification coalitional}.
For the sake of brevity, we consider the case of a single symmetric decision tree $T$, and we drop the dependence on $T$ in our notation.\\
\indent 
As in Notation-Convention \ref{groundwork} and the proof of Theorem \ref{Catboost theorem}, we denote the depth of $T$ by $m$, the distinct features on which $T$ splits (from top to bottom) by $X'_1,\dots,X'_k$, and the corresponding partition of $M=\{1,\dots,m\}$ by $\mathsf{p}$; that is, $\mathsf{p}$ is the partition of levels of $T$ determined by distinct features appearing in the tree which is indexed by $K=\{1,\dots,k\}$. In the case of coalitional values $\mathfrak{h}^{\mathcal{A}_1,\mathcal{A}_2}$, there also exists a partition 
$\mathfrak{P}:=\big\{\tilde{S}_1,\dots,\tilde{S}_{\tilde{m}}\big\}$ of $K$  
determined by the partition in hand of predictors $X'_1,\dots,X'_k$. 
This is indexed by $\tilde{M}:=\big\{1,\dots,\tilde{m}\big\}$.
Given $i\in K$, the theorem below provides formulas for marginal feature attributions 
$h^{\mathcal{A}}_i\big[v^\ME\big](\mathbf{x})$
and $\mathfrak{h}^{\mathcal{A}_1,\mathcal{A}_2}_i\big[v^\ME,\mathfrak{P}\big](\mathbf{x})$
on a rectangular region $R_{\mathbf{a}}$ corresponding to a leaf encoded by $\mathbf{a}$. In the coalitional case, we assume that $i$ belongs to the partition element $\tilde{S}_{j}$ where $j\in\tilde{M}$.
\begin{theorem}\label{last}
With the notation as above, for ${\rm{P}}_{\mathbf{X}}$-almost every  $\mathbf{x}\in R_{\mathbf{a}}$ one has 
\begin{equation}\label{symmetric formula tree generic}
\begin{split}
h^{\mathcal{A}}_i\big[v^\ME\big](\mathbf{x})=&
\sum_{\substack{Z\subseteq K\\ i\in Z}}   
\sum_{\substack{W\subseteq K\\
W\supseteq Z}}\alpha(|Z|-1,|K|+|Z|-|W|)
\cdot 
\Big(\sum_{\mathbf{b}\in\mathcal{E}^{-1}(\mathbf{a},W;\mathsf{p})\cap\mathcal{R}}
\mathfrak{s}(\mathbf{b},-Z)\Big)\\
&-\sum_{\substack{W\subseteq K\\ i\notin W}}   
\sum_{\substack{Z\subseteq K\\
Z\subseteq W}}\alpha(|Z|,|K|+|Z|-|W|)\cdot
\Big(\sum_{\mathbf{b}\in\mathcal{E}^{-1}(\mathbf{a},W;\mathsf{p})\cap\mathcal{R}}
\mathfrak{s}(\mathbf{b},-Z)\Big),
\end{split}    
\end{equation}
\normalsize
and
\small
\begin{equation}\label{symmetric formula tree Owen}
\begin{split}
\mathfrak{h}^{\mathcal{A}_1,\mathcal{A}_2}_i\big[v^\ME,\mathfrak{P}\big](\mathbf{x})=\\
\sum_{\substack{\mathcal{W}\subseteq \tilde{M}\\ j\notin \mathcal{W}}}  
\sum_{\substack{\mathcal{Z}\subseteq \tilde{M}\\
\mathcal{Z}\subseteq \mathcal{W}}}
\alpha_1(|\mathcal{Z}|,|\tilde{M}|+|\mathcal{Z}|-|\mathcal{W}|)\cdot
\Bigg(
&\sum_{\substack{Z_*\subseteq \tilde{S}_{j}\\ i\in Z_*}}   
\sum_{\substack{W_*\subseteq\tilde{S}_{j}\\
W_*\supseteq Z_*}}\alpha_2(|Z_*|-1,|\tilde{S}_j|+|Z_*|-|W_*|)
\cdot
\Big(\sum_{\substack{Z\in\mathsf{Int}(\mathcal{Z},Z_*,j;\mathfrak{P}) \\
W\in\mathsf{Inc}(\mathcal{W},W_*,j;\mathfrak{P}) }}
\mathcal{S}(\mathbf{a},Z,W;\mathsf{p},\mathcal{R})\Big)\\
&-\sum_{\substack{W_*\subseteq \tilde{S}_{j}\\ i\notin W_*}}   
\sum_{\substack{Z_*\subseteq\tilde{S}_{j}\\
Z_*\subseteq W_*}}\alpha_2(|Z_*|,|\tilde{S}_j|+|Z_*|-|W_*|)
\cdot
\Big(\sum_{\substack{Z\in\mathsf{Int}(\mathcal{Z},Z_*,j;\mathfrak{P}) \\W\in\mathsf{Inc}(\mathcal{W},W_*,j;\mathfrak{P}) }}
\mathcal{S}(\mathbf{a},Z,W;\mathsf{p},\mathcal{R})\Big)\Bigg)
\end{split}    
\end{equation}
\normalsize
where terms $\mathcal{E}^{-1}$ and $\mathfrak{s}$ are defined as in 
\eqref{auxiliary5'} and \eqref{auxiliary5''}; and 
\begin{equation}\label{auxiliary21'}
\mathcal{S}(\mathbf{a},Z,W;\mathsf{p},\mathcal{R})
:=\sum_{\mathbf{b}\in\mathcal{E}^{-1}(\mathbf{a},W;\mathsf{p})\cap\mathcal{R}}
\mathfrak{s}(\mathbf{b},-Z);
\end{equation}
and for any $j\in \tilde{M}$, $Q_*\subseteq\tilde{S}_j$ and 
$\mathcal{Q}\subseteq\tilde{M}\setminus\{j\}$,  
$\mathsf{Int}(\mathcal{Q},Q_*,j;\mathfrak{P})$ 
and $\mathsf{Inc}(\mathcal{Q},Q_*,j;\mathfrak{P})$
are defined as 
\begin{equation}\label{two modes}
\begin{split}
& \mathsf{Int}(\mathcal{Q},Q_*,j;\mathfrak{P}):=
\big\{Q\mid Q\subseteq K, Q\cap\tilde{S}_j=Q_*,
\{r\in\tilde{M}\setminus\{j\}\mid Q\cap\tilde{S}_r\neq\varnothing\}=\mathcal{Q}\big\},\\
&\mathsf{Inc}(\mathcal{Q},Q_*,j;\mathfrak{P}):=
\big\{Q\mid Q\subseteq K, Q\cap\tilde{S}_j=Q_*,
\{r\in\tilde{M}\setminus\{j\}\mid \tilde{S}_r\subseteq Q\}=\mathcal{Q}\big\}.
\end{split}
\end{equation}

\end{theorem}

\begin{proof}
Replacing the Shapley value with something of the form \eqref{linear general variant} or \eqref{linear general variant coalitional}, one only needs to accordingly adjust  the summations over subsets $Q$ that appear in \eqref{auxiliary12'}.
In the case of $h^{\mathcal{A}}$, we only need to use Corollary \ref{simplification} to simplify 
$\sum_{\{Q\mid Z\setminus\{i\}\subseteq Q\subseteq W\setminus\{i\}\}}
\alpha(|Q|,|K|)$ as $\alpha(|Z|-1,|K|+|Z|-|W|)$
and 
$\sum_{\{Q\mid Z\subseteq Q\subseteq W\}}\alpha(|Q|,|K|)$ 
as $\alpha(|Z|,|K|+|Z|-|W|)$. \\
\indent
As for the coalitional game value $\mathfrak{h}^{\mathcal{A}_1,\mathcal{A}_2}$, the aforementioned summations in \eqref{auxiliary12'} should be simplified by invoking the second part of Proposition \ref{simplification coalitional}.
Notice that not all pairs $Z,W$ with $i\in Z\subseteq W\subseteq K$ or 
with $Z\subseteq W\subseteq K\setminus\{i\}$ come up, because the definition \eqref{coalitional generalization} of $\mathfrak{h}^{\mathcal{A}_1,\mathcal{A}_2}$ allows only subsets $Q$ that, except $\tilde{S}_{j}$, intersect any other element of the partition 
$\mathfrak{P}=\big\{\tilde{S}_1,\dots,\tilde{S}_{\tilde{m}}\big\}$ of $K$ either fully or trivially. 
Indeed, adapting the notation from Proposition \ref{simplification coalitional} to the setting here, such a subset $Q$ fits between $Z$ and $W$ 
(or between $Z\setminus\{i\}$ and $W\setminus\{i\}$ when $i\in Z\subseteq W$)
if and only if 
\begin{equation}\label{auxiliary19''}
\mathcal{Z}:=
\big\{r\in \tilde{M}\setminus\{j\}\mid Z\cap \tilde{S}_r\neq\varnothing\big\}\subseteq 
\mathcal{W}:=
\big\{r\in \tilde{M}\setminus\{j\} \mid \tilde{S}_r\subseteq W\big\}.
\end{equation}
We next modify the expression \eqref{auxiliary12'} for the value of 
$\varphi_i\big[v^\ME\big]$ over $R_{\mathbf{a}}$ to obtain the value of 
$\mathfrak{h}^{\mathcal{A}_1,\mathcal{A}_2}_i\big[v^\ME,\mathfrak{P}\big]$ over that rectangular region.
The expression
$\sum_{\mathbf{b}\in\mathcal{E}^{-1}(\mathbf{a},W;\mathsf{p})\cap\mathcal{R}}
c_{\mathbf{b}}\cdot\Big(\sum_{\mathbf{u}\in\mathcal{E}^{-1}
(\mathbf{b},-Z;\mathsf{p})\cap\mathcal{R}}p_{\mathbf{u}}\Big)$
therein is just 
$\mathcal{S}(\mathbf{a},Z,W;\mathsf{p},\mathcal{R})$
as defined in \eqref{auxiliary21'}.
Moreover, the summations of coefficients of $\mathfrak{h}^{\mathcal{A}_1,\mathcal{A}_2}$ over subsets of $Q$ fitting between $Z$ and $W$ may be simplified as in \eqref{auxiliary19}. One can reorder the terms in \eqref{auxiliary12'} to first sum over subsets 
$\mathcal{Z}\subseteq\mathcal{W}\subseteq \tilde{M}\setminus\{j\}$,
then over intersections with $\tilde{S}_j$
$$
Z_*:=Z\cap\tilde{S}_j\subseteq W_*:=W\cap\tilde{S}_j,
$$
and finally, over subsets $Z\subseteq W$ for which $\mathcal{Z},Z_*,\mathcal{W}$ and $W_*$ are prescribed.
This last summation takes place over 
\small
$$
\Big\{(Z,W)\mid Z\subseteq W\subseteq K, 
\{r\in\tilde{M}\setminus\{j\}\mid Z\cap\tilde{S}_r\neq\varnothing\}=\mathcal{Z},
\{r\in\tilde{M}\setminus\{j\}\mid \tilde{S}_r\subseteq W\}=\mathcal{W},
Z\cap\tilde{S}_j=Z_*,
W\cap\tilde{S}_j=W_*\Big\}.
$$
\normalsize
This can be described by 
$Z\in\mathsf{Int}(\mathcal{Z},Z_*,j;\mathfrak{P})$ 
and 
$W\in\mathsf{Inc}(\mathcal{W},W_*,j;\mathfrak{P})$---the inclusion $Z\subseteq W$ follows automatically from inclusions 
$\mathcal{Z}\subseteq\mathcal{W}$ and $Z_*\subseteq W_*$.
The details are left to the reader.
\end{proof}
We finish by analyzing the complexity formulas exhibited in Theorem \ref{last} for feature attributions. The complexity of \eqref{symmetric formula tree generic} is no different than the complexity presented in Theorem \ref{Catboost theorem} for the Shapley value. 
As for the case of \eqref{symmetric formula tree Owen} and coalitional values, we carry out a complexity analysis below.
\begin{proposition}\label{complexity Owen}
Let $T$ be an oblivious decision tree of depth $m$ splitting on $k$ distinct features 
$(X'_1,\dots,X'_k)$ which results in a partition $\mathsf{p}$ of $M$ indexed by $K$ 
(see Definition-Convention \ref{groundwork}). 
Suppose the features $(X'_1,\dots,X'_k)$ are grouped based on a partition 
$\mathfrak{P}=\big\{\tilde{S}_1,\dots,\tilde{S}_{\tilde{m}}\big\}$ of $K$. 
The formula \eqref{symmetric formula tree Owen} 
for the marginal coalitional value of a feature at a given explicand, once completely expanded, has no more than
\begin{equation}\label{complexity0'}
\min\Big(2^k\cdot (1.5)^{|\tilde{S}_j|-1}\cdot (2.5)^{|\mathfrak{P}|-1},2\cdot 3^{k-1}\Big)
\cdot\left(\frac{m}{k}\right)^k
\end{equation}
terms where $\tilde{S}_j$ is the partition element that contains the feature under consideration.
\end{proposition}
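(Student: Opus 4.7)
The plan is to bound the total number of summands produced by fully expanding $\mathcal{S}(\mathbf{a},Z,W;\mathsf{p},\mathcal{R})$ via \eqref{auxiliary21'} and then $\mathfrak{s}(\mathbf{b},-Z)$ via \eqref{auxiliary5''} inside \eqref{symmetric formula tree Owen}. After this expansion, each summand is indexed by a tuple $(\mathcal{Z},\mathcal{W},Z_*,W_*,Z,W,\mathbf{b},\mathbf{u})$. Because the outer four-tuple $(\mathcal{Z},\mathcal{W},Z_*,W_*)$ is completely determined by $(Z,W)$ through the definitions \eqref{two modes}, this count factors as (number of admissible $(Z,W)$ pairs) times (maximum over $(Z,W)$ of the number of $(\mathbf{b},\mathbf{u})$ pairs), where ``admissible'' means $Z\subseteq W\subseteq K$ together with either $i\in Z$ or $i\notin W$ and the compatibility $\mathcal{Z}(Z)\subseteq\mathcal{W}(W)$.

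For the inner $(\mathbf{b},\mathbf{u})$ count, I will recycle identity \eqref{possibilities} from the proof of Theorem \ref{Catboost theorem}: for any fixed $(Z,W)$, the number of pairs $(\mathbf{b},\mathbf{u})\in(\mathcal{E}^{-1}(\mathbf{a},W;\mathsf{p})\cap\mathcal{R})\times(\mathcal{E}^{-1}(\mathbf{b},-Z;\mathsf{p})\cap\mathcal{R})$ equals $\prod_{q\in K\setminus(W\setminus Z)}|S_q|$, which is dominated by $\prod_{q\in K}|S_q|$ and hence by $(m/k)^k$ via the AM-GM inequality, exactly as in \eqref{AM-GM}.

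For the outer count I will establish two upper bounds and take their minimum. The first (crude) bound simply discards the compatibility condition $\mathcal{Z}(Z)\subseteq\mathcal{W}(W)$: a nested pair $Z\subseteq W\subseteq K$ with $i\in Z$ or $i\notin W$ is equivalent to a ternary labeling of the elements of $K\setminus\{i\}$ (``in $Z$'', ``in $W\setminus Z$'', ``outside $W$'') together with a binary label for $i$ (``in $Z$'' or ``outside $W$''), for a total of $2\cdot 3^{k-1}$. The second (finer) bound factors the count over the blocks $\tilde{S}_r$ of $\mathfrak{P}$. For $r=j$, the restriction to $\tilde{S}_j$ contributes $2\cdot 3^{|\tilde{S}_j|-1}$ by the same ternary argument. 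For $r\neq j$, the compatibility condition forbids the configuration ``$Z\cap\tilde{S}_r\neq\varnothing$ and $\tilde{S}_r\not\subseteq W$''; a short case split shows that the surviving configurations for $(Z\cap\tilde{S}_r,W\cap\tilde{S}_r)$ number $(2^{|\tilde{S}_r|}-1)+2^{|\tilde{S}_r|}=2^{|\tilde{S}_r|+1}-1$. Taking the product over $r$ and applying the elementary estimates $2^{|\tilde{S}_r|+1}-1\leq 2.5\cdot 2^{|\tilde{S}_r|}$ and $3^{|\tilde{S}_j|-1}=2^{|\tilde{S}_j|-1}\cdot 1.5^{|\tilde{S}_j|-1}$ produces the bound $2^k\cdot 1.5^{|\tilde{S}_j|-1}\cdot 2.5^{|\mathfrak{P}|-1}$.

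Combining the outer bound with the factor $(m/k)^k$ from the inner count yields the claimed estimate \eqref{complexity0'}. I do not foresee a serious obstacle: the bookkeeping separates cleanly across the blocks of $\mathfrak{P}$, and the only subtlety is the block-by-block case analysis enforcing $\mathcal{Z}(Z)\subseteq\mathcal{W}(W)$, which is genuinely routine once one recognizes that the forbidden configuration is isolated to a single cell $\tilde{S}_r$ at a time.
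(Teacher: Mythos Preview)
Your proposal is correct and reaches the same bound as the paper, but the organization of the counting is genuinely different. The paper mirrors the nesting of \eqref{symmetric formula tree Owen}: it fixes $(\mathcal{Z},\mathcal{W},Z_*,W_*)$, bounds the number of compatible $(Z,W)$ by $2^{k-|\mathcal{W}\setminus\mathcal{Z}|-|\tilde{S}_j|}$ (via $2^{|\tilde{S}_r|}-1<2^{|\tilde{S}_r|}$), and then explicitly evaluates $\sum_{\mathcal{Z}\subseteq\mathcal{W}\subseteq\tilde{M}\setminus\{j\}}2^{-|\mathcal{W}\setminus\mathcal{Z}|}=(2.5)^{\tilde m-1}$ with a binomial substitution. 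You instead observe that $(\mathcal{Z},\mathcal{W},Z_*,W_*)$ is a function of $(Z,W)$, so it suffices to count admissible $(Z,W)$ directly; since the compatibility constraint $\mathcal{Z}(Z)\subseteq\mathcal{W}(W)$ is a conjunction of per-block conditions, the count factors over the blocks of $\mathfrak{P}$, yielding the exact product $2\cdot 3^{|\tilde S_j|-1}\prod_{r\neq j}(2^{|\tilde S_r|+1}-1)$, after which the elementary $2^{s+1}-1\le 2.5\cdot 2^s$ closes the estimate. Both routes are equivalent (your exact product is precisely what the paper's double sum computes before bounding), and both handle the inner $(\mathbf{b},\mathbf{u})$ count and the crude $2\cdot 3^{k-1}$ bound identically. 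Your block-factored argument is a bit more transparent and sidesteps the explicit series computation; the paper's version stays closer to the structure of the formula being bounded.
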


\begin{proof}
In \eqref{symmetric formula tree Owen}, terms are added up by first iterating over 
$\mathcal{Z},Z_*,\mathcal{W}$ and $W_*$, 
and then over elements $Z$ of 
$\mathsf{Int}(\mathcal{Z},Z_*,j;\mathfrak{P})$
and elements $W$ of 
$\mathsf{Inc}(\mathcal{W},W_*,j;\mathfrak{P})$.
Given $\mathcal{Z}\subseteq\tilde{M}\setminus\{j\}$ and 
$Z_*$, the number of choices for $Z\subseteq K$ is no more than 
$$
\prod_{r\in \mathcal{Z}}\left(2^{|\tilde{S}_r|}-1\right)
< 2^{\sum_{r\in \mathcal{Z}}|\tilde{S}_r|}
$$
because, among elements of the partition 
$\mathfrak{P}=\big\{\tilde{S}_1,\dots,\tilde{S}_{\tilde{m}}\big\}$ of $K$,  
$Z$ can only intersect $\tilde{S}_r$ when $r\in\mathcal{Z}$, or when $r=j$ in which case the intersection 
$Z\cap\tilde{S}_r$ is $Z_*$. By a similar argument, 
the number of choices for $W\subseteq K$ once $\mathcal{W}\subseteq\tilde{M}\setminus\{j\}$ and $W_*$ are provided is no more than 
$$
\prod_{r\in \tilde{M}\setminus(\mathcal{W}\cup\{j\})}
\left(2^{|\tilde{S}_r|}-1\right)
<2^{\sum_{r\in \tilde{M}\setminus(\mathcal{W}\cup\{j\})}|\tilde{S}_r|}
$$
since the intersection of $W$ with $(\cup_{r\in\mathcal{W}}\tilde{S}_r)\cup\tilde{S}_j$ is then known.  
Therefore, once $\mathcal{Z},Z_*,\mathcal{W}$ and $W_*$ are decided on in \eqref{symmetric formula tree Owen}, the number of possibilities for $Z$ and $W$ is smaller than
$$
2^{\sum_{r\in \mathcal{Z}}|\tilde{S}_r|}\cdot
2^{\sum_{r\in \tilde{M}\setminus(\mathcal{W}\cup\{j\})}|\tilde{S}_r|}
=2^{|K|-\sum_{r\in \mathcal{W}\setminus\mathcal{Z}}|\tilde{S}_r|-|\tilde{S}_j|}\leq 
2^{k-|\mathcal{W}\setminus\mathcal{Z}|-|\tilde{S}_j|}.
$$
On the other hand, from the proof of Theorem \ref{Catboost theorem}, the number of terms in 
$\mathcal{S}(\mathbf{a},Z,W;\mathsf{p},\mathcal{R})$, once expanded, is 
$\prod_{q\in K\setminus(W\setminus Z)}|S_q|$; see \eqref{possibilities}.
This is no greater than $\left(\frac{m}{k}\right)^k$ as established in \eqref{AM-GM}.
Finally, notice that the number of pairs $(Z_*,W_*)$ of nested subsets is $3^{|\tilde{S}_j|-1}$, 
either when $i\in Z_*\subseteq W_*\subseteq\tilde{S}_j$ or when 
$Z_*\subseteq W_*\subseteq\tilde{S}_j\setminus\{i\}$. 
All in all,   considered as a weighted sum of   products of the form 
$c_{\mathbf{b}}\cdot p_{\mathbf{u}}$ once expanded, \eqref{symmetric formula tree Owen} has less than 
\begin{equation}\label{auxiliary22}
2\cdot 3^{|\tilde{S}_j|-1}\cdot\left(\frac{m}{k}\right)^k\cdot
\Big(\sum_{\substack{\mathcal{W}\subseteq \tilde{M}\\ j\notin \mathcal{W}}}  
\sum_{\substack{\mathcal{Z}\subseteq \tilde{M}\\
\mathcal{Z}\subseteq \mathcal{W}}}2^{k-|\mathcal{W}\setminus\mathcal{Z}|-|\tilde{S}_j|}\Big)    
=(1.5)^{|\tilde{S}_j|-1}\cdot\left(\frac{m}{k}\right)^k\cdot 2^k\cdot
\Big(\sum_{\substack{\mathcal{W}\subseteq \tilde{M}\\ j\notin \mathcal{W}}}  
\sum_{\substack{\mathcal{Z}\subseteq \tilde{M}\\
\mathcal{Z}\subseteq \mathcal{W}}}2^{-|\mathcal{W}\setminus\mathcal{Z}|}\Big)
\end{equation}
terms.  
Moreover, the expression in the parentheses may be computed.
Denoting $\mathcal{W}\setminus\mathcal{Z}$ by $\mathcal{Y}$ and its cardinality by $y$:
\begin{equation*}
\begin{split}
&\sum_{\substack{\mathcal{W}\subseteq \tilde{M}\\ j\notin \mathcal{W}}}  
\sum_{\substack{\mathcal{Z}\subseteq \tilde{M}\\
\mathcal{Z}\subseteq \mathcal{W}}}2^{-|\mathcal{W}\setminus\mathcal{Z}|}
=\sum_{\mathcal{Y}\subseteq \tilde{M}\setminus\{j\}}
\sum_{\mathcal{Z}\subseteq\tilde{M}\setminus(\mathcal{Y}\cup\{j\})}2^{-|\mathcal{Y}|}
=\sum_{\mathcal{Y}\subseteq \tilde{M}\setminus\{j\}}2^{-|\mathcal{Y}|}\cdot 
2^{|\tilde{M}|-1-|\mathcal{Y}|}\\
&=\sum_{y=0}^{\tilde{m}-1}\binom{\tilde{m}-1}{y}2^{\tilde{m}-1-2y}
=(2.5)^{\tilde{m}-1}=(2.5)^{|\mathfrak{P}|-1}.
\end{split}    
\end{equation*}
Substituting in \eqref{auxiliary22}, we obtain the bound 
$2^k\cdot (1.5)^{|\tilde{S}_j|-1}\cdot (2.5)^{|\mathfrak{P}|-1}\cdot
\left(\frac{m}{k}\right)^k$. 
Of course,  the formula for a coalitional value such as \eqref{coalitional generalization} cannot have more terms than the formula for the Shapley value 
from Theorem \ref{Catboost theorem}. Taking minimum with the complexity \eqref{complexity0} from that theorem,  we arrive at the desired complexity \eqref{complexity0'}.
\end{proof}

\begin{figure}
\includegraphics[width=14cm]{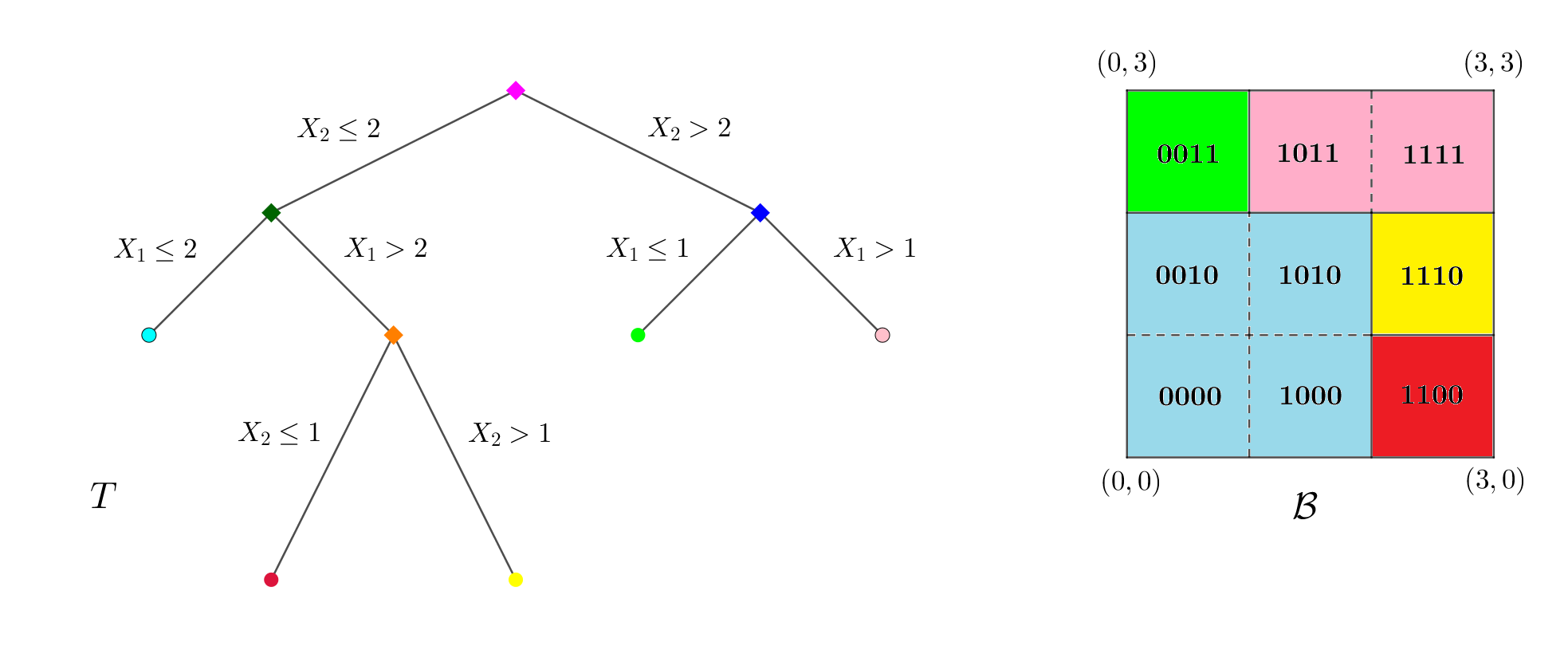}
\includegraphics[width=14cm]{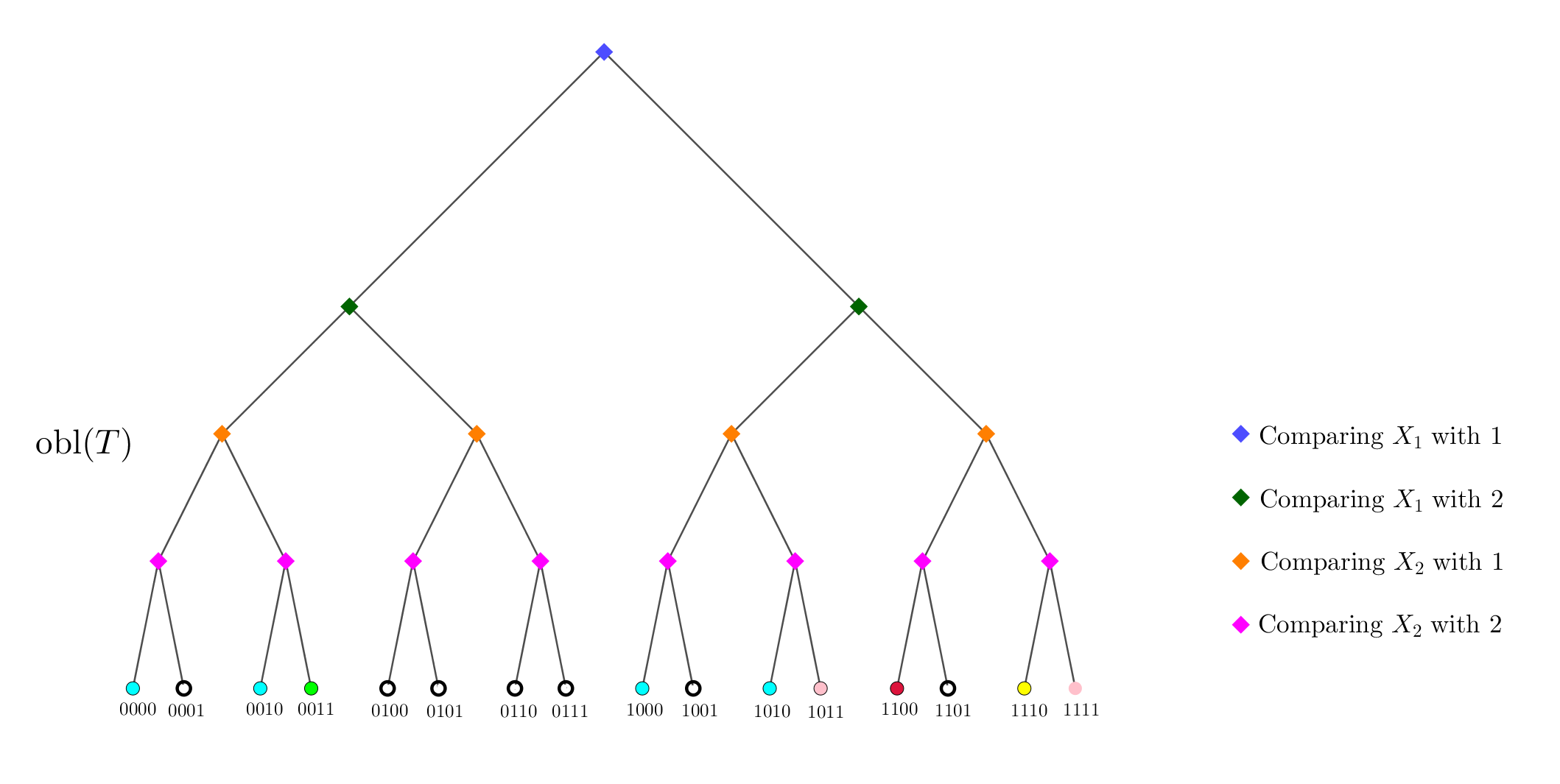}
\caption{The picture illustrating the construction outlined in Appendix \ref{appendix:general tree} 
which to a decision tree $T$ assigns an oblivious decision tree ${\rm{obl}}(T)$ computing the same function. 
Here, the features $X_1$ and $X_2$ are supported in the square $\mathcal{B}=[0,3]\times [0,3]$. 
On the top, a decision tree $T$ is demonstrated along with its induced partition 
$\mathscr{P}(T)$ of $\mathcal{B}$ where each piece of $\mathscr{P}(T)$ has the same color as the corresponding leaf of $T$. 
On the bottom, the oblivious decision tree ${\rm{obl}}(T)$ is shown which, at each level, splits on the feature and threshold appearing at an internal node of $T$. 
(Each internal node of ${\rm{obl}}(T)$ is colored similarly as its corresponding internal node from $T$.)
At each split of ${\rm{obl}}(T)$, we go right if the feature is larger than the threshold and go left otherwise; hence an encoding of leaves of ${\rm{obl}}(T)$ with binary codes. The leaves which are colored are realizable (i.e. no conflicting thresholds along the path to them), and are in a bijection with elements of the grid partition $\widetilde{\mathscr{P}(T)}$ of $\mathcal{B}$. (For an element of $\widetilde{\mathscr{P}(T)}$, 
the colors of the corresponding leaf of ${\rm{obl}}(T)$ and the element of the coarser partition $\mathscr{P}(T)$ containing it coincide.)}
\label{fig:construction}
\end{figure}

\begin{figure}
\includegraphics[width=5cm]{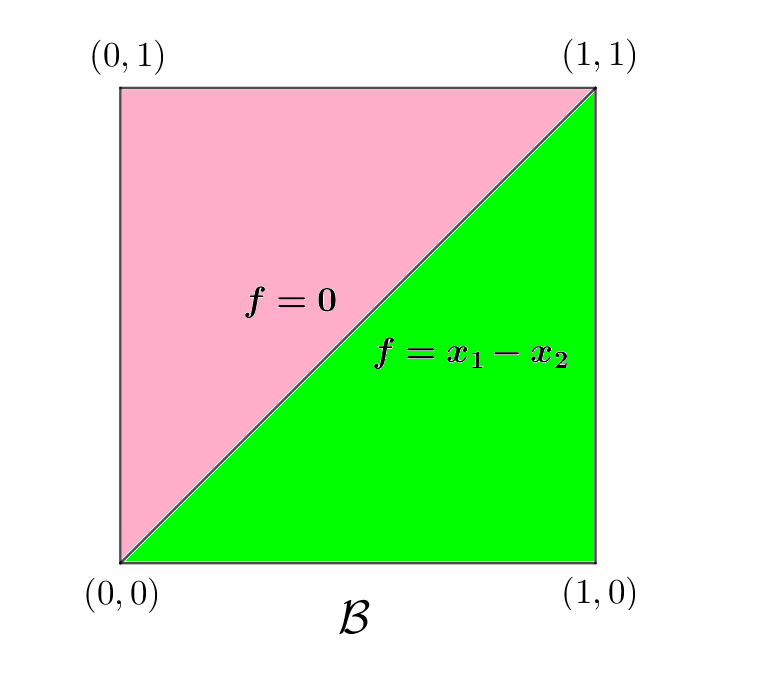}
\caption{The picture for Example \ref{triangular} illustrating a piecewise-linear function $f$ computed by a ReLU network. 
The features $(X_1,X_2)$ are supported in $\mathcal{B}=[0,1]\times[0,1]$ and $f(x_1,x_2)=(x_1-x_2)^+$. 
The Shapley values become non-linear; see \eqref{non-linear}.}
\label{fig:triangular}
\end{figure}

\section{A remark on non-oblivious trees}\label{appendix:general tree}
In this short appendix, we point out that Theorem \ref{Catboost theorem} can be utilized to explicitly compute marginal game values such as Shapley or Banzhaf for any decision tree $T$.
To this end, we present a systematic method of turning a general decision tree $T$ into  an oblivious decision tree ${\rm{obl}}(T)$ with the same input-output function---a tree to which Theorem \ref{Catboost theorem} can then be applied. 
\\
\indent
Impose an ordering on features appearing in $T$, and then order the splits (feature,threshold) occurring in $T$ lexicographically. Next, take ${\rm{obl}}(T)$ to be the oblivious decision tree which, throughout its $i^{\rm{th}}$ level (the root being on the $1^{\rm{st}}$ level), splits on the $i^{\rm{th}}$ split of $T$.
This is all illustrated in Figure \ref{fig:construction}.
Clearly, the function computed by ${\rm{obl}}(T)$ is identical to the one that $T$ computes. But the partition determined by the former is more intricate---it is the grid completion $\widetilde{\mathscr{P}(T)}$  of the partition $\mathscr{P}(T)$ determined by the latter. 
The formula from Theorem \ref{Catboost theorem} (or its generalizations in Appendix \ref{appendix:generalization}) 
can now be applied to ${\rm{obl}}(T)$  to obtain marginal Shapley (or Banzhaf or Owen) values of the original tree $T$ in terms of the leaf values of $T$ and the probabilities of regions in $\widetilde{\mathscr{P}(T)}$. Nevertheless, the probabilities associated with regions that lie in $\widetilde{\mathscr{P}(T)}\setminus\mathscr{P}(T)$
cannot be estimated directly based on leaf weights of the trained tree $T$; they should be
estimated separately using the training data or a background dataset (e.g. with a formula such as \eqref{probability estimation precomputation}). Assuming that these probabilities are provided for a non-oblivious tree $T$, the complexity of computing the marginal Shapley value of a feature is no higher than
\begin{equation*}
3^{k(T)-1}\cdot\left(\frac{\ell(T)-1}{k(T)}\right)^{k(T)} 
\end{equation*}
according to \eqref{complexity0}. Here, $k(T)$ is the number of distinct features appearing in $T$ (and hence in ${\rm{obl}}(T)$), and
$\ell(T)$ is the number of leaves of $T$ (so the number of internal nodes of $T$, $\ell(T)-1$, is an upper bound for the depth of ${\rm{obl}}(T)$).  
Hence for an ensemble $\mathcal{T}$ of decision trees in which the number of distinct features per tree is limited, it may be feasible to compute marginal Shapley (or Banzhaf or Owen) feature attributions by applying the theory developed in \Sec \ref{subsec:CatBoost} to the ensemble $\{{\rm{obl}}(T)\mid T\in\mathcal{T}\}$ of oblivious trees.

\section{Comparison with ReLU networks}\label{appendix:ReLU}
It is well known that a neural network with ReLU activations  cuts the input space into convex polytope regions such that on each of them the computed function is linear \cite{montufar2014number}. 
These linear regions are determined by \textit{activation patterns}, i.e. the on/off states of hidden neurons. 
Deep ReLU networks often result in very complicated partitions of the space into polytopes. These partitions can moreover be utilized to study the networks' expressive power \cite{raghu2017expressive}. \\
\indent 
One can immediately draw a parallel between the partition into polytope regions induced by a ReLU network and the rectangular partition of the input space that a tree-based model determines. 
With respect to the partition in hand, the input-output function is piecewise linear in the case of the former and piecewise constant in the case of the latter. Indeed, it is suggested that ``unwrapping'' a ReLU network into  its activation regions may be used for generating global feature importance values as well as creating visual aids for interpretation and diagnostic \cite{2020arXiv201104041S}. 
Here, we shall see that when it comes to game-theoretic local feature attributions, Theorem \ref{main theorem} for tree-based models does not carry over to ReLU networks.

\begin{example}\label{triangular}
Suppose the predictors $(X_1,X_2)$ are supported in the unit square and consider the piecewise constant function 
$$
f(x_1,x_2)=(x_1-x_2)^+
=\begin{cases}
x_1-x_2 & \text{if }x_1>x_2,\\
0       & \text{otherwise;}
\end{cases}
$$
which is obtained from applying ReLU to a linear function. The unit square is thus cut into two triangles as in Figure \ref{fig:triangular}. Suppose $X_1$ and $X_2$ admit probability density functions, and denote them by 
$\rho_1$ and $\rho_2$ respectively. Also denote the joint probability density function for $(X_1,X_2)$ by $\rho_3$.
Identity \eqref{two-player Shapley} yields the marginal Shapley values of the model as
\begin{equation}\label{non-linear}
\begin{split}
\varphi_1\big[v^\ME\big](x_1,x_2)&=
\frac{1}{2}\left(\int_0^{x_1}(x_1-t)\rho_2(t){\rm{d}}t-\int_{x_2}^1(t-x_2)\rho_1(t){\rm{d}}t\right)\\
&+\frac{1}{2}\left((x_1-x_2)^+
-\int_{0}^1\int_{t_2}^1(t_1-t_2)\rho_3(t_1,t_2){\rm{d}}t_1{\rm{d}}t_2\right),\\
\varphi_2\big[v^\ME\big](x_1,x_2)&=
\frac{1}{2}\left(\int_{x_2}^1(t-x_2)\rho_1(t){\rm{d}}t-\int_0^{x_1}(x_1-t)\rho_2(t){\rm{d}}t\right)\\
&+\frac{1}{2}\left((x_1-x_2)^+
-\int_{0}^1\int_{t_2}^1(t_1-t_2)\rho_3(t_1,t_2){\rm{d}}t_1{\rm{d}}t_2\right).
\end{split}    
\end{equation}
The expressions above are non-linear; so a piecewise-linear ReLU model does not necessarily result in piecewise-linear marginal feature attributions. This is in contrast to the case of piecewise-constant functions computed by tree-based models where, according to Theorem \ref{main theorem}, marginal feature attributions are piecewise constant. Moreover, compared to tree ensembles, the null-player axiom cannot be utilized as easily. For instance, in the example above, $f$ is constant on the triangular region $x_1<x_2$ but this does not mean that  $\varphi_1\big[v^\ME\big]$ and $\varphi_2\big[v^\ME\big]$ are constant over that region. 
\end{example}
The last example demonstrates some of the subtleties that emerge once one works with polytope regions arising from ReLU networks instead of rectangular regions determined by tree-based models. As a matter of fact, it easily follows from the arguments in the proof of Theorem \ref{main theorem} that if a model is piecewise constant or piecewise linear with respect to a rectangular partition $\mathscr{P}$ of the input space, then its feature attributions are the same type of functions with respect to the grid partition $\widetilde{\mathscr{P}}$. 
As observed in the example, this fails if $\mathscr{P}$ is not rectangular because then the feature attributions can become completely non-linear. 

\textbf{Acknowledgments.} The authors would like to thank Raghu Kulkarni, Patrick
Hall, Kenrick Fernandes, Ashkan Golgoon and Mohammad Ahmadpoor for fruitful
conversations and helpful comments. We are also grateful to the anonymous referees
for their valuable feedback.

\bibliography{biblography}
\bibliographystyle{abbrv}

\end{document}